\documentclass[a4paper,11pt]{article}

\input{macros.tex}

\title{
\toptitlebar
{{\center\baselineskip 18pt
                      {\Large\bf Byzantine-Robust and Differentially Private Federated Optimization under Weaker Assumptions}}
} 
\bottomtitlebar}
\date{}
\author{
Rustem Islamov\textsuperscript{1}, 
Grigory Malinovsky\textsuperscript{2},
Alexander Gaponov\textsuperscript{2},
Aurelien Lucchi\textsuperscript{1},
\newline
Peter Richtárik\textsuperscript{2},
Eduard Gorbunov\textsuperscript{3}
}
\affil{
\textsuperscript{1}University of Basel, Switzerland, \quad
\textsuperscript{2}KAUST, Saudi Arabia, \quad
\textsuperscript{3}MBZUAI, UAE
}

\begin{document}

\maketitle

\def\thefootnote{\arabic{footnote}}

\begin{abstract}
Federated Learning (FL) enables heterogeneous clients to collaboratively train a shared model without centralizing their raw data, offering an inherent level of privacy. However, gradients and model updates can still leak sensitive information, while malicious servers may mount adversarial attacks such as Byzantine manipulation. These vulnerabilities highlight the need to address differential privacy (DP) and Byzantine robustness within a unified framework. Existing approaches, however, often rely on unrealistic assumptions such as bounded gradients, require auxiliary server-side datasets, or fail to provide convergence guarantees. We address these limitations by proposing \algname{Byz-Clip21-SGD2M}, a new algorithm that integrates robust aggregation with double momentum and carefully designed clipping. We prove high-probability convergence guarantees under standard $L$-smoothness and $\sigma$-sub-Gaussian gradient noise assumptions, thereby relaxing conditions that dominate prior work. Our analysis recovers state-of-the-art convergence rates in the absence of adversaries and improves utility guarantees under Byzantine and DP settings. Empirical evaluations on CNN and MLP models trained on MNIST further validate the effectiveness of our approach. 
\end{abstract}

\section{Introduction}

The rapid deployment of large-scale machine learning models has positioned Federated Learning (FL) \citep{konevcny2016federated, mcmahan2017communication} as a central paradigm, enabling a collection of potentially heterogeneous clients, ranging from smartphones and sensors to data centers, to collaboratively train a shared model without transmitting their raw data to a central server \citep{li2020federated, yang2019federated}. While this setting offers an inherent degree of privacy, it is far from complete: gradients and model parameters can still leak sensitive information \citep{zhu2019deep, geiping2020inverting}, and adversaries may exploit the distributed nature of FL to launch attacks such as membership inference, model stealing, or Byzantine manipulation \citep{shokri2017membership,tramer2016stealing,blanchard2017machine}. This dual vulnerability highlights the need to study differential privacy, to mitigate privacy leakage, and Byzantine robustness, to ensure resilience against arbitrary or malicious participants, within a unified framework. Both challenges share the common difficulty of optimizing with corrupted or unreliable updates, whether the corruption arises from deliberate noise injection for privacy or from adversarial manipulations. Moreover, addressing these issues is not only of algorithmic importance: modern AI systems must also comply with evolving regulatory requirements, such as GDPR and the EU AI Act for privacy \citep{GDPR2016,EUAIAct2024}, or the EU NIS2 Directive, Cyber Resilience Act, and NIST AI Risk Management Framework for security and robustness \citep{NIS22022,CyberResilienceAct2024,NISTAI100-1}.

In this context, we propose to study Byzantine robustness and differential privacy jointly, with the goal of establishing a principled foundation for robust, privacy-preserving federated learning and providing a pathway toward trustworthy AI in practice. A central open challenge is to move beyond the strong assumptions that dominate the current literature.
Most theoretical analyses in differential privacy are carried out under assumptions such as bounded gradients -- an assumption that fails even for simple quadratic objectives \citep{li2022soteriafl, wang2023efficient, lowy2023private} -- or by focusing on the full-batch regime \citep{shulgin2025first}.
Only recently have approaches emerged that achieve strong privacy guarantees while retaining favorable optimization performance under milder conditions \citep{islamov2025double}. Byzantine robustness has evolved largely as a separate line of work \citep{lyu2022privacy}, yet its analyses similarly depend on gradient boundedness to establish convergence guarantees \citep{blanchard2017machine, mhamdi2018hidden}. Attempts to combine DP with Byzantine resilience remain even more limited: existing methods either assume access to an auxiliary server-side dataset to obtain a better gradient approximation \citep{xiang2023practical}, rely on unrealistic aggregation rules \citep{guerraoui2021differential, zhang2023byzantine}, impose restrictive bounded-gradient assumptions \citep{allouah2023privacy}.

Motivated by the shortcomings of prior theoretical results, we arrive at the following guiding question:
\begin{quote}
\textit{Is it possible to develop an algorithm that provably converges in the presence of malicious clients while simultaneously providing strong differential privacy guarantees under standard assumptions?}
\end{quote}

\paragraph{Main Contributions.}
We provide an affirmative answer to the above question. In doing so, we make the following contributions:
\begin{itemize}
    \item We introduce \algname{Byz-Clip21-SGD2M}, a new algorithm that combines robust aggregation with double momentum and carefully designed clipping, achieving efficient performance under both Byzantine and differential privacy adversaries.

    \item We provide a high-probability convergence analysis of \algname{Byz-Clip21-SGD2M} under $L$-smoothness and $\sigma$-sub-Gaussian gradient noise, thereby relaxing assumptions that have been used in prior work (see Table~\ref{tab:summary_experiments} for details). Our results improve utility guarantees by avoiding unrealistic assumptions, while recovering state-of-the-art convergence rates in the absence of Byzantine or DP adversaries.

    \item We complement our theoretical analysis with empirical validation, demonstrating the effectiveness of \algname{Byz-Clip21-SGD2M} on CNN and MLP models trained on the MNIST dataset.

\end{itemize}

\section{Related Works}

\paragraph{Error Feedback.} Biased compression in federated learning reduces communication by transmitting compressed local updates that are not unbiased estimates of the original signal \citep{ajalloeian2020convergence, beznosikov2023biased, demidovich2023guide}. Common examples include Top-$K$ sparsification \citep{alistarh2017qsgd, wangni2018gradient} and sign-based compression \citep{bernstein2018signsgd,stich2018sparsified}. While these methods are efficient in practice, the introduced bias can accumulate and affect convergence, requiring Error Feedback (\algname{EF}) \citep{seide20141} techniques to mitigate its impact. Earlier works primarily focused on the single-node setting or relied on restrictive assumptions, such as bounded gradients, bounded compression error, or gradient dissimilarity, to establish convergence \citep{stich2018sparsified,stich2019error,koloskova2019decentralized}. Moreover, the convergence rates of \algname{EF} degrade in the presence of client heterogeneity, and this dependence is intrinsic rather than a proof artifact \citep{gorbunov2020linearly}.
To overcome these limitations, \citep{richtarik2021ef21} proposed \algname{EF21}, a variant whose guarantees do not depend on heterogeneity bounds. Nevertheless, \algname{EF21}-SGD still requires increasingly large batch sizes to achieve a target accuracy \citep{fatkhullin2024momentum}. Importantly, this limitation is not fundamental: recent work demonstrates that incorporating Heavy-Ball momentum removes the need for large batches \citep{fatkhullin2024momentum}. Later, \algname{EF21} was extended to several practical setups \citep{fatkhullin2025ef21,makarenko2022adaptive}. In a parallel line of research, \citet{gao2023econtrol,gao2025accelerated} introduced \algname{EControl}-type Error Feedback methods with convergence guarantees extending to the convex setting. Error Feedback has also been studied in decentralized \citep{yau2022docom,huang2023stochastic,islamov2024towards} setting, for non-smooth \citep{islamov2025safe} and composite \citep{gao2025composite} optimization problems, as well as in the context of second-order methods \citep{safaryan2021fednl,qian2021basis,islamov2023distributed}.

\begin{table*}[t]
    \centering
    
    \caption{ { \small Convergence guarantees of algorithms in the presence of Byzantine and/or DP adversaries.~$\wtilde{\cO}$~hides~logarithmic and decaying-with-$T$ terms. Notation: (s)CVX=(strongly) convex functions, nCVX=non-convex functions,~DP=supports differential privacy, BR=supports Byzantine robustness, E=In-Expectation analysis, P=High-probability analysis.}}
    \label{tab:summary_experiments}
    \resizebox{\textwidth}{!}{
        \begin{tabular}{@{\hskip 0pt}c@{\hskip 0pt}c@{\hskip 0pt}c@{\hskip 0pt}c@{\hskip 0pt}c@{\hskip 0pt}c@{\hskip 0pt}c}
            \toprule

            {\bf Method} &
            {\bf Utility} &
            \makecellnew{{\bf DP} } & 
            \makecellnew{{\bf BR} } &
            {\bf Assumptions} &
            \makecellnew{{\bf Additional} \\ {\bf Comments}} &
            \makecellnew{{\bf Type}}

            \\ \toprule

            \makecellnew{\algname{SoteriaFL} \\ \citep{liu2022communication} } &
            $\wtilde{\cO}\left(\frac{\sqrt{d}}{\sqrt{G}\varepsilon}\right)$ &
            \cmark &
            \xmark &
            \makecellnew{$M$-bounded gradients} &
            nCVX, local DP &
            E
            \\    

            \makecellnew{\algname{$\alpha$-Norm} \\ \citep{shulgin2025smoothed} } &
            $\wtilde{\cO}\left(\frac{\sqrt{d}}{\sqrt{G}\varepsilon} + R\right)^{(a)}$ &
            \cmark &
            \xmark &
            \makecellnew{$L$-smoothness \\ Full-batch gradients} &
            nCVX, local DP &
            E
            \\             
            \makecellnew{\algname{Clip21-SGD2M} \\ \citep{islamov2025double} } &
            $\wtilde{\cO}\left(\frac{\sqrt{d}}{\sqrt{G}\varepsilon}\right)^{(b)} $ &
            \cmark &
            \xmark &
            \makecellnew{$L$-smoothness \\ $\sigma$-sub-Gaussian noise}  &
            nCVX, local DP &
            P
            \\ \midrule[1pt]\midrule[1pt]

            \makecellnew{\algname{Byz-SGDM} \\ \citep{karimireddy2020byzantine} } &
            $\wtilde{\cO}\left(\deltabyz\zeta^2\right)^{(c)}$ &
            \xmark &
            \cmark &
            \makecellnew{$L$-smoothness \\ Bounded variance} &
            nCVX &
            E

            \\

            \makecellnew{\algname{Robust D-SHB} \\ \citep{allouah2023fixing} } &
            $\wtilde{\cO}\left(\deltabyz M^2\right)$ &
            \xmark &
            \cmark &
            \makecellnew{$L$-smoothness \\ Full-batch gradients} &
            \makecellnew{nCVX \\ CVX} &
            E

            \\ 

            \makecellnew{\algname{Byz-VR-MARINA} \\ \citep{gorbunov2022variance} } &
            $\wtilde{\cO}\left(\deltabyz\zeta^2\right)$ &
            \xmark &
            \cmark &
            \makecellnew{$L$-smoothness \\ Local functions are finite-sums}  &
            nCVX &
            E

            \\  \midrule[1pt]\midrule[1pt]

            \makecellnew{\algname{Safe-DSHB} \\ \citep{allouah2023privacy}}&
            $\wtilde{\cO}\left(\frac{d}{G\varepsilon^2} + \frac{\deltabyz}{\varepsilon^2} + \deltabyz M^2\right)$ &
            \cmark &
            \cmark &
            $M$-bounded gradients &
            sCVX, local-DP&
            E 

            \\   


            
            \makecellnew{\algname{DP-BREM} \\ \citep{gu2025dp} } &
            $\wtilde{\cO}\left(\deltabyz\zeta^2 + \deltabyz\frac{\sqrt{d}}{\sqrt{G}\varepsilon}\right)^{(d)}$&
            \cmark &
            \cmark &
            \makecellnew{ $L$-smoothness\\Momentum buffers are i.i.d}&
            nCVX, central DP &
            E
           \\

           \addlinespace[3pt]

            \makecellnew{\algname{P\&S Learning} \\ \citep{xiang2023practical} } &
            N/A${}^{(e)}$&
            \cmark &
            \cmark &
            \makecellnew{$L$-smoothness \\
            $\nabla f_i(x)-\nabla f_i(x^\star)$ are sub-Gaussian\\ 
            Access to external non-private dataset} &
            sCVX, local DP &
            P 
            

            
            \\ \midrule[1pt]\midrule[1pt]

          \makecellnew{  \algname{Byz-Clip21-SGD2M} \\ \textbf{[Ours]}}&
            \makecellnew{$\wtilde{\cO}\left(R + R^{2/3}\right)^{(b,f)}$ \\
            $R = \frac{\sqrt{d}}{\sqrt{G}\varepsilon} + \frac{\sqrt{\deltabyz d}}{\sqrt{G}\varepsilon} + \sqrt{\deltabyz}\cN$}& 
            \cmark &
            \cmark &
            \makecellnew{$L$-smoothness \\ $\sigma$-sub-Gaussian noise} & 
            nCVX, local-DP & 
            P

            \\ \bottomrule
        \end{tabular}
        }
        \begin{tablenotes}
      {\scriptsize 
      \item $(a)$ Here $R\eqdef \max_{i\in\cG}\|\nabla f_i(x^0)-g_i^0\|.$
      \item $(b)$ Derived under the assumption $\nicefrac{\sqrt{d}}{\sqrt{G}\varepsilon} \ge 1$.
      \item $(c)$ $\zeta^2$ denotes the constant in the gradient dissimilarity assumption.
      \item $(d)$ The provided rate contradicts the lower bound due to the unrealistic assumption that momentum buffers are i.i.d. distributed.
      \item $(e)$ The authors do not provide explicit bounds.
      \item $(f)$ In the general case, $\cN = LF^0 + \zeta^2 + \sigma(\sqrt{LF^0} + \zeta)$, where $F^0 = f(x^0)- f^\star$ (see \Cref{th:main_theorem}). In the presence of Byzantine adversaries only, $\cN = \sqrt{\deltabyz}\zeta^2$ (see \Cref{th:nodp_withbyz}).
        }
\end{tablenotes}

\end{table*}


\paragraph{Byzantine Robust Optimization.}

Naively averaging updates from clients in a distributed setting lacks robustness since even a single Byzantine client can destabilize training \citep{chen2017distributed}. To address this, numerous alternatives to averaging have been proposed to enhance robustness against adversarial updates \citep{pillutla2022robust, yin2018byzantine, Damaskinos2019Aggregathor, karimireddy2021learning, Allouah2024Adaptive}.

Nevertheless, permutation-invariant algorithms, i.e., the ones whose output is unchanged under shuffling of the stochastic gradients computed on different workers \citep[Definition B]{karimireddy2021learning}, alone cannot ensure convergence in the Byzantine regime \citep[Theorem II]{karimireddy2021learning}. This limitation can be alleviated by incorporating update shuffling and averaging in random groups \citep{karimireddy2020byzantine}, variance-reduction techniques \citep{wu2020federated, gorbunov2022variance}, or preprocessing based on the averaging with nearest neighbors \citep{allouah2023fixing}. Beyond these, other approaches include concentration-based gradient filtering \citep{AlistarhAllenZhuLi2018, AllenZhu2020ByzantineResilientNonConvex}, server-free protocols with random checks \citep{gorbunov2022secure}, redundant computations \citep{chen2018draco, rajput2019detox}, and reputation-based mechanisms \citep{rodriguez2020dynamic, regatti2020bygars, xu2020reputation}. More recent works further extend these ideas to partial participation \citep{malinovsky2024byzantine}, decentralized training \citep{he2022byzantine}, and communication compression \citep{Rammal2024}. In most of these cases, the theoretical analysis is carried out under the $L$-smoothness and $\zeta^2$-gradient dissimilarity\footnote{More precisely, the mentioned works rely either on $\frac{1}{G}\sum_{i\in \cG}\|\nabla f_i(x) - \nabla f(x)\|^2 \leq \zeta^2$ or $\frac{1}{G}\sum_{i\in \cG}\|\nabla f_i(x) - \nabla f(x)\|^2 \leq A_\zeta \|\nabla f(x)\|^2 + \zeta^2$ for all $x\in \R^d$.} assumptions, which have become standard in Byzantine robustness literature.


\paragraph{Differentially Private Optimization.}


Differential privacy (DP) is typically achieved by clipping each client’s update and adding Gaussian noise, limiting the influence of any individual client \citet{mcmahan2017learning}. In central DP, a trusted server adds noise before updating the global model, while in local DP, clients perturb updates before sending them, protecting privacy even from the server \citep{kasiviswanathan2011can, allouah2024privacy}. Local DP offers stronger privacy but reduces model utility, though this can be mitigated using secure shufflers or aggregators \citep{feldman2020amplification, bonawitz2017practical}. DP can also enable update compression without additional cost \citep{chaudhuri2022privacy, hegazy2023compression}. Private optimization methods, such as \algname{DP-SGD} \citep{abadi2016deep}, enforce DP by clipping gradients and adding noise scaled to the clipping sensitivity. However, most convergence analyses neglect the bias introduced by clipping. For smooth functions, guarantees typically assume either \emph{bounded gradient norms} \citep{li2020federated, zhang2020private, murata2023diff2, wang2023efficient, lowy2023private, wang2024efficient} or that \emph{clipping is effectively inactive} \citep{noble2022differentially, zhang2024private}. Recently, several works have provided improved analyses of \algname{DP-Clip-SGD} and its variants. In the single-node setting, generalized smoothness has also been considered without restrictive requirements for \algname{DP-Clip-SGD} \citep{koloskova2023revisiting}. Next, \citet{khah2025differentially} also focus on the single-node setting and provide high-probability results under the heavy-tailed noise assumption and arbitrary clipping level for \algname{DP-Clip-SGD}. \citet{zhao2025differential} consider a projected \algname{SGD}-type method utilizing DP mean estimation and provide its in-expectation convergence bounds. \citet{compagnoni2026adaptive} study SDE approximation of \algname{DP-Clip-SGD} in the single-node setup. In the FL setting, analysis under realistic conditions in the presence of differential privacy has been studied in \citet{islamov2025double, shulgin2025smoothed, shulgin2025first}.

\paragraph{Differentially Private and Byzantine Robust Methods.}

Although substantial progress has been made separately in differential privacy and Byzantine robustness, their interaction in distributed learning is still not well understood. Recent efforts to address both aspects at once remain limited. A key limitation of existing theoretical work is the reliance on restrictive assumptions, often unrealistic and mainly introduced to simplify the analysis. For example, some studies focus on a single aggregation rule \citep{guerraoui2021differential}, while others assume a particular noise structure near the minimizer or even access to a non-private external server-side dataset \citep{xiang2023practical}. Other analyses restrict attention to the strongly convex case \citep{gao2023bvdfed, allouah2023privacy} or impose functional similarity assumptions that do not hold in heterogeneous federated learning \citep{lan2025one}. Further limitations include assuming i.i.d. momentum buffers \citep{gu2025dp} or bounded-gradient conditions \citep{guerraoui2021combining, gao2023bvdfed, allouah2023privacy}. Taken together, these limitations highlight the significant challenges in jointly addressing differential privacy and Byzantine robustness.

We summarize the existing works and the assumptions underlying their convergence analyses for establishing utility bounds in \Cref{tab:summary_experiments}. Compared to prior work, our theoretical analysis relies on the weakest set of assumptions.

\section{Preliminaries}
\label{sec:prem}
\textbf{Problem Formulation.} In distributed training with potentially Byzantine clients, the goal is to minimize the empirical loss
\begin{equation}\label{eq:problem}
\min_{x\in\R^d} \left[f(x) \eqdef \frac{1}{G}\sum_{i\in\cG} f_i(x)\right],
\end{equation}
where $x \in \R^d$ denotes the model parameters. The set of regular clients is $\cG$, with cardinality $G \eqdef |\cG|$, while the total number of clients is $n$. The remaining clients, $\cB \eqdef [n]\setminus\cG$, are Byzantine: instead of following the prescribed learning algorithm, they may transmit arbitrary (possibly adversarial) vectors, and we assume they also observe the updates shared by the other clients\footnote{This assumption is admittedly strong and often unrealistic in practice. Nevertheless, it is widely adopted in the Byzantine-robust learning literature, since methods that guarantee robustness against such powerful adversaries also remain robust in more practical settings, where attackers are typically weaker.}. The number of Byzantine clients is $|\cB| = \deltabyz n$, with $\deltabyz < \frac{1}{2}$ to ensure that Byzantine clients do not form a majority; otherwise, problem \eqref{eq:problem} becomes unsolvable \citep{pai2021can}. Each client $i \in [n]$ is associated with a local loss function $f_i$, defined over its own data. 
    
\paragraph{Differential Privacy.}

Next, we will use the following classical definition of $(\varepsilon,\delta)$-Differential Privacy ($(\varepsilon, \delta)$-DP), which introduces
plausible deniability into the output of a learning algorithm.
\begin{definition}[\citep{dwork2014algorithmic}]\label{def:privacy} A randomized method $\cM\colon \cD \to \mathbb{R}$ satisfies $(\varepsilon,\delta)$-Differential Privacy ($(\varepsilon,\delta)$-DP for shortness) if for any two datasets $D,D^\prime\in\cD$ that differ in 1 sample and for any $S \subseteq \mathbb{R}$
\begin{equation} 
    \Prob(\cM(D)\in S) \le e^{\varepsilon}\Prob(\cM(D^\prime)\in\cS)+\delta.
\end{equation}
\end{definition}
Decreasing $\varepsilon$ and $\delta$ strengthens privacy by making it harder to identify the particular data point that differs between neighboring datasets.

\paragraph{Robust Aggregation.}

We follow the definition of \citet{allouah2023fixing} that covers many practical aggregation rules introduced in prior work \citep{farhadkhani2022byzantine, karimireddy2020byzantine}, including coordinate-wise median \citep{yin2018byzantine} and geometric median \citep{pillutla2022robust}.

\begin{definition}[\citep{allouah2023fixing}]\label{def:arrag} Let the ratio of Byzantine clients be $\deltabyz \eqdef \nicefrac{|\cB|}{n} < \nicefrac{1}{2}$ and $c \ge 0$ be a constant. The aggregation rule {\rm RAgg} is said to be $(\deltabyz, c)$-robust if for any vectors $\{x_1,\dots,x_n\}\subseteq \R^d$ and any subset $S \subset [n]$ of size $n-|\cB|$ the output $\hat{x} = {\rm RAgg}(x_1,\dots,x_n)$ satisfies
    \begin{equation}\label{eq:arrag}
        \|\hat{x}-\overline{x}\|^2\le \frac{c\deltabyz}{n-|\cB|} \sum_{i\in S}\|x_i-\overline{x}\|^2
    \end{equation}
    where $\overline{x} = \frac{1}{n-|\cB|}\sum_{i\in S}x_i.$
\end{definition}
Under this definition, robust aggregators deliver a worst-case guarantee: for $S = \cG$, the output $\hat{x}$ stays close to the mean $\overline{x}$ of regular clients in squared distance. The rule can be further strengthened by input mixing steps such as \algname{NNM} \citep{allouah2023fixing}. Importantly, we require the worst-case property, i.e., \eqref{eq:arrag} holds for every vector collection $\{x_i\}_{i=1}^n$ to enable our high-probability analysis. In contrast, prior work often studies stochastic aggregators under in-expectation guarantees \citep{karimireddy2021learning, karimireddy2020byzantine, gorbunov2022variance, malinovsky2024byzantine}.

\paragraph{Assumptions.} 

In the convergence analysis of \algname{Byz-Clip21-SGD2M}, we make use of two standard assumptions. The first describes the class of smooth functions, a standard assumption made in the non-convex optimization literature \citep{ghadimi2013stochastic, carmon2020lower}.

\begin{assumption}\label{asmp:smoothness}
    For all $i\in\cG$, $f_i$ is $L$-smooth, i.e., for all $x,y\in\R^d$ we have 
    \begin{equation} 
        \|\nabla f_i(x) - \nabla f_i(y)\| \le L\|x-y\|.
    \end{equation}
    Additionally, we have $f^\star \eqdef \inf_{x\in\R^d} f(x) > -\infty.$
\end{assumption}
To simplify the presentation, we consider the worst smoothness constant across clients. Next, we introduce the assumption that formalizes the stochasticity in local gradients.

\begin{assumption}\label{asmp:stoch_grad} 
    Each client $i\in\cG$ has access to a $\sigma$-sub-Gaussian unbiased estimator $\nabla f_i(x,\xi)$  of a local gradient $\nabla f_i(x)$, i.e., there exists a constant $\sigma \ge 0$ such that for all $x\in\R^d$ we have
    \begin{equation} 
        \E{\nabla f_i(x,\xi)} = \nabla f_i(x), \; \; \E{\exp(\nicefrac{\|\theta_i\|^2}{\sigma^2})} \le e,
    \end{equation}
    where $\xi$ denotes the source of stochasticity and $\theta_i \eqdef \nabla f_i(x,\xi) - \nabla f_i(x).$
\end{assumption}
Sub-Gaussian noise assumption is standard when analyzing high-probability complexity of \algname{SGD}-type algorithms \citep{ghadimi2012optimal, nemirovski2009robust, liu2023high}.

Bounds on data heterogeneity are essential in the analysis of Byzantine-robust algorithms; without such assumptions, robustness is unachievable in fully arbitrary heterogeneous environments. A common approach is to assume a bounded average deviation of local gradients from the global gradient (gradient dissimilarity) \citep{wu2020federated, gorbunov2022variance, allouah2023privacy}, as formalized in \Cref{asmp:bounded_heterogeneity_avg}. For our high-probability analysis, we instead require the maximum in \Cref{asmp:bounded_heterogeneity}. However, we impose this condition only at initialization $x^0$, whereas prior work typically assumes it holds uniformly over the entire space.

\begin{assumption}\label{asmp:bounded_heterogeneity_avg}
The local gradients $\{\nabla f_i(x)\}_{i\in\cG}$ satisfy bounded $\zeta^2$-gradient dissimilarity for some $\zeta\ge 0$, i.e., for all $x\in\R^d$ we have 
\begin{equation}\label{eq:data_heter_bound_avg} 
    \frac{1}{G} \sum_{i\in \mathcal{G}} \|\nabla f_i(x) - \nabla f(x)\|^2 \le \zeta^2.
\end{equation}
\end{assumption}

\addtocounter{assumption}{-1}
\refstepcounter{assumption}
\begingroup
  \renewcommand{\theassumption}{3.3a}
  \begin{assumption}\label{asmp:bounded_heterogeneity}
The local gradients $\{\nabla f_i(x^0)\}_{i\in\cG}$ satisfy bounded $\zeta^2$-gradient dissimilarity for some $\zeta\ge 0$, i.e., we have 
\begin{equation} \label{eq:data_heter_bound}
    \max_{i\in\cG}\|\nabla f_i(x^0) - \nabla f(x^0)\|^2 \le \zeta^2.
\end{equation}
\end{assumption}
\endgroup

\begin{remark}
\label{rem:1}
    Our results also extend to a more general condition in which the right-hand side of \eqref{eq:data_heter_bound} scales linearly with $\|\nabla f(x^0)\|^2$, namely, $    \max_{i\in\cG}\|\nabla f_i(x^0) - \nabla f(x^0)\|^2 \le A_\zeta \|\nabla f(x^0)\|^2+ \zeta^2.$  For clarity of presentation, the general result is provided in the supplementary materials, while we assume $A_\zeta = 0$ in the main text. Note that this assumption always holds, i.e., for any problem of the form \eqref{eq:problem}, there exists $\zeta \geq 0$ such that inequality \eqref{eq:data_heter_bound} is satisfied. However, as we explicitly show in our theoretical results, the value of $\zeta$ directly affects the convergence bounds since it is required to establish a bound $\Delta$ on the initial value of the Lyapunov function $\Phi^0$ introduced in \eqref{eq:lyapunov_function}.
\end{remark}

\section{Algorithm Design}
\label{sec:alg_des}

We introduce the \algname{Byz-Clip21-SGD2M} algorithm, designed to operate under both Byzantine failures and differential privacy noise. The main steps are outlined in \Cref{alg:byz_clip21_sgd2m}, followed by a detailed discussion highlighting the key design innovations and their rationale.

\paragraph{Momentum Mechanism.}

\algname{Byz-Clip21-SGD2M} employs two momentum buffers, $\{v_i^t\}_{i\in\cG}$ on the client side and $\{m_i^t\}_{i\in\cG}$ on the server side, both essential for mitigating noise during training. On the client side, momentum with parameter $\beta$ smooths mini-batch gradient noise; without it, the method may fail to converge, as seen in algorithms such as \algname{EF21-SGD} and \algname{Clip21-SGD}, which provably fail to converge to arbitrary accuracy $\varepsilon$ under mini-batch noise \citep{fatkhullin2024momentum, islamov2025double}. On the server side, momentum with parameter $\hbeta$ counteracts the accumulation of DP-noise in the buffers $\{m_i^t\}_{i\in\cG}$, preventing it from destroying convergence due to fast accumulation of the DP noise in the updates.

\paragraph{Clipping.}
It is crucial to bound model updates before injecting DP-noise. Without clipping, updates may have arbitrarily large norms, which would dominate the injected noise and invalidate the privacy guarantee. Clipping restricts each update to a controlled range, ensuring that the privacy guarantees remain meaningful and that the added noise achieves its intended effect. In \algname{Byz-Clip21-SGD2M}, we employ the standard norm-clipping mechanism, which rescales updates 
relative to a threshold~$\tau$: 
\begin{equation}\label{eq:clipping} 
\clip_{\tau}(x) = \begin{cases}
    \frac{\tau}{\|x\|}x, &\text{ if } \|x\| > \tau,\\
    x, &\text{ if } \|x\| \le \tau.
\end{cases}
\end{equation}
This procedure ensures a fixed sensitivity at each update, after which Gaussian noise is added to the clipped values to provide local $(\varepsilon, \delta)$-DP guarantees. Equivalently, the same effect can be interpreted as adding noise directly to the averaged client update. Consequently, \algname{Byz-Clip21-SGD2M} is compatible with all of the techniques discussed above and can also be deployed in the central DP setting, where $\sigma_\omega^2$ 
scales down with the number of clients.

\paragraph{Error Feedback.} 
To mitigate client drift, which arises from the potentially different local data distributions across clients, we augment our method with an \algname{EF21}-style error feedback mechanism on the client side. Error feedback controls the discrepancy between the true global update and local updates sent to the server. Incorporating this mechanism into \algname{Byz-Clip21-SGD2M} allows the algorithm to compensate for the bias introduced by heterogeneous client data, thereby ensuring convergence under larger levels of data heterogeneity. This design choice makes the method robust to one of the central difficulties in federated optimization and allows eliminating the need for the bounded gradient assumption imposed in prior work.

\paragraph{Robust Aggregation.}

At the heart of robust \algname{Byz-Clip21-SGD2M} lies the aggregation rule ${\rm RAgg}$. To mitigate adversarial clients that attempt to disrupt training, we employ a robust aggregation rule satisfying \Cref{def:arrag}. Robust aggregation limits the damage that Byzantine clients can cause during training.

\paragraph{Challenges of Designing Differential Private and Byzantine-Robust Algorithms.}

Designing algorithms that are both differentially private and Byzantine-robust is inherently challenging as the two objectives often conflict with each other. Differential privacy requires adding extra noise to the raw updates, which reduces their consistency and makes it easier for adversarial updates to remain undetected. Moreover, using robust aggregation to combine the DP-noisy updates can amplify the detrimental effects of noise, further hindering convergence. Consequently, theoretical analyses must be carried out with particular care and often under restrictive assumptions. Overcoming these limitations requires carefully designing algorithms that balance privacy, robustness, and practical learnability.


        

\section{Theoretical Analysis}
\label{sec:theory}

\begin{algorithm}[t]
\caption{\algname{Byz-Clip21-SGD2M}}
\label{alg:byz_clip21_sgd2m}
\begin{algorithmic}[1]
\REQUIRE  $x^0 \in \R^d,$ momentum parameters $\beta,\hbeta\in(0,1],$ step-size $\gamma > 0$, $g_i^0=m_i^0\in\R^d,$ clipping parameter $\tau > 0$, DP-noise variance $\sigma_{\omega}^2 \ge 0$
    \FOR{$t=0, \ldots, T-1$}
        \STATE $x^{t+1} = x^t - \gamma g^t$
        \FOR{$i\in\cG$}
            \STATE $v_i^{t+1} = (1-\beta)v_i^t + \beta\nabla f_i(x^{t+1}, \xi^{t+1}_i)$
            \STATE $\omega_i^{t+1} \sim \cN(0, \sigma_{\omega}^2\mI)$
            \STATE $c_i^{t+1} = \clip_{\tau}(v_i^{t+1} - g_i^t) + \omega_i^{t+1}$ 
            \STATE $g_i^{t+1} = g_i^t + \hbeta\clip_{\tau}(v_i^{t+1} - g_i^t)$
        \ENDFOR
        \FOR{$i\in\cB$}
            \STATE $c_i^{t+1} = (*)$ \hfill{sends arbitrary vector}
        \ENDFOR
        \STATE $m_i^{t+1} = m_i^t + \hbeta c_i^{t+1}~$ ${}^{(*)}$
         \STATE $g^{t+1} =\aragg(m_1^{t+1},\dots,m_n^{t+1})$
        
    \ENDFOR
\end{algorithmic}	
{\small ${}^{(*)}$ This step can be moved to the client side without affecting the DP guarantee and convergence analysis, thanks to the post-processing property \citep[Proposition 2.1]{dwork2014algorithmic}.}
\end{algorithm}

Our convergence analysis is based on a carefully designed Lyapunov function
\begin{equation}
    \label{eq:lyapunov_function}
    \Phi^t \eqdef 
      f(x^t) - f^\star + \frac{8\gamma\beta}{\hbeta^2\eta^2}\frac{1}{G}\sum_{i\in\cG}\|v_i^t-\nabla f_i(x^t)\|^2 + \frac{2\gamma}{\hbeta\eta}\frac{1}{G}\sum_{i\in\cG}\|g_i^t-v_i^t\|^2 + \frac{2\gamma}{\beta}\|\overline{v}^t-\nabla f(x^t)\|^2, 
\end{equation}
where $\overline{v}^t \eqdef \frac{1}{G}\sum_{i\in\cG}v_i^t$, and $\eta \sim \tau$, whose value will be specified in the main convergence theorem. 

The decrease of this Lyapunov function guarantees convergence of the algorithm, as it implies that $f(x^t) -f^\star$ decreases during the training. Moreover, as the iterations proceed, the auxiliary variables $v_i^t$ and $g_i^t$ become increasingly accurate approximations of $\nabla f_i(x^t)$ and $v_i^t$, respectively. In other words, the momentum buffers provide a smoothed estimate of the true gradients, demonstrating the benefit of momentum in reducing variance and effectively mitigating the noise introduced during training. The coefficients in the Lyapunov function are carefully chosen to balance its different components, ensuring that each term contributes at the same order and none dominates the others. The detailed proofs are deferred to \Cref{apx:main_theorem,apx:special_cases}.

\begin{restatable}[Simplified]{theorem}{maintheorem}
\label{th:main_theorem}
Let Assumptions~\ref{asmp:smoothness}, \ref{asmp:stoch_grad}, and \ref{asmp:bounded_heterogeneity} hold, and $\alpha\in(0,1)$ be a failure probability.
    Let $\wtilde{B}_{\rm init} \eqdef \max_{i\in\cG}\{\|\nabla f_i(x^0)\|\} > 3\tau$, $\eta \sim \nicefrac{\tau}{\wtilde{B}_{\rm init}}$, and $\Delta \ge \Phi^0$ for $\Phi^0$ defined in \eqref{eq:lyapunov_function}. Then there exists a choice of hyperparameters $\gamma, \beta$, and $\hbeta$ (see a full statement in \Cref{th:main_theorem_full}) such that the iterates of \algname{Byz-Clip21-SGD2M} (Alg.~\ref{alg:byz_clip21_sgd2m}) run with DP-noise variance $\sigma_{\omega}^2 > 0$ and $(c,\deltabyz)$-robust aggregator $(\deltabyz > 0)$ satisfy with probability at least $1-\alpha$ that 
    $$ \frac{1}{T}\sum_{t=0}^{T-1}\|\nabla f(x^t)\|^2 \le \wtilde{\cO}\left(\Delta_0 (R + R^{2/3})\right),$$
    where 
    \begin{equation*} 
        R = \frac{\sqrt{d}\sigma_{\omega}}{\sqrt{GT}\tau} + \frac{\sqrt{c\deltabyz d}\sigma_{\omega}}{\sqrt{T}\tau} + \sqrt{c\deltabyz}, \quad F^0 \eqdef f(x^0)-f^\star,
    \end{equation*}
    \[ \Delta_0 \eqdef \sqrt{L\Delta}(\sqrt{L\Delta} + \wtilde{B} + \sigma) = \wtilde{\cO}(LF^0 + \zeta^2 + \sigma(\sqrt{LF^0} +\zeta)),
    \] 
    and $\wtilde{\cO}$ hides constant and logarithmic factors and higher order terms decreasing in $T$.\footnote{The failure probability $\alpha$ appears in the logarithmic terms similar to prior work \citep{fang2019sharp,bassily2021differentially,sadiev2023high}.}
\end{restatable}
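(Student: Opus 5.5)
We argue by induction on $t$, in the style of the \algname{Clip21-SGD2M} analysis of \citet{islamov2025double}. The central object is the high-probability event $E_t$ on which $\Phi^k\le 2\Delta$ for all $k\le t$. On this event we also have the following bounds for all $i\in\cG$ and $k\le t$: $\|\nabla f_i(x^k)\|\le \wtilde B$, $\|v_i^k-g_i^k\|\le \wtilde B$, and $\|v_i^k-\nabla f_i(x^k)\|\lesssim \sqrt{L\Delta}+\sigma\sqrt{\log(GT/\alpha)}$. These follow from $f(x^k)-f^\star\le 2\Delta$ together with smoothness, i.e. $\|\nabla f_i(x^k)\|\le\|\nabla f_i(x^0)\|+L\|x^k-x^0\|$, and from the weighted Lyapunov terms. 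Here $\wtilde B$ collects $\wtilde B_{\rm init}$, $\sqrt{L\Delta}$ and $\zeta$, using \Cref{asmp:bounded_heterogeneity}.

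\textbf{Effect of clipping.} The key point is that when $\|v_i^{k+1}-g_i^k\|\le \wtilde B$, the clipped step $\clip_\tau(v_i^{k+1}-g_i^k)$ has the same direction as $v_i^{k+1}-g_i^k$ and length at least $(\tau/\wtilde B)\|v_i^{k+1}-g_i^k\|$. Hence $\|g_i^{k+1}-v_i^{k+1}\|\le(1-\hbeta\eta)\|g_i^k-v_i^{k+1}\|$ with $\eta\sim\tau/\wtilde B$, which is an EF21-type contraction. The assumption $\wtilde B_{\rm init}>3\tau$ is what ensures this regime.

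\textbf{One-step descent.} With the step above, we derive the per-step inequality. We write $g^t=\overline m^t+(g^t-\overline m^t)$, where $\overline m^t=\frac1G\sum_{i\in\cG}m_i^t$. Since $m_i^0=g_i^0$, we have $\overline m^t=\overline g^t+\hbeta\sum_{k\le t}\overline\omega^k$, where $\overline g^t=\frac1G\sum_{i\in\cG}g_i^t$ and $\overline\omega^k=\frac1G\sum_{i\in\cG}\omega_i^k$. \Cref{def:arrag} then gives
\[
\|g^t-\overline m^t\|^2\le c\deltabyz\frac1G\sum_{i\in\cG}\|m_i^t-\overline m^t\|^2,
\]
and this spread is controlled by the spread of $g_i^t$ (bounded by $\wtilde B^2$ on $E_t$) plus the spread of the accumulated DP noise, $\hbeta^2\sum_k\|\omega_i^k-\overline\omega^k\|^2\sim\hbeta^2 t d\sigma_\omega^2$. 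Smoothness yields
\[
f(x^{t+1})\le f(x^t)-\tfrac\gamma2\|\nabla f(x^t)\|^2+\gamma\|g^t-\nabla f(x^t)\|^2 .
\]
We split $g^t-\nabla f(x^t)$ into five parts: $\overline v^t-\nabla f(x^t)$, the averaged $v_i^t-g_i^t$, the accumulated noise, the robust-aggregation error, and the clipping residual. The recursions for $\|v_i^t-\nabla f_i(x^t)\|^2$, $\|g_i^t-v_i^t\|^2$ and $\|\overline v^t-\nabla f(x^t)\|^2$ are the standard momentum and EF21 ones, and the Lyapunov weights in \eqref{eq:lyapunov_function} are chosen so that they telescope. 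Summing over $t$ gives
\[
\frac1T\sum_t\|\nabla f(x^t)\|^2\lesssim \frac{\Delta}{\gamma T}+\text{(martingale terms)}+\hbeta^2 T\frac{d\sigma_\omega^2}{G}+c\deltabyz(\wtilde B^2+\hbeta^2 Td\sigma_\omega^2).
\]

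\textbf{Concentration.} The martingale terms involve inner products of the sub-Gaussian noise $\theta_i$ and the Gaussian $\omega_i$ with bounded (on $E_t$) predictable vectors, together with squared-norm deviations. We handle them with Freedman/Bernstein-type inequalities for sub-Gaussian martingale differences, with the usual truncation to the event $E_t$. A union bound over $t\le T$ and $i\in\cG$ produces the logarithmic factors and closes the induction $\Prob(E_T)\ge 1-\alpha$. \textbf{The main obstacle is this step.} We must show that all noise contributions stay below $\Delta$ uniformly in time, so that the boundedness needed for the clipping contraction is self-sustaining. The accumulated DP noise inside $m_i^t$ in particular grows like $\hbeta\sqrt{td}\sigma_\omega$, and this is what forces $\hbeta$ small.

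\textbf{Tuning.} Finally, we choose $\beta,\hbeta,\gamma$ to balance the bound: $\gamma\sim\hbeta\eta\beta/L$-type constraints, and $\hbeta$ chosen to balance $\Delta/(\gamma T)$ against $\hbeta^2 Td\sigma_\omega^2(1/G+c\deltabyz)$. Substituting $\eta\sim\tau/\wtilde B$ and $\Delta_0=\sqrt{L\Delta}(\sqrt{L\Delta}+\wtilde B+\sigma)$, the DP terms give $\sqrt d\sigma_\omega/(\sqrt{GT}\tau)$ and $\sqrt{c\deltabyz d}\sigma_\omega/(\sqrt T\tau)$. The robust-aggregation bias gives $\sqrt{c\deltabyz}$ after normalization by $\eta$. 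The interplay of $\beta$ with the stochastic noise yields the $R^{2/3}$ term, as in \citet{islamov2025double}.
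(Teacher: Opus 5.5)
The gap is your opening claim that, on $\{\Phi^k\le 2\Delta\}$, you automatically get per-client bounds $\|\nabla f_i(x^k)\|\le\wtilde B$, $\|v_i^k-g_i^k\|\le\wtilde B$ and $\|v_i^k-\nabla f_i(x^k)\|=\cO(\sqrt{L\Delta}+\sigma\sqrt{\log(GT/\alpha)})$. That claim does not follow.

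\begin{itemize}
\item The Lyapunov function only controls client \emph{averages}. The best single-client bound it yields is $\|g_i^k-v_i^k\|^2\le G\hbeta\eta\Delta/\gamma$. Since $\gamma=\cO(\hbeta\eta/L)$ is needed anyway to absorb the EF21 terms, this is no better than order $GL\Delta$, and it degrades as the other constraints shrink $\gamma$.
\item The bound $f(x^k)-f^\star\le 2\Delta$ only controls $\|\nabla f(x^k)\|$. \Cref{asmp:bounded_heterogeneity} holds only at $x^0$.
\item The route $\|\nabla f_i(x^k)\|\le\|\nabla f_i(x^0)\|+L\|x^k-x^0\|$ needs an a priori bound on $\gamma\sum_{s<k}\|g^s\|$, which you never establish. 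Function values give no control of $\|x^k-x^0\|$ in the non-convex case.
\end{itemize}

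Everything downstream rests on these per-client bounds. The EF21 contraction with $\eta\sim\tau/\wtilde B$ requires $\|v_i^{k+1}-g_i^k\|\le\wtilde B$ for \emph{every} $i\in\cG$ at \emph{every} step. Your aggregation bias $c\deltabyz\wtilde B^2$ requires a per-client bound on $\|g_i^t\|$. So the self-sustaining invariant is asserted rather than proved, and the concentration step is not the real obstacle.

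The paper closes this with two devices your proposal lacks.

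\begin{itemize}
\item \textbf{Deterministic spread bound.} The $g_i$ start from zero and every increment is $\hbeta\clip_\tau(\cdot)$, so $\|g_i^t\|\le\hbeta(t+1)\tau$. This gives $\|g^t-\overline{m}^t\|^2\le 8c\deltabyz\hbeta^2T^2\tau^2+\frac{8c\deltabyz\hbeta^2}{G}\sum_{i\in\cG}\|\Omega_i^t\|^2$ with no gradient bound at all. This is why heterogeneity enters only through $\Delta_0$ (cf.\ \Cref{rem:2}).
\item \textbf{Per-client clipping invariant.} \Cref{lem:control_clip} shows that whenever $\|v_i^t-g_i^{t-1}\|>\tau$, one step decreases it by at least $\hbeta\tau/2$. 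This requires $\beta=\cO(\hbeta\tau/(\sqrt{L\Delta}+B_{\rm init}+b))$ and $\beta=\cO(1/(\sqrt{c\deltabyz}T))$. It is supported by per-client inductive bounds on $\|\overline{m}^t\|$, $\|\overline{g}^t\|$ and $\|\nabla f_i(x^t)-v_i^t\|$, derived from the momentum recursions (\Cref{lem:bound_mt_bar,lem:bound_nabla_fi_xt_vi_t,lem:bound_gt_hat}) and carried in the induction event.
\end{itemize}

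These restrictions are what produce the stated rate. With $\gamma=\beta/(12L)$ they give $\frac{L\Delta(\sqrt{L\Delta}+B_{\rm init}+\sigma)}{T\hbeta\tau}+L\Delta\sqrt{c\deltabyz}$. One then plugs in the $\hbeta$ constraints from two sources:

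\begin{itemize}
\item the DP-noise bias: $\hbeta\le\sqrt{L\Delta}/a$ and $\hbeta^3a^2T=\cO((L\Delta)^{3/2}/\tau)$;
\item the aggregation bias: $\hbeta=\cO(\sqrt{L\Delta}/((c\deltabyz)^{1/3}\tau T))$.
\end{itemize}

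This yields $\Delta_0(R+R^{2/3})$. In particular, $\sqrt{c\deltabyz}$ does not arise from ``normalizing by $\eta$''; your own bias term is $c\deltabyz\wtilde B^2$, which has a different form. Likewise, $R^{2/3}$ comes from the DP and aggregation biases, not from the stochastic noise.

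A smaller point: keep the $-\frac{1}{4\gamma}\|x^{t+1}-x^t\|^2$ term in your descent inequality. It is what absorbs the $L^2\|x^{t+1}-x^t\|^2$ terms produced by the momentum and EF21 recursions; without it the Lyapunov weights do not telescope.
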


\begin{remark}
\label{rem:2}
We stress that \Cref{asmp:bounded_heterogeneity} is invoked only to bound the constant $\Delta_0$; the convergence guarantee itself still holds without this condition due to the use of clipping. This general result highlights that bounded heterogeneity is primarily used to control the norms of the updates. In our setting, the updates $c_i^{t+1}$ are used to update the momenta $m_i^{t+1}$, which are then aggregated through $\aragg$. The stability of $m_i^{t+1}$ is maintained by the clipping operator, since without such control the norm could otherwise grow unbounded. Moreover, due to our design choice $\hat\beta \sim 1/T$, the growth rate of the norm is limited to $\mathcal{O}(t\tau\hat\beta) = \mathcal{O}(1)$, ensuring that the momenta remain well-behaved throughout the training process. 
\end{remark}

\begin{figure*}[t]
    \centering
    \begin{tabular}{c}

        \hspace{-2mm}\includegraphics[width=0.9\linewidth]{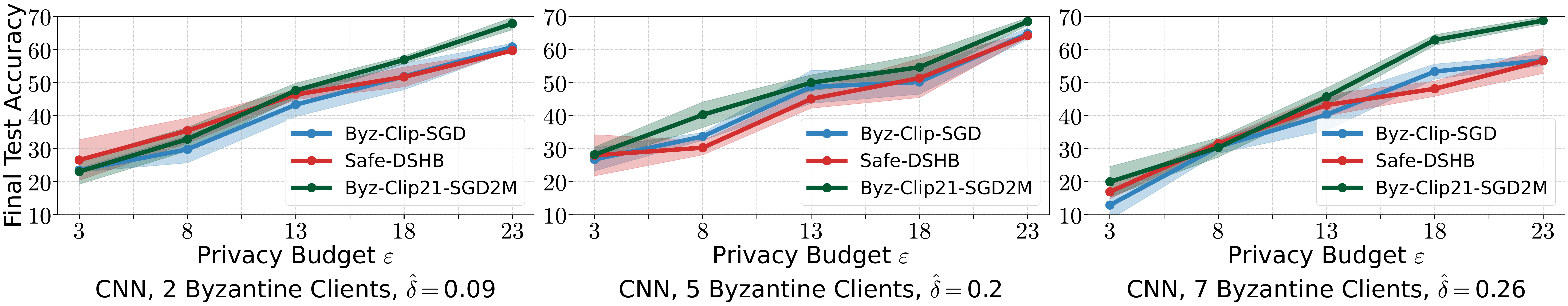} \\

        \hspace{-2mm}\includegraphics[width=0.9\linewidth]{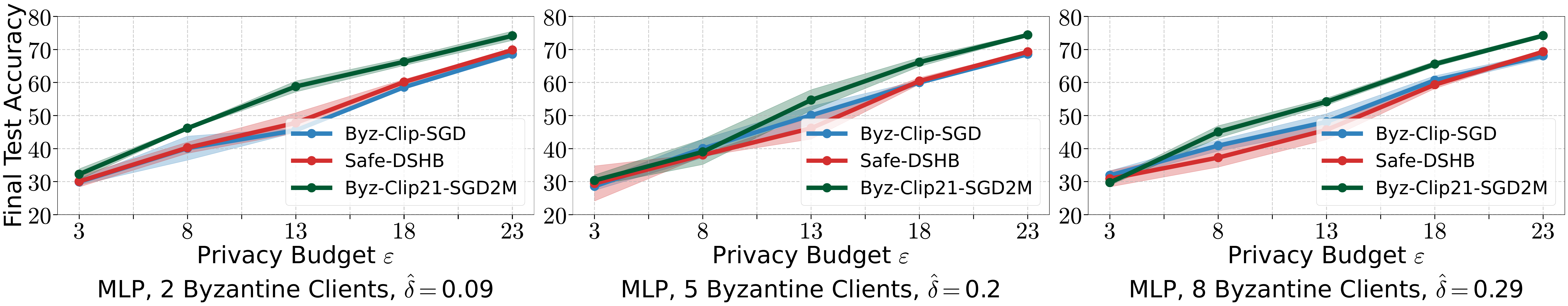}



    \end{tabular}

    \caption{Performance of \algname{Byz-Clip21-SGD2M}, \algname{Byz-Clip-SGD} (\Cref{alg:byz_clip_sgd}), and \algname{Safe-DSHB} (\Cref{alg:safe_dshb}) when training CNN (top line) and MLP (bottom line) models on the MNIST dataset for different numbers of Byzantine clients and privacy budgets, when Byzantine clients use \algname{IPM} attack.}
    \label{fig:cnn_test_acc}
\end{figure*}

Next, we consider the implication of this theorem when $\sigma_{\omega}$ is set in a special way such that each step of \algname{Byz-Clip21-SGD2M} satisfies local-$(\varepsilon,\delta)$-DP.

\begin{restatable}[Simplified]{corollary}{corollarywithdpwithbyz}\label{cor:withdpwithbyz} Under the setup of \Cref{th:main_theorem}, let $\sigma_{\omega} = \Theta\left(\frac{\tau}{\varepsilon}\sqrt{T\log(\frac{T}{\delta})\log(\frac{1}{\delta})}\right)$ for some $\varepsilon,\delta\in(0,1)$. Then there exists a choice of hyperparameters $\gamma,\beta$, and $\hbeta$ (see a full statement in \Cref{cor:cor_main_theorem_full}) such that all $T$ iterations of \algname{Byz-Clip21-SGD2M} satisfy local $(\varepsilon, \delta)$-DP with robustness-privacy-utility trade-off bounded as
\begin{equation*} 
  \frac{1}{T}\sum_{t=0}^{T-1}\|\nabla f(x^t)\|^2 \le  \wtilde{\cO}\left(\Delta_0(R + R^{2/3})\right),
\end{equation*}
with probability at least $1-\alpha$, where 
\begin{equation*} 
    R = \textcolor{orange}{\frac{\sqrt{d}}{\sqrt{G}\varepsilon}} + \textcolor{blue}{\sqrt{c\deltabyz}\frac{\sqrt{d}}{\varepsilon}} +  \textcolor{purple}{\sqrt{c\deltabyz}}.
\end{equation*}
\end{restatable}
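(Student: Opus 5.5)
The corollary follows by combining two ingredients: a privacy accounting argument that fixes $\sigma_\omega$, and a direct substitution of that $\sigma_\omega$ into the rate of \Cref{th:main_theorem}.

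\textbf{Privacy.} At each iteration, a regular client $i\in\cG$ releases only
\[
c_i^{t+1}=\clip_\tau(v_i^{t+1}-g_i^t)+\omega_i^{t+1}.
\]
The clipped vector has norm at most $\tau$. Hence, for two neighbouring local datasets, the $\ell_2$-sensitivity of the released quantity is at most $2\tau$, conditionally on the past transcript. Note that $g_i^t$ and $v_i^t$ are functions of the past, and the Byzantine messages and the aggregation are post-processing \citep[Proposition 2.1]{dwork2014algorithmic}.

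By the Gaussian mechanism, one step with noise $\sigma_\omega=\Theta\big(\tfrac{\tau}{\varepsilon_0}\sqrt{\log(1/\delta_0)}\big)$ is $(\varepsilon_0,\delta_0)$-DP. Adaptive composition over $T$ steps then follows from the advanced composition theorem of \citet{dwork2014algorithmic}. Choosing $\varepsilon_0\sim \varepsilon/\sqrt{T\log(1/\delta)}$ and $\delta_0\sim\delta/T$ yields overall $(\varepsilon,\delta)$-DP, which is exactly the stated choice
\[
\sigma_\omega=\Theta\Big(\tfrac{\tau}{\varepsilon}\sqrt{T\log(T/\delta)\log(1/\delta)}\Big).
\]
Alternatively, one can use Rényi-DP/moments accounting, which gives the same scaling up to logarithmic factors. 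Since the message to the server (or $m_i^{t+1}$, if computed client-side) is a post-processing of $c_i^{\cdot}$, all $T$ iterations satisfy local $(\varepsilon,\delta)$-DP.

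\textbf{Utility.} Substituting this $\sigma_\omega$ into $R$ from \Cref{th:main_theorem} gives
\[
\frac{\sqrt d\,\sigma_\omega}{\sqrt{GT}\,\tau}=\Theta\Big(\frac{\sqrt d}{\sqrt G\,\varepsilon}\sqrt{\log(T/\delta)\log(1/\delta)}\Big)
\]
and
\[
\frac{\sqrt{c\deltabyz d}\,\sigma_\omega}{\sqrt T\,\tau}=\Theta\Big(\sqrt{c\deltabyz}\frac{\sqrt d}{\varepsilon}\sqrt{\log(T/\delta)\log(1/\delta)}\Big).
\]
The $\tau$ and $\sqrt T$ factors cancel, and the logarithms are absorbed into $\wtilde{\cO}$. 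The term $\sqrt{c\deltabyz}$ is unchanged. The hyperparameters $\gamma,\beta,\hbeta$ are taken as in \Cref{th:main_theorem_full}, with $\sigma_\omega$ replaced by its value, so that the probability-$1-\alpha$ bound $\wtilde{\cO}(\Delta_0(R+R^{2/3}))$ transfers verbatim.

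\textbf{Main obstacle.} The delicate point is checking that the step-size and momentum conditions in the full theorem remain admissible once $\sigma_\omega$ grows like $\sqrt T$. For instance, $\hbeta\sim 1/T$ and $\gamma$ are chosen depending on $\sigma_\omega$, and the "higher order terms decreasing in $T$" hidden in $\wtilde{\cO}$ might no longer decrease once $\sigma_\omega\propto\sqrt T$. I would handle this by re-expressing every occurrence of $\sigma_\omega$ in the full statement through $\sigma_\omega/\sqrt T=\Theta(\tau\sqrt{\log}/\varepsilon)$, which is $T$-independent. I would then verify that each hidden term is a function of $\sigma_\omega^2/T$ times a negative power of $T$, and hence still vanishes. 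The assumption $\sqrt d/(\sqrt G\varepsilon)\ge1$ from Table~\ref{tab:summary_experiments} may be needed to let $R$ dominate residual constants.
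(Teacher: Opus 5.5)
Your proposal is correct and follows the paper's own argument: a per-step Gaussian mechanism with budget $\varepsilon/(2\sqrt{2T\log(1/\delta)})$ and $\delta/T$, advanced composition over the $T$ steps, and substitution of $\sigma_\omega$ into the rate of \Cref{th:main_theorem_full} so that the $\tau$ and $\sqrt{T}$ factors cancel. The ``obstacle'' you flag is already handled inside the paper's proof of \Cref{th:main_theorem_full}, which works with $\sigma_\omega\sim\sqrt{T}$ (hence $a\sim T$ and $\hbeta\sim 1/T$) when deciding which terms decay in $T$.
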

In \Cref{cor:withdpwithbyz}, the \textcolor{orange}{first term} corresponds to the error arising from privacy, while the \textcolor{purple}{last term}  reflects the error introduced by the presence of Byzantine clients. The \textcolor{blue}{middle term} represents the additional penalty incurred when enforcing both requirements simultaneously. 
Our utility bound takes a similar form as the lower bound in the strongly convex case \citep{allouah2023privacy}, which demonstrates that our analysis is essentially tight. In particular, \Cref{th:main_theorem} shows that the impacts of privacy and Byzantine adversaries multiply. The privacy analysis of \algname{Byz-Clip21-SGD2M} then follows from the well-known result in (\citet[Theorem 3.22]{dwork2014algorithmic} in combination with the advanced composition theorem (\citep[Theorem 3.20]{dwork2014algorithmic}), we obtain that all $T$ iterations of \algname{Byz-Clip21-SGD2M} are local $(\varepsilon, \delta)$-differentially private.

\paragraph{Comparison to prior work.}

We analyze \algname{Byz-Clip21-SGD2M} under the standard $L$-smoothness and $\sigma$-sub-Gaussian assumptions, thereby improving over earlier work that required the far more restrictive bounded-gradient condition \citep{zhu2022bridging, allouah2023privacy}. To our knowledge, the only high-probability study of optimization algorithms in the simultaneous presence of DP and Byzantine adversaries is due to \citet{xiang2023practical}; however, their analysis is limited to a single attack model and relies on strong assumptions, thereby limiting its applicability. Other works rely on expectation-based analysis, yet still impose restrictive assumptions; see \Cref{tab:summary_experiments} for a concrete comparison. By contrast, our convergence guarantee for \algname{Byz-Clip21-SGD2M} depends only logarithmically on the failure probability  $\alpha$, which is consistent with standard high-probability analyses, and holds for any aggregation rule satisfying \Cref{def:arrag}. 


\paragraph{Main Challenges in the Analysis.}

Server-side aggregation induces a bias between the server aggregate $g^t$ and the honest-average update $\frac{1}{G}\sum_{i\in\cG}m_i^t$. This bias scales multiplicatively with the DP-noise variance $\sigma_{\omega}$ and $c\deltabyz$; controlling it without assuming that the gradients are bounded requires a careful analysis with well-calibrated hyperparameters. To address this, we treat clipping as a contractive compressor, with the complication that its contraction factor is input-dependent, which motivates a high-probability analysis. Using a refined inductive argument, we show that the inputs to the clipping operator remain bounded with high probability. Finally, obtaining only poly-logarithmic dependence on the failure probability $\alpha$ hinges on a careful decomposition of the noise terms, each controlled via concentration inequalities.

\subsection{Convergence in Special Cases}

We now examine several special cases of our bound, which showcase both the tightness of our analysis and the broad applicability of \algname{Byz-Clip21-SGD2M}. Most follow directly from \Cref{th:main_theorem}.


\paragraph{Absence of DP and Byzantine Adversaries.} Setting $\sigma_{\omega}=0$ and $\deltabyz=0$ significantly simplifies the choice of hyperparameters. In particular, we can set $\gamma\sim \nicefrac{\beta}{L}$ and $\hbeta=1$. The following corollary provides a formal convergence in this setting, obtained directly from \Cref{th:main_theorem}.

\begin{figure*}[t]
    \centering
    \begin{tabular}{c}

        \vspace{-1mm}
        \hspace{-2mm}\includegraphics[width=\linewidth]{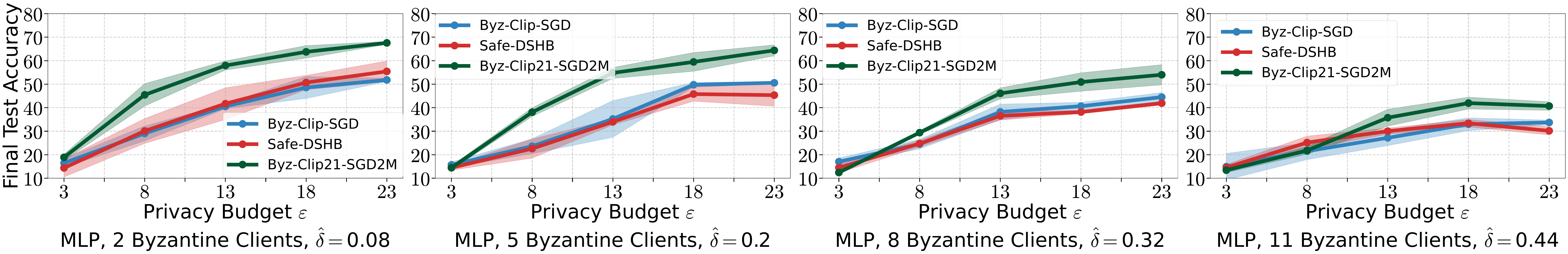} \\
    
        \hspace{-2mm}\includegraphics[width=\linewidth]{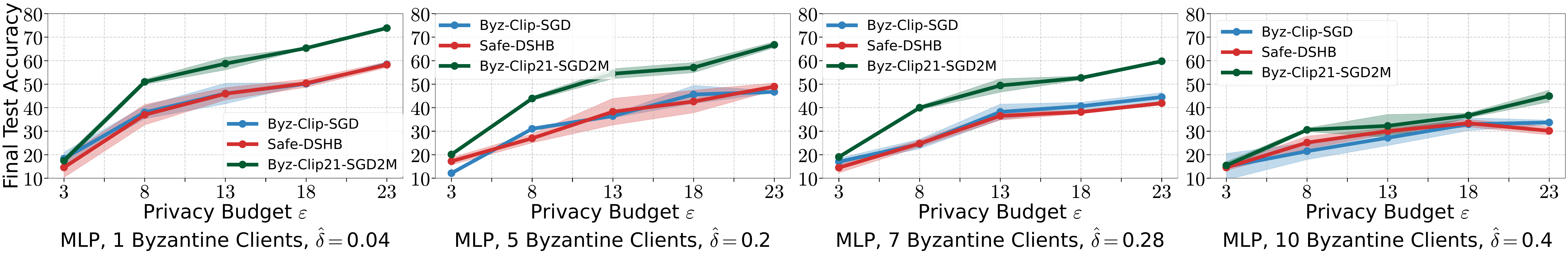}



    \end{tabular}
    \caption{Performance of \algname{Byz-Clip21-SGD2M}, \algname{Byz-Clip-SGD} (\Cref{alg:byz_clip_sgd}), and \algname{Safe-DSHB} (\Cref{alg:safe_dshb}) when training CNN (top line) and MLP (bottom line) models on the MNIST dataset for different numbers of Byzantine clients and privacy budgets, when Byzantine clients use label flipping attack.}
    \label{fig:label_flip}
    \vspace{-4mm}
\end{figure*}

\begin{restatable}[Simplified]{corollary}{nodpnobyz}\label{eq:corollary_no_dp_no_byz}
    Under assumptions of \Cref{th:main_theorem}, let $\sigma_{\omega}=0$ and $\deltabyz=0$. Then, there exists a set of hyperparameters $\gamma \sim \nicefrac{\beta}{L}$ and $\hbeta=1$ such that the iterates of \algname{Byz-Clip21-SGD2M} satisfy with probability at least $1-\alpha$ that $\frac{1}{T}\sum_{t=0}^{T-1}\|\nabla f(x^t)\|^2$ is bounded by
    \begin{equation*} 
    \wtilde{\cO}\left(\frac{L\Delta(1+\wtilde{B}/\tau)}{T} + \frac{\sigma(\sqrt{L\Delta} + \wtilde{B} + \sigma)}{\sqrt{GT}}\right)
    \end{equation*}
\end{restatable}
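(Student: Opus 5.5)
The plan is to obtain this corollary by specializing the full statement \Cref{th:main_theorem_full} to the case $\sigma_{\omega}=0$, $\deltabyz=0$, rather than re-running the whole argument. With $\deltabyz=0$ there are no Byzantine clients, so $n=G$ and the aggregation bound \eqref{eq:arrag} forces $g^t=\frac{1}{G}\sum_{i\in\cG}m_i^t$ exactly. With $\sigma_\omega=0$ we get $c_i^{t+1}=\clip_\tau(v_i^{t+1}-g_i^t)$, hence $m_i^t=g_i^t$ for all $t$ by induction from $m_i^0=g_i^0$. The method therefore collapses to \algname{Clip21-SGD2M} without DP noise. In the Lyapunov analysis, every term in the one-step descent inequality for $\Phi^t$ involving the aggregation bias $\|g^t-\frac1G\sum_i m_i^t\|^2$ or the DP noise $\omega_i^t$ vanishes identically. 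These are exactly the terms that forced $\hbeta\sim 1/T$ and a small step size in the general theorem.

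\textbf{Step 1: hyperparameters.} Once those terms are gone, the only remaining constraints on $\gamma$ come from the descent lemma $f(x^{t+1})\le f(x^t)-\frac{\gamma}{2}\|\nabla f(x^t)\|^2+\frac{\gamma}{2}\|g^t-\nabla f(x^t)\|^2-\frac{1}{4\gamma}\|x^{t+1}-x^t\|^2$. They also come from the recursions for $\frac1G\sum\|v_i^t-\nabla f_i(x^t)\|^2$, $\frac1G\sum\|g_i^t-v_i^t\|^2$ and $\|\overline v^t-\nabla f(x^t)\|^2$. In those recursions $L^2\gamma^2\|g^t\|^2$ appears with coefficients of order $\frac{\gamma\beta}{\hbeta^2\eta^2}\cdot\frac{1}{\beta}$ or $\frac{\gamma}{\beta}\cdot\frac{1}{\beta}$. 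Taking $\hbeta=1$ and $\eta\sim\tau/\wtilde B$, I will check that these are absorbed by $-\frac{1}{4\gamma}\|x^{t+1}-x^t\|^2$ when $\gamma\lesssim \beta\eta/L\sim \frac{\beta\tau}{L\wtilde B}$, together with $\gamma\lesssim\beta/L$. This yields the stated $\gamma\sim\beta/L$ scaling up to the factor $(1+\wtilde B/\tau)$. That factor is exactly what produces the $L\Delta(1+\wtilde B/\tau)/T$ term after telescoping $\Phi^T\le\Phi^0+\ldots\le\Delta+\ldots$.

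\textbf{Step 2: stochastic terms.} The surviving noise terms are the martingale differences $\langle\cdot,\theta_i^{t+1}\rangle$ and the squared noise $\beta^2\|\frac1G\sum\theta_i^{t+1}\|^2$, the latter with variance scaling $\sigma^2/G$. I will control the sum of the martingale part using the Freedman/sub-Gaussian concentration lemmas already used for the main theorem. The squared part will be handled with the sub-Gaussian norm bound, which gives $\beta\sigma^2\log(T/\alpha)/G$ per step. Summing over $T$ steps and dividing by $\gamma T$ gives $\frac{\Delta}{\gamma T}+\frac{\beta\sigma^2}{G}+\ldots$. Choosing $\beta\sim\min\{1,\sqrt{L\Delta G/(\sigma^2T)}\}$ balances this to $\sigma\sqrt{L\Delta}/\sqrt{GT}$. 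The cross terms involving $\wtilde B$ and $\sigma$ appear through the high-probability radius of the inputs to clipping. Together they give the $\sigma(\sqrt{L\Delta}+\wtilde B+\sigma)/\sqrt{GT}$ term.

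\textbf{Step 3 and main obstacle.} The delicate part is the inductive high-probability argument that the clipping inputs $v_i^{t+1}-g_i^t$ stay bounded. We need $\|v_i^{t+1}-g_i^t\|\lesssim\wtilde B$ so that clipping acts as a contractive compressor with factor $\eta\sim\tau/\wtilde B$. In the general theorem this used $\hbeta\sim1/T$ to limit the growth of $g_i^t$. With $\hbeta=1$ I will instead run the induction directly on the event that $\Phi^k\le2\Delta$ for all $k\le t$. On that event, $L$-smoothness bounds $\|\nabla f_i(x^t)\|\le\wtilde B+\sqrt{L\Delta}$, and sub-Gaussian tails bound the $v_i^t$ deviations. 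I would first try reusing the induction lemma of \Cref{th:main_theorem_full} verbatim with $\hbeta=1$ substituted, checking that its requirements hold under $\gamma\sim\beta\tau/(L\wtilde B)$. A union bound over $t\le T$ then yields probability $1-\alpha$, with the logarithmic factors hidden in $\wtilde{\cO}$.
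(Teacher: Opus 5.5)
Your overall route is the paper's: its proof of this corollary substitutes $\hbeta=1$, $a=\hat{a}=0$, $\deltabyz=0$ into the restrictions of \Cref{th:main_theorem_full}, reuses the whole induction, and reads the rate off $\frac{1}{T}\sum_{t}\|\nabla f(x^t)\|^2\le\frac{4\Delta}{\gamma T}$. Your observations that the aggregator returns the exact average, that $m_i^t=g_i^t$, and that every term driven by $a$, $\hat{a}$, $c\deltabyz$ vanishes are what justify this. However, two steps of your sketch are wrong as written.

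The first is in Step 1: your own coefficient count does not give $\gamma\lesssim\beta\eta/L$. The $R^t$ term from the $\frac{8\gamma\beta}{\hbeta^2\eta^2}\wtilde{P}^t$ part is $\frac{24\gamma L^2}{\hbeta^2\eta^2}R^t$. Absorbing it into $-\frac{1}{4\gamma}R^t$ requires $\gamma\lesssim\hbeta\eta/L=\eta/L\sim\tau/(LB_{\rm init})$, with no factor $\beta$; this is restriction \eqref{eq:wnojenoqdqw} with $\hbeta=1$. The slip matters. With $\gamma\sim\beta\tau/(LB_{\rm init})$ you would get $\frac{\Delta}{\gamma T}\sim\frac{L\Delta B_{\rm init}}{\beta\tau T}$, and after choosing $\beta\sim\sqrt{L\Delta G/(\sigma^2T)}$ the leading term becomes $\frac{B_{\rm init}}{\tau}\cdot\frac{\sigma\sqrt{L\Delta}}{\sqrt{GT}}$, which is worse than the claimed bound. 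Only with $\frac{1}{\gamma}\sim\frac{L}{\beta}+\frac{LB_{\rm init}}{\tau}$ does the factor $B_{\rm init}/\tau$ land solely in the $1/T$ term.

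The second is your alternative in Step 3, that on $\{\Phi^k\le 2\Delta\}$ smoothness gives $\|\nabla f_i(x^t)\|\le\wtilde{B}+\sqrt{L\Delta}$; this does not hold. $\Phi^t$ controls $f(x^t)-f^\star$, hence only $\|\nabla f(x^t)\|$, and on average only differences such as $v_i^t-\nabla f_i(x^t)$. Moreover \Cref{asmp:bounded_heterogeneity} is imposed only at $x^0$. The only available bound on $\|\nabla f_i(x^t)-\nabla f_i(x^0)\|$ is $L\gamma\sum_{s<t}\|g^s\|$, which is of order $\beta T$ times the bound on $\|g^s\|$ and is unbounded for $\beta\sim T^{-1/2}$. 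The paper never needs $\|\nabla f_i(x^t)\|$. It propagates bounds on $\|\nabla f_i(x^t)-v_i^t\|$ and $\|\overline{g}^t\|$ (\Cref{lem:bound_nabla_fi_xt_vi_t,lem:bound_gt_hat}), and uses \Cref{lem:control_clip} to show that $\|v_i^{t+1}-g_i^t\|$ drops by $\hbeta\tau/2$ whenever it exceeds $\tau$. So drop that alternative and reuse the induction verbatim, with two caveats.
\begin{enumerate}
\item The hypothesis $\hbeta\le\frac{1}{24}$ in \Cref{lem:bound_mt_bar,lem:control_clip} is used only for the coefficient of $\sqrt{2c\deltabyz}\hat{a}$, which is zero here, so $\hbeta=1$ is admissible.
\item \Cref{lem:control_clip} needs $\beta\lesssim\tau/(\sqrt{L\Delta}+B_{\rm init}+b)$. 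This is a constraint on $\beta$ itself, not something to check for a given $\gamma$, and through $\gamma=\beta/(12L)$ it enters $\frac{4\Delta}{\gamma T}$. Its $B_{\rm init}$ and $b\le B_{\rm init}$ parts are absorbed into $\frac{L\Delta B_{\rm init}}{\tau T}$. Its $\sqrt{L\Delta}$ part, however, produces $\frac{(L\Delta)^{3/2}}{\tau T}$. The stated bound and the paper's one-line proof both omit this term, unless it is counted among the hidden lower-order terms.
\end{enumerate}
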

This result yields the standard $\wtilde{\cO}(\nicefrac{1}{\sqrt{GT}})$ convergence rate, which matches known lower bounds in this regime \citep{arjevani2023lower}. Importantly, it is established under only the standard assumptions of $L$-smoothness and $\sigma$-sub-Gaussian gradient noise, thereby improving over prior results \citep{liu2022communication, noble2022differentially, allouah2024privacy}.


\paragraph{DP in the absence of Byzantine Clients.} As a next special case, we consider the setting where data is protected solely through DP noise injection, with no Byzantine clients present (i.e., $\deltabyz = 0$). In this regime, the privacy-utility trade-off can be obtained by simply setting $\deltabyz=0$ in \Cref{cor:withdpwithbyz}.

\begin{restatable}[Simplified]{corollary}{corollarywithdpnobyz}\label{cor:withdpnobyz} Under the setup of \Cref{th:main_theorem}, let $\sigma_{\omega} = \Theta\left(\frac{\tau}{\varepsilon}\sqrt{T\log(\frac{T}{\delta})\log(\frac{1}{\delta})}\right)$ for some $\varepsilon,\delta\in(0,1)$ and $\deltabyz=0$. Then there exists a choice of hyperparameters $\gamma,\beta$, and $\hbeta$ such that the iterates of \algname{Byz-Clip21-SGD2M} satisfy with probability at least $1-\alpha$ that $\frac{1}{T}\sum_{t=0}^{T-1}\|\nabla f(x^t)\|^2$ is bounded by
\begin{equation*} 
  \sqrt{L\Delta}(\sqrt{L\Delta} + \wtilde{B} + \sigma)\cdot \wtilde{\cO}\left(\frac{\sqrt{d}}{\sqrt{G}\varepsilon} + \left(\frac{\sqrt{d}}{\sqrt{G}\varepsilon}\right)^{2/3}\right).
\end{equation*}
\end{restatable}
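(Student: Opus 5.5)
This corollary is a direct specialization of \Cref{cor:withdpwithbyz} (equivalently, of \Cref{th:main_theorem} with the DP-calibrated noise level). I would argue in three steps: privacy, utility, and a check that the hyperparameter constraints survive $\deltabyz=0$.

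\textbf{Privacy.} With $\sigma_{\omega}=\Theta\bigl(\frac{\tau}{\varepsilon}\sqrt{T\log(\frac{T}{\delta})\log(\frac{1}{\delta})}\bigr)$, I would first record that each message $c_i^{t+1}$ is a Gaussian mechanism applied to $\clip_{\tau}(v_i^{t+1}-g_i^t)$. This input has $\ell_2$-sensitivity at most $2\tau$, so \citep[Theorem 3.22]{dwork2014algorithmic} gives per-step $(\varepsilon',\delta')$-DP with $\varepsilon'\sim\varepsilon/\sqrt{T\log(1/\delta)}$. Advanced composition \citep[Theorem 3.20]{dwork2014algorithmic} over $T$ steps, together with post-processing for $m_i^t$ and $g^t$, then yields local $(\varepsilon,\delta)$-DP. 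Byzantine clients play no role in this part, so the argument is identical to that of \Cref{cor:withdpwithbyz}.

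\textbf{Utility.} I would plug $\sigma_{\omega}$ into the quantity $R$ of \Cref{th:main_theorem} and set $\deltabyz=0$. The second and third terms of $R$ carry the factor $\sqrt{c\deltabyz}$ and therefore vanish. The first term becomes
\[
\frac{\sqrt{d}\sigma_{\omega}}{\sqrt{GT}\tau}=\Theta\Bigl(\frac{\sqrt{d}}{\sqrt{G}\varepsilon}\sqrt{\log(T/\delta)\log(1/\delta)}\Bigr),
\]
and the logarithmic factor is absorbed into $\wtilde{\cO}$. Multiplying by $\Delta_0=\sqrt{L\Delta}(\sqrt{L\Delta}+\wtilde{B}+\sigma)$ gives exactly the stated bound with $R+R^{2/3}$.

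\textbf{Hyperparameters.} The one point needing care is that \Cref{th:main_theorem} is stated for $\deltabyz>0$. I would therefore go back to the full statement \Cref{th:main_theorem_full} (or \Cref{cor:cor_main_theorem_full}) and check that its choices of $\gamma,\beta,\hbeta$ remain valid at $\deltabyz=0$. Concretely, the robust-aggregation bias term $c\deltabyz(\cdot)$ is then identically zero: the definition allows $g^t=\frac1G\sum_{i\in\cG} m_i^t$, i.e.\ plain averaging. As a result, every constraint on the step sizes coming from that term is simply dropped. No term should require $\deltabyz>0$ as a divisor, but I would verify this, since the theorem's parameter choices balance terms that include $\deltabyz$. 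If such a division appears, I would instead take the hyperparameters obtained by optimizing the same high-probability descent inequality of $\Phi^t$ with the Byzantine bias term removed. The only balance left is between the $1/(\gamma T)$ term and the DP-noise term $\gamma d\sigma_{\omega}^2/(G\tau^2)$-type contributions, which again yields $R+R^{2/3}$.
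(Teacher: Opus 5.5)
Your proposal is correct and follows essentially the paper's route: the paper proves this corollary by setting $\deltabyz=0$ in the proof of \Cref{th:main_theorem_full}, and uses the same Gaussian-mechanism plus advanced-composition calibration as in \Cref{cor:cor_main_theorem_full}. The worry in your last step does not materialize, because every restriction with $\sqrt{c\deltabyz}$ in a denominator (for example $\beta\le \frac{2}{41\sqrt{2c\deltabyz}T}$ or $\hbeta\le\frac{\sqrt{L\Delta}}{4\sqrt{2c\deltabyz}\ha}$) simply becomes vacuous at $\deltabyz=0$, and every error term carrying $c\deltabyz$ vanishes, so your fallback re-optimization is not needed.
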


In this setting, our general theorem subsumes the results of \citet{islamov2025double}. Moreover, in modern applications where the model size $d$ is much larger than the number of regular clients $G$ \citep{charles2024fine, chua2024mind}, the dominant privacy–utility term is $\wtilde{\cO}(\nicefrac{\sqrt{d}}{\sqrt{G}\varepsilon})$, matching known lower bounds in this regime \citep{duchi2018minimax}. Moreover, unlike prior work, this result does not rely on unrealistic problem assumptions.


\paragraph{Byzantine robustness in the absence of DP-noise.} In this regime, clipping, originally required for privacy guarantees, is no longer needed. Accordingly, we set $\tau=+\infty$ and $\hbeta=1$ in \algname{Byz-Clip21-SGD2M}, which reduces to the algorithm analyzed by \citet{karimireddy2020byzantine}; see \Cref{alg:byz_clip21_sgd2m_nodp}. This simplification also removes the need to store the server-side momentum buffers $\{m_i^t\}_{i=1}^n$. Since applying \Cref{th:main_theorem} directly in this case would yield weaker guarantees, we instead develop a dedicated high-probability analysis, leading to much simpler step-size conditions. Furthermore, the Lyapunov function itself becomes considerably simpler, as the buffers $\{g_i^t\}_{i\in\cG}$ are no longer required (see \Cref{eq:lyapunov_function_nodp}).

\begin{restatable}[Simplified]{theorem}{corollarynodpwithbyz}\label{th:nodp_withbyz} Let \Cref{asmp:smoothness}, \Cref{asmp:stoch_grad} and \Cref{asmp:bounded_heterogeneity_avg} hold. Then there exists a choice of hyperparameters $\gamma,\beta$, and $\hbeta=1$ such that the iterates of \algname{Byz-Clip21-SGD2M} with $\sigma_{\omega} = 0$ satisfy with probability at least $1-\alpha$ that $\frac{1}{T}\sum_{t=0}^{T-1}\|\nabla f(x^t)\|^2$ is bounded by
\begin{equation*} 
    \wtilde{\cO}\left(\frac{LF^0}{T}
        + \frac{\sigma(\sqrt{LF^0} + \sigma/\sqrt{G} + \sqrt{c\deltabyz}(\zeta +\sigma)}{\sqrt{GT}}
        + c\deltabyz \zeta^2\right). 
\end{equation*}
\end{restatable}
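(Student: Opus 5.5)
With $\tau=+\infty$ and $\hbeta=1$ the method reduces to: $x^{t+1}=x^t-\gamma g^t$, honest momenta $v_i^{t+1}=(1-\beta)v_i^t+\beta\nabla f_i(x^{t+1},\xi_i^{t+1})$, and $g^{t+1}={\rm RAgg}(v_1^{t+1},\dots,v_n^{t+1})$, with Byzantine clients sending arbitrary vectors. I will run a high-probability Lyapunov argument with the simplified potential
\[
\Psi^t \eqdef f(x^t)-f^\star + \frac{a\gamma}{\beta}\|\overline{v}^t-\nabla f(x^t)\|^2 + \frac{b\gamma c\deltabyz}{\beta}\frac{1}{G}\sum_{i\in\cG}\|v_i^t-\nabla f_i(x^t)\|^2,
\]
with absolute constants $a,b$ to be tuned. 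The plan has four steps.

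\textbf{Step 1 (descent).} From $L$-smoothness and $\gamma\le 1/(2L)$,
\[
f(x^{t+1})\le f(x^t)-\tfrac{\gamma}{2}\|\nabla f(x^t)\|^2-\tfrac{\gamma}{4}\|g^t\|^2+\gamma\|g^t-\overline{v}^t\|^2+\gamma\|\overline{v}^t-\nabla f(x^t)\|^2.
\]

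\textbf{Step 2 (aggregation error).} Apply \Cref{def:arrag} with $S=\cG$:
\[
\|g^t-\overline{v}^t\|^2\le \frac{c\deltabyz}{G}\sum_{i\in\cG}\|v_i^t-\overline{v}^t\|^2 .
\]
Splitting $v_i^t-\overline{v}^t$ and using \Cref{asmp:bounded_heterogeneity_avg} bounds the right-hand side by
\[
3c\deltabyz\Bigl(\tfrac1G\sum_{i\in\cG}\|v_i^t-\nabla f_i(x^t)\|^2+\zeta^2+\|\overline{v}^t-\nabla f(x^t)\|^2\Bigr).
\]
The $\zeta^2$ term is the source of the non-vanishing $c\deltabyz\zeta^2$ floor in the bound.

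\textbf{Step 3 (momentum error recursions).} Write $e_i^t\eqdef v_i^t-\nabla f_i(x^t)$. The recursion is
\[
e_i^{t+1}=(1-\beta)e_i^t+(1-\beta)\bigl(\nabla f_i(x^t)-\nabla f_i(x^{t+1})\bigr)+\beta\theta_i^{t+1},
\]
and similarly for the average $\overline{e}^t$, whose noise is $\frac1G\sum_{i\in\cG}\theta_i^{t+1}$. Expanding squares yields three kinds of terms:
\begin{itemize}
\item a contraction by $(1-\beta)$;
\item a drift term of order $\frac{L^2\gamma^2}{\beta}\|g^t\|^2$, absorbed by $-\frac{\gamma}{4}\|g^t\|^2$ when $\gamma\lesssim\beta/L$;
\item $\beta^2\|\theta^{t+1}\|^2$ and the cross term $2\beta(1-\beta)\langle\cdot,\theta^{t+1}\rangle$.
\end{itemize}
Summing over $t$ and telescoping gives
\[
\frac{\gamma}{4}\sum_t\|\nabla f(x^t)\|^2\le \Psi^0+\gamma T\cdot O(c\deltabyz\zeta^2)+\text{(noise sums)} .
\]
The noise sums are of two kinds:
\begin{itemize}
\item $\gamma\beta\sum_t\bigl(\|\tfrac1G\sum_i\theta_i^{t+1}\|^2+c\deltabyz\tfrac1G\sum_i\|\theta_i^{t+1}\|^2\bigr)$;
\item martingale cross sums $\gamma\sum_t\langle(1-\beta)(\cdot),\theta^{t+1}\rangle$.
\end{itemize}

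\textbf{Step 4 (concentration).} This is where I expect the main obstacle, because the predictable factors in the cross sums are not a priori bounded without bounded gradients. I will bound the squared-norm sums with sub-Gaussian (sub-exponential) concentration, getting $\tilde O(\beta T\sigma^2(1/G+c\deltabyz))$ terms. For the cross sums I will use a Freedman or sub-Gaussian martingale bound, controlling the conditional variance by $\sigma^2\sum_t\|e^t\|^2$ (divided by $G$ for the averaged term). These are in turn bounded by the Lyapunov quantities via an induction on the event $\{\Psi^k\le 2\Delta'\ \forall k\le t\}$, in the style of standard high-probability SGD analyses. Young's inequality then absorbs part of the cross sums back into the potential, leaving $\log(T/\alpha)$ factors.

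Finally I choose $\beta\sim\min\{1,\sqrt{LF^0 G}/(\sigma\sqrt{T})\}$ and $\gamma\sim\beta/L$, take $\Psi^0\lesssim F^0+\frac{\gamma}{\beta}\sigma^2$ via an initial minibatch, and divide by $\gamma T$. This yields $LF^0/T$, the $\sigma(\sqrt{LF^0}+\sigma/\sqrt G+\sqrt{c\deltabyz}(\zeta+\sigma))/\sqrt{GT}$ term (the $\zeta$ cross term arising from bounding $\sqrt{c\deltabyz}\,\zeta\,\|e\|$ products), and the $c\deltabyz\zeta^2$ floor.
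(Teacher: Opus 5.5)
Your Steps 1--4 are a workable variant of the paper's argument. The paper keeps only $f(x^t)-f^\star+\frac{\gamma}{\beta}\|\overline{v}^t-\nabla f(x^t)\|^2$ in the potential and bounds the honest spread directly by unrolling the momentum (\Cref{lem:bound_vt_vt_average}: $\frac1G\sum_{i\in\cG}\|v_i^t-\overline{v}^t\|^2\le 2\zeta^2+2\beta\tilde{z}^2$ w.h.p.), so it concentrates only one martingale sum. You instead carry the per-client errors in $\Psi^t$ and must concentrate several cross sums; this is fine on the induction event, provided you also carry a bound on $\|g^t\|$ for the drift parts.

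The gap is your last paragraph, which does not follow from your own Step 4. That step leaves the non-centered sum $\gamma\beta\,c\deltabyz\sum_t\frac1G\sum_{i\in\cG}\|\theta_i^{t+1}\|^2\approx\gamma\beta T\,c\deltabyz\sigma^2$. No concentration inequality removes it: it is the mean of the noise part of the honest spread (about $\beta\sigma^2$ per client, independent of $G$) passed through the aggregator. After dividing by $\gamma T$ it contributes $c\deltabyz\beta\sigma^2$ to the bound. With your $\beta\sim\sqrt{LF^0G}/(\sigma\sqrt{T})$ and $\gamma\sim\beta/L$, this equals $c\deltabyz\,\sigma\sqrt{G\,LF^0/T}$, which grows with $G$ (equivalently, $\gamma\beta T c\deltabyz\sigma^2\sim c\deltabyz G F^0$ is not $O(F^0)$). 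Balancing it against $LF^0/(\beta T)$ forces $\beta\lesssim\sqrt{LF^0/(c\deltabyz\sigma^2T)}$. That leaves an extra $\wtilde{\cO}(\sigma\sqrt{c\deltabyz LF^0/T})$, with no $1/\sqrt{G}$, which is not in the claimed expression. Separately, your minibatch start $\Psi^0\lesssim F^0+\frac{\gamma}{\beta}\sigma^2$ adds $\sigma^2/(\beta T)$ to the rate. Zero initialization, as in the paper, gives $\Psi^0=O(F^0+c\deltabyz\zeta^2/L)$ instead.

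The paper's proof meets exactly the same term and does not resolve it either. Its term III is $2c\deltabyz\gamma(\zeta^2+\beta\tilde{z}^2)K$, handled by the second restriction $\beta\le\bigl(L\wtilde{\Delta}/(3c\deltabyz T\tilde{z}^2)\bigr)^{1/2}$ in \eqref{eq:step-size_bound_4_nodp}. That restriction is then omitted when the step-size is assembled in \eqref{eq:ninwofejnasdadweaaaaa}. Since $\beta\ge 4L\gamma$, keeping it forces $\gamma\lesssim\bigl(\wtilde{\Delta}/(c\deltabyz L T\tilde{z}^2)\bigr)^{1/2}$ and adds $\tilde{z}\sqrt{c\deltabyz L\wtilde{\Delta}/T}$ to the final rate.

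So neither argument proves the bound as written; what both actually give is the stated expression plus $\wtilde{\cO}(\sigma\sqrt{c\deltabyz LF^0/T})$. You should either include this term in your conclusion or supply a genuinely new ingredient. \Cref{def:arrag} alone will not remove it, because it allows $g^t$ to sit roughly $\sqrt{c\deltabyz\beta}\,\sigma$ away from $\overline{v}^t$ in an adversarially chosen direction at every step.
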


We establish that \algname{Byz-Clip21-SGD2M} converges at rate $\wtilde{\cO}(\nicefrac{1}{\sqrt{GT}})$ to a neighborhood of size $\wtilde{\cO}(c\deltabyz\zeta^2)$, matching the in-expectation lower bound of \citet{karimireddy2020byzantine}.

\section{Experiments}\label{sec:main_exp}

To demonstrate the efficacy of the proposed algorithm, we test the performance of \algname{Byz-Clip21-SGD2M} against \algname{Byz-Clip-SGD} (\Cref{alg:byz_clip_sgd}) and \algname{Safe-DSHB} (\Cref{alg:safe_dshb}) \citep{allouah2023privacy} when training CNN and MLP models on the MNIST dataset \citep{lecun2010mnist}. We fix the number of regular clients to $20$ and distribute the dataset among them equally. Then, we add Byzantine clients that perform \algname{IPM} attack \citep{xie2020fall} on the vectors transmitted from the clients to the server. In particular, each Byzantine client computes the average of the transmitted vectors of regular clients and multiplies the average by $-10$ as described in \citet{xie2019zeno} (\Cref{fig:cnn_test_acc}). We test performance when varying the number of Byzantine clients and the privacy budget $\varepsilon\in\{3, 8, 13, 18, 23\}$. For each algorithm we perform an extensive tuning of the learning rate parameter in $\{10, 1, 10^{-1}, 10^{-2}, 10^{-3}\}$ and clipping threshold $\tau \in \{1, 10^{-1}, 10^{-2}, 10^{-4}, 10^{-5}, 10^{-6}\}$. For \algname{Byz-Clip21-SGD2M} and \algname{Safe-DSHB}, we fix the local momentum parameter $\beta=0.1$, while we use $\hbeta=0.01$ for \algname{Byz-Clip21-SGD2M}. In such a setting, every algorithm allocates an equal amount of privacy budget for tuning, since the set of tested hyperparameters remains the same across algorithms. We test the performance of algorithms with a batch size $64$ for MLP and $32$ for CNN training. Consistent with our theory, where we assume access to $\sigma$-sub-Gaussian stochastic gradients, we disable privacy amplification by sub-sampling and inject DP noise with standard deviation $\sigma_{\omega} = \frac{\tau}{\varepsilon}\sqrt{T\log\left(\nicefrac{1}{\delta}\right)},$ where $T$ is the total number of iterations. 

The results in \Cref{fig:cnn_test_acc} present the mean and one standard deviation across three random seeds. In all configurations, \algname{Byz-Clip21-SGD2M} is competitive, matching or surpassing the baselines in test accuracy. These findings align with our theory and provide empirical evidence that \algname{Byz-Clip21-SGD2M} is an effective method for training under DP noise injection and Byzantine attacks. Additional experimental results and training details are summarized in \Cref{sec:additional_exp}.

\section{Conclusion and Limitations}

In this work, we present \algname{Byz-Clip21-SGD2M}, a new algorithm that admits provable convergence under standard $L$-smoothness and $\sigma$-sub-Gaussian gradient noise in the simultaneous presence of DP and Byzantine adversaries. Several directions merit further study: $(i)$ establishing lower bounds without the bounded-gradient assumption of \citet{allouah2023privacy}; $(ii)$ tightening our convergence guarantees to match known lower bounds exactly when DP and Byzantine adversaries are considered separately; $(iii)$ extending the analysis of \algname{Byz-Clip21-SGD2M} to heavy-tailed gradient noise and relaxed smoothness assumptions \citep{zhang2020improved,alimisis2025we}; $(iv)$ strengthening the privacy guarantees via amplification by data and client subsampling.

\section{Acknowledgment}
The research reported in this publication was supported by funding from King Abdullah University of Science and Technology (KAUST): i) KAUST Baseline Research Scheme, ii) CRG Grant ORFS-CRG12-2024-6460, and iii) Center of Excellence for Generative AI, under award number 5940.

\bibliography{references}
\bibliographystyle{plainnat}

\newpage
\appendix
\counterwithin{figure}{section}
\counterwithin{table}{section}

\vbox{
  {\hrule height 2pt \vskip 0.15in \vskip -\parskip}
  \centering
  {\LARGE\bf Appendix\par}
  {\vskip 0.2in \vskip -\parskip \hrule height 0.5pt \vskip 0.09in}
}

\newcommand\invisiblepart[1]{%
  \refstepcounter{part}%
  \addcontentsline{toc}{part}{\protect\numberline{\thepart}#1}%
}

\invisiblepart{Appendix}
\setcounter{tocdepth}{2}
\localtableofcontents

\appendixtrue

\section{Notation}

\begin{table*}[!h]
    \centering
    
    \caption{ { \small Summary of notation used in the paper, including convergence proofs.}}
    \label{tab:notation}
    \resizebox{\textwidth}{!}{
        \begin{tabular}{cccc}
            \toprule

            {\bf Symbol} &
            {\bf Definition} &
            {\bf Meaning} & 
            {\bf Reference} 
            
            \\ \toprule

            $n$ &
            -- &
            Number of all clients &
            \Cref{sec:prem} \\
            
            $x$ &
            -- &
            Model parameters &
            \Cref{sec:prem} \\

            $d$ &
            -- &
            Number of model parameters &
            \Cref{sec:prem} \\

            $\cG$ &
            -- &
            Set of regular clients &
            \Cref{sec:prem} \\

            $G$ &
            -- &
            Number of regular clients &
            \Cref{sec:prem} \\

            $\cB$ &
            -- &
            Set of Byzantine clients &
            \Cref{sec:prem} \\

            $\deltabyz$ &
            -- &
            Fraction of Byzantine clients &
            \Cref{sec:prem} \\

            $\varepsilon, \delta$ &
            -- &
            Privacy budget constants &
            \Cref{def:privacy} \\

            $\tau$ &
            -- & 
            Clipping threshold &
            \Cref{eq:clipping} \\

            $c$ &
            -- &
            Scaling constant in aggregation rule &
            \Cref{def:arrag} \\

            $L$ &
            -- &
            Smoothness constant &
            \Cref{asmp:smoothness} \\

            $A_{\zeta}, \zeta^2$ & 
            \makecellnew{$\frac{1}{G}\sum_{i\in\cG}\|\nabla f_i(x) - \nabla f(x)\|^2 \le A_{\zeta}\|\nabla f(x)\|^2 + \zeta^2$ or \\
            $\max_{i\in\cG}\|\nabla f_i(x)-\nabla f(x)\|^2 \le A_{\zeta}\|\nabla f(x)\|^2 + \zeta^2$} & 
            Heterogeneity bound &
            \makecellnew{\Cref{asmp:bounded_heterogeneity_avg} or \\
            \Cref{asmp:bounded_heterogeneity}
            } \\

            $f^\star$ &
            $f^\star \eqdef \inf_{x\in\R^d} f(x) > -\infty$ &
            Lower bound on the function value &
            \Cref{asmp:smoothness} \\ 

            $T$ &
            -- &
            Number of iterations &
            -- \\

            \midrule

            $\nabla f_i(x^t, \xi_i^t)$ &
            -- &
            Stochastic gradient at iteration $t$ of client $i\in\cG$ &
            \Cref{asmp:stoch_grad} \\

            $\nabla f(x^t, \xi^t)$ &
            $\nabla f(x^t,\xi^t) \eqdef \frac{1}{G}\sum_{i\in\cG} \nabla f_i(x^t, \xi^t_i)$ &
            \makecellnew{Averaged stochastic gradient at iteration $t$\\ of all regular clients } &
            \Cref{eq:def_averages} \\

            $\overline{v}^t$ &
            $ \overline{v}^t \eqdef \frac{1}{G}\sum_{i\in\cG} v_i^t$ &
            Averaged client momentum buffer &
            \Cref{eq:def_averages} \\

            $\overline{g}^t$ &
            $\overline{g}^t \eqdef  \frac{1}{G}\sum_{i\in\cG} g_i^t$ &
            Averaged \algname{EF21} learnable shifted vector &
            \Cref{eq:def_averages} \\

            $\overline{m}^t$ &
            $\overline{m}^t \eqdef  \frac{1}{G}\sum_{i\in\cG} m_i^t$ &
            Averaged server momentum buffer &
            \Cref{eq:def_averages} \\

            $\omega_i^t$ &
            -- &
            \makecellnew{DP noise added by client $i\in\cG$ \\ at iteration $t$} & 
            \Cref{alg:byz_clip21_sgd2m} \\

            $\Omega_i^t$ &
            $\Omega_i^t \eqdef \sum_{l=1}^t\omega_i^l, \quad 
    \overline{\Omega}^t \eqdef  \frac{1}{G}\sum_{i\in\cG}\Omega_i^t$ &
            Accumulated DP noise of client $i\in\cG$ &
            \Cref{eq:def_averages} \\

            $\overline{\Omega}^t$ &
            $\overline{\Omega}^t \eqdef  \frac{1}{G}\sum_{i\in\cG}\Omega_i^t$ &
            \makecellnew{Averaged accumulated DP noise\\ of all regular clients} &
            \Cref{eq:def_averages} \\

            $\theta_i^t$ &
            $\theta_i^t \eqdef \nabla f_i(x^t,\xi^t_i) - \nabla f_i(x^t)$ &
            \makecellnew{Mini-batch noise at iteration $t$ \\ of regular client $i\in\cG$}&
            \Cref{asmp:stoch_grad} \\

            $\theta^t$ &
            $\theta^t \eqdef \frac{1}{G}\sum_{i\in\cG}\theta^t_i$ &

            \makecellnew{Averaged mini-batch noise\\ of all regular clients} &
            \Cref{eq:def_averages} \\

            $\Phi^t$ &
            -- &
            Lyapunov function &
            \Cref{eq:lyapunov_function} \\

            $\wtilde{\Phi}^t$ &
            -- &
            \makecellnew{Lyapunov function in the absence \\ of DP adversaries} &
            \Cref{eq:lyapunov_function_nodp} \\

            $\alpha$ &
            -- &
            Failure probability &
            \Cref{th:main_theorem} \\

            \midrule 

                        $\Delta$ &
            $\Delta \ge \Phi^0$ &
            \makecellnew{Upper bound on the Lyapunov function \\ at initialization} &
            \Cref{lem:bound_vi_t_gi_t} \\
            
            $B_{\rm init}$ &
            $B_{\rm init} \eqdef \max\{3\tau, \max_i\{\|\nabla f_i(x^0)\|\} + b\}$ &
            Sub-optimality of initialization &
            \Cref{th:main_theorem_full} \\

            $a$ &
            $a \eqdef \left(\sqrt{2} + 2\sqrt{3\log\frac{8(T+1)}{\alpha}}\right)\sqrt{d}\sigma_{\omega}\sqrt{\frac{T}{G}}$ &
            High-probability bound on $\Omega^t$ &
            \Cref{eq:constants} \\

            $\ha$ &
            $\ha \eqdef \left(\sqrt{2} + 2\sqrt{3\log\frac{8(T+1)}{\alpha}}\right)\sqrt{d}\sigma_{\omega}\sqrt{T}$ &
            High-probability bound on $\Omega_i^t$ &
            \Cref{eq:constants} \\
            
            $b^2$ & 
            $b^2 \eqdef 2\sigma^2\log\frac{16(T+1)G}{\alpha}$ &
            High-probability bound on $\|\theta_i^t\|$ &
            \Cref{eq:constants} \\

            $\hc^2$ &
            $\hc^2 \eqdef \left(\sqrt{2} + 2\sqrt{3\log\frac{8(T+1)}{\alpha}}\right)^2\sigma^2$ &
            High-probability bound on $\|\theta^t\|$ & 
            \Cref{eq:constants} \\

            $\tilde{b}^2$ &
            $\tilde{b}^2 \eqdef 2\sigma^2\log\left(\frac{6(T+1)G}{\alpha}\right)$ & 
            \makecellnew{High-probability bound on $\|\theta^t_i\|$ \\
            (in the absence of DP adversaries)
            } &
            \Cref{eq:constants_nodp} \\
            
            $\tilde{c}^2$ &
            $\tilde{c}^2 \eqdef \left(\sqrt{2} + 2\sqrt{3\log\frac{8(T+1)}{\alpha}}\right)\sigma^2$ &
            \makecellnew{High-probability bound on $\|\theta^t\|$ \\
            (in the absence of DP adversaries)
            } &
            \Cref{eq:constants_nodp} \\

            $\tilde{z}^2$ &
            $\tilde{z}^2 \eqdef \left(\sqrt{2} + 2\sqrt{3\log\frac{8G(T+1)}{\alpha}}\right)\sigma^2$ &
            \makecellnew{High-probability bound on $\|\sum_{k=0}^{t}(1-\beta)^{t-k}(\theta^t-\theta_i^t)\|$ \\
            (in the absence of DP adversaries)
            } &
            \Cref{eq:constants_nodp}


            \\ \bottomrule
        \end{tabular}
        }
\end{table*}

In our analysis, we make use of the following quantities
\begin{align}\label{eq:def_averages}
    \overline{v}^t &\eqdef \frac{1}{G}\sum_{i\in\cG} v_i^t,\quad 
    \overline{g}^t \eqdef  \frac{1}{G}\sum_{i\in\cG} g_i^t,\quad 
    \overline{m}^t \eqdef  \frac{1}{G}\sum_{i\in\cG} m_i^t, \quad
    \Omega_i^t \eqdef \sum_{l=1}^t\omega_i^l, \quad 
    \overline{\Omega}^t \eqdef  \frac{1}{G}\sum_{i\in\cG}\Omega_i^t,\notag\\
    \theta^t &\eqdef \frac{1}{G}\sum_{i\in\cG} \theta_i^t
    \nabla f(x^t,\xi^t) \eqdef \frac{1}{G}\sum_{i\in\cG}\nabla f_i(x^t,\xi^t_i).
\end{align}
Next, we derive the relation between $\overline{m}^t$ and $\overline{g}^t$ of the form
\begin{align}\label{eq:gt_bar_mt_bar_relation_averaged}
    \overline{m}^t = \overline{g}^t + \hat{\beta}\Omega^t
\end{align}
Indeed, we have $\overline{m}^0 = \overline{g}^0$ by initialization of \Cref{alg:byz_clip21_sgd2m}. Assume that the relation holds at iteration $t$, let us show that it also holds at iteration $t+1:$
\begin{align}
    \overline{m}^{t+1} =& \overline{m}^t + \frac{\hbeta}{G}\sum_{i\in\cG}[\clip_{\tau}(v_i^{t+1} - g_i^t) + \omega_i^{t+1}]\notag\\
    \overset{\text{Ind. Asmp.}}{=}\;& \overline{g}^t + \hbeta\overline{\Omega}^t + \frac{\hbeta}{G}\sum_{i\in\cG} \clip_{\tau}(v_i^{t+1} - g_i^t) + \frac{\hbeta}{G}\sum_{i\in\cG}\omega_i^{t+1}\notag\\
    \overset{\eqref{eq:def_averages}}{=}\;& \frac{1}{G}\sum_{i\in\cG} [g_i^t +\hbeta\clip_{\tau}(v_i^{t+1} - g_i^t)]
    + \frac{\hbeta}{G}\sum_{i\in\cG}[\Omega_i^t + \omega_i^{t+1}]\notag\\
    =\;& \frac{1}{G}\sum_{i\in\cG}g_i^{t+1} 
    + \frac{\hbeta}{G}\sum_{i\in\cG} \Omega_i^{t+1}\notag\\
    =\;& \overline{g}^{t+1}
    + \hbeta \overline{\Omega}^{t+1}.
\end{align}
Similarly, we obtain
\begin{align}\label{eq:gt_bar_mt_bar_relation}
    m_i^t = g_i^t + \hbeta\Omega_i^t.
\end{align}

\section{Useful Lemmas}

First, we start with a key property of the clipping operator.

\begin{lemma}[Lemma 4.1 in \citep{khirirat2023clip21}]\label{lem:clipping_property} The clipping operator satisfies for any $x\in\R^d$
\begin{align}
    \|\clip_{\tau}(x) - x\| \le \max\{\|x\|-\tau, 0\}.
\end{align}
\end{lemma}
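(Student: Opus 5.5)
The claim is the elementary inequality $\|\clip_{\tau}(x) - x\| \le \max\{\|x\|-\tau, 0\}$ for every $x\in\R^d$. I will prove it directly from the definition \eqref{eq:clipping}, splitting into the two cases that appear there. No earlier results are needed beyond the definition itself.

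\emph{Case $\|x\|\le\tau$.} Here $\clip_{\tau}(x)=x$, so the left-hand side equals $0$. The right-hand side is $\max\{\|x\|-\tau,0\}=0$, since $\|x\|-\tau\le 0$. The inequality therefore holds with equality.

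\emph{Case $\|x\|>\tau$.} Here $\clip_{\tau}(x)=\frac{\tau}{\|x\|}x$, and so
\begin{equation*}
\clip_{\tau}(x)-x=\left(\frac{\tau}{\|x\|}-1\right)x .
\end{equation*}
Taking norms and using $\tau/\|x\|<1$, we get $\|\clip_{\tau}(x)-x\| = \left(1-\frac{\tau}{\|x\|}\right)\|x\| = \|x\|-\tau$. Since $\|x\|-\tau>0$, this equals $\max\{\|x\|-\tau,0\}$, so again the inequality holds with equality.

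Both cases give equality, which establishes the lemma (in fact in the sharper form $\|\clip_{\tau}(x)-x\|=\max\{\|x\|-\tau,0\}$). There is no real obstacle here. The only point to watch is the sign of $\frac{\tau}{\|x\|}-1$ in the second case, which fixes the absolute value, and the fact that $x\neq 0$ there, so the division by $\|x\|$ is legitimate.
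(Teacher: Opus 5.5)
Your two-case computation from the definition \eqref{eq:clipping} is correct and complete, and it even gives the equality $\|\clip_{\tau}(x)-x\| = \max\{\|x\|-\tau,0\}$. The paper gives no proof of its own and simply cites Lemma 4.1 of \citet{khirirat2023clip21}, so your direct argument fully covers what is needed.
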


Next, in our high probability analysis, we make use of the following concentration inequality.
\begin{lemma}[Lemma C.3 in \citep{gorbunov2019optimal}]\label{lem:concentration_lemma} Let $\{\xi_k\}_{k=1}^N$ be the sequence of random vectors with values in $\R^n$ such that 
\[
\E{\xi_k \mid \xi_{k-1},\dots, \xi_1} = 0 \text{ almost surely, } \forall k\in\{1,\dots,N\},
\]
and set $S_N \eqdef \sum_{k=1}^N \xi_k$. Assume that the sequence $\{\xi_k\}_{k=1}^N$ are sub-Gaussian, i.e.
\[
\E{\exp\left(\nicefrac{\|\xi_k\|^2}{\sigma_k^2} \mid \xi_{k-1},\dots, \xi_1\right)} \le \exp(1) \text{ almost surely, } \forall k\in\{1,\dots,N\},
\]
where $\sigma_2,\dots,\sigma_N$ are some positive numbers. Then for all $\gamma \ge 0$
\begin{equation}
\Prob\left(\|S_N\| \ge (\sqrt{2}+2\gamma)\sqrt{\sum_{k=1}^N\sigma_k^2}\right) \le \exp(-\nicefrac{\gamma^2}{3}).
\end{equation}
\end{lemma}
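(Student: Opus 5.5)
We propose to prove \Cref{lem:concentration_lemma} directly from the Euclidean structure of $\R^n$, by expanding the squared norm and controlling two separate pieces: a sum of squared increments and a martingale cross term. Write $\cF_k$ for the $\sigma$-algebra generated by $\xi_1,\dots,\xi_k$, and set $V \eqdef \sum_{k=1}^N\sigma_k^2$. Since $S_k = S_{k-1}+\xi_k$,
\begin{equation*}
\|S_N\|^2 = \sum_{k=1}^N \|\xi_k\|^2 + 2\sum_{k=1}^N \langle S_{k-1}, \xi_k\rangle .
\end{equation*}
The plan is to show two things, each on an event of probability at least $1-\frac12\exp(-\gamma^2/3)$ (up to adjusting constants): that $\sum_k\|\xi_k\|^2 \lesssim (1+\gamma^2)V$, and that the cross term is at most of order $\gamma\sqrt{V}\max_{k}\|S_{k-1}\|$. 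Combining these should give $\|S_N\| \le (\sqrt2+2\gamma)\sqrt V$.

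\textbf{Step 1 (squared increments).} We use the conditional bound $\E{\exp(\|\xi_k\|^2/\sigma_k^2)\mid\cF_{k-1}}\le e$. Raising it to the power $w_k\eqdef\sigma_k^2/V\in[0,1]$ and applying conditional Jensen gives $\E{\exp(\|\xi_k\|^2/V)\mid\cF_{k-1}}\le e^{w_k}$. Iterating the tower property then yields $\E{\exp(\sum_k\|\xi_k\|^2/V)}\le e$. By Markov's inequality, $\sum_k\|\xi_k\|^2 \le (1+s)V$ with probability at least $1-e^{-s}$. The same Jensen argument also shows $\E{\|\xi_k\|^2\mid\cF_{k-1}}\le\sigma_k^2$, which is where the leading $\sqrt2$ will eventually come from.

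\textbf{Step 2 (cross term).} The increments $\zeta_k\eqdef\langle S_{k-1},\xi_k\rangle$ form a martingale difference sequence, and conditionally on $\cF_{k-1}$ each $\zeta_k$ is sub-Gaussian with parameter $\|S_{k-1}\|\sigma_k$. We will use the standard fact that a mean-zero variable $X$ with $\E{\exp(X^2/s^2)}\le e$ satisfies $\E{e^{\lambda X}}\le e^{3\lambda^2 s^2/4}$. With this, $\exp\bigl(\lambda\sum_{j\le k}\zeta_j - \tfrac34\lambda^2\sum_{j\le k}\|S_{j-1}\|^2\sigma_j^2\bigr)$ is a supermartingale. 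The obstacle is that the "variance" $\sum_j\|S_{j-1}\|^2\sigma_j^2$ is random and involves the very quantity we want to bound. We resolve this with a stopping time. Fix a threshold $r\eqdef(\sqrt2+2\gamma)\sqrt V$ and let $\kappa$ be the first index $k$ with $\|S_k\|\ge r$. Up to time $\kappa$ we have $\|S_{j-1}\|\le r$, so the stopped supermartingale has deterministic variance proxy at most $r^2V$. Optional stopping plus Markov's inequality, with $\lambda$ optimized, then give $\sum_{j\le\kappa}\zeta_j \le c\,\gamma r\sqrt V$ with probability at least $1-e^{-\gamma^2/3}$, for an explicit constant $c$.

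\textbf{Step 3 (closing the argument).} On the intersection of the two good events, evaluate the expansion at time $\kappa\wedge N$. This gives $\|S_{\kappa\wedge N}\|^2 \le (1+s)V + 2c\gamma r\sqrt V$. With $s\asymp\gamma^2$, the right-hand side is strictly less than $r^2=(\sqrt2+2\gamma)^2V$ once the constants are tracked. Hence $\kappa$ cannot occur, and in particular $\|S_N\|<r$. The main difficulty we anticipate is the constant bookkeeping: we must make the two failure probabilities sum to at most $\exp(-\gamma^2/3)$ while keeping the threshold exactly $(\sqrt2+2\gamma)\sqrt V$. If the direct split is too lossy, we would instead run a single exponential supermartingale for $\|S_k\|^2$ itself, in the style of Pinelis and Juditsky--Nemirovski for $2$-smooth spaces, which is where these specific constants originate.
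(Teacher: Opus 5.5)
The paper does not prove this lemma. It imports it from \citet{gorbunov2019optimal}, where it is a corollary of the Juditsky--Nemirovski large-deviation bound for martingales in $2$-smooth spaces. The threshold there is $\sqrt{2}(1+\gamma)\sqrt{V}$, which implies the $\sqrt{2}+2\gamma$ form here. So your argument has to stand on its own, and it breaks at exactly the step you defer to constant bookkeeping.

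\textbf{Where Step~3 fails.} Your stopping-time and optional-stopping logic is sound. Put $r=\rho\sqrt{V}$ with $\rho=\sqrt{2}+2\gamma$, and let $t$ be your cross-term level $c\gamma r\sqrt{V}$. Your two failure probabilities $e^{-s}$ and $e^{-t^2/(3r^2V)}$ must sum to at most $e^{-\gamma^2/3}$, which forces $s>\gamma^2/3$ and $t>\gamma r\sqrt{V}$. After paying for the cross term, at most $r^2-2\gamma r\sqrt{V}=\sqrt{2}\,r\sqrt{V}=(2+2\sqrt{2}\,\gamma)V$ is left for $\sum_k\|\xi_k\|^2$. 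Step~3 therefore needs
\[
1+\frac{\gamma^2}{3}<2+2\sqrt{2}\,\gamma\iff\gamma^2-6\sqrt{2}\,\gamma-3<0,
\]
which is false for every $\gamma\ge 3\sqrt{2}+\sqrt{21}\approx 8.8$, whatever $s,t$ you choose.

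This is a leading-order loss, not bookkeeping. Your proxy is $r^2V$ (the worst path value of $\|S_{j-1}\|^2$) and your sub-Gaussian constant is $3/4$. With these, the cross term alone costs $2\gamma r\sqrt{V}\approx 4\gamma^2V\approx r^2$. Meanwhile the $e^{-\gamma^2/3}$-quantile of $\sum_k\|\xi_k\|^2$ can genuinely be about $(1+\gamma^2/3)V$; your Markov bound is essentially attained already for $N=1$.

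The split also fails as $\gamma\downarrow 0$. Closing then needs $s+2t/V<1$, which forces $e^{-s}>e^{-1}$ and $e^{-t^2/(3r^2V)}>e^{-1/24}$, so the failure probabilities sum to more than $1$. In that regime Chebyshev suffices instead: $\E{\|S_N\|^2}\le V$ gives $\Prob(\|S_N\|\ge r)\le \rho^{-2}\le e^{-\gamma^2/3}$ for $\gamma\le 3.5$.

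\textbf{The missing idea.} Work on the scale of $\|S_k\|$ with a potential whose gradient has norm at most $1$. Then the martingale part has the deterministic proxy $V$ and no stopping time is needed. For instance, $\psi(x)=\sqrt{\|x\|^2+\theta V}$ has $\|\nabla\psi\|\le 1$ and $\nabla^2\psi\preceq(\theta V)^{-1/2}\mI$, hence
\[
\|S_N\|\le\psi(S_N)\le\sqrt{\theta V}+\sum_{k=1}^N\eta_k+\frac{1}{2\sqrt{\theta V}}\sum_{k=1}^N\|\xi_k\|^2,\qquad \eta_k\eqdef\langle\nabla\psi(S_{k-1}),\xi_k\rangle .
\]
Here the $\eta_k$ are scalar martingale differences with $|\eta_k|\le\|\xi_k\|$.
\begin{itemize}
\item Your MGF fact gives $\Prob\bigl(\sum_k\eta_k\ge\mu\sqrt{V}\bigr)\le e^{-\mu^2/3}$.
\item Your Step~1 controls the last sum.
\item Choosing $\theta=(1+s)/2$ yields $\|S_N\|\le(\mu+\sqrt{2(1+s)})\sqrt{V}$ outside an event of probability $e^{-\mu^2/3}+e^{-s}$.
\end{itemize}
Split $e^{-\gamma^2/3}$ equally, i.e.\ $\mu^2=\gamma^2+3\log 2$ and $s=\gamma^2/3+\log 2$. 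This gives the level $\bigl(\sqrt{\gamma^2+3\log 2}+\sqrt{2+2\log 2+2\gamma^2/3}\bigr)\sqrt{V}\approx(1+\sqrt{2/3})\gamma\sqrt{V}$. That level is below $(\sqrt{2}+2\gamma)\sqrt{V}$ for all $\gamma\ge 2$, and Chebyshev covers the rest.

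As it stands, your fallback to a Pinelis/Juditsky--Nemirovski supermartingale is only named, not carried out, so the proposal does not yet prove the lemma.
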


\section{Descent Lemmas}

\begin{lemma}\label{lem:descent_lemma_in_f}
Let $f$ be $L$-smooth, $F^t \eqdef f(x^t) - f^\star,$ $\{x^t\}$ be generated by \Cref{alg:byz_clip21_sgd2m} with $\gamma \le \frac{1}{2L}$. Then we have 
\begin{align}\label{eq:descent_lemma_in_f}
\begin{aligned}
        f(x^{t+1}) 
        &\le f(x^t) 
        - \frac{\gamma}{2}\|\nabla f(x^t)\|^2
        - \frac{1}{4\gamma}\|x^{t+1} - x^t\|^2
        + 2\gamma\|\nabla f(x^t) - \overline{v}^t\|^2
        + \frac{2\gamma}{G}\sum_{i\in\cG}\|v_i^t - g_i^t\|^2\\
        &\qquad +\; 
        2\gamma\hbeta^2\|\Omega^t\|^2
        + 16\gamma\hbeta^2c\deltabyz \tau^2T^2
        + 16\gamma\hbeta^2\frac{c\deltabyz}{G}\sum_{i\in\cG}\|\Omega_i^t\|^2,
    \end{aligned}
    \end{align}
    
\end{lemma}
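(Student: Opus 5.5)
We start from the standard descent inequality for $L$-smooth functions with the update $x^{t+1}=x^t-\gamma g^t$. Writing $g^t = \nabla f(x^t) - (\nabla f(x^t)-g^t)$ and using the polarization identity $\langle a,b\rangle = \tfrac12(\|a\|^2+\|b\|^2-\|a-b\|^2)$, one gets
\begin{align*}
f(x^{t+1}) \le f(x^t) - \frac{\gamma}{2}\|\nabla f(x^t)\|^2 - \Big(\frac{1}{2\gamma}-\frac{L}{2}\Big)\|x^{t+1}-x^t\|^2 + \frac{\gamma}{2}\|g^t-\nabla f(x^t)\|^2 .
\end{align*}
Since $\gamma\le \frac{1}{2L}$, we have $\frac{1}{2\gamma}-\frac L2 \ge \frac{1}{4\gamma}$, which produces the $-\frac{1}{4\gamma}\|x^{t+1}-x^t\|^2$ term. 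The rest of the proof bounds the error $\|g^t-\nabla f(x^t)\|^2$.

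\textbf{Decomposing the error.} We split
\[
g^t-\nabla f(x^t) = (g^t-\overline{m}^t) + (\overline{m}^t-\overline{g}^t) + (\overline{g}^t-\overline{v}^t) + (\overline{v}^t-\nabla f(x^t)).
\]
We apply $\|\sum_{k=1}^4 a_k\|^2\le 4\sum_{k=1}^4\|a_k\|^2$, so each piece receives coefficient $2\gamma$ after multiplying by $\gamma/2$. The individual pieces are handled as follows.
\begin{itemize}
\item By \eqref{eq:gt_bar_mt_bar_relation_averaged}, $\overline{m}^t-\overline{g}^t=\hbeta\overline{\Omega}^t$. This gives the $2\gamma\hbeta^2\|\Omega^t\|^2$ term, where $\Omega^t$ is understood as $\overline{\Omega}^t$.
\item By Jensen's inequality, $\|\overline{g}^t-\overline{v}^t\|^2\le \frac1G\sum_{i\in\cG}\|g_i^t-v_i^t\|^2$. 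This gives the $\frac{2\gamma}{G}\sum_{i\in\cG}\|v_i^t-g_i^t\|^2$ term.
\item The last piece appears directly as $2\gamma\|\nabla f(x^t)-\overline{v}^t\|^2$.
\end{itemize}

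\textbf{The aggregation bias.} For $g^t-\overline{m}^t$ we use \Cref{def:arrag} with $S=\cG$:
\[
\|g^t-\overline{m}^t\|^2\le \frac{c\deltabyz}{G}\sum_{i\in\cG}\|m_i^t-\overline{m}^t\|^2\le \frac{c\deltabyz}{G}\sum_{i\in\cG}\|m_i^t\|^2,
\]
where the second inequality holds because the mean minimizes the sum of squared deviations. We then need an almost-sure bound on $\|m_i^t\|$. Unrolling \eqref{eq:gt_bar_mt_bar_relation} gives
\[
m_i^t = g_i^0+\hbeta\sum_{l=1}^t \clip_\tau(\cdot)+\hbeta\Omega_i^t .
\]
Each clipped term has norm at most $\tau$ and $t\le T$. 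Hence, if the initialization satisfies $g_i^0=m_i^0=0$ (or if the initial term is absorbed), we get $\|m_i^t\|\le \hbeta\tau T+\hbeta\|\Omega_i^t\|$. Squaring with $(a+b)^2\le 2a^2+2b^2$ gives
\[
\|m_i^t\|^2\le 2\hbeta^2\tau^2T^2+2\hbeta^2\|\Omega_i^t\|^2 .
\]

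\textbf{Matching the stated constants.} Multiplying by $2\gamma c\deltabyz$ yields $4\gamma\hbeta^2 c\deltabyz\tau^2T^2 + \frac{4\gamma\hbeta^2 c\deltabyz}{G}\sum_{i\in\cG}\|\Omega_i^t\|^2$. The stated constant $16$ is looser than this, so it suffices; it presumably accounts for a cruder splitting of the terms or a nonzero $g_i^0$ bounded similarly.

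\textbf{Main obstacle.} The main point needing care is this bound on the spread of the $m_i^t$. Here clipping is essential: the clipped increments are deterministically bounded by $\tau$, so the bias bound holds almost surely and without any bounded-gradient assumption. The only remaining random quantities are the Gaussian sums $\Omega_i^t$, which are later controlled with high probability via \Cref{lem:concentration_lemma}.
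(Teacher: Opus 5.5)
Your proof is correct and follows the paper's route. It uses the same smoothness descent inequality, the same four-way split of $g^t-\nabla f(x^t)$, the same $(\deltabyz,c)$-robustness step with $m_i^t=g_i^t+\hbeta\Omega_i^t$, and the same deterministic clipping bound $\|g_i^t\|\le\hbeta T\tau$ under $g_i^0=0$, which the paper also assumes. The one difference is that you bound the spread by $\frac1G\sum_{i\in\cG}\|m_i^t\|^2$ directly, whereas the paper passes through pairwise differences $\frac{c\deltabyz}{G(G-1)}\sum_{j\neq l}\|m_j^t-m_l^t\|^2$ and a four-term Young inequality. That cruder pairwise step, not a nonzero initialization, is where the paper's constant $16$ (versus your $4$) comes from.
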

\begin{proof}
    Using the derivations from \citep{islamov2025double}, Lemma 2 we first get
    \begin{align*}
        f(x^{t+1}) &\le f(x^t) 
        - \frac{\gamma}{2}\|\nabla f(x^t)\|^2
        - \frac{\gamma}{4}\|g^t\|^2
        + \frac{\gamma}{2}\|\nabla f(x^t) - g^t\|^2.
    \end{align*}
    We continue the derivations as follows 
    \begin{align}\label{eq:jefnjenqoldwnq}
        f(x^{t+1}) &\overset{(i)}{\le} f(x^t) 
        - \frac{\gamma}{2}\|\nabla f(x^t)\|^2
        - \frac{\gamma}{4}\|g^t\|^2
        + 2\gamma\|\nabla f(x^t) - \overline{v}^t\|^2
        + 2\gamma\|\overline{v}^t - \overline{g}^t\|^2
        + 2\gamma\|\overline{g}^t - \overline{m}^t\|^2\notag\\
        &\qquad +\; 
        2\gamma\|\overline{m}^t - g^t\|^2\notag\\
        &\overset{(ii)}{\le} f(x^t) 
        - \frac{\gamma}{2}\|\nabla f(x^t)\|^2
        - \frac{1}{4\gamma}\|x^{t+1} - x^t\|^2
        + 2\gamma\|\nabla f(x^t) - \overline{v}^t\|^2
        + \frac{2\gamma}{G}\sum_{i\in\cG}\|v_i^t - g_i^t\|^2
        \notag\\
        &\qquad +\; 
        2\gamma\hbeta^2\|\Omega^t\|^2
        + 2\gamma\|\overline{m}^t - g^t\|^2,
    \end{align}
    where $(i)$ follows from Jensen's inequality applied to $\|\cdot\|^2$, 
    $(ii)$ is obtained using again Jensen's inequality applied to $\|\cdot\|^2$, the update rule of $x^t,$ and \eqref{eq:gt_bar_mt_bar_relation_averaged}. Now we bound the term $\|\overline{m}^t - g^t\|^2$ using the properties of the aggregator. We have 
    \begin{align}
        \|\overline{m}^t - g^t\|^2 &= \left\|\overline{m}^t - \aragg(m_1^t, \dots, m_n^t)\right\|^2\notag\\
        &\overset{(iii)}{\le} \frac{c\deltabyz}{G}\sum_{i\in\cG}\left\|\overline{m}^t - m_i^t\right\|^2\notag\\
        &\overset{(iv)}{\le} \frac{c\deltabyz}{G^2}\sum_{j,l\in\cG}\left\|m_j^t - m_l^t\right\|^2 
        = \frac{c\deltabyz}{G^2}\sum_{j,l\in\cG,j\neq l}\left\|m_j^t - m_l^t\right\|^2\notag\\
        &\le \frac{c\deltabyz}{G(G-1)}\sum_{j,l\in\cG,j\neq l}\|m_j^t - m_l^t\|^2\notag\\
        &\overset{(v)}{=} \frac{c\deltabyz}{G(G-1)}\sum_{j,l\in\cG, j\neq l}\|g_j^t + \hbeta\Omega_j^t - g_l^t - \hbeta\Omega_l^t\|^2\notag\\
        &\overset{(vi)}{\le}\frac{8c\deltabyz}{G}\sum_{i \in \cG}\left\|\hbeta\sum_{s=0}^t\clip_{\tau}(v_i^{s}-g_i^{s-1}) \right\|^2
        + \frac{8c\deltabyz\hbeta^2}{G}\sum_{i\in\cG}\|\Omega_i^t\|^2 \notag\\
        &\overset{(vii)}{\le} 8c\deltabyz\hbeta^2(t+1)^2\tau^2
        + \frac{8c\deltabyz\hbeta^2}{G}\sum_{i\in\cG}\|\Omega_i^t\|^2,
        \label{eq:agg_inequality}
    \end{align}
    where $(iii)$ follows from the definition of $\aragg$; $(iv)$ -- from Young's inequality, $(v)$ -- from \eqref{eq:gt_bar_mt_bar_relation}, $(vi)$ -- from Young's inequality and update rule of $g_i^t$ together with the initialization $g_i^0=0$, $(vii)$ -- from Young's inequality. 
    Plugging \eqref{eq:agg_inequality} in \eqref{eq:jefnjenqoldwnq} and noting that $(t+1) \le T$, we get 
    \begin{align}\label{eq:aqmnowjdnqw}
        f(x^{t+1}) 
        &\le f(x^t) 
        - \frac{\gamma}{2}\|\nabla f(x^t)\|^2
        - \frac{1}{4\gamma}\|x^{t+1} - x^t\|^2
        + 2\gamma\|\nabla f(x^t) - \overline{v}^t\|^2
        + \frac{2\gamma}{G}\sum_{i\in\cG}\|v_i^t - g_i^t\|^2
        \notag\\
        &\qquad +\; 
        2\gamma\hbeta^2\|\Omega^t\|^2
        + 16\gamma\hbeta^2c\deltabyz \tau^2T^2
        + 16\gamma\hbeta^2\frac{c\deltabyz}{G}\sum_{i\in\cG}\|\Omega_i^t\|^2,
    \end{align}
    that finalizes the proof.
\end{proof}

\begin{lemma}\label{lem:bound_vi_t_gi_t} Let each $f_i$ be $L$-smooth. Then, for the iterates of \Cref{alg:byz_clip21_sgd2m}  we have the following inequality with probability $1$
\begin{align}
\begin{aligned}
        \|v_i^{t+1} - g_i^t\| &\le  (1-\hbeta)\|v_i^t - g_i^{t-1}\|
        + \hbeta\max\left\{0, \|v_i^t - g_i^{t-1}\| - \tau\right\}
        + \beta L\gamma\|\overline{m}^t\|
        + \beta\|\nabla f_i(x^t) - v_i^t\|
        \notag\\
        &\qquad +\; 
        2\beta\hbeta\tau T L\gamma\sqrt{2c\deltabyz}
        + 2\beta\hbeta L \gamma\sqrt{\frac{2c\deltabyz}{G}\sum_{i\in\cG}\|\Omega_i^t\|^2}
        + \beta\|\theta^{t+1}_i\|,
\end{aligned}
\end{align}
where $\theta^t_i \eqdef \nabla f_i(x^t,\xi^t_i) - \nabla f_i(x^t).$
\end{lemma}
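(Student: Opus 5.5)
We expand $v_i^{t+1}-g_i^t$ directly from the update rules and then apply the triangle inequality term by term. By the update of $v_i$,
\begin{align*}
v_i^{t+1}-g_i^t &= (1-\beta)v_i^t + \beta\nabla f_i(x^{t+1},\xi_i^{t+1}) - g_i^t\\
&= v_i^t - g_i^t + \beta(\nabla f_i(x^{t+1}) - \nabla f_i(x^t)) + \beta(\nabla f_i(x^t)-v_i^t) + \beta\theta_i^{t+1}.
\end{align*}
The update $g_i^t = g_i^{t-1} + \hbeta\clip_{\tau}(v_i^t - g_i^{t-1})$ gives
\[
v_i^t - g_i^t = (1-\hbeta)(v_i^t-g_i^{t-1}) + \hbeta\big[(v_i^t - g_i^{t-1}) - \clip_{\tau}(v_i^t-g_i^{t-1})\big].
\]
By \Cref{lem:clipping_property}, the bracket has norm at most $\max\{0,\|v_i^t-g_i^{t-1}\|-\tau\}$. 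This produces the first two terms of the claim. The terms $\beta\|\nabla f_i(x^t)-v_i^t\|$ and $\beta\|\theta_i^{t+1}\|$ appear directly.

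It remains to bound $\beta\|\nabla f_i(x^{t+1})-\nabla f_i(x^t)\|$. By \Cref{asmp:smoothness} and the step $x^{t+1}=x^t-\gamma g^t$, this is at most $\beta L\gamma\|g^t\|$. We split $\|g^t\|\le \|\overline{m}^t\| + \|g^t-\overline{m}^t\|$. The first piece gives the term $\beta L\gamma\|\overline m^t\|$. For the second piece we reuse the robust-aggregation estimate \eqref{eq:agg_inequality} from the proof of \Cref{lem:descent_lemma_in_f}:
\[
\|g^t-\overline m^t\|^2\le 8c\deltabyz\hbeta^2(t+1)^2\tau^2 + \frac{8c\deltabyz\hbeta^2}{G}\sum_{i\in\cG}\|\Omega_i^t\|^2 .
\]
We then use $\sqrt{a+b}\le\sqrt a+\sqrt b$ and $t+1\le T$. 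Since $\sqrt{8}=2\sqrt2$, this gives
\[
\|g^t-\overline m^t\|\le 2\hbeta\tau T\sqrt{2c\deltabyz} + 2\hbeta\sqrt{\frac{2c\deltabyz}{G}\sum_{i\in\cG}\|\Omega_i^t\|^2}.
\]
Multiplying by $\beta L\gamma$ yields exactly the remaining two terms. Every step is deterministic, so the bound holds with probability $1$.

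The main point needing care is the validity of \eqref{eq:agg_inequality} at every $t$. It relies on $m_i^t=g_i^t+\hbeta\Omega_i^t$ from \eqref{eq:gt_bar_mt_bar_relation} and on the telescoping $g_i^t = g_i^0 + \hbeta\sum_s\clip_\tau(\cdot)$. The initial differences $g_j^0-g_l^0$ must vanish, which holds under the common initialization $g_i^0=0$ used there; otherwise an additional initialization term would appear. The boundary case $t=0$, where $g_i^{-1}$ appears, should be handled by convention, e.g.\ $g_i^{-1}=v_i^0$ or by treating $t\ge1$ only. Everything else is the triangle inequality.
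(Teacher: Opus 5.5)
Your proposal is correct and follows the paper's proof. The paper imports the intermediate inequality containing $\beta L\gamma\|g^t\|$ from Lemma~12 of \citet{islamov2025double}, whereas you re-derive it directly from the update rules and \Cref{lem:clipping_property}; it then splits $\|g^t\|\le\|\overline{m}^t\|+\|g^t-\overline{m}^t\|$ and applies \eqref{eq:agg_inequality} with $\sqrt{s+q}\le\sqrt{s}+\sqrt{q}$, exactly as you do. Your caveat is also apt: the paper derives \eqref{eq:agg_inequality} using the zero initialization of the $g_i$ (through the convention $g_i^{-1}=0$), so the lemma inherits that assumption.
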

\begin{proof}
    First, we follow the steps of the proof of \citet[Lemma 12]{islamov2025double} to derive
    \begin{align}
        \|v_i^{t+1} - g_i^t\| &\le (1-\hbeta)\|v_i^t - g_i^{t-1}\|
        + \hbeta\max\left\{0, \|v_i^t - g_i^{t-1}\| - \tau\right\}
        + \beta L\gamma\|g^t\|
        + \beta\|\nabla f_i(x^t) - v_i^t\|\notag\\
        &\qquad +\; \beta\|\theta^{t+1}_i\|.
    \end{align}
    Next, we continue as follows 
    \begin{align}
        \|v_i^{t+1} - g_i^t\| &\le (1-\hbeta)\|v_i^t - g_i^{t-1}\|
        + \hbeta\max\left\{0, \|v_i^t - g_i^{t-1}\| - \tau\right\}
        + \beta L\gamma\|g^t-\overline{m}^t\|
        + \beta L\gamma\|\overline{m}^t\|
        \notag\\
        &\qquad +\; \beta\|\nabla f_i(x^t) - v_i^t\|
        + \beta\|\theta^{t+1}_i\|.
    \end{align}    
    Now we perform similar derivations as in \Cref{lem:descent_lemma_in_f} to bound $\|g^t-\overline{m}^t\|$ (see \eqref{eq:agg_inequality}), which finally gives the bound
    \begin{align}
        \|v_i^{t+1} - g_i^t\| &\le (1-\hbeta)\|v_i^t - g_i^{t-1}\|
        + \hbeta\max\left\{0, \|v_i^t - g_i^{t-1}\| - \tau\right\}
        + \beta L\gamma\|\overline{m}^t\|
        + \beta\|\nabla f_i(x^t) - v_i^t\|
        \notag\\
        &\qquad +\; \beta L\gamma\sqrt{8c\deltabyz\hbeta^2T^2\tau^2 + 8\hbeta^2\frac{c\deltabyz}{G}\sum_{i\in\cG}\|\Omega_i^t\|^2}
        + \beta\|\theta^{t+1}_i\|\notag\\
        &\overset{(i)}{\le} (1-\hbeta)\|v_i^t - g_i^{t-1}\|
        + \hbeta\max\left\{0, \|v_i^t - g_i^{t-1}\| - \tau\right\}
        + \beta L\gamma\|\overline{m}^t\|
        + \beta\|\nabla f_i(x^t) - v_i^t\|
        \notag\\
        &\qquad +\; 2\beta\hbeta\tau T L\gamma\sqrt{2c\deltabyz}
        + 2\beta\hbeta L \gamma\sqrt{\frac{2c\deltabyz}{G}\sum_{i\in\cG}\|\Omega_i^t\|^2}
        + \beta\|\theta^{t+1}_i\|,
    \end{align}    
    where $(i)$ follows from the triangle inequality: $\sqrt{s+q} \le \sqrt{s} + \sqrt{q}$ for any $s,q\ge 0.$
\end{proof}

\begin{lemma}\label{lem:bound_mt_bar} Let each $f_i$ be $L$-smooth, and $\Delta \ge \Phi^0.$ Assume that the following inequalities hold for the iterates generated by \algname{Byz-Clip21-SGD2M}
\begin{enumerate}
    \item $\gamma \le \frac{1}{24L}$,
    \item $\beta\in[0,1]$, $\hbeta \in[0, \nicefrac{1}{24}]$,
    \item $\|\overline{m}^{t-1}\| \le 
    \sqrt{64L\Delta} + 3(B_{\rm init}-\tau) 
    + 3b 
    + 3\hbeta a 
    + \hbeta\sqrt{2c\deltabyz}\ha
    + 3\sqrt{2c\deltabyz}\hbeta\tau T$,
    
    \item $\|\overline{g}^{t-1}\| \le  
    \sqrt{64L\Delta} + 3(B_{\rm init}-\tau) + 3b + 3\sqrt{2c\deltabyz}\hbeta\tau T;$
    
    \item $\|\nabla f_i(x^{t-1}) - v_i^{t-1}\| \le  
    \sqrt{4L\Delta}
    + \frac{3}{2}(B_{\rm init}-\tau) 
    + \frac{3}{2}b 
    + \hbeta a 
    + 4\hbeta\sqrt{2c\deltabyz}\ha
    + \frac{3}{2}\sqrt{2c\deltabyz}\hbeta\tau T$ for all $i\in\cG;$

    \item $\|v_i^t - g_i^{t-1}\| \le B_{\rm init}$ for all $i\in\cG;$
    \item $\|\Omega^t\| \le a;$
    \item $\|\Omega_i^{t-1}\| \le \ha$ for all $i\in\cG;$
    \item $\|\theta_i^t\| \le b$ for all $i\in\cG;$
    \item $\Phi^{t-1} \le 2\Delta.$
\end{enumerate}
    Then we have 
    \begin{align}
        \|\overline{m}^t\| \le  
        \sqrt{64L\Delta}
        + 3(B_{\rm init}-\tau)
        + 3b
        + 3\hbeta a 
        + \hbeta\sqrt{2c\deltabyz}\ha
        + 3\sqrt{2c\deltabyz}\hbeta\tau T.
    \end{align}
\end{lemma}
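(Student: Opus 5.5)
We start from the identity \eqref{eq:gt_bar_mt_bar_relation_averaged}, which gives $\overline{m}^t = \overline{g}^t + \hbeta\overline{\Omega}^t$. Together with assumption 7 this yields $\|\overline{m}^t\| \le \|\overline{g}^t\| + \hbeta a$. The task therefore reduces to bounding $\|\overline{g}^t\|$ in terms of the quantities controlled by the inductive hypotheses. Note that the target bound contains $3\hbeta a + \hbeta\sqrt{2c\deltabyz}\ha$ on top of the bound assumed for $\overline{g}^{t-1}$. So the extra budget of $2\hbeta a + \hbeta\sqrt{2c\deltabyz}\ha$ must absorb the DP-noise and aggregation contributions produced in one step.

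To bound $\overline{g}^t$ we use the update rule $\overline{g}^t = \overline{g}^{t-1} + \frac{\hbeta}{G}\sum_{i\in\cG}\clip_\tau(v_i^t - g_i^{t-1})$ and split it as
\[
\overline{g}^t = (1-\hbeta)\overline{g}^{t-1} + \hbeta\overline{v}^t + \frac{\hbeta}{G}\sum_{i\in\cG}\bigl(\clip_\tau(v_i^t-g_i^{t-1}) - (v_i^t - g_i^{t-1})\bigr).
\]
By \Cref{lem:clipping_property} and assumption 6, the last term has norm at most $\hbeta(B_{\rm init}-\tau)$. For the middle term we write $\overline{v}^t = \nabla f(x^t) + (\overline{v}^t - \nabla f(x^t))$.

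\begin{itemize}
\item The error $\|\overline{v}^t - \nabla f(x^t)\|$ comes from the momentum recursion $\overline{v}^t = (1-\beta)\overline{v}^{t-1} + \beta\nabla f(x^t,\xi^t)$. It is controlled by assumption 5 averaged over $i$, by smoothness via $\|x^t - x^{t-1}\| = \gamma\|g^{t-1}\|$, and by assumption 9.
\item For $\|\nabla f(x^t)\|$ we use $L$-smoothness and $f(x^t)-f^\star \le \Phi^t$, which gives $\|\nabla f(x^t)\|^2 \le 2L(f(x^t)-f^\star)$.
\end{itemize}

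The function value at time $t$ is bounded by the one-step descent \Cref{lem:descent_lemma_in_f} from $\Phi^{t-1}\le 2\Delta$. Alternatively, we can bound it through $\|\nabla f(x^{t-1})\| \le \sqrt{4L\Delta}$ plus $L\gamma\|g^{t-1}\|$. Here $\|g^{t-1}\| \le \|\overline{m}^{t-1}\| + \|g^{t-1}-\overline{m}^{t-1}\|$, and the aggregation error is bounded exactly as in \eqref{eq:agg_inequality}, using assumption 8, by $\hbeta\sqrt{2c\deltabyz}(2\tau T + 2\ha)$ up to constants. Either route gives $\|\nabla f(x^t)\| \le \sqrt{4L\Delta} + L\gamma\cdot(\text{bound on }\|g^{t-1}\|)$.

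Collecting terms, we obtain a recursion of the form
\[
\|\overline{g}^t\| \le (1-\hbeta)\|\overline{g}^{t-1}\| + \hbeta\Bigl[\sqrt{4L\Delta} + \tfrac{3}{2}(B_{\rm init}-\tau) + \tfrac32 b + \hbeta a + 4\hbeta\sqrt{2c\deltabyz}\ha + \dots\Bigr] + \hbeta(B_{\rm init}-\tau) + \hbeta L\gamma\|g^{t-1}\|.
\]
Because $\gamma\le 1/(24L)$, the factor $L\gamma \le 1/24$. Because $\hbeta\le 1/24$, the second-order terms $\hbeta^2 a$ and $\hbeta^2\ha$ are small. The bracket must then be at most the fixed point $\sqrt{64L\Delta}+3(B_{\rm init}-\tau)+3b+3\sqrt{2c\deltabyz}\hbeta\tau T$ assumed for $\overline{g}^{t-1}$, so that the convex combination keeps $\|\overline{g}^t\|$ below the same quantity plus a small residual. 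Adding $\hbeta a$ back gives the claim.

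The main obstacle is the constant bookkeeping. Each of the terms $\sqrt{L\Delta}$, $B_{\rm init}-\tau$, $b$, $\hbeta a$, $\hbeta\ha$ and $\hbeta\tau T$ must land inside the stated coefficients, and the $\|g^{t-1}\|$ contribution (which itself contains the $\overline{m}^{t-1}$ bound) must be absorbed using the smallness of $L\gamma$. I would first try the convex-combination fixed-point argument described above, checking coefficient by coefficient that $\frac{1}{24}\cdot 3 + \frac32 + 1 \le 3$ for the $(B_{\rm init}-\tau)$ term, and similarly for the other terms. If the $\hbeta\ha$ term does not fit into the coefficient $\hbeta\sqrt{2c\deltabyz}\ha$ through the $\overline{g}$ route, I would instead bound $\overline{m}^t$ directly by the same recursion applied to $\overline{m}$, which carries its own $\hbeta\overline{\Omega}$ slack.
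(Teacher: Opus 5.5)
Your proposal is correct and is essentially the paper's own argument. The paper also writes $\overline{m}^t=\hbeta\overline{\Omega}^t+(1-\hbeta)\overline{g}^{t-1}+\hbeta\overline{v}^t+\hbeta\cdot(\text{clipping error})$ and bounds $\|\nabla f(x^t)\|\le\sqrt{4L\Delta}+L\gamma\|g^{t-1}\|$ by your second route; the first route is not available, since only $\Phi^{t-1}\le 2\Delta$ is assumed and the descent lemma would add extra noise terms. It then splits $\|g^{t-1}\|$ via $\|\overline{m}^{t-1}\|$ and \eqref{eq:agg_inequality}, and closes with a coefficient-by-coefficient check using $L\gamma\le 1/24$ and $\hbeta\le 1/24$.

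Two small bookkeeping points. First, $L\gamma\|g^{t-1}\|$ enters twice (from $\nabla f(x^t)$ and from $\overline{v}^t-\nabla f(x^t)$), so it carries a factor $(2-\beta)$; the constants still absorb it. Second, the $\hbeta\sqrt{2c\deltabyz}\ha$ residual has relative size at most $4\hbeta+6\hbeta L\gamma<1$, so your fallback is unnecessary.
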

\begin{proof}
    We start as follows 
    \begin{align}
        \|\overline{m}^t\|
        &\overset{(i)}{=}\left\|
        \overline{m}^{t-1} + \frac{\hbeta}{G}\sum_{i\in\cG} [\clip_{\tau}(v_i^t - g_i^{t-1}) + \omega_i^t]\right\|\notag\\
        &= \left\|\overline{m}^{t-1}
        + \frac{\hbeta}{G}\sum_{i\in\cG}\left[\nabla f_i(x^t) + (v_i^t - \nabla f_i(x^t)) 
        + \clip_{\tau}(v_i^t - g_i^{t-1}) - (v_i^t - g_i^{t-1})\right]\right.\notag\\
        &\qquad \; \left.- \frac{\hbeta}{G}\sum_{i\in\cG} g_i^{t-1}  
        + \frac{\hbeta}{G}\sum_{i\in\cG}\omega_i^t
        \right\|\notag\\
        &= \left\|\overline{m}^{t-1}
        + \frac{\hbeta}{G}\sum_{i\in\cG}\left[\nabla f_i(x^t) + (v_i^t - \nabla f_i(x^t)) 
        + \clip_{\tau}(v_i^t - g_i^{t-1}) - (v_i^t - g_i^{t-1})\right]\right.\notag\\
        &\qquad \; \left.- \overline{g}^{t-1}  + (1-\hbeta)\overline{g}^{t-1} 
        + \frac{\hbeta}{G}\sum_{i\in\cG}\omega_i^t
        \right\|\notag.
    \end{align}
    where $(i)$ follows from the update rule of $\overline{m}^t$. We continue the derivations as follows 
    \begin{align}
        \|\overline{m}^t\| &\overset{(ii)}{\le} \left\|\overline{m}^{t-1} - \overline{g}^{t-1} + \frac{\hbeta}{G}\sum_{i\in\cG}\omega_i^t\right\|
        + \hbeta\|\nabla f(x^t)\|
        + \frac{\hbeta}{G}\sum_{i\in\cG}\|\clip_{\tau}(v_i^t - g_i^{t-1}) - (v_i^t - g_i^{t-1})\|\notag\\
        &\qquad +\; (1-\hbeta)\|\overline{g}^{t-1}\|
        + \frac{\hbeta}{G}\sum_{i\in\cG}\|v_i^t-\nabla f_i(x^t)\|\notag\\
        &\overset{(iii)}{\le} \left\|\overline{g}^{t-1} +\hbeta\Omega^{t-1} - \overline{g}^{t-1} + \frac{\hbeta}{G}\sum_{i\in\cG}\omega_i^t\right\|
        + \hbeta\|\nabla f(x^{t-1})\|
        + \hbeta\|\nabla f(x^{t-1}) - \nabla f(x^t)\|\notag\\
        &\qquad +\;
        \frac{\hbeta}{G}\sum_{i\in\cG}\|\clip_{\tau}(v_i^t - g_i^{t-1}) - (v_i^t - g_i^{t-1})\|
        + (1-\hbeta)\|\overline{g}^{t-1}\|\notag\\
        &\qquad +\; 
        \frac{\hbeta}{G}\sum_{i\in\cG}\|(1-\beta)v_i^{t-1} + \beta\nabla f_i(x^t,\xi^t_i)-\nabla f_i(x^t)\|,
    \end{align}
    where $(ii)$ follows from the triangle inequality, $(iii)$ -- from \eqref{eq:gt_bar_mt_bar_relation_averaged},  triangle inequality, and update rule of $v_i^t$. Using the definition of $\Omega^t$ we continue as follows
    \begin{align}
        \|\overline{m}^t\|
        &\overset{(iv)}{\le} \hbeta\|\Omega^{t}\|
        + \hbeta\|\nabla f(x^{t-1})\|
        + \hbeta L\gamma\|g^{t-1}\|
        + \frac{\hbeta}{G}\sum_{i\in\cG}\max\left\{0, \|v_i^t - g_i^{t-1}\| - \tau\right\}
        + (1-\hbeta)\|\overline{g}^{t-1}\|\notag\\
        &\qquad +\; 
        \frac{\hbeta}{G}\sum_{i\in\cG}\left((1-\beta)\|v_i^{t-1} -\nabla f_i(x^t)\| + \beta\|\nabla f_i(x^t,\xi^t_i)-\nabla f_i(x^t)\|\right)\notag\\
        &\overset{(v)}{\le} 
        \hbeta\|\Omega^t\|
        + \hbeta\sqrt{2L(f(x^{t-1})-f^\star)}
        + (2-\beta)\hbeta L\gamma\|g^{t-1}\|
        + \hbeta(B_{\rm init}-\tau)
        + (1-\hbeta)\|\overline{g}^{t-1}\|\notag\\
        &\qquad +\; \frac{\hbeta\beta}{G}\sum_{i\in\cG}\|\theta^t_i\|
        + \frac{\hbeta}{G}(1-\beta)\sum_{i\in\cG}\|v_i^{t-1} - \nabla f_i(x^{t-1})\|\notag\\
        &\overset{(vi)}{\le} 
        \hbeta\|\Omega^t\|
        + \hbeta\sqrt{2L\Phi^{t-1}}
        + {\color{black}2}(2-\beta)\hbeta L\gamma\|\overline{m}^{t-1} - g^{t-1}\|
        + {\color{black}2}(2-\beta)\hbeta L\gamma\|\overline{m}^{t-1}\|
        + \hbeta(B_{\rm init}-\tau)\notag\\
        &\qquad +\; 
        (1-\hbeta)\|\overline{g}^{t-1}\|
        + \frac{\hbeta\beta}{G}\sum_{i\in\cG}\|\theta^t_i\|
        + \frac{\hbeta}{G}(1-\beta)\sum_{i\in\cG}\|v_i^{t-1} - \nabla f_i(x^{t-1})\|,\notag
    \end{align}

    where $(iv)$ follows from $L$-smoothness, update rule of $x^t,$ the clipping property from \Cref{lem:clipping_property}, and triangle inequality, $(v)$ -- from $L$-smoothness, triangle inequality, the update rule of $x^t$, the definition of $\theta^t_i$, and the assumption $5$ of the lemma, $(vi)$ -- from the definition of $\Phi^t$ and triangle inequality. Now we use \eqref{eq:agg_inequality} and the assumptions of the lemma to derive
    \begin{align}
        \|\overline{m}^t\| &\overset{(vii)}{\le} \hbeta a 
        + \hbeta\sqrt{4L\Delta} 
        + 4\hbeta L\gamma\sqrt{8c\deltabyz\hbeta^2T^2\tau^2 + \hbeta^2\frac{8c\deltabyz}{G}\sum_{i\in\cG}\|\Omega_i^{t-1}\|^2}\notag\\
        &\qquad +\; 4\hbeta L\gamma\left(  
        \sqrt{64L\Delta} + 3(B_{\rm init}-\tau) + 3b + 3\hbeta a + \hbeta\sqrt{2c\deltabyz}\ha
        + 3\sqrt{2c\deltabyz}\hbeta\tau T\right)
        + \hbeta (B_{\rm init}-\tau)\notag\\
        &\qquad +\; (1-\hbeta)\left(  
        \sqrt{64L\Delta} 
        + 3(B_{\rm init}-\tau) 
        + 3b
        + 3\sqrt{2c\deltabyz}\hbeta\tau T\right)
        + \hbeta\beta b\notag\\
        &\qquad +\; \hbeta(1-\beta)\left(  
        \sqrt{4L\Delta}
        + \frac{3}{2}(B_{\rm init}-\tau)
        + \frac{3}{2}b
        + \hbeta a
        + 4\hbeta\sqrt{2c\deltabyz} \ha
        + \frac{3}{2}\sqrt{2c\deltabyz}\hbeta\tau T\right)\notag\\
        &\overset{(viii)}{\le} \hbeta a 
        + \hbeta\sqrt{4L\Delta} 
        + 8\hbeta^2T\tau L\gamma\sqrt{2c\deltabyz}
        + 8\hbeta^2 L\gamma\sqrt{2c\deltabyz}\ha
        + \hbeta (B_{\rm init}-\tau)
        + \hbeta\beta b
        \notag\\
        &\qquad +\; 4\hbeta L\gamma\left( 
        \sqrt{64L\Delta} + 3(B_{\rm init}-\tau) + 3b + 3\hbeta a + 3\hbeta\sqrt{2c\deltabyz}\ha
        + 3\sqrt{2c\deltabyz}\hbeta\tau T\right)
        \notag\\
        &\qquad +\; (1-\hbeta)\left( 
        \sqrt{64L\Delta} 
        + 3(B_{\rm init}-\tau) 
        + 3b
        + 3\sqrt{2c\deltabyz}\hbeta\tau T\right)
        \notag\\
        &\qquad +\; \hbeta(1-\beta)\left( 
        \sqrt{4L\Delta}
        + \frac{3}{2}(B_{\rm init}-\tau)
        + \frac{3}{2}b
        + \hbeta a
        + 4\hbeta\sqrt{2c\deltabyz} \ha
        + \frac{3}{2}\sqrt{2c\deltabyz}\hbeta\tau T\right),
    \end{align}
    where $(vii)$ follows from assumptions $3$-$5$, $7$, $9$, $10$ $(viii)$ -- from inequality $\sqrt{s+q}\le \sqrt{s}+\sqrt{q}$ for any $a,b\ge 0$. Rearranging terms, we derive 
    \begin{align}
        \|\overline{m}^t\| &\le \sqrt{L\Delta}[2\hbeta + 32\hbeta L\gamma + 8(1-\hbeta) + 2\hbeta(1-\beta)]
        + (B_{\rm init}-\tau)[12\hbeta L\gamma + 3(1-\hbeta) + \hbeta + \nicefrac{3}{2}\hbeta(1-\beta)]\notag\\
        &\qquad +\; b [12\hbeta L\gamma + 3(1-\hbeta) + \hbeta\beta + \nicefrac{3}{2}\hbeta(1-\beta)]
        + a[\hbeta + 12\hbeta^2L\gamma + \hbeta^2(1-\beta)]\notag\\
        &\qquad +\; \sqrt{2c\deltabyz}\ha[8\hbeta^2L\gamma + 12\hbeta^2L\gamma + 4\hbeta^2(1-\beta)]\notag\\
        &\qquad +\; \tau T\sqrt{2c\deltabyz}[\nicefrac{3}{2}\hbeta^2(1-\beta) + 3\hbeta(1-\hbeta) + 12\hbeta^2L\gamma + 8\hbeta^2L\gamma].
    \end{align}
    For the first coefficient, we have 
    \begin{align}
        &\; 2\hbeta + 32\hbeta L\gamma + 8(1-\hbeta) + 2\hbeta(1-\beta) \le 8\notag\\
        \Leftarrow &\; 2\hbeta + 32\hbeta L\gamma + 2\hbeta(1-\beta) \le 8\hbeta\notag\\
        \Leftarrow &\; 8L\gamma \le 1,
    \end{align}
    where the last inequality is satisfied by the choice of the step-size $\gamma \le \frac{1}{24L}.$
    For the second coefficient, we have
    \begin{align}
        &\; 12\hbeta L\gamma + 3(1-\hbeta) + \hbeta + \nicefrac{3}{2}\hbeta(1-\beta) \le 3\notag\\
        \Leftarrow &\; 12\hbeta L\gamma + \hbeta + \nicefrac{3}{2}\hbeta(1-\beta) \le 3\hbeta\notag\\
        \Leftarrow &\; 24 L\gamma \le 1,
    \end{align}
    where the last inequality holds by the choice of the step-size $\gamma \le \frac{1}{24L}$. For the third coefficient, we have 
    \begin{align}
        &\; 12\hbeta L\gamma + 3(1-\hbeta) + \hbeta\beta + \nicefrac{3}{2}\hbeta(1-\beta) \le 3\notag\\
        \Leftarrow &\; 12\hbeta L\gamma + \hbeta\beta + \nicefrac{3}{2}\hbeta(1-\beta) \le 3\hbeta\notag\\
        \Leftarrow &\; 24L\gamma \le 1,
    \end{align}
    where the last inequality holds by the choice of the step-size $\gamma\le \frac{1}{24L}$. For the fourth coefficient, we have
    \begin{align}
        \hbeta + 12\hbeta^2L\gamma + \hbeta^2(1-\beta) \le 3\hbeta \Leftarrow 12L\gamma\hbeta^2 + \hbeta^2 \le 2\hbeta \Leftarrow 12L\gamma \le 1,
    \end{align}
    where the second last inequality holds by the choice of the step-size $\gamma\le \frac{1}{24L},$ and the last inequality holds by the choice of the momentum parameter $\hbeta \le 1.$
    For the fifth coefficient, we have 
    \begin{align}
        &\; 8\hbeta^2L\gamma + 12\hbeta^2L\gamma + 4\hbeta^2(1-\beta) \le \hbeta \Leftarrow 20\hbeta L\gamma + 4\hbeta \le 1,
    \end{align}
    where the last inequality holds by the choice of the step-size $\gamma \le \frac{1}{24L}$ and momentum parameter $\hbeta\le \frac{1}{24}.$
    For the sixth coefficient, we have 
    \begin{align}
        &\;\nicefrac{3}{2}\hbeta^2(1-\beta) + 3\hbeta(1-\hbeta) + 12\hbeta^2L\gamma + 8\hbeta^2L\gamma \le 3\notag\\
        \Leftarrow&\; \nicefrac{3}{2}\hbeta^2 + 20\hbeta^2L\gamma \le 3\hbeta \notag\\
        \Leftarrow&\; 20\hbeta L\gamma \le \frac{3}{2}\notag\\
        \Leftarrow&\; \frac{40}{3}\hbeta L\gamma \le 1,
    \end{align}
    where the last inequality holds by the choice of the step-size $\gamma \le \frac{1}{24L}$ and momentum parameter $\hbeta\le 1$.
\end{proof}

\begin{lemma}\label{lem:bound_nabla_fi_xt_vi_t} Let each $f_i$ be $L$-smooth, $\Delta \ge \Phi^0$, $B_{\rm init}>\tau$. Assume that the following inequalities hold for the iterates generated by \algname{Byz-Clip21-SGD2M} 
\begin{enumerate}
    \item $\gamma \le \frac{1}{24L}$;
    \item $12L\gamma \le \beta$;
    \item $\beta,\hbeta\in[0,1];$
    \item $\|\overline{m}^{t-1}\| \le  
    \sqrt{64L\Delta} + 3(B_{\rm init}-\tau) 
    + 3b 
    + 3\hbeta a 
    + \hbeta\sqrt{2c\deltabyz}\ha
    + 3\sqrt{2c\deltabyz}\hbeta\tau T$,
    
    \item $\|\overline{g}^{t-1}\| \le  
    \sqrt{64L\Delta} + 3(B_{\rm init}-\tau) + 3b
    + 3\sqrt{2c\deltabyz}\hbeta\tau T;$

    \item $\|\nabla f_i(x^{t-1}) - v_i^{t-1}\| \le  
    \sqrt{4L\Delta}
    + \frac{3}{2}(B_{\rm init}-\tau) 
    + \frac{3}{2}b 
    + \hbeta a 
    + 4\hbeta\sqrt{2c\deltabyz}\ha
    + \frac{3}{2}\sqrt{2c\deltabyz}\hbeta\tau T$ for all $i\in\cG;$
    \item $\|v_i^t - g_i^{t-1}\| \le B_{\rm init}$ for all $i\in\cG;$
    \item $\|\Omega^t\| \le a;$
    \item $\|\Omega_i^{t-1}\| \le \ha$ for all $i\in\cG;$
    \item $\|\theta_i^t\| \le b$ for all $i\in\cG;$
    \item $\Phi^{t-1} \le 2\Delta.$
\end{enumerate}
Then we have 
\begin{align}
    \|\nabla f_i(x^t) - v_i^t\| \le  
    \sqrt{4L\Delta} + \frac{3}{2}(B_{\rm init}-\tau) + \frac{3}{2}b + \hbeta a + 4\hbeta\sqrt{2c\deltabyz}\ha 
    + \frac{3}{2}\sqrt{2c\deltabyz}\hbeta\tau T.
\end{align}
\end{lemma}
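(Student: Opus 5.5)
We unroll the client momentum recursion for one step and bound each piece using the inductive hypotheses, mirroring the proof of \Cref{lem:bound_mt_bar}. By the update rule $v_i^t=(1-\beta)v_i^{t-1}+\beta\nabla f_i(x^t,\xi_i^t)$ we write
\begin{align*}
\nabla f_i(x^t)-v_i^t=(1-\beta)\bigl(\nabla f_i(x^{t-1})-v_i^{t-1}\bigr)+(1-\beta)\bigl(\nabla f_i(x^t)-\nabla f_i(x^{t-1})\bigr)-\beta\theta_i^t.
\end{align*}
The triangle inequality, $L$-smoothness and $x^t-x^{t-1}=-\gamma g^{t-1}$ then give
\begin{align*}
\|\nabla f_i(x^t)-v_i^t\|\le(1-\beta)\|\nabla f_i(x^{t-1})-v_i^{t-1}\|+L\gamma\|g^{t-1}\|+\beta\|\theta_i^t\|.
\end{align*}
Assumption 10 bounds the last term by $\beta b$, and assumption 6 bounds the first term by $(1-\beta)$ times the target quantity $D$, where $D$ denotes the right-hand side of the claim.

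Next we control $\|g^{t-1}\|\le\|\overline{m}^{t-1}\|+\|g^{t-1}-\overline{m}^{t-1}\|$. The first piece is bounded by assumption 4. For the second we use the aggregation bound \eqref{eq:agg_inequality} together with assumption 9 and $\sqrt{s+q}\le\sqrt s+\sqrt q$, which yields
\begin{align*}
\|g^{t-1}-\overline{m}^{t-1}\|\le 2\sqrt{2c\deltabyz}\,\hbeta\tau T+2\sqrt{2c\deltabyz}\,\hbeta\ha.
\end{align*}
Hence $L\gamma\|g^{t-1}\|$ is at most $L\gamma$ times
\begin{align*}
\sqrt{64L\Delta}+3(B_{\rm init}-\tau)+3b+3\hbeta a+3\hbeta\sqrt{2c\deltabyz}\ha+5\sqrt{2c\deltabyz}\hbeta\tau T.
\end{align*}

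It remains to show that
\begin{align*}
(1-\beta)D+\beta b+L\gamma(\cdots)\le D,
\end{align*}
which is equivalent to $\beta b+L\gamma(\cdots)\le\beta D$. We use assumption 2, $L\gamma\le\beta/12$, and compare coefficients term by term:
\begin{itemize}
\item[] $\sqrt{L\Delta}$: the left side has $8\beta/12\le2\beta$;
\item[] $B_{\rm init}-\tau$: $3\beta/12\le\tfrac32\beta$;
\item[] $b$: $\beta+3\beta/12\le\tfrac32\beta$;
\item[] $a$: $3\hbeta\beta/12\le\hbeta\beta$;
\item[] $\ha$: $3\hbeta\beta/12\le4\hbeta\beta$;
\item[] $\tau T$: $5\beta/12\le\tfrac32\beta$.
\end{itemize}
Every coefficient on the left is dominated, so the claim follows.

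The main obstacle is the first step: the bound depends on which smoothness inequality is applied. If instead one bounds $\|v_i^{t-1}-\nabla f_i(x^t)\|$ directly, as in \Cref{lem:bound_mt_bar}, the $(1-\beta)$ factor multiplies the smoothness term as well. This is harmless, but it must be tracked consistently.

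A second point needs care: in the aggregation bound we should use $\hbeta\ha$ from $\|\Omega_i^{t-1}\|\le\ha$, not a bound on $\Phi^{t-1}$. Assumption 11 and the other hypotheses are not needed here.
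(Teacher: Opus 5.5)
Your proposal is correct and follows essentially the same route as the paper: unroll the $v_i^t$ recursion one step, apply $L$-smoothness with $x^t-x^{t-1}=-\gamma g^{t-1}$, split $\|g^{t-1}\|\le\|\overline{m}^{t-1}\|+\|\overline{m}^{t-1}-g^{t-1}\|$ using \eqref{eq:agg_inequality} and assumption 9, then compare coefficients term by term using $12L\gamma\le\beta$. The only cosmetic difference is that you drop the $(1-\beta)$ factor on the smoothness term immediately, whereas the paper carries it a few lines further and discards it during the final regrouping.
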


\begin{proof}
    We have
    \begin{align}
        \|\nabla f_i(x^t) - v_i^t\| &\overset{(i)}{=} \|\nabla f_i(x^t) - (1-\beta)v_i^{t-1} - \beta\nabla f_i(x^t,\xi^t_i)\|\notag\\
        &\overset{(ii)}{\le} (1-\beta)\|\nabla f_i(x^t) - v_i^{t-1}\|
        + \beta\|\nabla f_i(x^{t}) - \nabla f_i(x^t,\xi^t_i)\|\notag\\
        &\overset{(iii)}{\le} (1-\beta)\|\nabla f_i(x^t) - \nabla f_i(x^{t-1})\|
        + (1-\beta)\|\nabla f_i(x^{t-1}) - v_i^{t-1}\|
        + \beta\|\theta^t_i\|\notag\\
        &\overset{(iv)}{\le} (1-\beta)L\gamma\|g^{t-1}\|
        + (1-\beta)\|\nabla f_i(x^{t-1}) - v_i^{t-1}\| 
        + \beta\|\theta^{t}_i\|\notag\\
        &\overset{(v)}{\le} (1-\beta)L\gamma\|\overline{m}^{t-1} - g^{t-1}\|
        + (1-\beta)L\gamma\|\overline{m}^{t-1}\|
        + (1-\beta)\|\nabla f_i(x^{t-1}) - v_i^{t-1}\| 
        + \beta\|\theta^{t}_i\|\notag\\
        &\overset{(vi)}{\le} 
        (1-\beta)L\gamma\left(2\hbeta\tau T\sqrt{2c\deltabyz} + 2\hbeta\sqrt{\frac{2c\deltabyz}{G}\sum_{i\in\cG}\|\Omega_i^{t-1}\|^2}\right)
        + (1-\beta)L\gamma\|\overline{m}^{t-1}\|
        \notag\\
        &\qquad +\; \beta\|\theta^{t}_i\|
        + (1-\beta)\|\nabla f_i(x^{t-1}) - v_i^{t-1}\| \notag\\
        &\overset{(vii)}{\le} 
        2\hbeta L\gamma\sqrt{2c\deltabyz}\tau T
        + 2\hbeta L\gamma(1-\beta)\sqrt{2c\deltabyz}\ha 
        + \beta b \notag\\
        &\qquad +\; (1-\beta)L\gamma\left( 
        \sqrt{64L\Delta} + 3(B_{\rm init}-\tau) + 3b + 3\hbeta a + \hbeta\sqrt{2c\deltabyz}\ha + 3\sqrt{2c\deltabyz}\hbeta\tau T\right)\notag\\
        &\qquad +\; (1-\beta)\left( 
        \sqrt{4L\Delta} + \frac{3}{2}(B_{\rm init}-\tau) + \frac{3}{2}b + \hbeta a + 4\hbeta\sqrt{2c\deltabyz}\ha + \nicefrac{3}{2}\sqrt{2c\deltabyz}\hbeta\tau T \right) 
        \notag\\
        &\le \sqrt{L\Delta}[8L\gamma + 2(1-\beta)]
        + (B_{\rm init}-\tau)[3L\gamma + \nicefrac{3(1-\beta)}{2}]
        + b[3L\gamma + \beta + \nicefrac{3(1-\beta)}{2}]
        \notag\\
        &\qquad +\; 
        \hbeta a[3 L\gamma + (1-\beta)]
        + \sqrt{2c\deltabyz}\hbeta\ha[2 L\gamma + L\gamma + 4(1-\beta)]
        \notag\\
        &\qquad +\; \sqrt{2c\deltabyz}\hbeta\tau T[2L\gamma + 3L\gamma + \frac{3}{2}(1-\beta)],\notag
    \end{align}

    where $(i)$ follows from the update rule of $v_i^t$, $(ii)$-$(iii)$ -- from  triangle inequality and definition of $\theta^t_i$, $(iv)$ -- from $L$-smoothness and the update rule of $x^t$, $(v)$ -- from triangle inequality 
    $(vi)$ -- from \eqref{eq:agg_inequality} and inequality $\sqrt{a+b} \le \sqrt{a} + \sqrt{b}$ for any $a,b\ge 0$, $(vii)$ -- from the assumptions $4$-$7$, $9$, $10$ of the lemma.

    For the first coefficient, we have
    \begin{align}
        8L\gamma + 2(1-\beta) \le 2 \Leftarrow 4L\gamma \le \beta 
    \end{align}
    where the last inequality holds by the choice of the step-size $12L\gamma \le \beta$. For the second coefficient, we have
    \begin{align}
        3L\gamma + \frac{3}{2}(1-\beta) \le \frac{3}{2} \Leftarrow 2L\gamma \le \beta,
    \end{align}
    where the last inequality holds by the choice of the step-size $12L\gamma \le \beta$. For the third coefficient, we have
    \begin{align}
        3L\gamma + \beta + \frac{3}{2}(1-\beta) \le \frac{3}{2} \Leftarrow 6L\gamma \le \beta,
    \end{align}
    where the last inequality holds by the choice of the step-size $12L\gamma \le \beta$. For the fourth coefficient, we have 
    \begin{align}
        3L\gamma + (1-\beta) \le 1 \Leftarrow 3L\gamma \le \beta,
    \end{align}
    the last inequality holds by the choice of the step-size $12L\gamma \le \beta.$ For the fifth coefficient, we have 
    \begin{align}
        &\; 2L\gamma + L\gamma  + 4(1-\beta) \le 4  \Leftarrow 3L\gamma  \le 4\beta,
    \end{align}
    where the last inequality holds by the choice of the step-size $12L\gamma\le \beta$. For the sixth coefficient, we have 
    \begin{align}
        2L\gamma + 3L\gamma + \frac{3}{2}(1-\beta) \le \frac{3}{2}  \Leftarrow \frac{10}{3}L\gamma \le \beta,
    \end{align}
    where the last inequality holds by the choice of the step-size $12L\gamma \le \beta$. This concludes the proof.
\end{proof}

\begin{lemma}\label{lem:bound_gt_hat} 
Let each $f_i$ be $L$-smooth, $\Delta \ge \Phi^0$, $B_{\rm init}>\tau$. Assume that the following inequalities hold for the iterates generated by \algname{Byz-Clip21-SGD2M} 
\begin{enumerate}
    \item $\gamma \le \frac{1}{24L}$;
    \item $\beta\in[0,1];$
    \item $\hbeta \leq \min\left\{\frac{\sqrt{L\Delta}}{a}, \frac{\sqrt{L\Delta}}{4\sqrt{2c\deltabyz}\ha}, 1\right\};$
    \item $\|\overline{m}^{t-1}\| \le  
    \sqrt{64L\Delta} + 3(B_{\rm init}-\tau) 
    + 3b 
    + 3\hbeta a 
    + \hbeta\sqrt{2c\deltabyz}\ha
    + 3\sqrt{2c\deltabyz}\hbeta\tau T$,
    
    \item $\|\overline{g}^{t-1}\| \le  
    \sqrt{64L\Delta} + 3(B_{\rm init}-\tau) + 3b
    + 3\sqrt{2c\deltabyz}\hbeta\tau T;$

    \item $\|\nabla f_i(x^{t-1}) - v_i^{t-1}\| \le  
    \sqrt{4L\Delta}
    + \frac{3}{2}(B_{\rm init}-\tau) 
    + \frac{3}{2}b 
    + \hbeta a 
    + 4\hbeta\sqrt{2c\deltabyz}\ha
    + \frac{3}{2}\sqrt{2c\deltabyz}\hbeta\tau T$ for all $i\in\cG;$
    \item $\|v_i^t - g_i^{t-1}\| \le B_{\rm init}$ for all $i\in\cG;$
    \item $\|\Omega^t\| \le a;$
    \item $\|\Omega_i^{t-1}\| \le \ha$ for all $i\in\cG;$
    \item $\|\theta_i^t\| \le b$ for all $i\in\cG;$
    \item $\Phi^{t-1} \le 2\Delta.$
\end{enumerate}
Then we have 
\begin{align}
    \|\overline{g}^{t}\| \le  
    \sqrt{64L\Delta} + 3(B_{\rm init}-\tau) + 3b + 3\sqrt{2c\deltabyz}\hbeta\tau T.
\end{align}

\end{lemma}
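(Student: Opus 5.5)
We will mirror the argument of \Cref{lem:bound_mt_bar}, now tracking $\overline{g}^t$ instead of $\overline{m}^t$. The key difference is that $\overline{g}^t$ carries no DP noise. Averaging the update $g_i^t = g_i^{t-1} + \hbeta\clip_{\tau}(v_i^t - g_i^{t-1})$ over $i\in\cG$ and adding and subtracting $v_i^t$ and $\nabla f_i(x^t)$ gives
\begin{align*}
\overline{g}^t = (1-\hbeta)\overline{g}^{t-1} + \frac{\hbeta}{G}\sum_{i\in\cG}\Big[\nabla f_i(x^t) + (v_i^t-\nabla f_i(x^t)) + \big(\clip_{\tau}(v_i^t-g_i^{t-1}) - (v_i^t-g_i^{t-1})\big)\Big].
\end{align*}
Taking norms and applying the triangle inequality, the clipping term is bounded by $\hbeta(B_{\rm init}-\tau)$ via \Cref{lem:clipping_property} and assumption 7. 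The gradient term is handled by
\[
\|\nabla f(x^t)\|\le \|\nabla f(x^{t-1})\| + L\gamma\|g^{t-1}\| \le \sqrt{2L\Phi^{t-1}} + L\gamma\|g^{t-1}\| \le \sqrt{4L\Delta} + L\gamma\|g^{t-1}\|,
\]
using smoothness, $F^{t-1}\le\Phi^{t-1}$, and assumption 11.

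For the momentum term we expand $v_i^t = (1-\beta)v_i^{t-1}+\beta\nabla f_i(x^t,\xi_i^t)$. This gives
\[
\|v_i^t-\nabla f_i(x^t)\|\le (1-\beta)\|v_i^{t-1}-\nabla f_i(x^{t-1})\| + (1-\beta)L\gamma\|g^{t-1}\| + \beta\|\theta_i^t\|,
\]
and assumptions 6 and 10 then apply. Next we write $\|g^{t-1}\|\le \|g^{t-1}-\overline{m}^{t-1}\| + \|\overline{m}^{t-1}\|$. The first piece is controlled by \eqref{eq:agg_inequality} as $2\hbeta\sqrt{2c\deltabyz}(\tau T + \ha)$, using assumption 9. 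The second is controlled by assumption 4. The term $(1-\hbeta)\|\overline{g}^{t-1}\|$ is bounded by assumption 5.

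Collecting terms, the result should be $(1-\hbeta)$ times the target bound plus $\hbeta$ times a quantity that we need to show is at most the target. This means checking coefficient by coefficient that, for each of $\sqrt{L\Delta}$, $(B_{\rm init}-\tau)$, $b$, and $\sqrt{2c\deltabyz}\hbeta\tau T$, the $\hbeta$-multiplied coefficient is at most $8$, $3$, $3$, and $3$ respectively. These follow from $\gamma\le\frac{1}{24L}$ exactly as in \Cref{lem:bound_mt_bar}. For example, the $\sqrt{L\Delta}$ coefficient is roughly $2 + 16L\gamma + 2(1-\beta) \le 8$.

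The main obstacle is the leftover noise terms. Assumptions 4 and 6 contribute pieces proportional to $\hbeta a$ and $\hbeta\sqrt{2c\deltabyz}\ha$, but the target bound for $\overline{g}^t$ contains no such terms. This is precisely where assumption 3 enters. Since $\hbeta a\le\sqrt{L\Delta}$ and $4\hbeta\sqrt{2c\deltabyz}\ha\le\sqrt{L\Delta}$, every such term (each multiplied by $\hbeta$ and a coefficient of order at most $4$) can be converted into a multiple of $\hbeta\sqrt{L\Delta}$. The remaining work is to verify that these extra contributions, of total order $\hbeta\sqrt{L\Delta}\,(1 + O(L\gamma))$, still fit into the slack of the $\sqrt{L\Delta}$ coefficient, i.e.\ that the total stays below $8\hbeta\sqrt{L\Delta}$. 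With $\gamma\le\frac{1}{24L}$ the budget looks like roughly $2+2+1+1+\text{small}\le 8$, which should close the induction step.
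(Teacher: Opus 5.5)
Your proposal is correct and follows the paper's proof essentially step for step. It uses the same splitting of $\overline{g}^t$ into $(1-\hbeta)\overline{g}^{t-1}$ plus $\hbeta$ times gradient, momentum-error and clipping-error terms, and the same control of $\|g^{t-1}\|$ through $\overline{m}^{t-1}$ and \eqref{eq:agg_inequality}. It also uses assumption~3 in the same way, absorbing the $\hbeta a$ and $\hbeta\sqrt{2c\deltabyz}\ha$ contributions into the $\sqrt{L\Delta}$ coefficient; your rough budget (about $6+23.5L\gamma<8$) is if anything slightly tighter than the paper's own bookkeeping.
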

\begin{proof}
    We have 
    \begin{align}
        \|\overline{g}^t\| &\overset{(i)}{=} \left\|\overline{g}^{t-1} + \frac{1}{G}\sum_{i\in\cG} \hbeta\clip_{\tau}(v_i^t-g_i^{t-1})\right\|\notag\\
        &= \left\|\hbeta\nabla f(x^t) 
        + \hbeta(\overline{v}^t - \nabla f(x^t))
        + (1-\hbeta)\overline{g}^{t-1}
        + \frac{\hbeta}{G}\sum_{i\in\cG}[\clip_{\tau}(v_i^t - g_i^{t-1}) - (v_i^t - g_i^{t-1})]\right\|\notag\\
        &\overset{(ii)}{\le} 
         \hbeta\|\nabla f(x^t)\|
        + \frac{\hbeta}{G}\sum_{i\in\cG}\|v_i^t - \nabla f_i(x^t)\|
        + (1-\hbeta)\|\overline{g}^{t-1}\|\notag\\
        &\qquad 
        +\; \frac{\hbeta}{G}\sum_{i\in\cG}\|\clip_{\tau}(v_i^t - g_i^{t-1}) - (v_i^t - g_i^{t-1})\|\notag\\
        &\overset{(iii)}{\le} \hbeta\|\nabla f(x^{t-1})\|
        + \hbeta\|\nabla f(x^t) - \nabla f(x^{t-1})\|
        + \frac{\hbeta}{G}\sum_{i\in\cG}\|(1-\beta)v_i^{t-1} + \beta\nabla f_i(x^{t},\xi^t_i) - \nabla f_i(x^t)\|\notag\\
        &\qquad +\; (1-\hbeta)\|\overline{g}^{t-1}\|
        + \frac{\hbeta}{G}\sum_{i\in\cG}\max\left\{\|v_i^t - g_i^{t-1}\| - \tau, 0\right\},
    \end{align}
    where $(i)$ follows from the update rule of $g_i^t$, $(ii)$ -- from the triangle inequality, $(iii)$ -- from the update rule of $v_i^t,$ triangle inequality, and \Cref{lem:clipping_property}. We continue the derivation of the bound as follows
    \begin{align}
        \|\overline{g}^t\| &\overset{(iv)}{\le} \hbeta\sqrt{2L(f(x^{t-1}) - f^\star)}
        + \hbeta L\gamma\|g^{t-1}\|
        + (1-\hbeta)\|\overline{g}^{t-1}\|
        + \hbeta(B_{\rm init}-\tau)\notag\\
        &\qquad +\;\frac{\hbeta}{G}\sum_{i\in\cG}\left((1-\beta)\|v_i^{t-1} - \nabla f_i(x^{t-1})\|
        + (1-\beta)\|\nabla f_i(x^{t-1}) - \nabla f_i(x^t)\|
        + \beta\|\theta^t_i\|\right),\notag\\
        &\overset{(v)}{\le} 
        \hbeta\sqrt{4L\Delta}
        + \hbeta L\gamma(2-\beta)\|g^{t-1}\|
        + (1-\hbeta)\|\overline{g}^{t-1}\|
        + \hbeta(B_{\rm init}-\tau)
        + \hbeta\beta b
        \notag\\
        &\qquad +\; \frac{\hbeta}{G}(1-\beta)\sum_{i\in\cG}\|v_i^{t-1}-\nabla f_i(x^{t-1})\|\notag\\
        &\overset{(vi)}{\le} \hbeta\sqrt{4L\Delta}
        + \hbeta L\gamma(2-\beta)\|\overline{m}^{t-1}-g^{t-1}\|
        + \hbeta L\gamma(2-\beta)\|\overline{m}^{t-1}\|
        + (1-\hbeta)\|\overline{g}^{t-1}\|
        + \hbeta(B_{\rm init}-\tau)
        \notag\\
        &\qquad +\; 
        \hbeta\beta b
        + \frac{\hbeta}{G}(1-\beta)\sum_{i\in\cG}\|v_i^{t-1}-\nabla f_i(x^{t-1})\|,
    \end{align}
    where $(iv)$ follows from $L$-smoothness, the update rule of $x^t$, assumption 7 of the lemma, and triangle inequality, $(v)$ -- from assumptions 10 and 11 of the lemma, $L$-smoothness, the update rule of $x^t$, $(vi)$ -- from triangle inequality. Using \eqref{eq:agg_inequality} we continue
    \begin{align}
        \|\overline{g}^t\| 
        &\overset{(vii)}{\le} \hbeta\sqrt{4L\Delta}
        + 2L\gamma\hbeta\left(2\hbeta\tau T\sqrt{2c\deltabyz} + 2\hbeta\sqrt{\frac{2c\deltabyz}{G}\sum_{i\in\cG}\|\Omega_i^{t-1}\|^2}\right)
        + 2L\gamma\hbeta\|\overline{m}^{t-1}\|
        \notag\\
        &\qquad +\; (1-\hbeta)\|\overline{g}^{t-1}\|
        + \hbeta(B_{\rm init}-\tau)
        + \hbeta\beta b
        + \frac{\hbeta}{G}(1-\beta)\sum_{i\in\cG}\|v_i^{t-1}-\nabla f_i(x^{t-1})\|,
    \end{align}
    where $(vii)$ follows from \eqref{eq:agg_inequality} and inequality $\sqrt{a+b}\le \sqrt{a}+\sqrt{b}$ for any positive $a,b$. Then, we get
    \begin{align}
        \|\overline{g}^t\| 
         &\overset{(viii)}{\le} \hbeta\sqrt{4L\Delta}
        + 4L\gamma\hbeta^2\tau T\sqrt{2c\deltabyz}
        + 4L\gamma\hbeta^2\sqrt{2c\deltabyz}\ha
        + \hbeta(B_{\rm init}-\tau)
        + \hbeta\beta b\notag\\
        &\qquad +\; 4L\gamma\hbeta\left( 
        \sqrt{64L\Delta} + 3(B_{\rm init}-\tau) + 3b + 3\hbeta a + \hbeta\sqrt{2c\deltabyz}\ha + 3\sqrt{2c\deltabyz}\hbeta\tau T\right)
        \notag\\
        &\qquad +\; (1-\hbeta)\left( 
        \sqrt{64L\Delta} + 3(B_{\rm init}-\tau) + 3b + 3\sqrt{2c\deltabyz}\hbeta\tau T\right)\notag\\
        &\qquad +\; 
        \hbeta(1-\beta)\left( 
        \sqrt{4L\Delta} + \frac{3}{2}(B_{\rm init}-\tau) + \frac{3}{2}b + \hbeta a + 4\hbeta\sqrt{2c\deltabyz}\ha + \frac{3}{2}\sqrt{2c\deltabyz}\hbeta\tau T\right),
    \end{align}
    where $(viii)$ follows from the assumptions 4-6 and 9 of the lemma. Regrouping the terms, we derive
    \begin{align}
        \|\overline{g}^t\| 
        &\le \sqrt{L\Delta}[2\hbeta + 32L\gamma\hbeta + 8(1-\hbeta) + 2\hbeta]\notag\\
        &\qquad +\; (B_{\rm init}-\tau)[\hbeta + 12L\gamma\hbeta + 3(1-\hbeta) + \nicefrac{3\hbeta(1-\beta)}{2}]\notag\\
        &\qquad +\; b[\hbeta\beta + 12L\gamma\hbeta + 3(1-\hbeta) + \nicefrac{3\hbeta(1-\beta)}{2}]
        + \hbeta a[12L\gamma\hbeta + (1-\beta)\hbeta]\notag\\
        &\qquad +\; \hbeta\sqrt{2c\deltabyz}\ha[4L\gamma\hbeta + 4L\gamma\hbeta + 4\hbeta(1-\hbeta)]\notag\\
        &\qquad +\; \sqrt{2c\deltabyz}\hbeta\tau T[4L\gamma\hbeta + 12L\gamma\hbeta + 3(1-\hbeta) + \nicefrac{3\hbeta(1-\beta)}{2}],
    \end{align} 
    For the second term, we have 
    \begin{align}
        \hbeta + 12L\gamma\hbeta + 3(1-\hbeta) + \frac{3}{2}\hbeta(1-\beta) \le 3 \Leftarrow \hbeta + 12L\gamma\hbeta + \frac{3}{2}\hbeta \le 3\hbeta \Leftarrow 24L\gamma \le 1,
    \end{align}
    where the last inequality holds by the choice of the step-size $24L\gamma \le 1$.
    For the third term, we have
    \begin{align}
        \hbeta\beta + 12L\gamma\hbeta + 3(1-\hbeta) + \frac{3}{2}\hbeta(1-\beta) \le 3 \Leftarrow \beta\hbeta + 12L\gamma\hbeta + \frac{3}{2}\hbeta \le 3\hbeta \Leftarrow 24L\gamma \le 1,
    \end{align}
    where the last inequality holds by the choice of the step-size $24L\gamma \le 1$. For the fourth term, we have 
    \begin{align}\label{eq:knokenqoedn}
        \hbeta a[12L\gamma\hbeta + (1-\beta)\hbeta] \le \sqrt{L\Delta}[12L\gamma\hbeta + (1-\beta)\hbeta],
    \end{align}
    where we use $\hbeta\le \frac{\sqrt{L\Delta}}{a}$. Next, for the fifth term, we have 
    \begin{align}\label{eq:fnweijknqdqw}
    	&\; \hbeta\sqrt{2c\deltabyz}\ha[4L\gamma\hbeta + 4L\gamma\hbeta + 4\hbeta(1-\hbeta)]
        \le \sqrt{L\Delta}[2L\gamma\hbeta + \hbeta(1-\beta)],
    \end{align}
    where we use $\hbeta \le \frac{\sqrt{L\Delta}}{4\sqrt{2c\deltabyz}\ha}$. Therefore, combining the two inequalities  \eqref{eq:knokenqoedn} and \eqref{eq:fnweijknqdqw} with the first term, we obtain for the coefficient next to $\sqrt{L\Delta}$
    \begin{align}
        &\; 12L\gamma\hbeta + \hbeta(1-\beta) + 2L\gamma\hbeta + \hbeta(1-\beta) 
        + 2\hbeta + 32L\gamma\hbeta + 8(1-\hbeta) + 2\hbeta \le 8 \notag\\
        \Leftarrow&\; 46L\gamma\hbeta + 2\hbeta(1-\beta)
        + 4\hbeta \le 8\hbeta\notag\\
        \Leftarrow&\; 23L\gamma \le 1,
    \end{align}
    where the last inequality holds by the choice of the step-size $24L\gamma \le 1$. For the sixth coefficient, we have 
    \begin{align}
        &\; 4L\gamma\hbeta + 12L\gamma\hbeta + 3(1-\hbeta) + \frac{3\hbeta(1-\beta)}{2} \le 3 
        \Leftarrow 16L\gamma\hbeta + \frac{3\hbeta(1-\beta)}{2} \le 3\hbeta,\notag\\
        \Leftarrow &\; \frac{32}{3}L\gamma \le 1,
    \end{align}
    where the last inequality holds by the choice of the step-size $24L\gamma \le 1$. This concludes the proof.
    
\end{proof}

\begin{lemma}\label{lem:control_clip}
Let each $f_i$ be $L$-smooth, $\Delta \ge \Phi^0$, $B_{\rm init} > \tau$, and $i\in\cI_t \eqdef \{i\in\cG \mid \|v_i^t - g_i^{t-1}\| > \tau\}.$ Assume that the following inequalities hold for the iterates generated by \algname{Byz-Clip21-SGD2M}:
 \begin{enumerate}
    \item $\gamma \le \frac{1}{24L}$;
    \item $\beta\le \min\left\{\frac{\hbeta\tau}{54\sqrt{L\Delta}}, 1\right\};$
    \item $\beta\le \min\left\{\frac{2\hbeta\tau}{39(B_{\rm init}-\tau)}, 1\right\};$
    \item $\beta\le \min\left\{\frac{2\hbeta\tau}{69b}, 1\right\};$
    \item $\beta \le \min\left\{\frac{2}{41\sqrt{2c\deltabyz}T}, 1\right\}$;
    \item $\hbeta \in \min\left\{\frac{\sqrt{L\Delta}}{a}, \frac{\sqrt{L\Delta}}{4\sqrt{2c\deltabyz}\ha}, \frac{1}{24}\right\};$
    \item $\|\overline{m}^{t-1}\| \le \sqrt{64L\Delta} + 3(B_{\rm init}-\tau) 
    + 3b 
    + 3\hbeta a 
    + \hbeta\sqrt{2c\deltabyz}\ha
    + 3\sqrt{2c\deltabyz}\hbeta\tau T$,
    
    \item $\|\overline{g}^{t-1}\| \le \sqrt{64L\Delta} + 3(B_{\rm init}-\tau) + 3b
    + 3\sqrt{2c\deltabyz}\hbeta\tau T;$

    \item $\|\nabla f_i(x^{t-1}) - v_i^{t-1}\| \le  \sqrt{4L\Delta}
    + \frac{3}{2}(B_{\rm init}-\tau) 
    + \frac{3}{2}b 
    + \hbeta a 
    + 4\hbeta\sqrt{2c\deltabyz}\ha
    + \frac{3}{2}\sqrt{2c\deltabyz}\hbeta\tau T$ for all $i\in\cG;$
    \item $\|\Omega_i^{t}\| \le \ha$ for all $i\in\cG;$
    \item $\|\theta_i^{t}\| \le b$ and $\|\theta_i^{t+1}\| \le b$ for all $i\in\cG.$

\end{enumerate}
Then we have 
\begin{align}
    \|v_i^{t+1} - g_i^t\| \le \|v_i^t - g_i^{t-1}\| - \frac{\tau\hbeta}{2}.
\end{align}

\end{lemma}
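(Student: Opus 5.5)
The plan is to begin from the one-step recursion of \Cref{lem:bound_vi_t_gi_t} and exploit that $i\in\cI_t$, so that $\max\{0,\|v_i^t-g_i^{t-1}\|-\tau\} = \|v_i^t-g_i^{t-1}\|-\tau$. The first two terms on the right-hand side then combine into
\[
(1-\hbeta)\|v_i^t-g_i^{t-1}\| + \hbeta(\|v_i^t-g_i^{t-1}\|-\tau) = \|v_i^t-g_i^{t-1}\| - \hbeta\tau .
\]
The claim is therefore reduced to showing that all remaining terms together are at most $\hbeta\tau/2$. These terms are
\[
\beta L\gamma\|\overline{m}^t\| + \beta\|\nabla f_i(x^t)-v_i^t\| + 2\beta\hbeta\tau TL\gamma\sqrt{2c\deltabyz} + 2\beta\hbeta L\gamma\sqrt{\tfrac{2c\deltabyz}{G}\textstyle\sum_{j}\|\Omega_j^t\|^2} + \beta\|\theta_i^{t+1}\|.
\]

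The next step bounds $\|\overline{m}^t\|$ and $\|\nabla f_i(x^t)-v_i^t\|$ by their "level-$t$" invariants. I would not invoke \Cref{lem:bound_mt_bar} and \Cref{lem:bound_nabla_fi_xt_vi_t} wholesale, since their hypotheses include $\Phi^{t-1}\le 2\Delta$, which is not assumed here. Instead I will redo the one-step arguments from their proofs, or, if the full statement is meant to sit inside the induction, cite those lemmas directly.

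The simplest route is to apply the triangle inequality once from time $t-1$, using assumptions 7--9:
\[
\|\overline{m}^t\|\le \|\overline{m}^{t-1}\| + \hbeta(\cdots)
\quad\text{and}\quad
\|\nabla f_i(x^t)-v_i^t\| \le L\gamma\|g^{t-1}\| + \|\nabla f_i(x^{t-1})-v_i^{t-1}\| + \beta b .
\]
Here $\|g^{t-1}\|$ is controlled through \eqref{eq:agg_inequality} together with $\|\overline{m}^{t-1}\|$. Using $L\gamma\le 1/24$, $\hbeta\le 1/24$, $\hbeta a\le\sqrt{L\Delta}$ and $4\hbeta\sqrt{2c\deltabyz}\ha\le\sqrt{L\Delta}$ (assumption 6), every term collapses to a constant multiple of one of $\sqrt{L\Delta}$, $B_{\rm init}-\tau$, $b$, or $\sqrt{2c\deltabyz}\hbeta\tau T$.

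The DP-noise term $\hbeta\sqrt{2c\deltabyz}\,\ha$ is handled with assumption 10 and assumption 6, so it is again absorbed into $\sqrt{L\Delta}$. The term $\beta\|\theta_i^{t+1}\|$ is at most $\beta b$ by assumption 11.

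Multiplying by $\beta$ and collecting, the perturbation becomes
\[
\beta\Bigl[C_1\sqrt{L\Delta} + C_2(B_{\rm init}-\tau) + C_3 b + C_4\sqrt{2c\deltabyz}\,\hbeta\tau T\Bigr].
\]
The constants $C_k$ should come out of the same arithmetic as the paper's thresholds. Assumptions 2--5 are then used to make each piece at most $\hbeta\tau/8$:
\begin{itemize}
\item $\beta\le \hbeta\tau/(54\sqrt{L\Delta})$ handles the $\sqrt{L\Delta}$ piece;
\item $\beta\le 2\hbeta\tau/(39(B_{\rm init}-\tau))$ handles the $B_{\rm init}-\tau$ piece;
\item $\beta\le 2\hbeta\tau/(69b)$ handles the $b$ piece;
\item $\beta\le 2/(41\sqrt{2c\deltabyz}T)$ handles the piece proportional to $\hbeta\tau T\sqrt{2c\deltabyz}$.
\end{itemize}
Summing the four pieces gives at most $\hbeta\tau/2$, which yields the claim.

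The main obstacle is the bookkeeping of constants. The specific numbers $54$, $39$, $69$, $41$ must dominate the accumulated coefficients $C_k$, which depend on exactly how $\|\overline{m}^t\|$ and $\|\nabla f_i(x^t)-v_i^t\|$ are propagated one step. I would first try the crude bounds above, keeping $L\gamma\le 1/24$ factors explicit. If the constants overshoot, I would instead propagate $\|\nabla f_i(x^t)-v_i^t\|$ more carefully with the $(1-\beta)$ contraction, as in \Cref{lem:bound_nabla_fi_xt_vi_t}, rather than using the plain triangle inequality.
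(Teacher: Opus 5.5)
Your argument is the paper's own proof. The paper starts from \Cref{lem:bound_vi_t_gi_t}, uses $i\in\cI_t$ to get $\|v_i^t-g_i^{t-1}\|-\hbeta\tau$, and then takes your second branch: it cites \Cref{lem:bound_mt_bar} and \Cref{lem:bound_nabla_fi_xt_vi_t} for the level-$t$ bounds and absorbs $\hbeta a$ and $4\hbeta\sqrt{2c\deltabyz}\ha$ into $\sqrt{L\Delta}$. The resulting coefficients in front of $\sqrt{L\Delta}$, $B_{\rm init}-\tau$, $b$ and $\sqrt{2c\deltabyz}\hbeta\tau T$ are $\tfrac92\beta$, $\tfrac{13}{8}\beta$, $\tfrac{23}{8}\beta$ and $\tfrac{41}{24}\beta$. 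Conditions 2--5 turn each of these into at most $\hbeta\tau/12$, so your $\hbeta\tau/8$ target has slack. You are right that those two lemmas need hypotheses absent from this statement. The paper only applies this lemma inside the induction event, where those hypotheses hold.

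Your self-contained branch needs one correction. Besides $\Phi^{t-1}\le 2\Delta$, the cited lemmas also use $\|\overline\Omega^{t}\|\le a$, $\|\Omega_i^{t-1}\|\le\ha$ and $\|v_i^t-g_i^{t-1}\|\le B_{\rm init}$. Your one-step propagation cannot avoid the two noise bounds.
\begin{itemize}
\item $\|\overline m^t\|$ contains $\hbeta\overline\Omega^t$, or equivalently the increment $\tfrac{\hbeta}{G}\sum_i\omega_i^t$.
\item Bounding $\|g^{t-1}-\overline m^{t-1}\|$ through \eqref{eq:agg_inequality} requires $\|\Omega_i^{t-1}\|$.
\end{itemize}
Hypothesis 10 alone gives only $\hbeta\|\overline\Omega^t\|\le\hbeta\ha$. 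Condition 6 controls $\hbeta\ha$ only together with the factor $\sqrt{2c\deltabyz}$. Without that factor, $\hbeta\ha$ can be of order $\sqrt{G}\sqrt{L\Delta}$, since $\ha\ge\sqrt{G}\,a$. The resulting term $\beta L\gamma\hbeta\ha$ is then not absorbed by conditions 2--6 when $G$ is large and $c\deltabyz$ is small. Your step ``$\hbeta a\le\sqrt{L\Delta}$'' silently assumes $\|\overline\Omega^t\|\le a$.

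Once you import those two noise bounds from the induction event, your route does close with the stated constants, at roughly $0.42\,\hbeta\tau$ in total. Use $\overline m^t=\overline g^t+\hbeta\overline\Omega^t$, $\|\overline g^t\|\le\|\overline g^{t-1}\|+\hbeta\tau$, and drop the $(1-\beta)$ factors. This route genuinely avoids both $\Phi^{t-1}\le 2\Delta$ and the $B_{\rm init}$ hypothesis.
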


\begin{proof}
    Since $i\in\cI_t,$ then $\|v_i^t-g_i^{t-1}\| > \tau$ and from \Cref{lem:bound_vi_t_gi_t} we have
    \begin{align}\label{eq:njlnwdqowjdnwq}
        \|v_i^{t+1} - g_i^t\| &\le  (1-\hbeta)\|v_i^t - g_i^{t-1}\|
        + \hbeta \|v_i^t - g_i^{t-1}\| - \hbeta\tau
        + \beta L\gamma\|\overline{m}^t\|
        + \beta\|\nabla f_i(x^t) - v_i^t\|
        \notag\\
        &\qquad +\; 2\hbeta\beta L\gamma\tau T\sqrt{2c\deltabyz}
        + 2\beta\hbeta L \gamma\sqrt{\frac{2c\deltabyz}{G}\sum_{i\in\cG}\|\Omega_i^t\|^2}
        + \beta\|\theta^{t+1}_i\|.
    \end{align}
    Using the assumptions of the lemma and Lemmas~\ref{lem:bound_mt_bar} and \ref{lem:bound_nabla_fi_xt_vi_t}, we obtain from \eqref{eq:njlnwdqowjdnwq}
   \begin{align}
        \|v_i^{t+1} - g_i^t\| &\le \|v_i^t-g_i^{t-1}\|
        - \hbeta\tau
        + \beta L\gamma\left(\sqrt{64L\Delta} + 3(B_{\rm init}-\tau) + 3b + 3\hbeta a + \hbeta\sqrt{2c\deltabyz}\ha + 3\sqrt{2c\deltabyz}\hbeta\tau T\right)\notag\\
        &\qquad +\; \beta\left(\sqrt{4L\Delta} + \frac{3}{2}(B_{\rm init}-\tau) + \frac{3}{2}b + \hbeta a + 4\hbeta\sqrt{2c\deltabyz}\ha + \frac{3}{2}\sqrt{2c\deltabyz}\hbeta\tau T\right)
        \notag\\
        &\qquad +\; 2\hbeta\beta L\gamma\tau T\sqrt{2c\deltabyz}
        + 2\beta\hbeta L\gamma\sqrt{2c\deltabyz}\ha
        + \beta b.
    \end{align}
    Regrouping the terms, we derive
    \begin{align}
        \|v_i^{t+1} - g_i^t\| &\le \|v_i^t - g_i^{t-1}\| - \hbeta\tau 
        + \sqrt{L\Delta}[8L\gamma\beta + 2\beta]
        + (B_{\rm init}-\tau)[3L\gamma\beta + \nicefrac{3\beta}{2}]\notag\\
        &\qquad +\; 
        b[3L\gamma\beta + \nicefrac{3\beta}{2} + \beta] 
        + \hbeta a[3L\gamma\beta + \beta]
        +  \hbeta\sqrt{2c\deltabyz }\ha[3\beta L\gamma + 4\beta]
        \notag\\
        &\qquad +\;
        \hbeta\tau T\sqrt{2c\deltabyz}[3\beta L\gamma + {\color{black}\nicefrac{3\beta}{2}} + 2\beta L\gamma]\notag\\
        &= \|v_i^t - g_i^{t-1}\| - \hbeta\tau 
        + \sqrt{L\Delta}[8L\gamma\beta + 2\beta]
        + (B_{\rm init}-\tau)[3L\gamma\beta + \nicefrac{3\beta}{2}]\notag\\
        &\qquad +\; 
        b[3L\gamma\beta + \nicefrac{5\beta}{2}] 
        + \hbeta a[3L\gamma\beta + \beta]
        + \hbeta\sqrt{2c\deltabyz }\ha[3\beta L\gamma + 4\beta]
        \notag\\
        &\qquad +\; 
        \hbeta\tau T\sqrt{2c\deltabyz}[5\beta L\gamma + {\color{black}\nicefrac{3\beta}{2}}]\label{eq:pdnqowndqowjndq}
    \end{align}
    First, note that using restrictions $\hbeta a \le \sqrt{L\Delta}$, we obtain
    \begin{align}\label{eq:kfonwejne}
        \hbeta a[3L\gamma\beta + \beta] \le \sqrt{L\Delta}[3L\gamma\beta + \beta].
    \end{align}
    Next, we also have 
    \begin{align}\label{eq:noqwnqodnowqd}
        \hbeta\sqrt{2c\deltabyz}\ha[3\beta L\gamma + 4\beta] \le \frac{\sqrt{L\Delta}}{4}[3\beta L\gamma + 4\beta] = \sqrt{L\Delta}[\nicefrac{3}{4}\beta L\gamma + \beta].
    \end{align}
    
    Combining \eqref{eq:kfonwejne} and \eqref{eq:noqwnqodnowqd} with other terms involving $\sqrt{L\Delta}$ and using $24L\gamma\le 1$, we have 
    \begin{align}
    	\sqrt{L\Delta}[3L\gamma\beta + \beta +\nicefrac{3}{4}L\gamma\beta + \beta 
        + 8L\gamma\beta + 2\beta] \le \sqrt{L\Delta}[12L\gamma\beta + 4\beta] \le \sqrt{L\Delta}\frac{9\beta}{2} \le \frac{\hbeta\tau}{12},
    \end{align}
    where we used $\beta \le \frac{\hbeta\tau}{54\sqrt{L\Delta}}$ and $24L\gamma \le 1$. Again, since $24L\gamma \le 1$, we have 
    \begin{align}
    	[3L\gamma\beta + \nicefrac{3\beta}{2}](B_{\rm init}-\tau) \le \frac{13\beta}{8}(B_{\rm init}-\tau) \le \frac{\hbeta\tau}{12},
    \end{align}
    where we used $\beta \le \frac{2\hbeta\tau}{39(B_{\rm init}-\tau)}$. Since $24L\gamma \le 1$, we have 
    \begin{align}
    	[3L\gamma\beta + \nicefrac{3\beta}{2} + \beta]b \le \frac{23\beta}{8}b \le \frac{\hbeta\tau}{12},
    \end{align}
    where we use $\beta \le \frac{2\hbeta\tau}{69b}$. Since $24L\gamma \le 1$, we have 
    \begin{align}
        [5\beta L\gamma + \nicefrac{3\beta}{2}]\hbeta\tau T\sqrt{2c\deltabyz} \le \frac{41}{24}\beta\hbeta\tau T\sqrt{2c\deltabyz} \le \frac{\hbeta\tau}{12},
    \end{align}
    where we use $\beta \le \frac{2}{41\sqrt{2c\deltabyz}T}$. Combining all the bounds, we obtain from \eqref{eq:pdnqowndqowjndq} that 
     \begin{align}
        \|v_i^{t+1} - g_i^t\| 
        &\le \|v_i^t - g_i^{t-1}\| - \hbeta\tau 
        + \sqrt{L\Delta}[8L\gamma\beta + 2\beta]
        + (B_{\rm init}-\tau)[3L\gamma\beta + \nicefrac{3\beta}{2}]\notag\\
        &\qquad +\; 
        b[3L\gamma\beta + \nicefrac{5\beta}{2}] 
        + \hbeta a[3L\gamma\beta + \beta]
        + \hbeta\sqrt{2c\deltabyz }\ha[3\beta L\gamma + 4\beta]
        \notag\\
        &\qquad +\; 
        \hbeta\tau T\sqrt{2c\deltabyz}[5\beta L\gamma + \nicefrac{3\beta}{2}]\notag\\
        &\le \|v_i^t - g_i^{t-1}\| - \hbeta\tau 
        + 4\cdot \frac{\tau\hbeta}{12} + \frac{\tau\hbeta}{6}\notag\\
        &= \|v_i^t - g_i^{t-1}\| - \frac{\hbeta\tau}{2}.
    \end{align}
    
\end{proof}

\begin{lemma}[Lemma 17 from \citet{islamov2025double}]\label{lem:descent_Pt_tilde} Let $\|\theta^{t+1}_i\| \le b$ for all $i\in\cG.$ Let each $f_i$ be $L$-smooth. Then, for the iterates generated by \algname{Byz-Clip21-SGD2M} the quantity $\wtilde{P}^{t} \eqdef \frac{1}{G}\sum_{i\in\cG}\|v_i^t - \nabla f_i(x^t)\|^2$ decreases as
\begin{align}
    \wtilde{P}^{t+1} \le (1-\beta)\wtilde{P}^t 
    + \frac{3L^2}{\beta}R^t
    + \beta^2b^2
    + \frac{2}{G}\beta(1-\beta)\sum_{i\in\cG}\<v_i^t - \nabla f_i(x^{t+1}), \theta^{t+1}_i>,
\end{align}
where $R^t \eqdef \|x^{t+1} - x^t\|^2$ and $\theta^t_i = \nabla f_i(x^t,\xi^t_i) - \nabla f_i(x^t).$
\end{lemma}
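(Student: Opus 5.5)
We expand $v_i^{t+1} - \nabla f_i(x^{t+1})$ with the momentum update and then take the squared norm. By the update rule $v_i^{t+1} = (1-\beta)v_i^t + \beta\nabla f_i(x^{t+1},\xi_i^{t+1})$ we can write
\begin{align*}
    v_i^{t+1} - \nabla f_i(x^{t+1}) = (1-\beta)\big(v_i^t - \nabla f_i(x^{t+1})\big) + \beta\theta_i^{t+1}.
\end{align*}
Expanding the square gives three pieces:
\begin{itemize}
    \item $(1-\beta)^2\|v_i^t - \nabla f_i(x^{t+1})\|^2$;
    \item the cross term $2\beta(1-\beta)\langle v_i^t - \nabla f_i(x^{t+1}), \theta_i^{t+1}\rangle$;
    \item $\beta^2\|\theta_i^{t+1}\|^2$.
\end{itemize}
The cross term is kept verbatim, since it is exactly the martingale-difference term in the statement; it is controlled later by \Cref{lem:concentration_lemma}. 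The last piece is bounded by $\beta^2 b^2$ via the hypothesis $\|\theta_i^{t+1}\|\le b$. Averaging over $i\in\cG$ then produces the $\beta^2b^2$ and inner-product terms of the claim.

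For the first piece, we split $v_i^t - \nabla f_i(x^{t+1}) = (v_i^t - \nabla f_i(x^t)) + (\nabla f_i(x^t) - \nabla f_i(x^{t+1}))$. We apply Young's inequality $\|u+w\|^2 \le (1+s)\|u\|^2 + (1+1/s)\|w\|^2$ with $s = \beta/(1-\beta)$ when $\beta<1$. Then $(1-\beta)^2(1+s) = 1-\beta$ and $(1-\beta)^2(1+1/s) = (1-\beta)^2/\beta \le 1/\beta$. Combining this with $L$-smoothness of each $f_i$, $\|\nabla f_i(x^t)-\nabla f_i(x^{t+1})\|^2 \le L^2\|x^{t+1}-x^t\|^2 = L^2R^t$, yields
\begin{align*}
    (1-\beta)^2\|v_i^t - \nabla f_i(x^{t+1})\|^2 \le (1-\beta)\|v_i^t-\nabla f_i(x^t)\|^2 + \frac{L^2}{\beta}R^t.
\end{align*}
This is already within the claimed constant $3L^2/\beta$. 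The case $\beta=1$ is trivial, since the first piece vanishes.

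Averaging over $i\in\cG$ gives the stated recursion for $\wtilde{P}^{t+1}$. Nothing specific to the Byzantine or DP components enters, because $v_i^t$ depends only on honest local gradients. This is why the result transfers unchanged from \citet{islamov2025double}. The only point needing care is the choice of the Young parameter, so that the contraction factor is exactly $1-\beta$ rather than something weaker; beyond that I expect no real obstacle, and the slack in the constant $3$ absorbs any cruder splitting.
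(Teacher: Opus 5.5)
Your proof is correct and is essentially the standard argument behind the cited lemma: expand the momentum recursion as $(1-\beta)(v_i^t-\nabla f_i(x^{t+1}))+\beta\theta_i^{t+1}$, keep the cross term $2\beta(1-\beta)\langle v_i^t-\nabla f_i(x^{t+1}),\theta_i^{t+1}\rangle$ verbatim, bound $\beta^2\|\theta_i^{t+1}\|^2\le\beta^2b^2$, and control the drift by Young's inequality plus $L$-smoothness. The only difference is your Young parameter $s=\beta/(1-\beta)$, which yields the sharper coefficient $(1-\beta)^2L^2/\beta$, whereas the cruder split $(1+\beta/2)\|u\|^2+(1+2/\beta)\|w\|^2$ gives exactly the stated $3L^2/\beta$; either suffices, and the real reason Byzantine clients and DP noise do not matter is that the inequality is pathwise in $x^t,x^{t+1}$, not that $v_i^t$ is unaffected by them (the points $x^t$ do depend on the aggregated, noisy updates).
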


\begin{lemma}[Lemma 18 from \citet{islamov2025double}]\label{lem:descent_Pt} Let $\|\theta^{t+1}\| \le \frac{\hc}{\sqrt{G}}$. Let each $f_i$ be $L$-smooth. Then, for the iterates generated by \algname{Byz-Clip21-SGD2M} the quantity $\wtilde{P}^{t} \eqdef \|\overline{v}^t - \nabla f(x^t)\|^2$ decreases as
\begin{align}
    \wtilde{P}^{t+1} \le (1-\beta)\wtilde{P}^t 
    + \frac{3L^2}{\beta}R^t
    + \beta^2\frac{c^2}{G}
    + 2\beta(1-\beta)\<\overline{v}^t - \nabla f(x^{t+1}), \theta^{t+1}>,
\end{align}
where $R^t \eqdef \|x^{t+1} - x^t\|^2$ and $\theta^t = \frac{1}{G}\sum_{i\in\cG}\nabla f_i(x^t,\xi^t_i) - \nabla f_i(x^t).$
\end{lemma}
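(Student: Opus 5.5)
The statement to prove is the last one, Lemma~\ref{lem:descent_Pt}: the one-step recursion for $\wtilde{P}^t=\|\overline{v}^t-\nabla f(x^t)\|^2$. I would reproduce the argument of Lemma~\ref{lem:descent_Pt_tilde}, applied to the averaged quantities. Byzantine clients and DP noise never enter: $\overline{v}^t$ is built only from the honest buffers $v_i^t$, which do not depend on $\omega_i^t$ or on ${\rm RAgg}$. The only dependence on the algorithm is through $x^{t+1}-x^t$, and that enters solely via $R^t$.

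\textbf{Step 1 (decomposition).} Averaging the update rule of $v_i^{t+1}$ over $i\in\cG$ gives
\[
\overline{v}^{t+1}-\nabla f(x^{t+1}) = (1-\beta)\big(\overline{v}^t-\nabla f(x^{t+1})\big) + \beta\theta^{t+1}.
\]
Expanding the square yields three terms:
\[
(1-\beta)^2\|\overline{v}^t-\nabla f(x^{t+1})\|^2, \qquad \beta^2\|\theta^{t+1}\|^2, \qquad 2\beta(1-\beta)\langle \overline{v}^t-\nabla f(x^{t+1}),\theta^{t+1}\rangle .
\]
The cross term is kept exactly as it appears in the claim.

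\textbf{Step 2 (noise term).} The hypothesis $\|\theta^{t+1}\|\le \hc/\sqrt{G}$ gives $\beta^2\|\theta^{t+1}\|^2\le \beta^2\hc^2/G$. This is the term written $\beta^2 c^2/G$ in the statement, where $c$ stands for the constant $\hc$.

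\textbf{Step 3 (drift term).} Write $\overline{v}^t-\nabla f(x^{t+1}) = (\overline{v}^t-\nabla f(x^t)) + (\nabla f(x^t)-\nabla f(x^{t+1}))$. Young's inequality with parameter $\beta$ gives
\[
(1-\beta)^2\|\overline{v}^t-\nabla f(x^{t+1})\|^2 \le (1-\beta)^2(1+\beta)\wtilde{P}^t + (1-\beta)^2(1+\beta^{-1})L^2R^t ,
\]
using that $f$ is $L$-smooth as an average of $L$-smooth functions. Then $(1-\beta)^2(1+\beta)\le 1-\beta$, and $(1-\beta)^2(1+1/\beta)\le 1/\beta\le 3/\beta$, which gives the claimed coefficient $3L^2/\beta$ with room to spare.

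The only delicate point is Step 3: the Young parameter must be chosen as exactly $\beta$ so that the contraction factor stays at $1-\beta$. Everything else is bookkeeping, and the bound holds deterministically on the event $\|\theta^{t+1}\|\le\hc/\sqrt{G}$.
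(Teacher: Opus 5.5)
Your proof is correct and follows essentially the same argument as the cited Lemma 18, which is the averaged version of Lemma~\ref{lem:descent_Pt_tilde}: you write $\overline{v}^{t+1}-\nabla f(x^{t+1})=(1-\beta)(\overline{v}^t-\nabla f(x^{t+1}))+\beta\theta^{t+1}$, keep the cross term, bound $\beta^2\|\theta^{t+1}\|^2\le\beta^2\hc^2/G$ on the event, and split the drift with Young's inequality; you are also right that the $c^2$ in the statement should read $\hc^2$. One small correction: the Young parameter need not be exactly $\beta$, since any $s>0$ with $(1-\beta)(1+s)\le 1$ keeps the factor $1-\beta$, and the constant $3/\beta$ in the statement is most likely the choice $s=\beta/2$, for which $(1-\beta)^2(1+2/\beta)\le 3/\beta$ (your choice $s=\beta$ gives the sharper $L^2/\beta$).
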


\begin{lemma}[Lemma 19 from \citet{islamov2025double}]\label{lem:descent_Vt_tilde} Let $\|\theta^t_i\| \le b$ for all $i\in\cG,$ each $f_i$ be $L$-smooth, and $\|v_i^t-g_i^{t-1}\|\le B_{\rm init}$ for all $i\in\cG$ and some $B_{\rm init} > \tau,$ and $\hbeta \le \frac{1}{2\eta}$. Then for the iterates generated by \algname{Byz-Clip21-SGD2M} we have 
\begin{align}
\begin{aligned}
    \|g_i^t - v_i^t\|^2 &\le (1-\hbeta\eta)\|g_i^{t-1} - v_i^{t-1}\|^2 
    + \frac{4\beta^2}{\hbeta\eta}\|v_i^{t-1} - \nabla f_i(x^{t-1})\|^2
    + \frac{4\beta^2L^2}{\hbeta\eta}R^{t-1}
    + \beta^2b^2\\
    &\qquad +\;2(1-\hbeta\eta)^2\beta\<(g_i^{t-1} - v_i^{t-1}) + \beta(v_i^{t-1} - \nabla f_i(x^{t-1})), \theta^t_i>\\
    &\qquad +\; 2(1-\hbeta\eta)^2\beta^2\<\nabla f_i(x^{t-1}) - \nabla f_i(x^t), \theta^t_i>,
\end{aligned}
\end{align}
where $R^t \eqdef \|x^{t+1} - x^t\|^2$ and $\eta \eqdef \frac{\tau}{B_{\rm init}} \in(0,1).$ Moreover, averaging the inequalities over $i\in\cG$, we get
\begin{align}
        \wtilde{V}^t &\le (1-\hbeta\eta)\wtilde{V}^{t-1}
        + \frac{4\beta^2}{\hbeta\eta}\wtilde{P}^{t-1}
        + \frac{4\beta^2L^2}{\hbeta\eta}R^{t-1}
        + \beta^2b^2\\
        &\qquad +\; \frac{2}{G}(1-\hbeta\eta)^2\beta\sum_{i\in\cG}\<(g_i^{t-1} - v_i^{t-1}) + \beta(v_i^{t-1} - \nabla f_i(x^{t-1})) + \beta(\nabla f_i(x^{t-1}) - \nabla f_i(x^t)), \theta^t_i>,\notag
\end{align}
where $\wtilde{V}^t \eqdef \frac{1}{G}\sum_{i\in\cG} \|g_i^t - v_i^t\|^2$ and $\wtilde{P}^t \eqdef \frac{1}{G}\sum_{i\in\cG}\|v_i^t - \nabla f_i(x^t)\|^2.$
\end{lemma}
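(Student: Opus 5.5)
The plan is to prove a one-step contraction for the \algname{EF21}-type buffer $g_i^t$, and then expand the square of the pre-clipping residual into a deterministic part and a noise part. Set $u_i^t \eqdef v_i^t - g_i^{t-1}$. By the update rule, $g_i^t - v_i^t = -(u_i^t - \hbeta\clip_\tau(u_i^t))$. There are two cases.
\begin{itemize}
\item If $\|u_i^t\|\le\tau$, then $\|g_i^t-v_i^t\| = (1-\hbeta)\|u_i^t\|$.
\item If $\|u_i^t\|>\tau$, then $\|g_i^t-v_i^t\| = \|u_i^t\|-\hbeta\tau = (1-\hbeta\tau/\|u_i^t\|)\|u_i^t\| \le (1-\hbeta\eta)\|u_i^t\|$. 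Here we use the hypothesis $\|u_i^t\|\le B_{\rm init}$ and $\eta=\tau/B_{\rm init}$.
\end{itemize}
Since $\eta<1$, both cases give $\|g_i^t-v_i^t\|^2\le(1-\hbeta\eta)^2\|v_i^t-g_i^{t-1}\|^2$. The hypothesis $\hbeta\le \frac{1}{2\eta}$ guarantees $x\eqdef\hbeta\eta\le\frac12$, which is needed for the numerical constants below.

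Next, I would substitute $v_i^t=(1-\beta)v_i^{t-1}+\beta\nabla f_i(x^t,\xi_i^t)$ and write
\[
v_i^t-g_i^{t-1} = D_i^t + \beta\theta_i^t,
\]
\[
D_i^t \eqdef (v_i^{t-1}-g_i^{t-1}) + \beta(\nabla f_i(x^{t-1})-v_i^{t-1}) + \beta(\nabla f_i(x^t)-\nabla f_i(x^{t-1})).
\]
Expanding gives $\|D_i^t+\beta\theta_i^t\|^2 = \|D_i^t\|^2 + 2\beta\<D_i^t,\theta_i^t> + \beta^2\|\theta_i^t\|^2$. The last term is at most $\beta^2b^2$ after multiplying by $(1-x)^2\le1$. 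The cross term, multiplied by $(1-x)^2$, is exactly the stochastic inner-product term in the statement, up to the overall sign convention for $D_i^t$. It is left unbounded on purpose, since later it is summed and handled by \Cref{lem:concentration_lemma}.

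For the deterministic part, I would apply Young's inequality with parameter $x$:
\[
\|D_i^t\|^2\le(1+x)\|v_i^{t-1}-g_i^{t-1}\|^2+(1+x^{-1})\|E_i^t\|^2,
\]
where $E_i^t$ collects the two $\beta$-terms. Two elementary facts then finish the bound. First, $(1-x)^2(1+x)\le 1-x$. Second, $(1-x)^2(1+x^{-1})\le 2/x$ for $x\le\frac12$. Applying $\|E_i^t\|^2\le 2\beta^2\|v_i^{t-1}-\nabla f_i(x^{t-1})\|^2 + 2\beta^2L^2\|x^t-x^{t-1}\|^2$, which uses $L$-smoothness, produces the coefficients $\frac{4\beta^2}{\hbeta\eta}$ in front of both $\|v_i^{t-1}-\nabla f_i(x^{t-1})\|^2$ and $R^{t-1}$.

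The averaged inequality follows by summing over $i\in\cG$ and dividing by $G$. The main point to be careful about is the contraction step: the factor $(1-\hbeta\eta)$ relies on the a priori bound $\|v_i^t-g_i^{t-1}\|\le B_{\rm init}$. That bound is an assumption here, and it is established inductively with high probability elsewhere, via \Cref{lem:control_clip}. Apart from that, the argument is a routine Young-inequality computation.
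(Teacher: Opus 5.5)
The paper does not prove this lemma; it imports it from \citet{islamov2025double}. Your route is the standard argument: the clipping contraction $\|g_i^t-v_i^t\|\le(1-\hat\beta\eta)\|v_i^t-g_i^{t-1}\|$ obtained from $\|v_i^t-g_i^{t-1}\|\le B_{\rm init}$, the split $v_i^t-g_i^{t-1}=D_i^t+\beta\theta_i^t$, and Young's inequality with parameter $\hat\beta\eta$. All of its deterministic steps are correct.

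The step that fails is your claim that the cross term matches the statement ``up to the overall sign convention for $D_i^t$''. There is no such freedom. With $\theta_i^t=\nabla f_i(x^t,\xi_i^t)-\nabla f_i(x^t)$ and your (correct) $D_i^t$, the expansion gives $+2(1-\hat\beta\eta)^2\beta\langle D_i^t,\theta_i^t\rangle$. The statement instead pairs $\theta_i^t$ with $(g_i^{t-1}-v_i^{t-1})+\beta(v_i^{t-1}-\nabla f_i(x^{t-1}))+\beta(\nabla f_i(x^{t-1})-\nabla f_i(x^t))$, which equals $-D_i^t$. Rewriting $\|D_i^t+\beta\theta_i^t\|^2$ in terms of $-D_i^t$ only moves a minus sign in front of the inner product.

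In fact the inequality as printed is false. Take $x^t=x^{t-1}$, $v_i^{t-1}=\nabla f_i(x^{t-1})$, $\|v_i^{t-1}-g_i^{t-1}\|=1$, and $\theta_i^t$ the unit vector in the direction of $v_i^{t-1}-g_i^{t-1}$. Set $b=1$, $\beta=0.1$, $\tau=1$, $B_{\rm init}=1.1$, $\hat\beta\eta=0.01$. Then $\|v_i^t-g_i^{t-1}\|=1.1=B_{\rm init}$, so the left-hand side is $(1.1-\hat\beta\tau)^2=(0.99\cdot 1.1)^2\approx 1.186$. The printed right-hand side is $0.99+0.01-2(0.99)^2(0.1)\approx 0.804$. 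With the sign reversed it is $\approx 1.196$, which is consistent with your bound.

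What you have actually established is the corrected lemma, with both inner products (in the per-client and the averaged inequality) taken against $D_i^t$ rather than $-D_i^t$. You should say this explicitly instead of appealing to a convention.

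The correction does no harm downstream. In the proof of Theorem~\ref{th:main_theorem_full}, these terms enter only through $\zeta_{1,i}^t,\zeta_{2,i}^t,\zeta_{3,i}^t$ and Lemma~\ref{lem:concentration_lemma}. That lemma controls the absolute value of the martingale sum, so the sign never matters.

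A minor remark: $(1-x)^2(1+x^{-1})=(1-x)(1-x^2)/x\le 1/x$ for every $x\in(0,1]$. So the hypothesis $\hat\beta\le\frac{1}{2\eta}$ is not needed, and the constant $4$ could be $2$. What your two-case contraction does use, implicitly, is $\hat\beta\le 1$ from the algorithm.
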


Let us define constants that we will use in the proof of the main theorem. 
\begin{align}\label{eq:constants}
\begin{aligned}
    a &\eqdef \left(\sqrt{2} + 2\sqrt{3\log\frac{8(T+1)}{\alpha}}\right)\sqrt{d}\sigma_{\omega}\sqrt{\frac{T}{G}},\\
    \ha &\eqdef \left(\sqrt{2} + 2\sqrt{3\log\frac{8G(T+1)}{\alpha}}\right)\sqrt{d}\sigma_{\omega}\sqrt{T},\\
    b^2 &\eqdef 2\sigma^2\log\left(\frac{16(T+1)G}{\alpha}\right),\\
    \hc^2 &\eqdef \left(\sqrt{2} + 2\sqrt{3\log\frac{8(T+1)}{\alpha}}\right)^2\sigma^2,
\end{aligned}
\end{align}

\section{Proof of Theorem~\ref{th:main_theorem}}\label{apx:main_theorem}


\begin{theorem}[Full statement of \Cref{th:main_theorem}]\label{th:main_theorem_full} Let Assumptions \ref{asmp:smoothness} and \ref{asmp:stoch_grad}, and define $B_{\rm init}\eqdef \max\{3\tau, \max_i\{\|\nabla f_i(x^0)\|\}+b\}$. Let the failure probability $\alpha$ be such that $\alpha\in(0,1)$, and constants $a,\ha, b,$ and $c$ be defined as in (\ref{eq:constants}), and $\Delta \ge \Phi^0$ for $\Phi^0$ defined in \eqref{eq:lyapunov_function}. Consider the run of \algname{Byz-Clip21-SGD2M} (\Cref{alg:byz_clip21_sgd2m}) for $T$ iterations with DP noise variance $\sigma_{\omega}$.
    Assume the following inequalities hold
    \begin{enumerate}
        \item {\bf step-size restrictions:} 
        \begin{enumerate}
       \item 
       $\gamma \le 
       \frac{1}{24L}
       $;
        \item $12L\gamma = \beta$;
        \item \begin{equation}\label{eq:wnojenoqdqw}
            \frac{5}{6} - \frac{32L^2\beta^2}{\hat\beta^2\eta^2}\gamma^2
    - \frac{96L^2}{\hat\beta^2\eta^2}\gamma^2 \ge 0.
        \end{equation}
        \end{enumerate}
        \item {\bf momentum restrictions:}
        \begin{enumerate}
        \item $\beta\le \min\left\{\frac{\hbeta\tau}{54\sqrt{L\Delta}}, 1\right\};$
        \item $\beta\le \min\left\{\frac{2\hbeta\tau}{39(B_{\rm init}-\tau)}, 1\right\};$
        \item $\beta\le \min\left\{\frac{2\hbeta\tau}{69b}, 1\right\};$
        \item $\beta \le \min\left\{\frac{2}{41\sqrt{2c\deltabyz}T}, 1\right\}$;
        \item 
        $\hbeta \in \min\left\{\frac{\sqrt{L\Delta}}{a}, \frac{\sqrt{L\Delta}}{4\sqrt{2c\deltabyz}\ha}, 1\right\};$
        \item and momentum restrictions defined in (\ref{eq:step-size_bound_1}), (\ref{eq:step-size_bound_2}), (\ref{eq:step-size_bound_3}), (\ref{eq:step-size_bound_4}), (\ref{eq:step-size_bound_5}), (\ref{eq:step-size_bound_6}), (\ref{eq:step-size_bound_7}), and (\ref{eq:step-size_bound_8});
        \end{enumerate}
    \end{enumerate}
    Then, with probability $1-\alpha$, we bound $\frac{1}{T}\sum_{t=0}^{T-1}\|\nabla f(x^t)\|^2$ with
   \begin{align*}
        \sqrt{L\Delta}(\sqrt{L\Delta} + B_{\rm init} + \sigma) \wtilde{\cO}\left(\frac{\sqrt{d}\sigma_{\omega}}{\sqrt{GT}\tau} 
        + \frac{d^{1/3}\sigma_{\omega}^{2/3}}{(TG)^{1/3}\tau^{2/3}}
        + \frac{\sqrt{c\deltabyz} \sqrt{d}\sigma_{\omega}}{\sqrt{T}\tau} 
        + \sqrt{c\deltabyz}
        \right),
    \end{align*}
    where $\wtilde{\cO}$ hides constant and logarithmic factors and higher order terms decreasing in $T$. Let additionally \Cref{asmp:bounded_heterogeneity} hold, then 
    \[
    \sqrt{L\Delta}(\sqrt{L\Delta} + B_{\rm init} + \sigma) = \wtilde{\cO}\left((1+B_{\zeta})LF^0 + \zeta^2 + \sigma ((1+\sqrt{B_{\zeta}})\sqrt{LF^0} + \zeta)\right).
    \]
\end{theorem}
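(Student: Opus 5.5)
The proof will be a high-probability induction on $t$, in the style of \citet{islamov2025double}, built on the Lyapunov function $\Phi^t$ from \eqref{eq:lyapunov_function}.

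\textbf{Good event.} First I define the event $E$ on which all noise quantities stay within their prescribed bounds for every $t\le T$:
\[
\|\Omega^t\|\le a,\qquad \|\Omega_i^t\|\le \ha,\qquad \|\theta_i^t\|\le b,\qquad \|\theta^t\|\le \hc/\sqrt{G}.
\]
On top of these, $E$ requires that the martingale sums of inner products with $\theta_i^{t+1}$, as they appear in Lemmas~\ref{lem:descent_Pt_tilde}, \ref{lem:descent_Pt} and \ref{lem:descent_Vt_tilde}, are bounded by their Freedman/sub-Gaussian deviation. Each of these holds with probability at least $1-\alpha/8$ (or so), via \Cref{lem:concentration_lemma} for the Gaussian sums and sub-Gaussian tail plus union bound over $i,t$ for $\theta_i^t$. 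The inner-product sums are handled by applying \Cref{lem:concentration_lemma} with conditional sub-Gaussian parameter $\sigma\cdot\|\text{multiplier}\|$. The multiplier must be truncated, i.e.\ replaced by itself times the indicator that the induction hypotheses hold at $t$, so that it is bounded by a deterministic quantity of order $\sqrt{\Delta}$-scaled constants. A union bound then gives $\Prob(E)\ge 1-\alpha$.

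\textbf{Induction.} On $E$ I prove by induction over $t$ the joint statement:
\begin{enumerate}
\item $\Phi^t\le 2\Delta$;
\item $\|v_i^{t+1}-g_i^t\|\le B_{\rm init}$;
\item the bounds on $\|\overline m^t\|$, $\|\overline g^t\|$ and $\|\nabla f_i(x^t)-v_i^t\|$ from Lemmas~\ref{lem:bound_mt_bar}--\ref{lem:bound_gt_hat}.
\end{enumerate}
The base case uses $B_{\rm init}\ge\max_i\|\nabla f_i(x^0)\|+b$. For the step, the three norm bounds follow directly from Lemmas~\ref{lem:bound_mt_bar}, \ref{lem:bound_nabla_fi_xt_vi_t} and \ref{lem:bound_gt_hat}. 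The clipping input bound is split into two cases:
\begin{itemize}
\item if $i\in\cI_t$, \Cref{lem:control_clip} shows $\|v_i^{t+1}-g_i^t\|$ decreases;
\item otherwise \Cref{lem:bound_vi_t_gi_t} gives $\le\tau+\hbeta\cdot 0+\beta(\dots)\le 3\tau\le B_{\rm init}$, using the same momentum restrictions.
\end{itemize}
Since the input norms stay bounded by $B_{\rm init}$, clipping acts as a contractive compressor with factor $\eta=\tau/B_{\rm init}$, which is exactly what \Cref{lem:descent_Vt_tilde} needs.

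\textbf{Lyapunov recursion.} Summing \Cref{lem:descent_lemma_in_f} with the weighted recursions of Lemmas~\ref{lem:descent_Pt_tilde}, \ref{lem:descent_Pt} and \ref{lem:descent_Vt_tilde} (weights as in $\Phi^t$) gives
\[
\Phi^{t+1}\le\Phi^t-\tfrac{\gamma}{2}\|\nabla f(x^t)\|^2+(\text{noise constants})+(\text{martingale terms}).
\]
The $R^t$ terms are absorbed by $-\tfrac1{4\gamma}R^t$ via \eqref{eq:wnojenoqdqw}. The noise constants are:
\begin{itemize}
\item $2\gamma\hbeta^2a^2$ and $16\gamma\hbeta^2c\deltabyz(\tau^2T^2+\ha^2)$ from the aggregation bias;
\item $\beta^2 b^2$-type terms from stochastic gradients.
\end{itemize}
Telescoping up to $t$, and using the high-probability bounds on the martingale sums together with the remaining momentum restrictions \eqref{eq:step-size_bound_1}--\eqref{eq:step-size_bound_8} (which force the accumulated constants to be at most $\Delta$), gives $\Phi^{t+1}\le 2\Delta$ and closes the induction. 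The same telescoped inequality at $t=T$ yields
\[
\frac1T\sum_t\|\nabla f(x^t)\|^2\le\frac{4\Delta}{\gamma T}+O\!\left(\hbeta^2a^2+c\deltabyz\hbeta^2(\tau^2T^2+\ha^2)+\dots\right).
\]

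\textbf{Tuning (main obstacle).} The hard part is the tuning. Take $\gamma=\beta/(12L)$ with $\beta\sim\hbeta\tau/(\sqrt{L\Delta}+B_{\rm init}+b)$ and $\beta\lesssim1/(\sqrt{c\deltabyz}T)$. Then pick $\hbeta$ to balance $\Delta/(\gamma T)\sim \sqrt{L\Delta}(\sqrt{L\Delta}+B_{\rm init}+\sigma)/(\hbeta\tau T)$ against $\hbeta^2a^2$ and $c\deltabyz\hbeta^2\tau^2T^2$. Here $a\propto\sqrt{dT/G}\,\sigma_\omega$ gives the $\frac{\sqrt d\sigma_\omega}{\sqrt{GT}\tau}$ and $2/3$-power terms, $\ha$ gives the $\sqrt{c\deltabyz d}\,\sigma_\omega/(\sqrt T\tau)$ term, and the $\tau T$ drift gives $\sqrt{c\deltabyz}$. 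I expect the delicate points to be:
\begin{itemize}
\item checking that the constraints $\hbeta\le\sqrt{L\Delta}/a$ and $\hbeta\le\sqrt{L\Delta}/(4\sqrt{2c\deltabyz}\ha)$ are compatible with this optimal choice (taking the min adds exactly the stated terms);
\item verifying that the truncated martingale bounds do not add worse than logarithmic factors.
\end{itemize}
Finally, under \Cref{asmp:bounded_heterogeneity}, bounding $\Phi^0$ with $g_i^0=m_i^0=0$, $v_i^0$ a stochastic gradient, gives $\Delta\lesssim F^0+\frac{\gamma}{\hbeta\eta}(\|\nabla f(x^0)\|^2+\zeta^2+\sigma^2)$, and $\|\nabla f(x^0)\|^2\le 2LF^0$. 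This yields the stated form of $\sqrt{L\Delta}(\sqrt{L\Delta}+B_{\rm init}+\sigma)$.
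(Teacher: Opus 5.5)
Your plan follows the paper's proof closely. It uses the same Lyapunov function and the same chain \Cref{lem:bound_mt_bar}, \Cref{lem:bound_nabla_fi_xt_vi_t}, \Cref{lem:bound_gt_hat}, \Cref{lem:control_clip} inside a high-probability induction. It truncates the multipliers so that \Cref{lem:concentration_lemma} applies to the $\theta_i^{t+1}$-sums, and it uses the same tuning $\gamma=\beta/(12L)$ with $\beta$ of order $\min\{\hbeta\tau/(\sqrt{L\Delta}+B_{\rm init}+b),\,1/(\sqrt{c\deltabyz}T)\}$.

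Fixing one global good event instead of propagating $\Prob(E^t)\ge 1-\alpha(t+1)/(T+1)$ is equivalent bookkeeping. Your second case is worth keeping: if $\|v_i^t-g_i^{t-1}\|\le\tau$, then \Cref{lem:bound_vi_t_gi_t} together with the estimates of \Cref{lem:control_clip} gives $\|v_i^{t+1}-g_i^t\|\le\tau$. This fills a step the paper's transition leaves implicit, and it makes \Cref{lem:descent_Vt_tilde} available for all clients at once. Three of your claims need correcting, though.

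\begin{itemize}
\item \textbf{Accumulated DP and aggregation-bias constants.} Restrictions \eqref{eq:step-size_bound_1}--\eqref{eq:step-size_bound_8} do not control $\gamma T\hbeta^2a^2$ and $\gamma T\hbeta^2c\deltabyz(\ha^2+\tau^2T^2)$. Indeed, all of (1)--(8) become vacuous when $\sigma=0$, while these terms do not. The paper's proof bounds them by $\Delta/24$ through the extra constraints \eqref{eq:step-size_bound_9}--\eqref{eq:step-size_bound_10} on $\hbeta$; these are used in the proof even though the statement lists only (1)--(8). They are needed for $\Phi^t\le 2\Delta$ to propagate, not merely for the final rate. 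Your balancing of $\hbeta$ supplies them up to constants, but it must be imposed, with explicit constants, as a hypothesis before the induction rather than afterwards.

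\item \textbf{Byzantine terms in the rate.} Balancing $c\deltabyz\hbeta^2\tau^2T^2$ against $\Delta/(\gamma T)\asymp L\Delta(\sqrt{L\Delta}+B_{\rm init}+\sigma)/(\hbeta\tau T)$ yields a $(c\deltabyz)^{1/3}$ term, not $\sqrt{c\deltabyz}$. The $\sqrt{c\deltabyz}$ term comes from $\beta=\cO(1/(\sqrt{c\deltabyz}T))$. Likewise, balancing against $c\deltabyz\hbeta^2\ha^2$ yields $(c\deltabyz d\sigma_{\omega}^2)^{1/3}/(T^{1/3}\tau^{2/3})$. The paper's own derivation produces exactly these terms, which is why \Cref{th:main_theorem} is stated as $R+R^{2/3}$. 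So taking the minimum does not add exactly the four listed terms; what this argument proves is the $R+R^{2/3}$ form.

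\item \textbf{Bound on $\Delta_0$.} Your estimate $\Delta=\cO(F^0+\frac{\gamma}{\hbeta\eta}(\|\nabla f(x^0)\|^2+\zeta^2+\sigma^2))$ carries a $\sigma^2$ term into $L\Delta$ and hence into $\Delta_0$. The stated form $\wtilde{\cO}((1+B_{\zeta})LF^0+\zeta^2+\sigma(\cdots))$ contains no such term. Moreover, with a random $v_i^0$, $\Delta\ge\Phi^0$ holds only with high probability. The paper instead takes $v_i^0=g_i^0=0$ in $\Phi^0$. It then uses \eqref{eq:wnojenoqdqw} with $\beta=12L\gamma$ to get $\frac{8\gamma\beta}{\hbeta^2\eta^2}\le\frac{5}{6L}$ and $\frac{2\gamma}{\beta}=\frac{1}{6L}$, so that $L\Delta=\cO((1+B_{\zeta})LF^0+\zeta^2)$ deterministically. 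In that version $\sigma$ enters $\Delta_0$ only through $B_{\rm init}$ and the explicit $+\sigma$, each multiplied by $\sqrt{L\Delta}$.
\end{itemize}
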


\begin{proof} 
    For convenience, we define $\nabla f_i(x^{-1},\xi^{-1}_i) = v_i^{-1} = g_i^{-1} = 0, \Phi^{-1} = \Phi^0$. Next, let us define an event $E^t$ for each $t\in\{0,\dots,T\}$ such that the following inequalities hold for all $k\in\{0,\dots,t\}$
    \begin{enumerate}
        \item $\|v_i^k - g_i^{k-1}\| \le B_{\rm init}$ for $i\in \cI_{k};$
        \item $\|\overline{m}^{t}\| \le \sqrt{64L\Delta} + 3(B_{\rm init}-\tau) 
        + 3b 
        + 3\hbeta a 
        + \hbeta\sqrt{2c\deltabyz}\ha
        + 3\sqrt{2c\deltabyz}\hbeta\tau T$,
        
        \item $\|\overline{g}^{t}\| \le \sqrt{64L\Delta} + 3(B_{\rm init}-\tau) + 3b
        + 3\sqrt{2c\deltabyz}\hbeta\tau T;$

        \item $\|\nabla f_i(x^{t}) - v_i^{t}\| \le \sqrt{4L\Delta}
        + \frac{3}{2}(B_{\rm init}-\tau) 
        + \frac{3}{2}b 
        + \hbeta a 
        + 4\hbeta\sqrt{2c\deltabyz}\ha
        + \frac{3}{2}\sqrt{2c\deltabyz}\hbeta\tau T$ for all $i\in\cG;$
        \item $\|\theta^k_i\|\le b$ for all $i\in\cG$ and $\|\theta^k\|\le \frac{\hc}{\sqrt{G}};$
        \item $\left\|\frac{1}{G}\sum_{l=1}^{k+1}\sum_{i\in\cG}\omega_i^l\right\| \le a$ and $\left\|\sum_{l=1}^{k+1}\omega_i^l\right\|\le \ha$ for all $i\in\cG$;
        \item $\Phi^k \le 2\Delta$;
        \item \begin{align*}
        \frac{7}{8}\Delta & \ge \frac{4\gamma\beta}{n\hat{\beta}\eta}(1-\eta)^2\sum_{l=0}^{k-1}\sum_{i\in\cG}\<(g_i^{l} - v_i^{l}) + \beta(v_i^{l} - \nabla f_i(x^{l})) + \beta(\nabla f_i(x^{l})-\nabla f_i(x^{l+1})), \theta^t_i>\\
        &\quad + \frac{16\gamma\beta^2}{n\hat{\beta}^2\eta^2}(1-\beta)\sum_{l=0}^{k-1}\sum_{i\in\cG}\<v_i^l - \nabla f_i(x^{l}), \theta^{l+1}_i>
        + 4\gamma(1-\beta)\sum_{l=0}^{k-1}\<v^l - \nabla f(x^{l}), \theta^{l+1}>\\
        &\quad +\frac{15\gamma\beta^2}{n\hat{\beta}^2\eta^2}(1-\beta)\sum_{l=0}^{k-1}\sum_{i\in\cG}\<\nabla f_i(x^l) - \nabla f_i(x^{l+1}), \theta^{l+1}_i>\\
        & \quad + 4\gamma(1-\beta)\sum_{l=0}^{k-1}\<\nabla f(x^l) - \nabla f(x^{l+1}), \theta^{l+1}>.
        \end{align*}
    \end{enumerate}
    Then, we will derive the result by induction, i.e., using the induction w.r.t.\ $t$, we will show that $\Prob(E^t) \ge 1-\frac{\alpha(t+1)}{T+1}$ for all $t\in\{0,\dots, T-1\}$.
    
    Before moving on to the proof's induction part, we need to establish several useful bounds. Denote the events $\Theta^t_i, \Theta^t$ and $N^{t+1}$ as 
    \begin{align}
        &\Theta^t_i \eqdef \{\|\theta^t_i\| \ge b \}, \quad \Theta^t \eqdef \left\{ \|\theta^t\|\ge \frac{\hc}{\sqrt{G}} \right\}, \quad N^{t+1} \eqdef \left\{ \left\|\frac{1}{G}\sum_{l=1}^t\sum_{i\in\cG}\omega_i^l\right\| \ge a \right\},\quad \text{and}\notag\\
        &N_i^{t+1} = \left\{\left\|\sum_{l=1}^t\omega_i^l\right\| \ge \ha \right\}
    \end{align}
    respectively. From \Cref{asmp:stoch_grad} we have 
    \[\Prob(\Theta^{t}_i) \le 2\exp\left(-\frac{b^2}{2\sigma^2}\right) = \frac{\alpha}{8(T+1)G}\] 
    where the last equality is by definition of $b^2$. Therefore, $\Prob(\overline{\Theta}^t_i) \ge 1 - \frac{\alpha }{8(T+1)G}.$ 
    Besides, notice that the constant $\hc$ in (\ref{eq:constants}) can be viewed as 
    \[
        \hc^2 = (\sqrt{2}+2b_3)\sigma^2 \quad\text{where} \quad b_3^2 = 3\log\frac{8(T+1)}{\alpha}.
    \]
    Now, we can use \Cref{lem:concentration_lemma} to bound $\Prob(\Theta^t).$ Since all $\theta^t_i$ are independent $\sigma$-sub-Gaussian random vectors, then we have
    \[
    \Prob\left(\left\|\sum_{i\in\cG}\theta^t_i\right\|\ge \hc\sqrt{G}\right) = \Prob\left(\|\theta^t\| \ge \frac{\hc}{\sqrt{G}}\right) \le \exp(-b_3^2/3) = \frac{\alpha}{8(T+1)}.
    \]
    We also use \Cref{lem:concentration_lemma} to bound $\Prob(N^t)$ and $\Prob(N_i^t)$. Indeed, since all $\omega_i^l$ are independent Gaussian random vectors, then we have 
    \[\Prob\left(\left\|\sum_{l=1}^t\sum_{i\in\cG}\omega_i^l\right\| \ge (\sqrt{2} + 2b_2)\sqrt{\sum_{l=1}^t\sum_{i\in\cG}\sigma_{\omega}^2d}\right) \le \exp(-\nicefrac{b_2^2}{3}) = \frac{\alpha}{8(T+1)}.
    \]
    with $b_2^2 = 3\log\left(\frac{8(T+1)}{\alpha}\right).$ 
    This implies that 
    \[
    \Prob\left(\left\|\frac{1}{G}\sum_{l=1}^t\sum_{i\in\cG}\omega_i^l\right\| \ge a\right) \le \frac{\alpha}{8(T+1)}
    \]
    due to the choice of $a$ from (\ref{eq:constants}):
    \[a = (\sqrt{2}+2b_2)\sigma_{\omega}\sqrt{d}\sqrt{\frac{T}{G}}, \quad \text{where} \quad b_2^2 = 3\log\frac{8(T+1)}{\alpha}.
    \]
    Note that with this choice of $a$ we have that the above is true for any $t\in\{1,\ldots, T\}$, i.e., $\Prob(N^t) \ge 1-\frac{\alpha}{8(T+1)}$ for all $t\in\{1,\ldots, T\}.$ Similarly, we derive 
    \begin{align*}
        \Prob\left(\left\|\sum_{l=1}^{t}\omega_i^l\right\| \ge (\sqrt{2}+2b_4)\sqrt{\sum_{l=1}^t\sigma^2_{\omega}d}\right) \le \exp(-\nicefrac{b_4^2}{3})\le\frac{\alpha}{8G(T+1)}
    \end{align*}
    with $b_4^2 = 3\log\left(\frac{8G(T+1)}{\alpha}\right).$ Again, this implies that 
    \[
    \Prob\left(\left\|\sum_{l=1}^t\omega_i^l\right\| \ge \ha \right) \le \frac{\alpha}{8G(T+1)}
    \]
    due to the choice of $\ha$ from \eqref{eq:constants}
    \[
    \ha = (\sqrt{2}+2b_4)\sigma_{\omega}\sqrt{T}, \quad \text{where} \quad b_4^2 = 3\log\left(\frac{8G(T+1)}{\alpha}\right)
    \]
    
    Now, we are ready to prove that $\Prob(E^t) \ge 1-\frac{\alpha(t+1)}{T+1}$ for all $t\in\{0,\dots, T-1\}.$ First, we show that the base of induction holds.

    \paragraph{Base of induction.}

    \begin{enumerate}
        \item $\|v_i^0 - g_i^{-1}\| = \|v_i^0\| = \beta\|\nabla f_i(x^0,\xi^0_i)\| = \beta\|\theta^0_i\| + \beta \|\nabla f_i(x^0)\| \le \frac{1}{2}b + \frac{1}{2}B_{\rm init}\le \frac{1}{2}B_{\rm init} + \frac{1}{2}B_{\rm init} = B_{\rm init}$ holds with probability $1-\frac{\alpha}{8(T+1)}$. Indeed, we have 
        \[
        \Prob(\Theta^0_i) \le 2\exp\left(-\frac{b^2}{2\sigma^2}\right) = \frac{\alpha}{8(T+1)G}.
        \]
        Therefore, we have 
        \[
        \Prob\left(\cap_{i\in\cG}\overline{\Theta}^0_i\right) = 1 - \Prob\left(\cup_{i\in\cG} \Theta^0_i\right) \ge 1 - \sum_{i\in\cG}\Prob(\Theta^0_i) = 1-G\frac{\alpha}{8(T+1)G} = 1-\frac{\alpha}{8(T+1)}.
        \]
        Moreover, we have 
        \[
        \Prob(\Theta^0) \le \frac{\alpha}{8(T+1)}.
        \]
        
        This means that the probability of the event that each $\left\|\frac{1}{G}\sum_{l=1}^t\sum_{i\in\cG}\omega_i^l\right\|\le a$, $\|\sum_{l=1}^t\omega_i^l\| \le \ha$ $\|\theta^0_i\|\le b$, and  $\|\theta^0\|\le \frac{c}{\sqrt{G}},$ and is at least 
        $$1-\frac{\alpha}{8(T+1)} - n\frac{\alpha}{8G(T+1)} - \frac{\alpha}{8(T+1)} - n\frac{\alpha}{8G(T+1)}= 1 - \frac{\alpha}{2(T+1)}.$$
        \item We have already shown that
        \[
        \Prob\left(\left\|\frac{1}{G}\sum_{i\in\cG}\omega_i^1\right\| \ge a\right) \le \frac{\alpha}{8(T+1)},
        \]
       implying that $\left\|\frac{1}{G}\sum_{i\in\cG}\omega_i^1\right\| \le a$ with probability at least $1-\frac{\alpha}{8(T+1)}.$ Similarly, we have shown that $\|\sum_{l=1}^1\omega_i^l\| \le \ha$ with probability $1-\frac{\alpha}{8G(T+1)}.$ Therefore,

        \item $\overline{g}^0 = \frac{1}{G}\sum_{i\in\cG} (g_i^{-1} + \hat{\beta}\clip_\tau(v_i^0 - g_i^{-1}) = \frac{1}{G}\sum_{i\in\cG}\hat{\beta}\clip_\tau(\beta\nabla f_i(x^0,\xi^0_i)).$ Therefore, we have
        \begin{align*}
            \|\overline{g}^0\| &= \left\|\frac{1}{G}\sum_{i\in\cG}\hat{\beta}\beta\nabla f_i(x^0) + \hat{\beta}\beta\theta^0_i + (\hat{\beta}\clip_\tau(\beta\nabla f_i(x^0,\xi^0_i)) - \hat{\beta}\beta\nabla f_i(x^0,\xi^0_i))\right\|\\
            &\le \hat{\beta}\beta\|\nabla f(x^0)\|
            + \frac{\hat{\beta}\beta}{G}\sum_{i\in\cG}\|\theta^0_i\|
            + \frac{1}{G}\sum_{i\in\cG}\max\left\{0, \beta\|\nabla f_i(x^0,\xi^0_i)\| - \tau\right\}\\
            &\le \hat{\beta}\beta\sqrt{2L(f(x^0)-f(x^\star))}
            + \frac{\hat{\beta}\beta}{G}\sum_{i\in\cG}\|\theta^0_i\| 
            + \frac{\hat{\beta}}{G}\sum_{i\in\cG}\max\left\{0, \beta\|\nabla f_i(x^0)\| + \beta\|\theta^0_i\|- \tau\right\}.
        \end{align*}
        Note that if one of the maximum terms in the last sum is actually zero, then the bound is easier to satisfy. Therefore, we consider only the case when the maximum is always equal to the second term. We continue as follows 
        \begin{align*}
            \|\overline{g}^0\| &\le \frac{1}{2}\sqrt{2L\Phi^0} 
            + \frac{2\hat{\beta}\beta}{G}\sum_{i\in\cG}\|\theta^0_i\|
            + \frac{\hat{\beta}\beta}{G}\sum_{i\in\cG}\|\nabla f_i(x^0)\| - \hat{\beta}\tau\\
            &\le \sqrt{64L\Delta} 
            + 2\hat{\beta}\beta b + \hat{\beta}\beta B_{\rm init} - \hat{\beta}\tau\\
            &\le \sqrt{64L\Delta} 
            + 3(B_{\rm init}-\tau) + 3b + 3\sqrt{2c\deltabyz}\hbeta\tau T.
        \end{align*}

        The inequalities above again hold in $\cap_{i\in\cG}\overline{\Theta_i^0}$, i.e., with probability at least $1-\frac{\alpha}{8(T+1)}.$ Therefore, the condition $3$ of the induction is verified. Since at iteration $0$ we have $\overline{m}^0 = \overline{g}^0,$ the condition $2$ holds as well.

        \item We have 
        \begin{align*}
            \|v_i^0 - \nabla f_i(x^0)\| &= \|\beta\nabla  f_i(x^0,\xi_i^0) - \nabla f_i(x^0)\|\\ &\le \beta\|\nabla f_i(x^0, \xi^0_i) - \nabla f_i(x^0)\| + (1-\beta)\|\nabla f_i(x^0)\|\\
            &\le \beta b + (1-\beta)B_{\rm init}
        \end{align*}
        The bound above holds with probability at least $1-\frac{\alpha}{8(T+1)}$ because it holds in $\cap_{i\in\cG}\overline{\Theta_i^0}.$ Therefore, the bound $5$ of the assumption of the induction is verified. 
        
        \item Next, we emphasize that the condition $8$ of the induction assumption also hold, as $\Phi^0 \le 2\Phi^0 \le 2\Delta$ by the choice of $\Delta$.

        \item We finalize the induction base by noting that the condition $9$ of the induction assumption holds since the RHS equals $0$.

    \end{enumerate}
    Therefore, we conclude that the conditions $1$-$8$ hold with a probability of at least 
    \begin{align*}
        \Prob\left(\overline{\Theta^0}\cap \left(\cap_{i\in\cG}\overline{\Theta_i^0}\right)\cap \overline{N}^t \cap \left(\cap_{i\in\cG}\overline{N}_i^t\right)\right) &\ge 1 - \Prob(\Theta^0) - \sum_{i\in\cG} \Prob(\Theta_i^0) - \Prob(N^0) - \sum_{i\in\cG}\Prob(N_i^0)\\
        &\ge 
        1 
        - \frac{\alpha}{8(T+1)}
        - G\cdot \frac{\alpha}{8G(T+1)} 
        - \frac{\alpha}{8(T+1)} - G\cdot \frac{\alpha}{8G(T+1)}\\
        &= 1-\frac{\alpha}{2(T+1)} > 1- \frac{\alpha}{T+1},
    \end{align*}
    i.e., $\Prob(E^0) \ge 1-\frac{\alpha}{T+1}$ holds. This is the base of the induction.

    \paragraph{Transition step of induction.}

    {\bf Case $|\cI_{K+1}| > 0.$}  Assume that all events $\overline{\Theta}^{K+1}, \overline{\Theta}^{K+1}_i$, $\overline{N}^{K+1},$ and $\overline{N}^{K+1}_i$ take place, i.e., $\|\theta^{K+1}_i\| \le b, \|\theta^{K+1}\|\le \frac{c}{\sqrt{G}}$ for all $i\in\cG$, $\left\|\frac{1}{G}\sum_{l=1}^{K}\sum_{i\in\cG}\omega_i^l\right\|\le a,$ and $\left\|\sum_{l=1}^{K}\omega_i^l\right\| \le \ha$ for all $i\in\cG$. That is, we assume that the event 
    $$\overline{\Theta}^{K+1} \cap \left(\cap_{i\in\cG} \overline{\Theta}^{K+1}_i\right) \cap \overline{N}^{K+1} \cap \left(\cap_{i\in\cG} \overline{N}_i^{K+1}\right) \cap E^K$$ holds. Then, by the assumptions of the induction and from \Cref{lem:control_clip} we get for all $i\in \cI_{K+1}$
    \[
    \|v_i^{K+1} - g_i^{K}\| \le \|v_i^{K} - g_i^{K-1}\| - \frac{\hat{\beta}\tau}{2} \le B_{\rm init} - \frac{\hat{\beta}\tau}{2}.
    \]
    Therefore, from \Cref{lem:bound_mt_bar} we get that 
    \[
    \|\overline{m}^{K+1}\| \le \sqrt{64L\Delta} + 3(B_{\rm init}-\tau) + 3b + 3\hat{\beta}a + \hbeta\sqrt{2c\deltabyz}\ha + 3\sqrt{2c\deltabyz}\hbeta\tau T,
    \]
    from \Cref{lem:bound_gt_hat} we get that
    \[
    \|\overline{g}^{K+1}\| \le \sqrt{64L\Delta} + 3(B_{\rm init}-\tau) + 3b + 3\sqrt{2c\deltabyz}\hbeta\tau T,
    \]

    from \Cref{lem:bound_nabla_fi_xt_vi_t}
    \[
    \|\nabla f_i(x^{K+1}) - v_i^{K+1}\| \le \sqrt{4L\Delta} + \frac{3}{2}(B_{\rm init}-\tau) + \frac{3}{2}b + \hat{\beta}a + 4\hbeta\sqrt{2c\deltabyz}\ha + \frac{3}{2}\sqrt{2c\deltabyz}\hbeta\tau T\quad \text{for all } i\in\cG.
    \]
    This means that conditions 1-7 in the induction assumption are also verified for the step $K+1$. Since for all $t\in\{0, \dots, K+1\}$ inequalities $1$-$7$ are verified, we can write for each $t\in \{0,\ldots,K\}$ by \Cref{lem:descent_lemma_in_f,lem:descent_Pt,lem:descent_Pt_tilde,lem:descent_Vt_tilde} the following

    \begin{align*}
        \Phi^{t+1} &= \delta^{t+1} 
        + \frac{2\gamma}{\hat{\beta}\eta}\wtilde{V}^{t+1} 
        + \frac{8\gamma\beta}{\hat{\beta}^2\eta^2}\wtilde{P}^{t+1} 
        + \frac{2\gamma}{\beta}P^{t+1}\\
        &\le \delta^t - \frac{\gamma}{2}\|\nabla f(x^t)\|^2  {\color{red} - \frac{1}{4\gamma}R^t}
        {\color{blue} + 2\gamma \wtilde{V}^t }
       {\color{orange}+ 2\gamma P^t}
       + 2\gamma\hbeta^2\|\Omega^t\|^2
       + 16\gamma\hbeta^2c\deltabyz\tau^2T^2
       + \frac{16\gamma\hbeta^2c\deltabyz}{G}\sum_{i\in\cG}\|\Omega_i^t\|^2\\
        & \;+\; \frac{2\gamma}{\hat{\beta}\eta}\left(
    {\color{blue} (1-\hat{\beta}\eta)\wtilde{V}^{t} }
    {\color{pink} + \frac{4\beta^2}{\hat{\beta}\eta}\wtilde{P}^{t}}
    {\color{red} + \frac{4\beta^2L^2}{\hat{\beta}\eta}R^{t}}
    + \beta^2b^2
    \right.\\
    &\;+\; \left.\frac{2}{G}\beta(1-\hat{\beta}\eta)^2\sum_{i\in\cG}\<(g_i^{t} - v_i^{t}) + \beta(v_i^{t} - \nabla f_i(x^{t})) + \beta(\nabla f_i(x^{t})-\nabla f_i(x^{t+1})), \theta^{t+1}_i>\right)\\
    &\;+\; \frac{8\gamma\beta}{\hat{\beta}^2\eta^2}\left({\color{pink}(1-\beta)\wtilde{P}^t} 
    {\color{red} + \frac{3L^2}{\beta}R^t }
    + \beta^2b^2 
    + \frac{2}{G}\beta(1-\beta)\sum_{i\in\cG}\<v_i^t - \nabla f_i(x^{t+1}), \theta^{t+1}_i>\right)\\
    &\;+\; \frac{2\gamma}{\beta}\left({\color{orange} (1-\beta)P^t }
    {\color{red} 
    + \frac{3L^2}{\beta}R^t }
    + \beta^2\frac{\hc^2}{G}
    + 2\beta(1-\beta)\<v^t - \nabla f(x^{t+1}), \theta^{t+1}>\right)
    \end{align*}
    
    Rearranging terms, we get
    \begin{align*}
    \Phi^{t+1} &\le \delta^t 
    - \frac{\gamma}{2}\|\nabla f(x^t)\|^2
    + \frac{2\gamma}{\hat{\beta}\eta}\wtilde{V}^t\left(\hat{\beta}\eta + 1-\hat{\beta}\eta\right)
    + \frac{8\gamma\beta}{\hat{\beta}^2\eta^2}\wtilde{P}^t\left(\beta + 1 - \beta\right)
    + \frac{2\gamma}{\beta}P^t\left(\beta + 1-\beta\right)\\
    &\quad + 2\gamma\hbeta^2\|\Omega^t\|^2
    + 16\gamma\hbeta^2c\deltabyz\tau^2T^2
    + 16\gamma\hbeta^2\frac{c\deltabyz}{G}\sum_{i\in\cG}\|\Omega_i^t\|^2\\
    & \quad - \frac{1}{4\gamma}R^t\left(1 - \frac{32L^2\beta^2}{\hat\beta^2\eta^2}\gamma^2
    - \frac{96L^2}{\hat\beta^2\eta^2}\gamma^2
    - \frac{24L^2}{\beta^2}\gamma^2\right)
    + b^2\left(\frac{2\beta^2\gamma}{\hat{\beta}\eta} + \frac{8\gamma\beta^3}{\hat{\beta}^2\eta^2}\right)
    + \hc^2\frac{2\gamma\beta}{G}\\
    & \quad + \frac{4\gamma\beta}{G\hat{\beta}\eta}(1-\hat{\beta}\eta)^2\sum_{i\in\cG}\<(g_i^{t} - v_i^{t}) + \beta(v_i^{t} - \nabla f_i(x^{t})) + \beta(\nabla f_i(x^{t})-\nabla f_i(x^{t+1})), \theta^{t+1}_i>\\
    &\quad + \frac{16\gamma\beta^2}{G\hat{\beta}^2\eta^2}(1-\beta)\sum_{i\in\cG}\<v_i^t - \nabla f_i(x^{t}), \theta^{t+1}_i>
    + 4\gamma(1-\beta)\<v^t - \nabla f(x^{t}), \theta^{t+1}>\\
    &\quad + \frac{16\gamma\beta^2}{G\hat{\beta}^2\eta^2}(1-\beta)\sum_{i\in\cG}\<\nabla f_i(x^t) - \nabla f_i(x^{t+1}), \theta^{t+1}_i>\\
    &\quad + 4\gamma(1-\beta)\<\nabla f(x^t) - \nabla f(x^{t+1}), \theta^{t+1}>.
    \end{align*}
    Using step-size restriction $(c)$, and assumption $6$ of the induction, we get rid of the term with $R^t$ and obtain
    \begin{align*}
        \Phi^{t+1} 
    &\le \Phi^t 
    - \frac{\gamma}{2}\|\nabla f(x^t)\|^2
    + b^2\left(\frac{2\beta^2\gamma}{\hat{\beta}\eta} + \frac{8\gamma\beta^3}{\hat{\beta}^2\eta^2}\right)
    + \hc^2\frac{2\gamma\beta}{G}
    + 2\gamma\hbeta^2a^2
    + 16\gamma\hbeta^2c\deltabyz\tau^2T^2
    + 16\gamma\hbeta^2c\deltabyz\ha^2\\
    & \quad + \frac{4\gamma\beta}{G\hat{\beta}\eta}(1-\hat{\beta}\eta)^2\sum_{i\in\cG}\<(g_i^{t} - v_i^{t}) + \beta(v_i^{t} - \nabla f_i(x^{t})) + \beta(\nabla f_i(x^{t})-\nabla f_i(x^{t+1})), \theta^{t+1}_i>\\
    &\quad + \frac{16\gamma\beta^2}{G\hat{\beta}^2\eta^2}(1-\beta)\sum_{i\in\cG}\<v_i^t - \nabla f_i(x^{t}), \theta^{t+1}_i>
    + 4\gamma(1-\beta)\<v^t - \nabla f(x^{t}), \theta^{t+1}>\\
    &\quad +\frac{16\gamma\beta^2}{G\hat{\beta}^2\eta^2}(1-\beta)\sum_{i\in\cG}\<\nabla f_i(x^t) - \nabla f_i(x^{t+1}), \theta^{t+1}_i>\\
    & \quad + 4\gamma(1-\beta)\<\nabla f(x^t) - \nabla f(x^{t+1}), \theta^{t+1}>.
    \end{align*}
    Now we sum all the inequalities above for $t\in\{0,\dots,K\}$ and get 
    \begin{align}
    \Phi^{K+1} 
    &\le \Phi^0 
    - \frac{\gamma}{2}\sum_{t=0}^{K}\|\nabla f(x^t)\|^2
    + Kb^2\left(\frac{2\beta^2\gamma}{\hat{\beta}\eta} + \frac{8\gamma\beta^3}{\hat{\beta}^2\eta^2} \right)
    + K\hc^2\frac{2\gamma\beta}{G} 
    + 2\gamma\hbeta^2a^2 K
    + 16\gamma\hbeta^2c\deltabyz\ha^2K\notag\\
    &\quad + 16\gamma\hbeta^2c\deltabyz\tau^2T^2K\notag\\
    & \quad + \frac{4\gamma\beta}{G\hat{\beta}\eta}(1-\hat{\beta}\eta)^2\sum_{t=0}^{K}\sum_{i\in\cG}\<(g_i^{t} - v_i^{t}) + \beta(v_i^{t} - \nabla f_i(x^{t})) + \beta(\nabla f_i(x^{t})-\nabla f_i(x^{t+1})), \theta^{t+1}_i>\notag\\
    &\quad + \frac{16\gamma\beta^2}{G\hat{\beta}^2\eta^2}(1-\beta)\sum_{t=0}^{K}\sum_{i\in\cG}\<v_i^t - \nabla f_i(x^{t}), \theta^{t+1}_i>
    + 4\gamma(1-\beta)\sum_{t=0}^{K}\<v^t - \nabla f(x^{t}), \theta^{t+1}>\notag\\
    &\quad + \frac{16\gamma\beta^2}{G\eta^2}(1-\beta)\sum_{t=0}^{K}\sum_{i\in\cG}\<\nabla f_i(x^t) - \nabla f_i(x^{t+1}), \theta^{t+1}_i>\notag\\
    & \quad + 4\gamma(1-\beta)\sum_{t=0}^{K}\<\nabla f(x^t) - \nabla f(x^{t+1}), \theta^{t+1}>.\label{eq:njqnsfqknj}
    \end{align}
    Rearranging terms and using step-size restriction $(b)$, we get 
    \begin{align*}
    &\frac{\gamma}{2}\sum_{t=0}^{K}\|\nabla f(x^t)\|^2
    \le \Phi^0 - \Phi^{K+1}
    + K b^2\left(\frac{2\beta^2\gamma}{\hat{\beta}\eta} + \frac{8\gamma\beta^3}{\hat{\beta}^2\eta^2}\right)
    + K\hc^2\frac{2\gamma\beta}{G}\\
    &\quad + \frac{\beta}{6L}\hbeta^2a^2 K 
    + \frac{4\beta}{3L}\hbeta^2c\deltabyz\ha^2K
    + \frac{4\beta}{3L}\hbeta^2\tau^2c\deltabyz T^2K \\
    & \quad + \frac{4\gamma\beta}{G\hat{\beta}\eta}(1-\hat{\beta}\eta)^2\sum_{t=0}^{K}\sum_{i\in\cG}\<(g_i^{t} - v_i^{t}) + \beta(v_i^{t} - \nabla f_i(x^{t})) + \beta(\nabla f_i(x^{t})-\nabla f_i(x^{t+1})), \theta^{t+1}_i>\\
    &\quad + \frac{16\gamma\beta^2}{G\hat{\beta}^2\eta^2}(1-\beta)\sum_{t=0}^{K}\sum_{i\in\cG}\<v_i^t - \nabla f_i(x^{t}), \theta^{t+1}_i>
    + 4\gamma(1-\beta)\sum_{t=0}^{K}\<v^t - \nabla f(x^{t}), \theta^{t+1}>\\
    &\quad + \frac{16\gamma\beta^2}{G\hat{\beta}^2\eta^2}(1-\beta)\sum_{t=0}^{K}\sum_{i\in\cG}\<\nabla f_i(x^t) - \nabla f_i(x^{t+1}), \theta^{t+1}_i>\\
    & \quad + 4\gamma(1-\beta)\sum_{t=0}^{K}\<\nabla f(x^t) - \nabla f(x^{t+1}), \theta^{t+1}>
    .
    \end{align*}
    Taking into account that $\frac{\gamma}{2}\sum_{t=0}^{K}\|\nabla f(x^t)\|^2 \ge 0$ and using momentum restriction (a), we get that the event $E^K\cap \left(\cap_{i\in\cG}\overline{\Theta}^{K+1}_i\right)\cap\overline{N}^{K+1} \cap \overline{\Theta}^{K+1}\cap\left(\cap_{i\in\cG}\overline{N}^{K+1}_i\right)$ implies 
    \begin{align*}
    &\Phi^{K+1}
    \le \Phi^0 
    + K b^2\left(\frac{2\beta^2\gamma}{\hat{\beta}\eta} + \frac{8\gamma\beta^3}{\hat{\beta}^2\eta^2}\right)
    + K\hc^2\frac{2\gamma\beta}{G}
    + \frac{\tau\hbeta^3 a^2 K}{324L\sqrt{L\Delta}}
    + \frac{2\tau \hbeta^3c\deltabyz\ha^2K}{81L\sqrt{L\Delta}}
    + \frac{2\tau^3\hbeta^3c\deltabyz T^2K}{81L\sqrt{L\Delta}}
    \\
    & \quad + \frac{4\gamma\beta}{G\hat{\beta}\eta}(1-\hat{\beta}\eta)^2\sum_{t=0}^{K}\sum_{i\in\cG}\<(g_i^{t} - v_i^{t}) + \beta(v_i^{t} - \nabla f_i(x^{t})) + \beta(\nabla f_i(x^{t})-\nabla f_i(x^{t+1})), \theta^{t+1}_i>\\
    &\quad + \frac{16\gamma\beta^2}{G\hat{\beta}^2\eta^2}(1-\beta)\sum_{t=0}^{K}\sum_{i\in\cG}\<v_i^t - \nabla f_i(x^{t}), \theta^{t+1}_i>
    + \frac{4\gamma(1-\beta)}{G}\sum_{t=0}^{K}\sum_{i\in\cG}\<v^t - \nabla f(x^{t}), \theta^{t+1}_i>\\
    &\quad + \frac{16\gamma\beta^2}{G\hat{\beta}^2\eta^2}(1-\beta)\sum_{t=0}^{K}\sum_{i\in\cG}\<\nabla f_i(x^t) - \nabla f_i(x^{t+1}), \theta^{t+1}_i>\\
    &\quad + \frac{4\gamma(1-\beta)}{G}\sum_{t=0}^{K}\sum_{i\in\cG}\<\nabla f(x^t) - \nabla f(x^{t+1}), \theta^{t+1}_i>.
    \end{align*}
    Next, we define the following random vectors:
    \begin{align*}
    &\zeta_{1,i}^t \eqdef \begin{cases} 
    g_i^t - v_i^t, &\text{ if } \|g_i^t-v_i^t\| \le B_{\rm init}\\
    0, &\text{otherwise}
    \end{cases},\\
    &\zeta_{2,i}^t \eqdef \begin{cases} 
    v_i^t - \nabla f_i(x^t), &\text{ if } \|v_i^t - \nabla f_i(x^t)\| \le \sqrt{4L\Delta} + \frac{3}{2}(B_{\rm init}-\tau) + \frac{3}{2}b + \frac{3}{2}\sqrt{2c\deltabyz}\hbeta\tau T + \hbeta a\\
    &\hspace{10cm} +\; 4\hbeta\sqrt{2c\deltabyz}\ha \\
	0, &\text{otherwise}
    \end{cases}, \\
    &\zeta_{3,i}^t \eqdef \begin{cases} 
    \nabla f_i(x^t) - \nabla f_i(x^{t+1}), & \text{ if } \|\nabla f_i(x^t) - \nabla f_i(x^{t+1})\| \le L\gamma\left(\sqrt{64L\Delta} + 3(B_{\rm init}-\tau) + 3 b + 3\hbeta a\right.\\
    &\hspace{6.5cm}\left. +\;  3\sqrt{2c\deltabyz}\ha\hbeta + 
 5\sqrt{2c\deltabyz}\hbeta\tau T \right)\\
	0, &\text{otherwise}
    \end{cases}, \\
    &\zeta_{4}^t \eqdef \begin{cases} 
    v^t - \nabla f(x^{t}), &\text{ if } \|v^t - \nabla f(x^{t})\| \le \sqrt{4L\Delta} + \frac{3}{2}(B_{\rm init}-\tau) + \frac{3}{2}b + \frac{3}{2}\sqrt{2c\deltabyz}\hbeta\tau T + \hat{\beta}a\\
    & \hspace{10cm}+\; 4\hbeta\sqrt{2c\deltabyz}\ha\\
	0, &\text{otherwise}
    \end{cases}, \\
    &\zeta_{5}^t \eqdef \begin{cases} 
    \nabla f(x^t) - \nabla f(x^{t+1}), &\text{ if } \|\nabla f(x^t) - \nabla f(x^{t+1})\| \le L\gamma\left(\sqrt{64L\Delta} + 3(B_{\rm init}-\tau) + 3 b + 3\hbeta a\right.\\
    &\hspace{6.5cm}\left. +\;  3\sqrt{2c\deltabyz}\ha\hbeta + 
 5\sqrt{2c\deltabyz}\hbeta\tau T \right)\\
	0, &\text{otherwise}
    \end{cases}.
    \end{align*}
    By definition, all introduced random vectors $\zeta_{l,i}^t, l\in[3], i\in\cG, \zeta_{4,5}^t$ are bounded with probability $1$. Moreover, by the definition of $E^t$ we get that the event $E^K\cap \overline{\Theta}^{K+1}\cap \left(\cap_{i\in\cG}\overline{\Theta}^{K+1}_i\right) \cap \overline{N}^{K+1}\cap\left(\cap_{i\in\cG}\overline{N}^{K+1}_i\right)$ implies 
    \begin{align*}
    &\zeta_{1,i}^t = g_i^t-v_i^t,\quad  \zeta_{2,i}^t = v_i^t-\nabla f_i(x^t), \quad \zeta_{3,i}^t = \nabla f_i(x^t) - \nabla f_i(x^{t+1}),\\
    &\zeta_{4}^t = v^t-\nabla f(x^t),\quad \zeta_{5}^t = \nabla f(x^t) - \nabla f(x^{t+1}).
    \end{align*}
    Therefore, the event $E^K\cap \overline{\Theta}^{K+1} \cap \left(\cap_{i\in\cG}\overline{\Theta}^{K+1}_i\right) \cap \overline{N}^{K+1}\cap\left(\cap_{i\in\cG}\overline{N}^{K+1}_i\right)$ implies 
    \begin{align*}
    &\Phi^{K+1}
    \le \Phi^0 
    + \underbrace{K b^2\left(\frac{2\beta^2\gamma}{\hat{\beta}\eta} + \frac{8\gamma\beta^3}{\hat{\beta}^2\eta^2}\right) + K\hc^2\frac{2\gamma\beta}{G}}_{\circledOne}
    + \underbrace{\frac{4\gamma\beta}{G\hat{\beta}\eta}(1-\eta)^2\sum_{t=0}^{K}\sum_{i\in\cG}\<\zeta_{1,i}^t, \theta^{t+1}_i>}_{\circledTwo}\\
    &\;+\; \underbrace{\frac{4\gamma\beta^2}{G\hat{\beta}\eta}(1-\hat{\beta}\eta)^2\sum_{t=0}^{K}\sum_{i\in\cG}\<\zeta_{2,i}^t, \theta^{t+1}_i>}_{\circledThree}
    +  \underbrace{\frac{4\gamma\beta^2}{G\hat{\beta}\eta}(1-\hat{\beta}\eta)^2\sum_{t=0}^{K}\sum_{i\in\cG}\<\zeta_{3,i}^t, \theta^{t+1}_i>}_{\circledFour}\\
    &\;+\; \underbrace{\frac{16\gamma\beta^2}{G\hat{\beta}^2\eta^2}(1-\beta)\sum_{t=0}^{K}\sum_{i\in\cG}\<\zeta_{2,i}^t, \theta^{t+1}_i>}_{\circledFive}
    + \underbrace{\frac{4\gamma(1-\beta)}{G}\sum_{t=0}^{K}\sum_{i\in\cG}\<\zeta_{4}^t, \theta^{t+1}_i>}_{\circledSix}\\
    &\;+\; \underbrace{\frac{16\gamma\beta^2}{G\hat{\beta}^2\eta^2}(1-\beta)\sum_{t=0}^{K}\sum_{i\in\cG}\<\zeta_{3,i}^t, \theta^{t+1}_i>}_{\circledSeven}
    + \underbrace{\frac{4\gamma(1-\beta)}{G}\sum_{t=0}^{K}\sum_{i\in\cG}\<\zeta_{5}^t, \theta^{t+1}_i>}_{\circledEight}\\
    &\;+\; \underbrace{\frac{\tau\hbeta^3 a^2 K}{324L\sqrt{L\Delta}}}_{\circledNine}
    + \underbrace{
    \frac{2\hbeta^3c\deltabyz\ha^2 K}{81L\sqrt{L\Delta}} + \frac{2\tau^3\hbeta^3c\deltabyz T^2K}{81L\sqrt{L\Delta}}}_{\circledTen}.
    \end{align*}
    
    \subparagraph{Bound of the term $\circledOne$.} Since $12L\gamma \leq \beta$, for the term $\circledOne$ we have 
    \begin{align*}
        K b^2\left(\frac{2\beta^2\gamma}{\hat{\beta}\eta} + \frac{8\gamma\beta^3}{\hat{\beta}\eta^2}\right)
        + K\hc^2\frac{2\gamma\beta}{G}&
        \le K b^2\left(\frac{\beta^{3}}{6L\hat{\beta}\eta} + \frac{2\beta^4}{3L\hat{\beta}^2\eta^2}\right)
        + K\hc^2\frac{\beta^2}{6LG}.
    \end{align*}
    By choosing $\beta$ such that 
    \begin{equation}\label{eq:step-size_bound_1}
    \beta \le \min\left\{
    \left(\frac{L\Delta\hat{\beta}\eta}{12Tb^2}\right)^{1/3},
    \left(\frac{L\Delta\hat{\beta}^2\eta^2}{48Tb^2}\right)^{1/4},
    \left(\frac{L\Delta G}{12T\hc^2}\right)^{1/2}
    \right\}
    \end{equation}
    we get that 
    \[
        K b^2\left(\frac{2\beta^2\gamma}{\hat{\beta}\eta} + \frac{8\gamma\beta^3}{\hat{\beta}^2\eta^2}\right)
        + K\hc^2\frac{2\gamma\beta}{G} \le 3 \cdot \frac{\Delta}{72} = \frac{\Delta}{24}.
    \]
    This bound holds with probability $1.$ Note that the worst dependency in the restriction on $\beta$ w.r.t. $T$ is $\cO(\nicefrac{1}{T^{3/4}})$ since $\hat{\beta} \sim \frac{1}{a} \sim \frac{1}{T}$ that comes from the second term in \eqref{eq:step-size_bound_1}.

    \subparagraph{Bound of the term $\circledTwo$.}

    For term $\circledTwo$, let us enumerate random variables as 
    \[
    \<\zeta_{1,1}^0,\theta^1_1>, \dots, \<\zeta_{1,G}^0,\theta^1_G>, \<\zeta_{1,1}^1,\theta^2_1>,\dots, \<\zeta_{1,G}^1,\theta^2_G>, \dots 
    \<\zeta_{1,1}^K,\theta^{K+1}_1>,\dots,
    \<\zeta_{1,G}^K,\theta^{K+1}_G>,
    \]
    i.e., first by index $i$, then by index $t$. Then we have that the event $E^K\cap \left(\cap_{i\in\cG}\overline{\Theta}^{K+1}_i\right)$ implies 
    \[
    \E{\frac{4\gamma\beta}{G\hat{\beta}\eta}(1-\eta)^2\<\zeta^l_{1,i},\theta^{l+1}_i>\mid  \<\zeta_{1,i-1}^l,\theta^{l+1}_{i-1}>, \dots, \<\zeta_{1,1}^l,\theta^{l+1}_{1}>, \dots, \<\zeta_{1,1}^{0},\theta^{1}_{1}>} = 0,
    \]
    because $\{\theta^{l+1}_i\}_{i\in\cG}$ are independent. Let 
    \[
    \sigma_{2}^2 \eqdef \frac{16\gamma^2\beta^2}{G^2\hat{\beta}^2\eta^2}\cdot B_{\rm init}^2 \cdot \sigma^2.
    \]
    Since $\theta^{l+1}_i$ is $\sigma$-sub-Gaussian random vector, for $$\E{\cdot \mid l,i-1} \eqdef \E{ \cdot \mid  \<\zeta_{1,i-1}^l,\theta^{l+1}_{i-1}>, \dots, \<\zeta_{1,1}^l,\theta^{l+1}_{1}>, \dots, \<\zeta_{1,1}^{0},\theta^{1}_{1}>}$$ we have
    \begin{align*}
    &\E{\exp\left(\left|\frac{1}{\sigma_2^2}\frac{16\gamma^2\beta^2}{G^2\hat{\beta}^2\eta^2}(1-\eta)^4\<\zeta^l_{1,i},\theta^{l+1}_i>^2\right| \right) \mid l, i-1}\\
    &\le 
    \E{\exp\left(\frac{1}{\sigma^2_1}\frac{16\gamma^2\beta^2}{G^2\hat{\beta}^2\eta^2}\|\zeta_{1,i}^l\|^2 \cdot \|\theta^{l+1}_i\|^2 \right) \mid l,i-1}\\
    &\le \E{\exp\left(\frac{1}{\sigma_2^2}\frac{16\gamma^2\beta^2}{G^2\hat{\beta}^2\eta^2}\cdot B_{\rm init}^2\|\theta^{l+1}_i\|^2\right) \mid l, i-1} \\
    &\le \E{\exp\left(\frac{  G^2\hat{\beta}^2\eta^2}{16\gamma^2\beta^2\cdot B_{\rm init}^2 \cdot \sigma^2}\frac{16\gamma^2\beta^2}{G^2\hat{\beta}^2\eta^2}\cdot B_{\rm init}^2\|\theta^{l+1}_i\|^2\right) \mid l, i-1} \\  
    &= \E{\exp\left(\frac{\|\theta^{l+1}_i\|^2}{\sigma^2}\mid l, i-1\right)} \le \exp(1).
    \end{align*}
    Therefore, we have by \Cref{lem:concentration_lemma} with $\sigma_k^2 \equiv \sigma_2^2$ that 
    \begin{align*}
    &\Prob\left(\frac{4\gamma\beta}{G\hat{\beta}\eta}(1-\hat{\beta}\eta)^2\left\|\sum_{t=0}^K\sum_{i\in\cG}\<\zeta_{1,i}^t,\theta^{t+1}_i>\right\| 
    \ge (\sqrt{2}+\sqrt{2}b_1)\sqrt{\sum_{t=0}^K\sum_{i\in\cG}\frac{16B_{\rm init}^2\gamma^2\beta^2\sigma^2}{G^2\hat{\beta}^2\eta^2}}\right) \\
    & \quad \le \exp(-\nicefrac{b_1^2}{3}) \\
    &\quad = \frac{\alpha}{20(T+1)}
    \end{align*}
    with $b_1^2 = 3\log\left(\frac{20(T+1)}{\alpha}\right)$.  Note that since $12L\gamma\le \beta$
    \begin{align*}
    (\sqrt{2} + \sqrt{2}b_1)\sqrt{\sum_{t=0}^K\sum_{i\in\cG}\frac{16B_{\rm init}^2\gamma^2\beta^2\sigma^2}{G^2\hat{\beta}^2\eta^2}} 
    &\le (\sqrt{2} + \sqrt{2}b_1)\sqrt{\sum_{t=0}^K\sum_{i\in\cG}\frac{B_{\rm init}^2\beta^{4}\sigma^2}{9L^2G^2\hat{\beta}^2\eta^2}} \\
    &= (\sqrt{2} + \sqrt{2}b_1)\frac{B_{\rm init}\beta^{2} \sigma}{3LG\hat{\beta}\eta}\sqrt{(K+1)G}\\
    & \le \frac{\Delta}{8},
    \end{align*}
    because we choose $\beta$ such that 
    \begin{equation}\label{eq:step-size_bound_2}
        \beta \le \left(\frac{3L\Delta\sqrt{G}\hat{\beta}\eta}{8\sqrt{2}(1+b_1)B_{\rm init}\sigma\sqrt{T}}\right)^{1/2}, \quad \text{ and } \quad K+1 \le T.
    \end{equation}
    This implies that 
    \[
    \Prob\left(\frac{4\gamma\beta}{G\hat{\beta}\eta}(1-\hat{\beta}\eta)^2\left\|\sum_{t=0}^K\sum_{i\in\cG}\<\zeta_{1,i}^t,\theta^{t+1}_i>\right\| \ge \frac{\Delta}{8}\right) \le \frac{\alpha}{20(T+1)}
    \]
    with this choice of momentum parameter. The dependency of \eqref{eq:step-size_bound_2} on $T$ is $\wtilde\cO(\nicefrac{1}{T^{3/4}})$ since $\hat{\beta} \sim \frac{1}{T}.$

    \subparagraph{Bound of the term $\circledThree$.} The bound in this case is similar to the previous one. Let 
    \[
    \sigma_3^2 \eqdef \frac{16\gamma^2\beta^4}{G^2\hat{\beta}^2\eta^2}\cdot \left(\sqrt{4L\Delta} + \frac{3}{2}(B_{\rm init}-\tau) + \frac{3}{2}\sqrt{2c\deltabyz}\hbeta\tau T + \frac{3}{2}b +  \hat{\beta}a + 4\hbeta\sqrt{2c\deltabyz}\ha\right)^2\cdot \sigma^2.
    \]
    
    Then,
    \begin{align*}
    &\E{\exp\left(\left|\frac{1}{\sigma_3^2}\frac{16\gamma^2\beta^4}{  G^2\hat{\beta}^2\eta^2}(1-\hat{\beta}\eta)^4\<\zeta^l_{2,i},\theta^{l+1}_i>^2\right|\right) \mid l,i-1} \\
    &\le 
    \E{\exp\left(\frac{1}{\sigma_3^2}\frac{16\gamma^2\beta^4}{G^2\hat{\beta}^2\eta^2}\|\zeta_{2,i}^l\|^2 \cdot \|\theta^{l+1}_i\|^2\right)}\\
    &\le \EE\left[\exp\left(\frac{1}{\sigma^3_2}\frac{16\gamma^2\beta^4}{G^2\hat{\beta}^2\eta^2}\cdot \left(\sqrt{4L\Delta} + \frac{3}{2}(B_{\rm init}-\tau) + \frac{3}{2}b + \frac{3}{2}\sqrt{2c\deltabyz}\hbeta\tau T + \frac{3}{2}\sqrt{2c\deltabyz}\hbeta\tau T + \hat{\beta}a\right.\right.\right. &\\
    &\hspace{10cm} +\; 4\hbeta\sqrt{2c\deltabyz}\ha\bigg)^2\cdot \|\theta_i^{l+1}\|^2\bigg)\mid l,i-1\bigg]\\
    &\le \mathbb{E}\left[\exp\left(\left[\frac{16\gamma^2\beta^4}{G^2\hat{\beta}^2\eta^2}\cdot \left(\sqrt{4L\Delta} + \frac{3}{2}(B_{\rm init}-\tau) + \frac{3}{2}b + \frac{3}{2}\sqrt{2c\deltabyz}\hbeta\tau T + \hat{\beta}a + 4\hbeta\sqrt{2c\deltabyz}\ha\right)^2\cdot \sigma^2\right]^{-1}\times\right.\right.\\
    &\qquad \left.\left.\frac{16\gamma^2\beta^4}{G^2\hat{\beta}^2\eta^2}\cdot \left(\sqrt{4L\Delta} + \frac{3}{2}(B_{\rm init}-\tau) + \frac{3}{2}b + \frac{3}{2}\sqrt{2c\deltabyz}\hbeta\tau T + \hat{\beta}a + 4\hbeta\sqrt{2c\deltabyz}\ha\right)^2\cdot \|\theta_i^{l+1}\|^2\right)\mid l,i-1\right]\\
    &= \E{\exp\left(\frac{\|\theta^{l+1}_i\|^2}{\sigma^2}\right)\mid l,i-1} \le \exp(1).
    \end{align*}
    Therefore, we have by \Cref{lem:concentration_lemma} that 
    \begin{align*}
    &\Prob\left[\frac{4\gamma\beta^2}{G\hat{\beta}\eta}(1-\hat{\beta}\eta)^2\left\|\sum_{t=0}^K\sum_{i\in\cG}\<\zeta_{2,i}^t,\theta^{t+1}_i>\right\| \right.\\
    &\ge\;  \left.(\sqrt{2}+\sqrt{2}b_1)\sqrt{\sum_{t=0}^K\sum_{i\in\cG} \frac{16\gamma^2\beta^4\sigma^2}{G^2\hat{\beta}^2\eta^2} \left(\sqrt{4L\Delta}+ \frac{3}{2}(B_{\rm init}-\tau) + \frac{3}{2}b + \frac{3}{2}\sqrt{2c\deltabyz}\hbeta\tau T + \hat{\beta}a + 4\hbeta\sqrt{2c\deltabyz}\ha\right)^2}\right]\\ 
    &\le \exp(-\nicefrac{b_1^2}{3}) = \frac{\alpha}{20(T+1)}.
    \end{align*}
    Note that by using the restrictions $\hat{\beta} \le \min\left\{\frac{\sqrt{L\Delta}}{a}, \frac{\sqrt{L\Delta}}{4\sqrt{2c\deltabyz}\ha}\right\}$ and $12L\gamma \le \beta$ we get
    \begin{align*}
    & (\sqrt{2}+\sqrt{2}b_1)\sqrt{(K+1)G}\frac{4\gamma\beta^2\sigma}{\hat{\beta}\eta G}\left(\sqrt{4L\Delta}+ \frac{3}{2}(B_{\rm init}-\tau) + \frac{3}{2}b + \frac{3}{2}\sqrt{2c\deltabyz}\hbeta\tau T + \hat{\beta}a + 4\hbeta\sqrt{2c\deltabyz}\ha\right)\\
    \le\; & (\sqrt{2}+\sqrt{2}b_1)\sqrt{(K+1)G}\frac{\beta^{3}\sigma}{3L\hat{\beta}\eta G}\left(\sqrt{4L\Delta}+ \frac{3}{2}(B_{\rm init}-\tau) + \frac{3}{2}b + \frac{3}{2}\sqrt{2c\deltabyz}\hbeta\tau T + 2\sqrt{L\Delta}\right)\\
    =\; & (\sqrt{2}+\sqrt{2}b_1)\sqrt{(K+1)G}\frac{\beta^{3}\sigma}{3L\hat{\beta}\eta G}\left(4\sqrt{L\Delta} + \frac{3}{2}(B_{\rm init}-\tau) + \frac{3}{2}b \right) + \\
    &\hspace{9cm}  (1+b_1)\sqrt{(K+1)G}\frac{\beta^{3}\sigma}{L\eta G}\sqrt{c\deltabyz}\tau T\\
    \le\;& \frac{\Delta}{16} + \frac{\Delta}{16} = \frac{\Delta}{8} 
    \end{align*}
    holds because we choose 
    \begin{align}\label{eq:step-size_bound_3}
    \beta &\le \min\left\{\left(\frac{3L\Delta\hat{\beta}\eta \sqrt{G}}{16\sqrt{2}(1+b_1)\sigma\sqrt{T}\left(4\sqrt{L\Delta} + \frac{3}{2}(B_{\rm init}-\tau) + \frac{3}{2}b \right)}\right)^{1/3}\right.,\notag\\
    &\hspace{8.5cm} \left. \left(\frac{L\Delta\eta \sqrt{G}}{16(1+b_1)\sqrt{c\deltabyz}\sigma\tau T^{3/2} }\right)^{1/3}\right\},\\
    &\quad \text{ and } \quad K+1 \le T.\notag
    \end{align}
    This implies 
    \begin{align*}
    &\Prob\left(\frac{4\gamma\beta^2}{G\hat{\beta}\eta}(1-\hat{\beta}\eta)^2\left\|\sum_{t=0}^K\sum_{i\in\cG}\<\zeta_{2,i}^t,\theta^{t+1}_i>\right\| \ge \frac{\Delta}{8}\right) \le \frac{\alpha}{20(T+1)}.
    \end{align*}
    Note that both terms in the choice of $\beta$ have the dependency w.r.t. $T$ of order $\wtilde\cO(\nicefrac{1}{T^{1/2}})$ since $\hat{\beta} \sim \frac{1}{T}.$
    \subparagraph{Bound of the term $\circledFour$.} The bound in this case is similar to the previous one. Let
    \[
    \sigma_4^2 \eqdef \frac{16L^2\gamma^4\beta^4}{G^2\hat{\beta}^2\eta^2}\left(\sqrt{64L\Delta} + 3(B_{\rm init}-\tau) + 3b + 3\hbeta a  + 3\hbeta\sqrt{2c\deltabyz}\ha + 5\sqrt{2c\deltabyz}\hbeta\tau T\right)^2\cdot\sigma^2.
    \]
    Then we have
    \begin{align*}
    &\E{\exp\left(\left|\frac{1}{\sigma_4^2}\frac{16\gamma^2\beta^4}{G^2\hat{\beta}^2\eta^2}(1-\hat{\beta}\eta)^4\<\zeta^l_{3,i},\theta^{l+1}_i>^2\right| \right)\mid l,i-1}\\
    &\le 
    \E{\exp\left(\frac{1}{\sigma_4^2}\frac{16\gamma^2\beta^4}{G^2\hat{\beta}^2\eta^2}\|\zeta_{3,i}^l\|^2 \cdot \|\theta^{l+1}_i\|^2\right)\mid l,i-1}\\
    &\le \EE\left[\exp\left(\frac{1}{\sigma_4^2}\frac{16\gamma^2\beta^4}{G^2\hat{\beta}^2\eta^2}\cdot L^2\gamma^2\left(\sqrt{64L\Delta} + 3(B_{\rm init}-\tau) + 3b + 3\hbeta a  + 3\hbeta\sqrt{2c\deltabyz}\ha\right.\right.\right.\\
    &\hspace{8.5cm} \left.\left.\left.+\; 5\sqrt{2c\deltabyz}\hbeta\tau T\right)^2\cdot \|\theta^{l+1}_i\|^2\right)\mid l,i-1\right]\\
    &\le \mathbb{E}\left[\exp\left(\left[\frac{16L^2\gamma^4\beta^4}{G^2\hat{\beta}^2\eta^2}\left(\sqrt{64L\Delta} + 3(B_{\rm init}-\tau) + 3b + 3\hat{\beta}a + 3\hbeta\sqrt{2c\deltabyz}\ha + 5\sqrt{2c\deltabyz}\hbeta\tau T\right)^2\cdot\sigma^2\right]^{-1}\times\right.\right.\\
    &\qquad \left.\left.\frac{16L^2\gamma^4\beta^4}{G^2\hat{\beta}^2\eta^2}\left(\sqrt{64L\Delta} + 3(B_{\rm init}-\tau) + 3b + 3\hat{\beta}a + 3\hbeta\sqrt{2c\deltabyz}\ha + 5\sqrt{2c\deltabyz}\hbeta\tau T\right)^2\cdot \|\theta^{l+1}_i\|^2\right)\mid l,i-1\right]\\
    &= \E{\exp\left(\frac{\|\theta^{l+1}_i\|^2}{\sigma^2}\right)} \le \exp(1).
    \end{align*}
    Therefore, we have by \Cref{lem:concentration_lemma} that 
    \begin{align*}
    &\Prob\left(\frac{4\gamma\beta^2}{G\hat{\beta}\eta}(1-\hat{\beta}\eta)^2\left\|\sum_{t=0}^K\sum_{i\in\cG}\<\zeta_{3,i}^t,\theta^{t+1}_i>\right\| \right.\\
    &\ge \left.\sqrt{2}(1+b_1)\sqrt{\sum_{t=0}^K\sum_{i\in\cG} \frac{16L^2\gamma^4\beta^{4}\sigma^2}{G^2\hat{\beta}^2\eta^2} \left(\sqrt{64L\Delta} + 3(B_{\rm init}-\tau + b) + 3\hbeta a + 3\hbeta\sqrt{2c\deltabyz}\ha + 5\sqrt{2c\deltabyz}\hbeta\tau T\right)^2}\right)\\ 
    &\le \exp(-\nicefrac{b_1^2}{3}) = \frac{\alpha}{20(T+1)}.
    \end{align*}
    Using the restrictions $\hat{\beta} \le \min\left\{\frac{\sqrt{L\Delta}}{a},\frac{\sqrt{L\Delta}}{4\sqrt{2c\deltabyz}\ha}\right\}$ and $12L\gamma\le \beta$ we get
    \begin{align*}
    &\sqrt{2}(1+b_1)\sqrt{(K+1)G}\frac{4L\gamma^2\beta^2\sigma}{\hat{\beta}\eta G}\left(\sqrt{64L\Delta} + 3(B_{\rm init}-\tau + b) + 3\hat{\beta}a + 3\hbeta\sqrt{2c\deltabyz}\ha + 5\sqrt{2c\deltabyz}\hbeta\tau T\right)\\
    \le\; & 
    \sqrt{2}(1+b_1)\sqrt{(K+1)G}\frac{\beta^{4}\sigma}{36L\hat{\beta}\eta G}\left(\sqrt{64L\Delta} + 3(B_{\rm init}-\tau+b) + 4\sqrt{L\Delta} + 5\sqrt{2c\deltabyz}\hbeta\tau T\right)\\ 
    =\; & 
    \sqrt{2}(1+b_1)\sqrt{(K+1)G}\frac{\beta^{4}\sigma}{36L\hat{\beta}\eta G}\left(12\sqrt{L\Delta} + 3(B_{\rm init}-\tau+b)\right) + \\ 
    &  \hspace{8.5cm} (1+b_1)\sqrt{(K+1)G}\frac{5\beta^{4}\sigma}{18L\eta G} \sqrt{c\deltabyz}\tau T\\
    \le\; & \frac{\Delta}{16} + \frac{\Delta}{16} = \frac{\Delta}{8},
    \end{align*}
    because we choose $\beta$ such that
    \begin{align}\label{eq:step-size_bound_4}
    \beta &\le \min\left\{\left(\frac{9L\Delta\hbeta\eta\sqrt{G}}{16\sqrt{2}(1+b_1)\sigma\sqrt{T}\left(12\sqrt{L\Delta} + 3(B_{\rm init}-\tau+b)\right)}\right)^{1/4},\right.\notag\\
    &\hspace{9cm} \left. \left(\frac{9L\Delta\eta\sqrt{G}}{80(1+b_1)\sigma\sqrt{c\deltabyz}T^{3/2}\tau}\right)^{1/4}\right\},\\
    & \text{and} \quad K+1\le T.\notag
    \end{align}
    This implies 
    \begin{align*}
    &\Prob\left(\frac{4\gamma\beta^2}{G\hat{\beta}\eta}(1-\hat{\beta}\eta)^2\left\|\sum_{t=0}^K\sum_{i\in\cG}\<\zeta_{2,i}^t,\theta^{t+1}_i>\right\| \ge \frac{\Delta}{8}\right) \le \frac{\alpha}{20(T+1)},
    \end{align*}
    Note that both terms in the choice of $\beta$ have the dependency w.r.t. $T$ of order $\wtilde\cO(\nicefrac{1}{T^{3/8}})$ since $\hat{\beta} \sim \frac{1}{T}.$

    \subparagraph{Bound of the term $\circledFive$.} The bound in this case is similar to the previous one. 
    Let 
    \[
    \sigma_5^2 \eqdef \frac{256\gamma^2\beta^4}{G^2\hat{\beta}^4\eta^4}\cdot \left(\sqrt{4L\Delta} + \frac{3}{2}(B_{\rm init}-\tau)+\frac{3}{2}b + \hat{\beta}a + 4\hbeta\sqrt{2c\deltabyz}\ha + \frac{3}{2}\sqrt{2c\deltabyz}\hbeta\tau T\right)^2\cdot \sigma^2.
    \]
    Then we have 
    \begin{align*}
    &\E{\exp\left(\left|\frac{1}{\sigma_5^2}\frac{256\gamma^2\beta^4}{G^2\hat{\beta}^4\eta^4}(1-\beta)^2\<\zeta^l_{2,i},\theta^{l+1}_i>^2\right|\right)\mid l,i-1} \\
    &\le 
    \E{\exp\left(\frac{1}{\sigma^2_5}\frac{256\gamma^2\beta^4}{G^2\hat{\beta}^4\eta^4}\|\zeta_{2,i}^l\|^2 \cdot \|\theta^{l+1}_i\|^2\right)\mid l,i-1}\\
    &\le \EE\left[\exp\left(\frac{1}{\sigma_5^2}\frac{256\gamma^2 \beta^4}{G^2\hat{\beta}^4\eta^4}\cdot \left(\sqrt{4L\Delta} + \frac{3}{2}(B_{\rm init}-\tau + b)+\hat{\beta}a + 4\hbeta\sqrt{2c\deltabyz}\ha\; + \right.\right.\right.\\
    & \hspace{8.5cm}\left.\left.\left.\frac{3}{2}\sqrt{2c\deltabyz}\hbeta\tau T\right)^2\cdot \|\theta^{l+1}_i\|^2\right) \mid l,i-1\right]\\
    &= \mathbb{E}\left[\exp\left( \left[\frac{256\gamma^2\beta^4}{L^2G^2\hat{\beta}^4\eta^4}\cdot \left(\sqrt{4L\Delta} + \frac{3}{2}(B_{\rm init}-\tau + b) +\hat{\beta}a + 4\hbeta\sqrt{2c\deltabyz}\ha + \frac{3}{2}\sqrt{2c\deltabyz}\hbeta\tau T\right)^2\cdot \sigma^2\right]^{-1}\right.\right.\\
    &\qquad \left.\left. \frac{256\gamma^2\beta^4}{G^2\hat{\beta}^4\eta^4}\cdot \left(\sqrt{4L\Delta} + \frac{3}{2}(B_{\rm init}-\tau)+\frac{3}{2}b+\hat{\beta}a + 4\hbeta\sqrt{2c\deltabyz}\ha + \frac{3}{2}\sqrt{2c\deltabyz}\hbeta\tau T\right)^2\cdot \|\theta^{l+1}_i\|^2\right) \mid l,i-1\right]\\
    &= \E{\exp\left(\frac{\|\theta^{l+1}_i\|^2}{\sigma^2}\right) \mid l,i-1} \le \exp(1).
    \end{align*}
    Therefore, we have by \Cref{lem:concentration_lemma} that 
    \begin{align*}
    &\Prob\left[\frac{16\gamma\beta^2}{G\hat{\beta}^2\eta^2}(1-\beta)\left\|\sum_{t=0}^K\sum_{i\in\cG}\<\zeta_{2,i}^t,\theta^{t+1}_i>\right\|\right.\\ 
    &\ge \left. (\sqrt{2}+\sqrt{2}b_1)\sqrt{\sum_{t=0}^K\sum_{i\in\cG} \frac{256\gamma^2\beta^4\sigma^2}{G^2\hat{\beta}^4\eta^4}\left(\sqrt{4L\Delta} + \frac{3}{2}(B_{\rm init}-\tau+b)+\hat{\beta}a + 4\hbeta\sqrt{2c\deltabyz}\ha + \frac{3}{2}\sqrt{2c\deltabyz}\hbeta\tau T\right)^2}\right]\\ 
    &\le \exp(-\nicefrac{b_1^2}{3}) = \frac{\alpha}{20(T+1)}.
    \end{align*}
    Using the restrictions $12L\gamma \le \beta$ and $\hat{\beta} \le\left\{\frac{\sqrt{L\Delta}}{a}, \frac{\sqrt{L\Delta}}{4\sqrt{2c\deltabyz}\ha}\right\}$ we get 
    \begin{align*}
    &\sqrt{2}(1+b_1)\sqrt{(K+1)G}\frac{16\gamma\beta^2\sigma}{G\hat{\beta}^2\eta^2}\left(\sqrt{4L\Delta} + \frac{3}{2}(B_{\rm init}-\tau)+\frac{3}{2}b+\hat{\beta}a + 4\hbeta\sqrt{2c\deltabyz}\ha + \frac{3}{2}\sqrt{2c\deltabyz}\hbeta\tau T\right)\\
    \le\; &\sqrt{2}(1+b_1)\sqrt{(K+1)G}\frac{4\beta^{3}\sigma}{3LG\hat{\beta}^2\eta^2}\left(\sqrt{4L\Delta} + \frac{3}{2}(B_{\rm init}-\tau)+\frac{3}{2}b + \sqrt{L\Delta} + \sqrt{L\Delta} + \frac{3}{2}\sqrt{2c\deltabyz}\hbeta\tau T\right)\\
    =\; &\sqrt{2}(1+b_1)\sqrt{(K+1)G}\frac{4\beta^{3}\sigma}{3LG\hat{\beta}^2\eta^2}\left(4\sqrt{L\Delta} + \frac{3}{2}(B_{\rm init}-\tau + b)\right) + \\
    &\hspace{9cm}(1+b_1)\sqrt{(K+1)G}\frac{4\beta^{3}\sigma}{3LG\hat{\beta}\eta^2}\sqrt{c\deltabyz}\tau T\\
    \le\; &\frac{\Delta}{16} + \frac{\Delta}{16} = \frac{\Delta}{8}
    \end{align*}
    because we choose $\beta$ such that 
    \begin{align}\label{eq:step-size_bound_5}
        \beta &\le \min \left\{\left(\frac{3L\Delta\hat{\beta}^2\eta^2\sqrt{G}}{64\sqrt{2}(1+b_1)\sigma\sqrt{T}\left(4\sqrt{L\Delta} + \frac{3}{2}(B_{\rm init}-\tau+b)\right)}\right)^{1/3},\left(\frac{3L\Delta\hbeta\eta^2\sqrt{G}}{32(1+b_1)\tau T^{3/2}\sqrt{c\deltabyz}}\right)^{1/3}  \right\},\\
        & \text{and } K+1 \le T.\notag
    \end{align}
    This implies 
    \begin{align*}
    &\Prob\left(\frac{16\gamma\beta^2}{G\hat{\beta}^2\eta^2}(1-\hat{\beta}\beta)\left\|\sum_{t=0}^K\sum_{i\in\cG}\<\zeta_{2,i}^t,\theta^{t+1}_i>\right\| \ge \frac{\Delta}{8}\right) \le \frac{\alpha}{20(T+1)}.
    \end{align*}
    Note that both terms in the choice of $\beta$ have the dependency w.r.t. $T$ of order $\wtilde\cO(\nicefrac{1}{T^{5/6}})$ since $\hat{\beta} \sim \frac{1}{T}.$
    
     \subparagraph{Bound of the term $\circledSeven$.} The bound in this case is similar to the previous one. Let
     \[
        \sigma_7^2 \eqdef \frac{256L^2\gamma^4\beta^{4}}{G^2\hat{\beta}^4\eta^4} \left(\sqrt{64L\Delta} +3(B_{\rm init}-\tau+b)+3\hat{\beta}a + 3\sqrt{2c\deltabyz}\hbeta\ha + 5\sqrt{2c\deltabyz}\hbeta\tau T\right)^2\cdot \sigma^2.
     \]
     Then we have 
    \begin{align*}
    &\E{\exp\left(\left|\frac{1}{\sigma_7^2}\frac{256L^2\gamma^4\beta^4}{G^2\hat{\beta}^4\eta^4}(1-\beta)^2\<\zeta^l_{3,i},\theta^{l+1}_i>^2\right|\right) \mid l,i-1}\\
    &\le 
    \E{\exp\left(\frac{1}{\sigma_7^2}\frac{256\gamma^2\beta^4}{G^2\hat{\beta}^4\eta^4}\|\zeta_{3,i}^l\|^2 \cdot \|\theta^{l+1}_i\|^2\right)\mid l,i-1}\\
    &\le \EE\left[\exp\left(\frac{256\gamma^2\beta^4}{G^2\hat{\beta}^4\eta^4}\cdot L^2\gamma^2\left(\sqrt{64L\Delta} +3(B_{\rm init}-\tau+b)+3\hat{\beta}a + 3\sqrt{2c\deltabyz}\ha\hbeta \;+ \right.\right.\right.\\
    &\hspace{9cm}\left. \left.\left.4\sqrt{2c\deltabyz}\hbeta\tau T\right)^2\cdot \|\theta^{l+1}_i\|^2\right)\mid l,i-1\right]\\
    &\le \mathbb{E}\left[\exp\left(\left[\frac{256L^2\gamma^4\beta^{4}}{G^2\hat{\beta}^4\eta^4}\left(\sqrt{64L\Delta} +3(B_{\rm init}-\tau+b)+3\hat{\beta}a + 3\sqrt{2c\deltabyz}\ha\hbeta + 4\sqrt{2c\deltabyz}\hbeta\tau T\right)^2\cdot \sigma^2\right]^{-1}\times \right.\right. \\
    & \quad \left.\left.\frac{256L^2\gamma^4\beta^{4}}{G^2\hat{\beta}^4\eta^4}\left(\sqrt{64L\Delta} +3(B_{\rm init}-\tau+b)+3\hat{\beta}a + 3\sqrt{2c\deltabyz}\ha\hbeta + 4\sqrt{2c\deltabyz}\hbeta\tau T\right)^2\cdot \|\theta^{l+1}_i\|^2 \right) \mid l,i-1\right]\\
    &= \E{\exp\left(\frac{\|\theta^{l+1}_i\|^2}{\sigma^2}\right) \mid l,i-1} \le \exp(1).
    \end{align*}
    Therefore, we have by \Cref{lem:concentration_lemma} that 
    \begin{align*}
    &\Prob\left[\frac{16\gamma\beta^2}{G\hat{\beta}^2\eta^2}(1-\beta)\left\|\sum_{t=0}^K\sum_{i\in\cG}\<\zeta_{3,i}^t,\theta^{t+1}_i>\right\| \ge\right.\\ 
    & \left.\sqrt{2}(1+b_1)\sqrt{\sum_{t=0}^K\sum_{i\in\cG} \frac{256L^2\gamma^4\beta^{4}\sigma^2}{G^2\hat{\beta}^4\eta^4}\cdot \left(\sqrt{64L\Delta} + 3(B_{\rm init}-\tau + b) + 3\hat{\beta}a + 3\sqrt{2c\deltabyz}\ha\hbeta + 4\sqrt{2c\deltabyz}\hbeta\tau T\right)^2}\right]\\ 
    &\le \exp(-\nicefrac{b_1^2}{3}) = \frac{\alpha}{20(T+1)}.
    \end{align*}
    Using the restrictions $12L\gamma\le\beta$ and $\hat{\beta} \le \min\left\{\frac{\sqrt{L\Delta}}{a}, \frac{\sqrt{L\Delta}}{4\sqrt{2c\deltabyz}\ha}\right\}$ we get
    \begin{align*}
    &\sqrt{2}(1+b_1)\sqrt{(K+1)G}\frac{16L\gamma^2\beta^2\sigma}{\hat{\beta}^2\eta^2G}\left(\sqrt{64L\Delta} + 3(B_{\rm init}-\tau + b)+3\hat{\beta}a + 3\sqrt{2c\deltabyz}\ha\hbeta + 4\sqrt{2c\deltabyz}\hbeta\tau T\right)\\
    \le &\sqrt{2}(1+b_1)\sqrt{(K+1)G}\frac{\beta^{4}\sigma}{9L\hat{\beta}^2\eta^2G}\left(8\sqrt{L\Delta} + 3(B_{\rm init}-\tau +b)+ 3\sqrt{L\Delta} + \frac{3}{4}\sqrt{L\Delta} + 4\sqrt{2c\deltabyz}\hbeta\tau T\right) +\\
    &\hspace{8cm} \sqrt{2}(1+b_1)\sqrt{(K+1)G}\frac{\beta^{4}\sigma}{9L\hat{\beta}^2\eta^2G}\left( 4\sqrt{2c\deltabyz}\hbeta\tau T\right)\\
    &= \sqrt{2}(1+b_1)\sqrt{(K+1)G}\frac{\beta^{4}\sigma}{9L\hat{\beta}^2\eta^2G}\left(12\sqrt{L\Delta} + 3(B_{\rm init}-\tau +b)\right)+ \\
    &\hspace{8cm}+ \sqrt{2}(1+b_1)\sqrt{(K+1)G}\frac{4\beta^{4}\sigma}{9L\hbeta\eta^2G}\sqrt{2c\deltabyz}\tau T\\
    \le\;& \frac{\Delta}{16} + \frac{\Delta}{16} = \frac{\Delta}{8}   
    \end{align*}
    because we choose
    \begin{align}\label{eq:step-size_bound_7}
    \beta &\le \min\left\{\left(\frac{9L\Delta\hat{\beta}^2\eta^2\sqrt{G}}{16\sqrt{2}(1+b_1)\sigma\sqrt{T}\left(12\sqrt{L\Delta} + 3(B_{\rm init}-\tau+b)\right)}\right)^{1/4},\left(\frac{9L\Delta\hbeta\eta^2\sqrt{G}}{128(1+b_1)\sqrt{c\deltabyz}\tau\sigma T^{3/2}}\right)^{1/4}\right\},\\
    &\text{and} \quad K+1 \le T.\notag
    \end{align}
    This implies 
    \begin{align*}
    &\Prob\left(\frac{8\gamma\beta^2}{G\eta^2}(1-\beta)\left\|\sum_{t=0}^K\sum_{i\in\cG}\<\zeta_{3,i}^t,\theta^{t+1}_i>\right\| \ge \frac{\Delta}{8}\right) \le \frac{\alpha}{20(T+1)}.
    \end{align*}
    Note that both terms in the choice of $\beta$ have the dependency w.r.t. $T$ of order $\wtilde\cO(\nicefrac{1}{T^{5/8}})$ since $\hat{\beta} \sim \frac{1}{T}.$

     \subparagraph{Bound of the term $\circledSix$.} The bound in this case is similar to the previous one. Let
     \[
        \sigma_6^2 \eqdef \frac{16\gamma^2}{G^2}\left(\sqrt{4L\Delta} +\frac{3}{2}(B_{\rm init}-\tau + b) + \hat{\beta}a + 4\hbeta\sqrt{2c\deltabyz}\ha + \frac{3}{2}\sqrt{2c\deltabyz}\hbeta\tau T\right)^2\cdot \sigma^2.
     \]
      
     Then we have 
    \begin{align*}
    &\E{\exp\left(\left|\frac{1}{\sigma_6^2}\frac{16\gamma^2}{G^2}(1-\beta)^2\<\zeta^l_{4},\theta^{l+1}_i>^2\right|\right)\mid l,i-1 }\\
    &\le 
    \E{\exp\left(\frac{1}{\sigma_6^2}\frac{16\gamma^2}{G^2}\|\zeta_{4}^l\|^2 \cdot \|\theta^{l+1}_i\|^2\right)\mid l,i-1}\\
    &\le \EE\left[\exp\left(\frac{1}{\sigma^2_6}\frac{16\gamma^2}{G^2}\left(\sqrt{4L\Delta} +\frac{3}{2}(B_{\rm init}-\tau + b) + \hat{\beta}a + 4\hbeta\sqrt{2c\deltabyz}\ha \;+ \right.\right.\right.\\
    &\hspace{9cm} \left.\left.\left. \frac{3}{2}\sqrt{2c\deltabyz}\hbeta\tau T)\right)^2\cdot \|\theta^{l+1}_i\|^2\right)\mid l,i-1\right]\\
    &\le \mathbb{E}\left[\exp\left(\left[\frac{16\gamma^2}{G^2}\left(\sqrt{4L\Delta} + \frac{3}{2}(B_{\rm init}-\tau + b) + \hat{\beta}a + 4\hbeta\sqrt{2c\deltabyz}\ha + \frac{3}{2}\sqrt{2c\deltabyz}\hbeta\tau T)\right)^2\cdot \sigma^2\right]^{-1} \right.\right.\\
    &\qquad\left.\left. \frac{16\gamma^2}{G^2}\left(\sqrt{4L\Delta} +\frac{3}{2}(B_{\rm init}-\tau) + \frac{3}{2}b + \hat{\beta}a + 4\hbeta\sqrt{2c\deltabyz}\ha + \frac{3}{2}\sqrt{2c\deltabyz}\hbeta\tau T)\right)^2\cdot\|\theta^{l+1}_i\|^2\right)\mid l,i-1\right]\\
    &= \E{\exp\left(\frac{\|\theta^{t+1}_i\|^2}{\sigma^2}\right)\mid l,i-1} \le \exp(1).
    \end{align*}
    Therefore, we have by \Cref{lem:concentration_lemma} that 
    \begin{align*}
    &\Prob\left[\frac{4\gamma(1-\beta)}{G}\left\|\sum_{t=0}^K\sum_{i\in\cG}\<\zeta_{4,i}^t,\theta^{t+1}_i>\right\|\right.\\ 
    &\ge \left. \sqrt{2}(1+b_1)\sqrt{\sum_{t=0}^K\sum_{i\in\cG} \frac{16\gamma^2}{G^2}\sigma^2\cdot \left(\sqrt{4L\Delta} +\frac{3}{2}(B_{\rm init}-\tau+b) + \hat{\beta} a + 4\hbeta\sqrt{2c\deltabyz}\ha + \frac{3}{2}\sqrt{2c\deltabyz}\hbeta\tau T)\right)^2}\right]\\ 
    &\le \exp(-\nicefrac{b_1^2}{3}) = \frac{\alpha}{20(T+1)},
    \end{align*}
    
    Using the restrictions $12L\gamma\le\beta$ and $\hat{\beta} \le \min\left\{\frac{\sqrt{L\Delta}}{a}, \frac{\sqrt{L\Delta}}{4\sqrt{2c\deltabyz}\ha} \right\}$ we get
    \begin{align*}
    &(\sqrt{2}+\sqrt{2}b_1)\sqrt{(K+1)G} \cdot \frac{4\gamma}{G}\sigma\left(\sqrt{4L\Delta} +\frac{3}{2}(B_{\rm init}-\tau+b) + \hat{\beta}a + 4\hbeta\sqrt{2c\deltabyz}\ha + \frac{3}{2}\sqrt{2c\deltabyz}\hbeta\tau T\right)\\
    \le\; & (\sqrt{2}+\sqrt{2}b_1)\sqrt{(K+1)G} \cdot \frac{\beta}{3LG}\sigma\left(\sqrt{4L\Delta} +\frac{3}{2}(B_{\rm init}-\tau+b) + \sqrt{L\Delta} + \sqrt{L\Delta} + \frac{3}{2}\sqrt{2c\deltabyz}\hbeta\tau T\right)\\
    =\; & (\sqrt{2}+\sqrt{2}b_1)\sqrt{(K+1)G} \cdot \frac{\beta}{3LG}\sigma\left(4\sqrt{L\Delta} +\frac{3}{2}(B_{\rm init}-\tau+b)\right) + \\
     & \hspace{7cm} (\sqrt{2}+\sqrt{2}b_1)\sqrt{(K+1)G} \cdot \frac{\beta}{LG}\sigma \sqrt{2c\deltabyz}\hbeta\tau T\\
     \le\; & \sqrt{2}(1+b_1)\sqrt{(K+1)} \cdot \frac{\beta}{3L\sqrt{G}}\sigma\left(4\sqrt{L\Delta} +\frac{3}{2}(B_{\rm init}-\tau+b)\right) + \\
     & \hspace{7cm} (1+b_1)\sqrt{(K+1)} \cdot \frac{4}{29\sqrt{c\deltabyz}TL\sqrt{G}}\sigma \sqrt{c\deltabyz}\hbeta\tau T\\
     \le\; & \sqrt{2}(1+b_1)\sqrt{(K+1)} \cdot \frac{\beta}{3L\sqrt{G}}\sigma\left(4\sqrt{L\Delta} +\frac{3}{2}(B_{\rm init}-\tau+b)\right) + (1+b_1)\sqrt{(K+1)} \frac{4}{29L\sqrt{G}}\sigma \hbeta\tau \\
    \le \;&\frac{\Delta}{16} + \frac{\Delta}{16} = \frac{\Delta}{8},
    \end{align*}
    where we use the restriction $\beta \le \frac{2}{29\sqrt{c\deltabyz}T}$ and choose $\beta$ and $\hbeta$ such that
    \begin{align}\label{eq:step-size_bound_6}
    \beta &\le \left(\frac{3L\Delta\sqrt{G}}{16\sqrt{2}(1+b_1)\sigma\sqrt{T}\left(4\sqrt{L\Delta} +\frac{3}{2}(B_{\rm init}-\tau+b)\right)}\right), \quad \hbeta \le \left(\frac{29L\Delta\sqrt{G}}{64(1+b_1)\sigma\sqrt{T}}\right),\\
    &\quad \text{and} \quad K+1\le T.\notag
    \end{align}
    This implies 
    \begin{align*}
    &\Prob\left(\frac{4\gamma(1-\beta)}{G}\left\|\sum_{t=0}^K\sum_{i\in\cG}\<\zeta_{4,i}^t,\theta^{t+1}_i>\right\| \ge \frac{\Delta}{8}\right) \le \frac{\alpha}{20(T+1)}.
    \end{align*}
    Note that the worst dependency in the choice of $\beta$ and $\hbeta$ w.r.t. $T$ is $\wtilde{\cO}(\nicefrac{1}{T^{1/2}})$.


    \subparagraph{Bound of the term $\circledEight$.} The bound in this case is similar to the previous one. Let
    \[
        \sigma^2_8 \eqdef \frac{16L^2\gamma^4}{G^2}\cdot \left(\sqrt{64L\Delta} +3(B_{\rm init}-\tau+b)+3\hat{\beta}a + 3\sqrt{2c\deltabyz}\hbeta\ha + 5\sqrt{2c\deltabyz}\hbeta\tau T\right)^2\cdot \sigma^2.
    \]
    Then we have 
    \begin{align*}
    &\E{\exp\left(\left|\frac{1}{\sigma^2_8}\frac{16\gamma^2}{G^2}(1-\beta)^2\<\zeta^l_{5},\theta^{l+1}_i>^2\right|\right)\mid l,i-1}\\
    &\le 
    \E{\exp\left(\frac{1}{\sigma^2_8}\frac{16\gamma^2}{G^2}\|\zeta_{5}^l\|^2 \cdot \|\theta^{l+1}_i\|^2\right)\mid l,i-1}\\
    &\le \EE\left[\exp\left(\frac{1}{\sigma^2_8}\frac{16\gamma^2}{G^2}L^2\gamma^2\left(\sqrt{64L\Delta} +3(B_{\rm init}-\tau+b)+3\hbeta a + 3\sqrt{2c\deltabyz}\hbeta\ha \;+ \right.\right.\right.\\
    &\hspace{10cm} \left.\left.\left. 5\sqrt{2c\deltabyz}\hbeta\tau T\right) \cdot \|\theta^{l+1}_i\|^2\right)^2\mid l,i-1\right].
    \end{align*}
    Since $\theta^{l+1}_i$ is sub-Gaussian with parameter $\sigma^2$, then we can continue the chain of inequalities above using the definition of $\sigma_8^2$
    \begin{align*}
        &\mathbb{E}\left[\exp\left(\left[\frac{16L^2\gamma^4}{G^2}\cdot \left(\sqrt{64L\Delta} +3(B_{\rm init}-\tau+b)+3\hat{\beta}a + 3\sqrt{2c\deltabyz}\hbeta\ha + 5\sqrt{2c\deltabyz}\hbeta\tau T\right)^2\cdot \sigma^2\right]^{-1}\cdot   \right.\right.\\
        &\qquad \left.\left. \frac{16L^2\gamma^4}{G^2}\cdot \left(\sqrt{64L\Delta} +3(B_{\rm init}-\tau+b)+3\hat{\beta}a + 3\sqrt{2c\deltabyz}\hbeta\ha + 5\sqrt{2c\deltabyz}\hbeta\tau T\right)^2\cdot\|\theta^{l+1}_i\|^2\right)\mid l,i-1\right]\\
        &= \E{\exp\left(\frac{\|\theta^{l+1}_i\|^2}{\sigma^2}\right)} \le \exp(1).
    \end{align*}
    Therefore, we have by \Cref{lem:concentration_lemma} that 
    \begin{align*}
    &\Pr\left[\frac{4\gamma(1-\beta)}{G}\left\|\sum_{t=0}^K\sum_{i\in\cG}\<\zeta_{5,i}^t,\theta^{t+1}>\right\|\right.\\ 
    &\ge \left. (\sqrt{2}+\sqrt{2}b_1)\sqrt{\sum_{t=0}^K\sum_{i\in\cG} \frac{16L^2\gamma^4}{G^2}\sigma^2 \left(\sqrt{64L\Delta} +3(B_{\rm init}-\tau+b)+3\hat{\beta}a + 3\sqrt{2c\deltabyz}\hbeta\ha + 5\sqrt{2c\deltabyz}\hbeta\tau T\right)^2}\right]\\ 
    &\le \exp(-\nicefrac{b_1^2}{3}) = \frac{\alpha}{20(T+1)}.
    \end{align*}
    Using the restrictions $12L\gamma\le \beta$ and $\hat{\beta}\le \min\left\{\frac{\sqrt{L\Delta}}{a}, \frac{\sqrt{L\Delta}}{4\sqrt{2c\deltabyz}\ha}\right\}$ we get
    \begin{align*}
    &(\sqrt{2}+\sqrt{2}b_1)\sqrt{(K+1)G} \cdot \frac{4L\gamma^2}{G}\sigma\left(\sqrt{64L\Delta} +3(B_{\rm init}-\tau+b)+3\hat{\beta}a + 3\sqrt{2c\deltabyz}\hbeta\ha + 5\sqrt{2c\deltabyz}\hbeta\tau T\right)\\
    \le\; & (\sqrt{2}+\sqrt{2}b_1)\sqrt{(K+1)G} \cdot \frac{\beta^2\sigma}{36LG}\left(8\sqrt{L\Delta} +3(B_{\rm init}-\tau)+3b + 3\sqrt{L\Delta} + \frac{3}{4}\sqrt{L\Delta} + 5\sqrt{2c\deltabyz}\hbeta\tau T\right)\\
    \le\; & \sqrt{2}(1+b_1)\sqrt{(K+1)G} \cdot \frac{\beta^2\sigma}{36LG}\left(12\sqrt{L\Delta} +3(B_{\rm init}-\tau+b) \right) + \\
    &\hspace{8cm}(1+b_1)\sqrt{(K+1)G} \cdot \frac{5\beta^2\sigma}{18LG}\sqrt{c\deltabyz}\hbeta\tau T\\
    \le\; & (\sqrt{2}+\sqrt{2}b_1)\sqrt{(K+1)G} \cdot \frac{\beta^2\sigma}{36LG}\left(8\sqrt{L\Delta} +3(B_{\rm init}-\tau)+3b + 3\sqrt{L\Delta} + \frac{3}{4}\sqrt{L\Delta} + 5\sqrt{2c\deltabyz}\hbeta\tau T\right)\\
    \le\; & \sqrt{2}(1+b_1)\sqrt{(K+1)G} \cdot \frac{\beta^2\sigma}{36LG}\left(12\sqrt{L\Delta} +3(B_{\rm init}-\tau+b) \right) + \\
    &\hspace{8cm}(1+b_1)\sqrt{(K+1)G} \cdot \frac{64\cdot 5\tau^2\sigma\hbeta^2}{431^2\cdot 18L^2\Delta G}\sqrt{c\deltabyz}\hbeta\tau T\\
    \le\; &\frac{\Delta}{16} + \frac{\Delta}{16} = \frac{\Delta}{8}
    \end{align*}
    because we choose $\beta$ and $\hbeta$ such that
    \begin{align}\label{eq:step-size_bound_8}
    \beta &\le \left(\frac{9L\Delta\sqrt{G}}{4\sqrt{2}(1+b_1)\sigma\sqrt{T}\left(12\sqrt{L\Delta} +3(B_{\rm init}-\tau+b) \right)}\right)^{1/2}, \quad \hbeta \le \left(\frac{462\cdot (L\Delta)^2\sqrt{G}}{(1+b_1)\sigma T^{3/2}\tau^3}\right)^{1/2}\\
    & \text{and} \quad K+1\le T.\notag
    \end{align}
    This implies 
    \begin{align*}
    &\Prob\left(4\gamma(1-\beta)\left\|\sum_{t=0}^K\sum_{i\in\cG}\<\zeta_{5,i}^t,\theta^{t+1}>\right\| \ge \frac{\Delta}{8}\right) \le \frac{\alpha}{20(T+1)}.
    \end{align*}
    Note that the worst dependency in the choice of $\beta$ and $\hbeta$ w.r.t $T$ is $\wtilde{\cO}(\nicefrac{1}{T^{3/4}}).$

    \subparagraph{Bound of the term $\circledNine$.} We need to choose hyperparameters such that 
    \[
    \frac{\tau\hbeta^3 a^2 K}{324L\sqrt{L\Delta}} \le \frac{\Delta}{24}.
    \]

    This is possible by choosing $\hbeta$ such that 
    \begin{align}\label{eq:step-size_bound_9}
        \hbeta \le \left(\frac{27(L\Delta)^{3/2}}{2a^2T\tau}\right)^{1/3} \quad \text{and} \quad K \le T+1.
    \end{align}
    Note that since $a \sim T$ we obtain $\hbeta \sim \frac{1}{T}$.

    \subparagraph{Bound of the term $\circledTen$.} We need to choose hyperparameters such that 
    \[
        \frac{2\tau\hbeta^3c\deltabyz\ha^2 K}{81L\sqrt{L\Delta}} + \frac{2\tau^3\hbeta^3c\deltabyz T^2K}{81L\sqrt{L\Delta}}\le \frac{\Delta}{24}.
    \]
    By choosing $\hbeta$ such that
    \begin{align}\label{eq:step-size_bound_10}
        &\hbeta\le \min\left\{\left(\frac{81(L\Delta)^{3/2}}{96c\deltabyz\ha^2\tau T}\right)^{1/3}, \left(\frac{81(L\Delta)^{3/2}}{96c\deltabyz\tau^3T^3}\right)^{1/3}\right\}, \quad \text{ and }\quad  K \le T+1
    \end{align}
    we have
    \[
        \frac{2\tau\hbeta^3c\deltabyz\ha^2 K}{81L\sqrt{L\Delta}} + \frac{2\tau^3\hbeta^3c\deltabyz T^2K}{81L\sqrt{L\Delta}} \le \frac{\Delta}{48}+ \frac{\Delta}{48} = \frac{\Delta}{24}.
    \]

    \paragraph{Final probability.}
    Therefore, the probability event 
    \[
    \Omega \eqdef E^K 
    \cap \overline{\Theta}^{K+1}
    \cap \left(\cap_{i\in\cG}\overline{\Theta}^{K+1}_i\right)
    \cap \overline{N}^{K+1}
    \cap \left(\cap_{i\in\cG}\overline{N}^{K+1}_i\right)
    \cap E_{\circledOne}
    \cap E_{\circledTwo}
    \cap E_{\circledThree}
    \cap E_{\circledFour}
    \cap E_{\circledFive}
    \cap E_{\circledSix}
    \cap E_{\circledSeven}
    \cap E_{\circledEight}
    \cap E_{\circledNine}
    \cap E_{\circledTen},
    \]
    where each $E_{\circledOne}$-$E_{\circledTen}$ denotes that each of $2$-$8$-th terms is smaller than $\frac{\Delta}{8}$ and each of the terms $1,9,10$ is smaller than $\frac{\Delta}{24}$. This implies that 
    \[
    \circledOne + \circledTwo + \circledThree + \circledFour + \circledFive + \circledSix + \circledSeven + \circledEight + \circledNine + \circledTen \le 7\cdot \frac{\Delta}{8} + 3 \cdot \frac{\Delta}{24} = \Delta,
    \]
    i.e., condition $7$ in the induction assumption holds. Moreover, this also implies that 
    \[
    \Phi^{K+1} \le \Phi^0 + \Delta \le \Delta + \Delta = 2\Delta,
    \]
    i.e., condition $6$ in the induction assumption holds. The probability $\Prob(E_{K+1})$ can be lower bounded as follows 
    \begin{align*}
    \Prob(E_{K+1}) &\ge \Prob(\Omega)\\
    &= \Prob\bigg(E_K 
    \cap \overline{\Theta}^{K+1}
    \cap \left(\cap_{i\in\cG}\overline{\Theta}^{K+1}_i\right)
    \cap \overline{N}^{K+1}
    \cap \left(\cap_{i\in\cG}\overline{N}^{K+1}_i\right)
    \cap E_{\circledOne}
    \cap E_{\circledTwo}
    \cap E_{\circledThree}
    \cap E_{\circledFour}
    \cap E_{\circledFive}
    \cap E_{\circledSix}\\
    &\qquad 
    \cap E_{\circledSeven}
    \cap E_{\circledEight} \cap E_{\circledNine}
    \cap E_{\circledTen}\bigg)\\
    &= 1 - \Prob\bigg(\overline{E}_K\cup
    \Theta^{K+1} \cup 
    \left(\cup_{i\in\cG}\Theta^{K+1}_i\right)
    \cup N^{K+1}
    \left(\cup_{i\in\cG}N^{K+1}_i\right)
    \cup \overline{E}_{\circledOne}
    \cup \overline{E}_{\circledTwo}
    \cup \overline{E}_{\circledThree}
    \cup \overline{E}_{\circledFour}
    \cup \overline{E}_{\circledFive}
    \cup \overline{E}_{\circledSix}\\
    &\qquad \cup \overline{E}_{\circledSeven}
    \cup \overline{E}_{\circledEight}  \cup \overline{E}_{\circledNine}  \cup \overline{E}_{\circledTen}\bigg)\\
    &\ge 1 - \Prob(\overline{E}_K) 
    - \Prob(\Theta^{K+1})
    - \sum_{i\in\cG}\Prob(\Theta^{K+1}_i)
    -\Prob(N^{K+1})
    - \sum_{i\in\cG}\Prob(N^{K+1}_i)
    - \Prob(\overline{E}_{\circledOne})
    - \Prob(\overline{E}_{\circledTwo})\\
    & \qquad 
    - \Prob(\overline{E}_{\circledThree})
    - \Prob(\overline{E}_{\circledFour})
    - \Prob(\overline{E}_{\circledFive})
    - \Prob(\overline{E}_{\circledSix})
    - \Prob(\overline{E}_{\circledSeven})
    - \Prob(\overline{E}_{\circledEight})
    - \Prob(\overline{E}_{\circledNine})
    - \Prob(\overline{E}_{\circledTen})\\
    &\ge 1 - \frac{\alpha(K+1)}{T+1}
    - \frac{\alpha}{8(T+1)}
    - \sum_{i\in\cG} \frac{\alpha}{8G(T+1)}
    - \frac{\alpha}{8(T+1)}
    - 0- 10\cdot \frac{\alpha}{20(T+1)}\\
    &= 1-\frac{\alpha(K+2)}{T+1}.
    \end{align*}
    
    This finalizes the transition step of induction. The result of the theorem follows by setting $K=T-1$. Indeed, from (\ref{eq:njqnsfqknj}) we obtain
    \begin{align}\label{eq:mfweodjnwej}
        \frac{\gamma}{2}\sum_{t=0}^K\|\nabla f(x^t)\|^2 
        \le \Phi^0 
        - \Phi^{K+1} 
        + \Delta 
        \le 2\Delta 
        \Rightarrow \frac{1}{T}\sum_{t=0}^{T-1}\|\nabla f(x^t)\|^2 
        \le \frac{4\Delta}{\gamma T}.
    \end{align}

    \paragraph{Final rate.}

    We highlight that we are interested in the functional dependency of the rate on the problem constants. Therefore, in the rest of the proof we omit using numerical constants. Translating momentum restrictions (\ref{eq:step-size_bound_1}), (\ref{eq:step-size_bound_2}), (\ref{eq:step-size_bound_3}), (\ref{eq:step-size_bound_4}), (\ref{eq:step-size_bound_5}), (\ref{eq:step-size_bound_6}), (\ref{eq:step-size_bound_7}), and (\ref{eq:step-size_bound_8}) to the step-size restriction using $12L\gamma= \beta$ equality we get that the step-size should satisfy 
    \begin{align}\label{eq:ninwofejnasdadwe}
        \gamma = &\frac{1}{L}\wtilde{\cO}\left(\min\left\{
         \underbrace{\left(\frac{L\Delta G}{T\sigma^2}\right)^{1/2}, \left(\frac{L\Delta\hat{\beta}^2\eta^2}{T\sigma^2}\right)^{1/4} , \left(\frac{L\Delta\hat{\beta}\eta}{T\sigma^2}\right)^{1/3} }_{\text{from term } 1~\eqref{eq:step-size_bound_1}}, 
         \underbrace{\left(\frac{L\Delta\sqrt{G}\hat{\beta}\eta}{B_{\rm init}\sigma \sqrt{T}}\right)^{1/2} }_{\text{from term } 2~\eqref{eq:step-size_bound_2}},\right.\right.\notag\\
         &\left.\left.\underbrace{\left(\frac{L\Delta \sqrt{G}\hat{\beta}\eta}{\sigma(\sqrt{L\Delta} + B_{\rm init} + \sigma)\sqrt{T} }\right)^{\frac{1}{3}}, 
         \left(\frac{L\Delta \sqrt{G}\eta}{\sigma\sqrt{c\deltabyz}\tau T^{3/2}}\right)^{\frac{1}{3}} }_{\text{from term } 3~\eqref{eq:step-size_bound_3}}, \underbrace{\left(\frac{L\Delta\hat{\beta}\eta \sqrt{G}}{\sigma(\sqrt{L\Delta} + B_{\rm init} + \sigma)\sqrt{T}}\right)^{1/4}, \left(\frac{L\Delta\eta \sqrt{G}}{\sigma\sqrt{c\deltabyz}T^{3/2}\tau}\right)^{1/4}}_{\text{from term } 4~\eqref{eq:step-size_bound_4}},\right.\right.\notag\\
         &\left.\left.\underbrace{\left(\frac{L\Delta\hat{\beta}^2\eta^2\sqrt{G}}{\sigma(\sqrt{L\Delta} + B_{\rm init} + \sigma) \sqrt{T}}\right)^{1/3}, \left(\frac{L\Delta\hbeta\eta^2\sqrt{G}}{\sigma T^{3/2}\sqrt{c\deltabyz}\tau}\right)^{1/3}}_{\text{from term } 5~\eqref{eq:step-size_bound_5}},\right.\right.\notag\\
         &\left.\left.
        \underbrace{\left(\frac{L\Delta\hat{\beta}^2\eta^2\sqrt{G}}{\sigma(\sqrt{L\Delta} + B_{\rm init} + \sigma)\sqrt{T} }\right)^{1/4}, \left(\frac{L\Delta\hbeta\eta^2\sqrt{G}}{\sqrt{c\deltabyz}\sigma\tau T^{3/2}}\right)^{1/4}}_{\text{from term } 7~\eqref{eq:step-size_bound_7}},\right.\right.\notag\\
        &\left.\left.
        \underbrace{\left(\frac{L\Delta \sqrt{G}}{\sigma(\sqrt{L\Delta} + B_{\rm init} +\sigma) \sqrt{T}}\right)}_{\text{from term } 6~\eqref{eq:step-size_bound_6}},
        \underbrace{\left(\frac{L\Delta \sqrt{G}}{\sigma (\sqrt{L\Delta} + B_{\rm init} + \sigma)\sqrt{T}}\right)^{\frac{1}{2}}}_{\text{from term } 8~\eqref{eq:step-size_bound_8}}\right\}
        \right),
    \end{align} 
    and 
    \begin{align*}
        \hbeta \le \wtilde{\cO}\left(\underbrace{\left(\frac{L\Delta\sqrt{G}}{\sigma\sqrt{T}}\right)}_{\text{from term 6}~\eqref{eq:step-size_bound_6}}, \underbrace{\left(\frac{ (L\Delta)^2\sqrt{G}}{\sigma T^{3/2}\tau^3}\right)^{1/2}}_{\text{from term 8}~\eqref{eq:step-size_bound_8}}, 
        \underbrace{ \frac{\sqrt{L\Delta}}{a^{2/3}T^{1/3}\tau^{1/3}}}_{\text{from term 9}~\eqref{eq:step-size_bound_9}}, 
        \underbrace{\frac{\sqrt{L\Delta}}{(c\deltabyz)^{1/3}\ha^{2/3}\tau^{1/3}T^{1/3}}, \frac{\sqrt{L\Delta}}{(c\deltabyz)^{1/3} T\tau}}_{\text{from term 10}~\eqref{eq:step-size_bound_10}}\right).
    \end{align*}
    To guarantee the convergence to the neighborhood, we should choose $\hbeta \sim \frac{1}{a} \sim \frac{1}{T}$. Therefore, the first two restrictions on $\hbeta$ will not contribute to the size of the neighborhood. Moreover, the worst power of $T$ in \eqref{eq:ninwofejnasdadwe} comes from the term $\circledFive$ and equals $\gamma\sim \frac{1}{T^{5/6}}.$ The second worst comes from terms $\circledOne$ and $\circledTwo$, and equals to $\gamma \sim \frac{1}{T^{3/4}}$. These terms give the rate of the form
    \begin{align}\label{eq:ojnflqkmqowkdqw}
        & \frac{L\Delta}{T}\wtilde{\cO}\left(
        \left(\frac{T\sigma^2}{L\Delta \hbeta^2\eta^2}\right)^{1/4} 
        + \left(\frac{B_{\rm init}\sigma\sqrt{T}}{L\Delta\sqrt{G}\hbeta\eta}\right)^{1/2}
        + \left(\frac{\sigma(\sqrt{L\Delta}+B_{\rm init}+\sigma)\sqrt{T}}{L\Delta\sqrt{G}\hbeta^2\eta^2}\right)^{1/3}
        + \left(\frac{\sigma\sqrt{c\deltabyz}\tau T^{3/2}}{L\Delta\sqrt{G}\hbeta\eta}\right)^{1/3}
        \right).
    \end{align}

    Now we need to plug in all restrictions on $\hbeta$ in \eqref{eq:ojnflqkmqowkdqw}. We remind that in total we should satisfy (up to numerical constants) 
    \begin{align*}
    \hat{\beta} \le \sqrt{L\Delta}\cdot \wtilde{\cO}\left(\min\left\{\frac{1}{a}, 
    \frac{1}{\sqrt{c\deltabyz}\ha},
    \frac{1}{a^{2/3}T^{1/3}\tau^{1/3}},
    \frac{1}{(c\deltabyz)^{1/3}\ha^{2/3}\tau^{1/3}T^{1/3}},
    \frac{1}{(c\deltabyz)^{1/3}\tau T}\right\}\right).
    \end{align*}
    We can simplify this constraint to 
    \begin{align}\label{eq:jnoqneqoejnfqe}
    \hat{\beta} \le \sqrt{L\Delta}\cdot \wtilde{\cO}\left(\min\left\{\frac{1}{aC}, 
    \frac{1}{a^{2/3}T^{1/3}\tau^{1/3}}, 
    \frac{1}{(c\deltabyz)^{1/3}\ha^{2/3}\tau^{1/3}T^{1/3}},
    \frac{1}{(c\deltabyz)^{1/3}\tau T},
    \right\}\right),
    \end{align}
    where $C\eqdef (1+\sqrt{G}\sqrt{c\deltabyz})$. Let us do this one by one.
    \begin{enumerate}
        \item $\hbeta \sim \frac{\sqrt{L\Delta}}{aC}$, where $a \sim \wtilde{\cO}(\nicefrac{\sqrt{d}\sigma_{\omega}\sqrt{T}}{\sqrt{G}})$, gives the term in the rate of the form
        \begin{align*}
        & \frac{L\Delta}{T}\wtilde{\cO}\left(
        \left(\frac{T\sigma^2}{L\Delta \hbeta^2\eta^2}\right)^{1/4} 
        + \left(\frac{B_{\rm init}\sigma\sqrt{T}}{L\Delta\sqrt{G}\hbeta\eta}\right)^{1/2}
        + \left(\frac{\sigma(\sqrt{L\Delta}+B_{\rm init}+\sigma)\sqrt{T}}{L\Delta\sqrt{G}\hbeta^2\eta^2}\right)^{1/3}
        + \left(\frac{\sigma\sqrt{c\deltabyz}\tau T^{3/2}}{L\Delta\sqrt{G}\hbeta\eta}\right)^{1/3}
        \right) \notag\\
        =\;& \frac{L\Delta}{T}\wtilde{\cO}\left(
        \left(\frac{B_{\rm init}^2C^2a^2T\sigma^2}{(L\Delta)^2\tau^2}\right)^{1/4} 
        + \left(\frac{B^2_{\rm init}Ca\sigma T^{1/2}}{(L\Delta)^{3/2}\sqrt{G}\tau}\right)^{1/2}
        + \left(\frac{ B_{\rm init}^2C^2a^2\sigma(\sqrt{L\Delta}+B_{\rm init}+\sigma)\sqrt{T}}{(L\Delta)^2\sqrt{G}\tau^2}\right)^{1/3}
        \right.
        \notag\\
        &\hspace{9cm}\left. +\; 
         \left(\frac{B_{\rm init}Ca\sigma\sqrt{c\deltabyz} T^{3/2}}{(L\Delta)^{3/2}\sqrt{G}}\right)^{1/3}
        \right)\\
        =\;& \wtilde{\cO}\left(
        \left(\frac{(L\Delta)^2B_{\rm init}^2C^2a^2\sigma^2}{T^3\tau^2}\right)^{1/4} 
        + \left(\frac{\sqrt{L\Delta}B^2_{\rm init}Ca\sigma}{T^{1/2}\sqrt{G}\tau}\right)^{1/2}
        + \left(\frac{ L\Delta B_{\rm init}^2C^2a^2\sigma(\sqrt{L\Delta}+B_{\rm init}+\sigma)}{T^{5/2}\sqrt{G}\tau^2}\right)^{1/3}
        \right.
        \notag\\
        &\hspace{9cm}\left. +\; 
         \left(\frac{(L\Delta)^{3/2}B_{\rm init}Ca\sigma\sqrt{c\deltabyz}}{\sqrt{G} T^{3/2}}\right)^{1/3}
        \right).
    \end{align*}
    Now we use the specific form of $a \sim \nicefrac{\sqrt{d}\sigma_{\omega}\sqrt{T}}{\sqrt{G}}$ to obtain the rate
     \begin{align}\label{eq:onoqwndq}
         & \wtilde{\cO}\left(
        \left(\frac{(L\Delta)^2B_{\rm init}^2C^2d\sigma_{\omega}^2\sigma^2}{GT^2\tau^2}\right)^{1/4} 
        + \left(\frac{\sqrt{L\Delta}B_{\rm init}^2C\sqrt{d}\sigma_{\omega}\sigma}{G\tau}\right)^{1/2}
        \right.
        \notag\\
        &\left. +\; 
        \left(\frac{L\Delta B_{\rm init}^2C^2d\sigma_{\omega}^2\sigma(\sqrt{L\Delta}+B_{\rm init}+\sigma)}{(TG)^{3/2}\tau^2}\right)^{1/3}
        + \left(\frac{(L\Delta)^{3/2}B_{\rm init}C\sqrt{d}\sigma_{\omega}\sigma\sqrt{c\deltabyz}\tau }{TG\tau}\right)^{1/3}
        \right).
    \end{align}
    
    Since $\sigma_{\omega} \sim \sqrt{T}$, we omit the first two terms in \eqref{eq:onoqwndq} as they have better dependency on $T.$ The final rate in this regime is 
    \begin{align}
        &\wtilde{\cO}\left(
        \left(\frac{L\Delta B_{\rm init}^2C^2d\sigma_{\omega}^2\sigma(\sqrt{L\Delta}+B_{\rm init}+\sigma)}{(TG)^{3/2}\tau^2}\right)^{1/3}
        + \left(\frac{(L\Delta)^{3/2}B_{\rm init}C\sqrt{d}\sigma_{\omega}\sigma\sqrt{c\deltabyz}\tau }{TG\tau}\right)^{1/3}
        \right).
    \end{align}

    \item $\hbeta \sim \frac{\sqrt{L\Delta}}{C_1a^{2/3}T^{1/3}\tau^{1/3}}$, where $C_1 = 1+G^{1/3}(c\deltabyz)^{1/3}$ (we combine cases 2 and 3 in \eqref{eq:jnoqneqoejnfqe} together to simplify calculations). We obtain 
    \begin{align*}
        & \frac{L\Delta}{T}\wtilde{\cO}\left(
        \left(\frac{T\sigma^2}{L\Delta \hbeta^2\eta^2}\right)^{1/4} 
        + \left(\frac{B_{\rm init}\sigma\sqrt{T}}{L\Delta\sqrt{G}\hbeta\eta}\right)^{1/2}
        + \left(\frac{\sigma(\sqrt{L\Delta}+B_{\rm init}+\sigma)\sqrt{T}}{L\Delta\sqrt{G}\hbeta^2\eta^2}\right)^{1/3}
        + \left(\frac{\sigma\sqrt{c\deltabyz}\tau T^{3/2}}{L\Delta\sqrt{G}\hbeta\eta}\right)^{1/3}
        \right) \notag\\
        =\;& \wtilde{\cO}\left(
        \left(\frac{(L\Delta)^2B_{\rm init}^2C^2_1a^{4/3}\sigma^2}{T^{7/3}\tau^{4/3}}\right)^{1/4} 
        + \left(\frac{\sqrt{L\Delta}B_{\rm init}^2C_1a^{2/3}\sigma}{T^{7/6}\sqrt{G}\tau^{2/3}}\right)^{1/2}
        \right.
        \notag\\
        &\left. +\; 
         \left(\frac{L\Delta B_{\rm init}^2C^2_1a^{4/3}\sigma(\sqrt{L\Delta}+B_{\rm init}+\sigma)}{T^{11/6}\sqrt{G}\tau^{4/3}}\right)^{1/3}
         + \left(\frac{(L\Delta)^{3/2}B_{\rm init}C_1a^{2/3}\sigma\sqrt{c\deltabyz}\tau }{T^{7/6}\sqrt{G}\tau^{2/3}}\right)^{1/3}
        \right).
    \end{align*}
    Now we use the specific form of $a \sim \nicefrac{\sqrt{d}\sigma_{\omega}\sqrt{T}}{\sqrt{G}}$ to obtain the rate
    \begin{align}\label{eq:nrjoqneasdasdafq}
        &\wtilde{\cO}\left(
        \left(\frac{(L\Delta)^2B_{\rm init}^2C_1^2d^{2/3}\sigma_{\omega}^{4/3}\sigma^2}{G^{2/3}T^{5/3}\tau^{4/3}}\right)^{1/4} 
        + \left(\frac{\sqrt{L\Delta}B_{\rm init}^2C_1d^{1/3}\sigma_{\omega}^{2/3}\sigma}{(GT)^{5/6}\tau^{2/3}}\right)^{1/2}\right.\\
        &+\;\left. \left(\frac{L\Delta B_{\rm init}^2C_1^2d^{2/3}\sigma_{\omega}^{4/3}\sigma(\sqrt{L\Delta}+B_{\rm init}+\sigma)}{(TG)^{7/6}\tau^{4/3}}\right)^{1/3}
         + \left(\frac{(L\Delta)^{3/2}B_{\rm init}C_1d^{1/3}\sigma_{\omega}^{2/3}\sigma\sqrt{c\deltabyz}\tau }{(GT)^{5/6}\tau^{2/3}}\right)^{1/3}
        \right).\notag
    \end{align}
    Since $\sigma_{\omega} \sim \sqrt{T}$, then the first term in \eqref{eq:nrjoqneasdasdafq} has better dependency on $T$ than the other two, therefore we omit them. The final rate in this case is 
    \begin{align}
         \wtilde{\cO}\left(\left(\frac{L\Delta B_{\rm init}^2C^2_1d^{2/3}\sigma_{\omega}^{4/3}\sigma(\sqrt{L\Delta}+B_{\rm init}+\sigma)}{(TG)^{7/6}\tau^{4/3}}\right)^{1/3}
         + \left(\frac{(L\Delta)^{3/2}B_{\rm init}Cd^{1/3}\sigma_{\omega}^{2/3}\sigma\sqrt{c\deltabyz}\tau }{(GT)^{5/6}\tau^{2/3}}\right)^{1/3}
        \right).
    \end{align}
    
    \item $\hbeta\sim \frac{\sqrt{L\Delta}}{(c\deltabyz)^{1/3}\tau T}$, gives the term in the rate of the form
    \begin{align*}
        & \frac{L\Delta}{T}\wtilde{\cO}\left(
        \left(\frac{T\sigma^2}{L\Delta \hbeta^2\eta^2}\right)^{1/4} 
        + \left(\frac{B_{\rm init}\sigma\sqrt{T}}{L\Delta\sqrt{G}\hbeta\eta}\right)^{1/2}
        + \left(\frac{\sigma(\sqrt{L\Delta}+B_{\rm init}+\sigma)\sqrt{T}}{L\Delta\sqrt{G}\hbeta^2\eta^2}\right)^{1/3}
        + \left(\frac{\sigma\sqrt{c\deltabyz}\tau T^{3/2}}{L\Delta\sqrt{G}\hbeta\eta}\right)^{1/3}
        \right) \notag\\
        =\;& \wtilde{\cO}\left(
        \left(\frac{(L\Delta)^2(c\deltabyz)^{2/3}B_{\rm init}^2\sigma^2}{T}\right)^{1/4} 
        + \left(\frac{(L\Delta)^{1/2} B_{\rm init}^2(c\deltabyz)^{1/3}\sigma}{\sqrt{GT}}\right)^{1/2}
        \right.
        \notag\\
        &\left. +\; 
        \left(\frac{ L\Delta B_{\rm init}^2(c\deltabyz)^{2/3}\sigma(\sqrt{L\Delta}+B_{\rm init}+\sigma)}{\sqrt{TG}}\right)^{1/3}
        + 
         \left(\frac{(c\deltabyz)^{5/6}(L\Delta)^{3/2} B_{\rm init}\sigma\tau }{\sqrt{TG}}\right)^{1/3}
        \right).
    \end{align*}
    We keep only the last two terms, since they have the worst dependence on $T$. Therefore, the final rate in this case is 
    \begin{align}\label{eq:onoqasfqwwndq}
        &\wtilde{\cO}\left(
        \left(\frac{L\Delta B_{\rm init}^2\sigma(\sqrt{L\Delta}+B_{\rm init}+\sigma)}{\sqrt{TG}}\right)^{1/3}
        + 
         \left(\frac{\sqrt{c\deltabyz}(L\Delta)^{3/2} B_{\rm init}\sigma\tau }{\sqrt{TG}}\right)^{1/3}
        \right).
    \end{align}
    \end{enumerate}

    We observe that, in all three cases above, we still have descent with $T$. Therefore, those terms do not contribute to the utility bound. Now we should consider the momentum constraints (a)-(d) that can be combined into one as 
    $$\beta \le \wtilde{\cO}\left(\min\left\{\frac{\hbeta\tau}{\sqrt{L\Delta} + B_{\rm init} + \sigma}, \frac{1}{\sqrt{c\deltabyz}T}\right\}\right).$$ 
    This translates into the rate of the form
    \[
    \wtilde{\cO}\left(\frac{L\Delta(\sqrt{L\Delta} + B_{\rm init} + \sigma)}{T\hbeta\tau} + L\Delta\sqrt{c\deltabyz}\right).
    \]
    We should plug in the restrictions \eqref{eq:jnoqneqoejnfqe} in the rate above. This leads to the rate of the form
    \begin{align*}
        &\wtilde{\cO}\left(\frac{L\Delta a C(\sqrt{L\Delta} + B_{\rm init} + \sigma)}{T\sqrt{L\Delta}\tau} 
        + \frac{L\Delta C_1a^{2/3}T^{1/3}\tau^{1/3}(\sqrt{L\Delta} + B_{\rm init} + \sigma)}{T\sqrt{L\Delta}\tau}
        + \frac{L\Delta\tau T(\sqrt{L\Delta} + B_{\rm init} + \sigma)}{T\sqrt{L\Delta}\tau} \right.\\
        &\left.+\; L\Delta\sqrt{c\deltabyz}
        \right)\\
        =\;& \wtilde{\cO}\left(\frac{\sqrt{L\Delta} a C(\sqrt{L\Delta} + B_{\rm init} + \sigma)}{T\tau} 
        + \frac{\sqrt{L\Delta} C_1a^{2/3}(\sqrt{L\Delta} + B_{\rm init} + \sigma)}{T^{2/3}\tau^{2/3}}\right.\\
        &\left.+\; \sqrt{L\Delta}(\sqrt{L\Delta} + B_{\rm init} + \sigma)((c\deltabyz)^{1/3}+\sqrt{c\deltabyz})
        \right).
    \end{align*}

    Now we use $a \sim \frac{\sqrt{d}\sigma_{\omega}\sqrt{T}}{\sqrt{G}}$, $C= (1+\sqrt{G})$, and $C_1 = (1+G^{1/3}(c\deltabyz)^{1/3})$ to obtain the rate
    \begin{align*}
         &(\sqrt{L\Delta} + B_{\rm init} + \sigma)\cdot \wtilde{\cO}\left(\frac{\sqrt{L\Delta} \sqrt{d}\sigma_{\omega}C}{\sqrt{GT}\tau} 
        + \frac{\sqrt{L\Delta}C_1 d^{1/3}\sigma_{\omega}^{2/3}}{(TG)^{1/3}\tau^{2/3}}
        + \sqrt{L\Delta}((c\deltabyz)^{1/3}+\sqrt{c\deltabyz})
        \right)\\
        =\;& (\sqrt{L\Delta} + B_{\rm init} + \sigma)
        \cdot 
        \wtilde{\cO}\left(\frac{\sqrt{L\Delta} \sqrt{d}\sigma_{\omega}(1+\sqrt{c\deltabyz G})}{\sqrt{GT}\tau} 
        + \frac{\sqrt{L\Delta}(1+(c\deltabyz G)^{1/3}) d^{1/3}\sigma_{\omega}^{2/3}}{(TG)^{1/3}\tau^{2/3}}\right.\\
        &\left.
        + \sqrt{L\Delta}((c\deltabyz)^{1/3}+\sqrt{c\deltabyz})
        \right)\\
        =\;& (\sqrt{L\Delta} + B_{\rm init} + \sigma) 
        \wtilde{\cO}\left(\frac{\sqrt{L\Delta d} \sigma_{\omega}}{\sqrt{GT}\tau} 
        + \frac{\sqrt{L\Delta} d^{1/3}\sigma_{\omega}^{2/3}}{(TG)^{1/3}\tau^{2/3}}
        + \frac{\sqrt{c\deltabyz L\Delta d}\sigma_{\omega}}{\sqrt{T}\tau} 
        + \frac{\sqrt{L\Delta} (c\deltabyz d\sigma_{\omega}^2)^{1/3}}{T^{1/3}\tau^{2/3}}\right.\\
        &\left. + \sqrt{L\Delta} ((c\deltabyz)^{1/3}+\sqrt{c\deltabyz})
        \right).
    \end{align*}
    We can combine the terms above in a simpler way as follows
    \begin{align*}
        & (\sqrt{L\Delta} + B_{\rm init} + \sigma)\sqrt{L\Delta} 
        \wtilde{\cO}\left( \left(\frac{\sqrt{d}\sigma_{\omega}}{\sqrt{GT}\tau} 
        + \frac{\sqrt{c\deltabyz d}\sigma_{\omega}}{\sqrt{T}\tau} 
        + \sqrt{c\deltabyz}\right) + \left(\frac{\sqrt{d}\sigma_{\omega}}{\sqrt{GT}\tau} 
        + \frac{\sqrt{c\deltabyz d}\sigma_{\omega}}{\sqrt{T}\tau} 
        + \sqrt{c\deltabyz}\right)^{2/3}
        \right).
    \end{align*}
    We observe that the first term is purely due to DP noise, the middle term is a mix of DP noise and robust aggregation, and the last term is purely due to robust aggregation.

    Finally, the restriction \eqref{eq:wnojenoqdqw} translate to 
    \begin{align}\label{eq:knweokfnewe}
    \gamma \le \frac{1}{L}\wtilde{\cO}\left(\frac{\hbeta\tau}{LB_{\rm init}}, \frac{\sqrt{\hbeta\tau}}{L\sqrt{B_{\rm init}}}\right).
    \end{align}
    Note that the second restriction in \eqref{eq:knweokfnewe} does not contribute to the neighborhood. The first restriction in \eqref{eq:knweokfnewe} is already taken into account by the restriction $\beta \le \wtilde{\cO}\left(\frac{\hbeta\tau}{\sqrt{L\Delta} + B_{\rm init}+ \sigma}\right)$.

    \paragraph{Case $\cI_{K+1} = 0.$} This case is even easier. The only change will be with the term next to $R^t$. We will get 
    \[
    1 - \frac{96L^2}{\hat\beta^2\eta^2}\gamma^2 
        - \frac{24L^2}{\beta^2}\gamma^2 \ge \frac{1}{3} - \frac{96L^2}{\hat\beta^2\eta^2}\gamma^2 \ge 0
    \]
    instead of 
    \[
    1 - \frac{32\beta^2L^2}{\hat\beta^2\eta^2}\gamma^2 
    - \frac{96L^2}{\hat\beta^2\eta^2}\gamma^2 
    - \frac{24L^2}{\beta^2}\gamma^2 \ge 0
    \]
     as in the previous case. This difference comes from \Cref{lem:descent_Vt_tilde} because $\wtilde{V}^{K+1} = 0$. The rest is a repetition of the previous derivations.

    \paragraph{Bounds on $\Delta$, $B_{\rm init}$, and $\Delta_0$.}

    It remains to give explicit bounds on $\Delta$ and $B_{\rm init}$ using problem-dependent constants. With the initialization $v_i^0=g_i^0 = 0$ for all $i\in\cG$ we have that 
    \begin{align}
    \Phi^0 &= f(x^0) - f^\star + \frac{8\gamma\beta}{\hbeta^2\eta^2}\frac{1}{G}\sum_{i\in\cG}\|\nabla f_i(x^0)\|^2 + \frac{2\gamma}{\beta}\|\nabla f(x^0)\|^2\notag\\
    &\overset{(i)}{\le} f(x^0)-f^\star 
    + \frac{8\gamma\beta}{\hbeta^2\eta^2}B_{\zeta}\|\nabla f(x^0)\|^2 
    + \frac{8\gamma\beta}{\hbeta^2\eta^2}\zeta^2 + \frac{4\gamma}{\beta}L(f(x^0)-f^\star)\notag\\
    &\overset{(ii)}{\le} \left(1 
    + \frac{16L\gamma\beta}{\hbeta^2\eta^2}B_{\zeta} 
    + \frac{4L\gamma}{\beta}\right)(f(x^0)-f^\star) + \frac{8\gamma\beta}{\hbeta^2\eta^2}\zeta^2.\label{eq:kenoqkqnewok}
    \end{align}
    where $(i)$ follow from \Cref{asmp:bounded_heterogeneity} and $L$-smoothness, $(ii)$ -- from $L$-smoothness. Note that we choose $\beta=12L\gamma$, which implies that $\frac{4L\gamma}{\beta} = \frac{1}{3}$, and from \eqref{eq:wnojenoqdqw} we have
    \[
        \frac{8\gamma\beta}{\hbeta^2\eta^2} = \frac{96L\gamma^2}{\hbeta^2\eta^2} \le 96L\cdot \frac{5}{6\cdot 96L^2} = \frac{5}{6L}.
    \]
    Thus, we have the bound on $\Phi^0$ from \eqref{eq:kenoqkqnewok}
    \[
    \Phi^0 \le  \left(\nicefrac{4}{3} 
    + \frac{5}{3}B_{\zeta}\right)(f(x^0)-f^\star) + \frac{5}{6L}\zeta^2.
    \]
    Therefore, we have that 
    \[
    L\Delta = \cO\left((1+B_{\zeta})
    L(f(x^0)-f^\star) + \zeta^2\right) = \cO\left(
    (1+B_{\zeta})L
    F^0 + \zeta^2\right)
    \]
    Next, we bound $B_{\rm init} = \max\{3\tau, \max_{i\in\cG}\|\nabla f_i(x^0)\| + b\} > \tau$. Note that in the theorem, we assume that $\max_{i\in\cG}\|\nabla f_i(x^0)\| \ge 3\tau$. This implies that $B_{\rm init} = \max_{i\in\cG}\|\nabla f_i(x^0)\| + b$. Therefore, we provide the bound with high probability as follows 
    \begin{align*} 
    B_{\rm init} &= b + \max_{i\in\cG}\|\nabla f_i(x^0)\|\\
    &= \wtilde{\cO}\left(\sigma + \max_{i\in\cG}\|\nabla f_i(x^0)-\nabla f(x^0)\| + \|\nabla f(x^0)\|\right)\\
    &= \wtilde{\cO}\left(\sigma + \sqrt{B_{\zeta}LF^0} + \zeta + \sqrt{LF^0}\right).
    \end{align*}
    where we again use \Cref{asmp:bounded_heterogeneity} and $L$-smoothness. This allows to bound $\Delta_0$ as follows
    \begin{align*}
        \Delta_0 &= \sqrt{L\Delta}\left(\sqrt{L\Delta} + B_{\rm init} + \sigma\right)\\
        &= \wtilde{\cO}\left((1+\sqrt{B_{\zeta}})(\sqrt{LF^0} + \zeta)(\sigma + (1+\sqrt{B_{\zeta}})\sqrt{LF^0} + \zeta)\right)\\
        &= \wtilde{\cO}\left((1+B_{\zeta})LF^0 + \zeta^2 + \sigma ((1+\sqrt{B_{\zeta}})\sqrt{LF^0} + \zeta)\right).
    \end{align*}

     \end{proof}

     \begin{theorem}[Theorem 3.22 in \citep{dwork2014algorithmic}] \label{th:gaussian_machanism}
    Let $\varepsilon\in(0,1)$ be arbitrary. For $c^2 > 2\log(1.25/\delta)$, the Gaussian mechanism with parameter $\sigma_{\omega} \ge c\Delta_2(f)/\varepsilon$ is $(\varepsilon, \delta)$-differentially private.
\end{theorem}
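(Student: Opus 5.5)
Since this is the classical Gaussian mechanism, the plan is the standard privacy-loss argument specialised to our notation. Fix neighbouring datasets $D,D'$ and write $\mu=f(D)$, $\mu'=f(D')$, $v\eqdef \mu-\mu'$, so that $\|v\|\le\Delta_2(f)$ by definition of $\ell_2$-sensitivity. The mechanism outputs $\mu+z$ with $z\sim\cN(0,\sigma_\omega^2\mI)$. The first step is to compute the privacy loss at the output $x=\mu+z$:
\begin{equation*}
\mathcal{L}(x)\eqdef \log\frac{p_D(x)}{p_{D'}(x)}=\frac{\|x-\mu'\|^2-\|x-\mu\|^2}{2\sigma_\omega^2}=\frac{2\langle z,v\rangle+\|v\|^2}{2\sigma_\omega^2}.
\end{equation*}
By rotational invariance of the Gaussian, $\langle z,v\rangle$ is distributed as $\|v\|\,\sigma_\omega Y$ with $Y\sim\cN(0,1)$. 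The problem is therefore one-dimensional, and the worst case is $\|v\|=\Delta_2(f)$.

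The second step is to show $\Prob(\mathcal{L}>\varepsilon)\le\delta$. The event $\mathcal{L}>\varepsilon$ is exactly the event
\begin{equation*}
Y>\frac{\sigma_\omega\varepsilon}{\Delta_2(f)}-\frac{\Delta_2(f)}{2\sigma_\omega}.
\end{equation*}
Plugging in $\sigma_\omega= c\Delta_2(f)/\varepsilon$ (larger $\sigma_\omega$ only raises the threshold), the threshold becomes $t\eqdef c-\varepsilon/(2c)$. I will then apply the Mills-ratio bound $\Prob(Y>t)\le \frac{1}{t\sqrt{2\pi}}e^{-t^2/2}$. Using $\varepsilon<1$, I can lower bound $t^2\ge c^2-\varepsilon\ge c^2-1$, and I will also use $t\ge c-1/(2c)$. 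Combined with $c^2>2\log(1.25/\delta)$, this should give $e^{-t^2/2}\le e^{1/2}\cdot\delta/1.25$, and the prefactor $\frac{1}{t\sqrt{2\pi}}$ should absorb the remaining constant.

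I expect this constant bookkeeping to be the main obstacle. The constant $1.25$ is exactly tuned, so for small $c$ the crude bounds above may not close. If that happens, I will first try Dwork--Roth's route. It requires $\log(t)+t^2/2$ to exceed $\log\!\big(\sqrt{2/\pi}/\delta\big)$ and then verifies this separately for the regime $\delta$ small, where $c$ is large. Note that $c^2>2\log 1.25$ forces $c>0.66$ for every $\delta\in(0,1)$, which limits how small $c$ can be in the check.

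The final step converts the tail bound into the DP inequality of Definition~\ref{def:privacy}. Let $S_1=\{x:\mathcal{L}(x)\le\varepsilon\}$. Then for any measurable $S$,
\begin{equation*}
\Prob(\cM(D)\in S)\le\Prob(\cM(D)\in S\cap S_1)+\Prob(\cM(D)\notin S_1).
\end{equation*}
The first term is at most $e^{\varepsilon}\Prob(\cM(D')\in S)$ by integrating the pointwise density ratio bound over $S\cap S_1$. The second term is at most $\delta$ by the tail estimate from the second step.
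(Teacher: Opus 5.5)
Your skeleton is correct and is essentially the Dwork--Roth argument that the paper cites without proof. The privacy-loss formula, the reduction to $Y\sim\cN(0,1)$ with worst case $\|v\|=\Delta_2(f)$, the monotonicity in $\sigma_\omega$, and the conversion of $\Prob(\mathcal{L}>\varepsilon)\le\delta$ into Definition~\ref{def:privacy} are all fine. Using a one-sided tail instead of their two-sided bound with target $\delta/2$ is legitimate and only helps.

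The gap is the tail estimate, which you leave open exactly where it is hard. With your bounds, the Mills-ratio route closes once $t\ge e^{1/2}/(1.25\sqrt{2\pi})\approx 0.53$. Via $t\ge c-1/(2c)$ this needs $c\ge 1.02$, which the hypothesis $c^2>2\log(1.25/\delta)$ guarantees only when $\delta$ is below about $0.74$. For larger $\delta$ the admissible $c$ goes down to $\sqrt{2\log 1.25}\approx 0.668$. Then, for $\varepsilon$ near $1$, the threshold $t=c-\varepsilon/(2c)$ is near $0$ or even negative (e.g.\ $c=0.68$, $\varepsilon=0.99$, admissible for $\delta>0.992$). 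There the bound $\frac{1}{t\sqrt{2\pi}}e^{-t^2/2}$ is either useless or not valid at all. Your fallback does not repair this. Dwork--Roth use the same $1/t$-prefactor bound and work under $c\ge 3/2$, i.e.\ they cover only small $\delta$, which is the regime your main route already handles. For the paper's application (per-step $\delta/T$) that regime suffices, but the statement covers every $\delta\in(0,1)$.

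The missing ingredient is a Gaussian tail bound without the $1/t$ prefactor. For $t\ge 0$,
$\Prob(Y>t)=\frac{1}{\sqrt{2\pi}}\int_0^\infty e^{-(t+s)^2/2}\,ds\le\frac12 e^{-t^2/2}$, because $(t+s)^2\ge t^2+s^2$. Together with your $t^2\ge c^2-\varepsilon$, this gives $\Prob(Y>t)\le\frac12 e^{\varepsilon/2}e^{-c^2/2}<\frac{e^{1/2}}{2.5}\,\delta<\delta$ for every admissible $c$.

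The remaining case $t<0$ forces $c^2<\varepsilon/2<1/2$. Hence $\delta>1.25e^{-c^2/2}>1.25e^{-1/4}>0.97$. Meanwhile $c>\sqrt{2\log 1.25}$ gives $t>c-1/(2c)>-0.09$, so $\Prob(Y>t)<0.54<\delta$.

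With this replacement your argument is a complete proof for all $\delta\in(0,1)$, and cleaner than the Dwork--Roth constant bookkeeping.
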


     \begin{corollary}\label{cor:cor_main_theorem_full}
         Under the setup of \Cref{th:main_theorem} (\Cref{th:main_theorem_full}), if we set $\sigma_{\omega} = \Theta\left(\frac{\tau}{\varepsilon}\sqrt{T\log(\frac{T}{\delta})\log(\frac{1}{\delta})}\right)$ for some $\varepsilon,\delta\in(0,1)$, then each iteration of \algname{Byz-Clip21-SGD2M} satisfies local $(\varepsilon, \delta)$-DP.
     \end{corollary}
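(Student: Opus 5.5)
We need to prove the corollary: with $\sigma_{\omega} = \Theta\bigl(\frac{\tau}{\varepsilon}\sqrt{T\log(\frac{T}{\delta})\log(\frac{1}{\delta})}\bigr)$, the run of \algname{Byz-Clip21-SGD2M} is locally $(\varepsilon,\delta)$-DP. The plan is the standard Gaussian mechanism plus composition argument, carried out per regular client $i\in\cG$. The only object client $i$ releases at iteration $t$ is $c_i^{t+1} = \clip_{\tau}(v_i^{t+1}-g_i^t) + \omega_i^{t+1}$; the server quantities $m_i^{t+1}$, $g^{t+1}$ and $x^{t+2}$ are post-processing of released messages. The analysis therefore reduces to bounding the privacy loss of each single release and then composing.

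\textbf{Step 1 (sensitivity).} Fix the history, that is, all previously released messages and hence $x^{t+1}$. Adaptive composition treats each round's mechanism as conditioned on prior outputs. Client $i$'s internal states $v_i^t$ and $g_i^t$ depend on its private data, but whatever their values, the deterministic part $q_i^{t+1}\eqdef \clip_{\tau}(v_i^{t+1}-g_i^t)$ satisfies $\|q_i^{t+1}\|\le\tau$ by \eqref{eq:clipping}. Hence for two neighbouring datasets the $\ell_2$-sensitivity satisfies $\Delta_2 \le 2\tau$. This bound is crude but sufficient, and constants are absorbed in $\Theta$.

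\textbf{Step 2 (per-step privacy).} Apply \Cref{th:gaussian_machanism}: with per-step budget $(\varepsilon_0,\delta_0)$, noise $\sigma_\omega \ge 2\tau\sqrt{2\log(1.25/\delta_0)}/\varepsilon_0$ makes each release $(\varepsilon_0,\delta_0)$-DP. We choose $\delta_0 = \delta/(2T)$ and $\varepsilon_0 = \varepsilon/\bigl(2\sqrt{2T\log(2/\delta)}\bigr)$.

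\textbf{Step 3 (composition).} Invoke the advanced composition theorem \citep[Theorem 3.20]{dwork2014algorithmic}. Over $T$ adaptive rounds it gives $(\varepsilon',T\delta_0+\delta')$-DP with $\varepsilon' = \varepsilon_0\sqrt{2T\log(1/\delta')} + T\varepsilon_0(e^{\varepsilon_0}-1)$. Take $\delta'=\delta/2$. The first term is then at most $\varepsilon/2$. The second term is at most $2T\varepsilon_0^2 \lesssim \varepsilon^2/\log(1/\delta) \le \varepsilon/2$, using $\varepsilon<1$ and adjusting constants. Substituting $\varepsilon_0$ and $\delta_0$ gives
\[
\sigma_\omega = \Theta\Bigl(\frac{\tau}{\varepsilon}\sqrt{T\log(\tfrac{T}{\delta})\log(\tfrac1\delta)}\Bigr),
\]
which is exactly the prescribed value. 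Post-processing then extends the guarantee to all server-side quantities.

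The main obstacle is justifying Step 1 given that the clipping input is itself data-dependent and stateful. The resolution is that sensitivity is measured on the released clipped vector, which has norm at most $\tau$ regardless of the internal state. The adaptive form of advanced composition then handles the dependence across rounds. Everything else is constant bookkeeping.
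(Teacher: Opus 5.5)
Your proposal is correct and follows the same route as the paper: the Gaussian mechanism (\Cref{th:gaussian_machanism}) at each step with a per-step budget of order $\bigl(\varepsilon/\sqrt{T\log(1/\delta)},\,\delta/T\bigr)$, followed by advanced composition \citep[Theorem 3.20]{dwork2014algorithmic}. Your write-up is more careful than the paper's two-line sketch: you make the $2\tau$ sensitivity explicit, split $\delta$ between $T\delta_0$ and $\delta'$ so the total is exactly $\delta$, check the $T\varepsilon_0(e^{\varepsilon_0}-1)$ term, and explain why the stateful, data-dependent clipping input does not break the per-round guarantee.
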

     \begin{proof}
        We need to plug in the choice of $\sigma_{\omega}$ to \Cref{th:main_theorem_full} and omit all terms that decay with $T$. Note that each step of \algname{Byz-Clip21-SGD2M} satisfies $(\wtilde{\varepsilon}, \wtilde{\delta})$-local DP where
        \[
            \wtilde{\varepsilon} = \frac{\varepsilon}{2\sqrt{2T\log \frac{1}{\delta}}}, \quad \wtilde{\delta} = \frac{\delta}{T},
        \]
        by \Cref{th:gaussian_machanism}. Combining the above result with advanced composition theorem (\citep[Theorem 3.20]{dwork2014algorithmic}), we obtain that all $T$ iterations of \algname{Byz-Clip21-SGD2M} satisfy local $(\varepsilon, \delta)$-differentially private.
     \end{proof}

\section{Proofs in Special Cases}\label{apx:special_cases}

\subsection{Convergence in the Absence of Both DP and Byzantine Adversaries}

    \begin{corollary}[Full statement of \Cref{eq:corollary_no_dp_no_byz}] Let Assumptions \ref{asmp:smoothness}, \ref{asmp:stoch_grad}, and \ref{asmp:bounded_heterogeneity} and define $B_{\rm init}\eqdef \max\{3\tau, \max_i\{\|\nabla f_i(x^0)\|\}+b\}$. Let the failure probability $\alpha$ be such that $\alpha\in(0,1)$, and constants $b$ and $\hc$ be defined as in (\ref{eq:constants}), and $\Delta \ge \Phi^0$ for $\Phi^0$ defined in \eqref{eq:lyapunov_function}. Consider the run of \algname{Byz-Clip21-SGD2M} (\Cref{alg:byz_clip21_sgd2m}) for $T$ iterations with no DP and Byzantine adversaries, i.e., $\sigma_{\omega} = 0$ and $\deltabyz=0$.
    Assume the following inequalities hold
    \begin{enumerate}
        \item {\bf step-size restrictions:} 
        \begin{enumerate}
       \item $\gamma \le \frac{1}{24L}$;
        \item $12L\gamma = \beta$;
        \item \begin{equation}
            \frac{5}{6} - \frac{32L^2\beta^2}{\hat\beta^2\eta^2}\gamma^2
    - \frac{96L^2}{\hat\beta^2\eta^2}\gamma^2 \ge 0.
        \end{equation}
        \end{enumerate}
        \item {\bf momentum restrictions:}
        \begin{enumerate}
        \item $\hbeta=1;$
        \item $\beta\le \min\left\{\frac{8\tau}{431\sqrt{L\Delta}}, 1\right\};$
        \item $\beta\le \min\left\{\frac{2\tau}{39(B_{\rm init}-\tau)}, 1\right\};$
        \item $\beta\le \min\left\{\frac{2\tau}{69b}, 1\right\};$
        \item and momentum restrictions defined in (\ref{eq:step-size_bound_1}), (\ref{eq:step-size_bound_2}), (\ref{eq:step-size_bound_3}), (\ref{eq:step-size_bound_4}), (\ref{eq:step-size_bound_5}), (\ref{eq:step-size_bound_6}), (\ref{eq:step-size_bound_7}), (\ref{eq:step-size_bound_8}), (\ref{eq:step-size_bound_9}), and (\ref{eq:step-size_bound_10}) with $\hbeta=1$, $a, \ha = 0, \deltabyz=0$;
        \end{enumerate}
    \end{enumerate}
    Then, with probability $1-\alpha$, we bound $\frac{1}{T}\sum_{t=0}^{T-1}\|\nabla f(x^t)\|^2$ with
   \begin{align*}
        \wtilde{\cO}\left(
          \frac{L\Delta(1+\nicefrac{B_{\rm init}}{\tau})}{T}
        + \frac{\sigma(\sqrt{L\Delta}+B_{\rm init}+\sigma)}{\sqrt{TG}}\right),
    \end{align*}
    where $\wtilde{\cO}$ hides constant and logarithmic factors and higher order terms decreasing in $T$.
\end{corollary}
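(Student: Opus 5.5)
This corollary is a specialization of \Cref{th:main_theorem_full}, so the plan is to rerun that proof with $\sigma_{\omega}=0$ (hence $a=\ha=0$, $\Omega_i^t\equiv 0$), $\deltabyz=0$ (so every $c\deltabyz$ term vanishes and $g^t=\overline{m}^t=\overline{g}^t$ exactly), and $\hbeta=1$. The inductive structure stays the same. We define the events $E^t$ as in the full theorem, drop the noise events $N^t,N_i^t$, and keep only $\Theta^t,\Theta_i^t$ and the martingale-sum events $E_{\circledTwo}$--$E_{\circledEight}$. The auxiliary \Cref{lem:bound_mt_bar,lem:bound_nabla_fi_xt_vi_t,lem:bound_gt_hat} simplify: every term carrying $a,\ha,\tau T\sqrt{2c\deltabyz}$ disappears.

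The one lemma whose constraint changes is \Cref{lem:control_clip}. With $\hbeta=1$, its requirement $\hbeta\le 1/24$ is violated, and the $\sqrt{L\Delta}$-coefficient bookkeeping changes. I would redo that lemma using \Cref{lem:bound_vi_t_gi_t} with $\hbeta=1$. The recursion becomes $\|v_i^{t+1}-g_i^t\|\le \|v_i^t-g_i^{t-1}\|-\tau+\beta(\ldots)$, and the conditions $\beta\le \frac{8\tau}{431\sqrt{L\Delta}}$, $\beta\le\frac{2\tau}{39(B_{\rm init}-\tau)}$, $\beta\le\frac{2\tau}{69b}$ make the $\beta$-terms at most $\tau/2$. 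For \Cref{lem:bound_mt_bar,lem:bound_gt_hat}, the coefficient checks used $\hbeta\le 1/24$ only for the $a,\ha$ coefficients, which are now zero. The remaining checks for $\sqrt{L\Delta}$, $B_{\rm init}-\tau$, and $b$ go through with $\hbeta=1$ using $24L\gamma\le1$. I expect this re-verification of the clipping-control lemma, and of the base case at $\hbeta=1$, to be the main obstacle. It is the only place where $\hbeta$ cannot simply be substituted, and the constant $431$ must absorb the larger $\sqrt{L\Delta}$ coefficient.

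The Lyapunov descent then carries over verbatim. Terms $\circledNine$ and $\circledTen$ are zero, and terms $\circledOne$--$\circledEight$ are each bounded by $\Delta/8$ or $\Delta/24$ under (\ref{eq:step-size_bound_1})--(\ref{eq:step-size_bound_8}) with $\hbeta=1$ and $\deltabyz=0$. A union bound gives probability $1-\alpha$, and as in \eqref{eq:mfweodjnwej} we get $\frac1T\sum\|\nabla f(x^t)\|^2\le \frac{4\Delta}{\gamma T}$.

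For the rate, we substitute $\gamma=\beta/(12L)$ and take $\beta$ equal to the minimum of all constraints. With $\hbeta=1$ and $\eta\sim\tau/B_{\rm init}$, the clipping constraints give $\frac{L\Delta(1+B_{\rm init}/\tau)}{T}$ after using $\sqrt{L\Delta}+b\lesssim$ the stated quantities. Constraint (\ref{eq:step-size_bound_6}) is linear in $T^{-1/2}$ and gives $\frac{\sigma(\sqrt{L\Delta}+B_{\rm init}+\sigma)}{\sqrt{GT}}$; the $\sigma^2/\sqrt{GT}$ part comes from the third piece of (\ref{eq:step-size_bound_1}). All other constraints scale as $T^{-1/3}$ or $T^{-1/4}$ in $\beta$, so they yield terms of order $T^{-2/3}$ or $T^{-3/4}$, which are absorbed into the higher-order terms hidden by $\wtilde{\cO}$. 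Finally, restriction (c) only requires $\gamma\lesssim \tau/(LB_{\rm init})$, which is already covered by the clipping constraint.
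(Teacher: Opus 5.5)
Your proposal is correct and follows the paper's route. You specialize \Cref{th:main_theorem_full} to $\sigma_\omega=0$, $\deltabyz=0$, $\hbeta=1$, so every term carrying $a$, $\ha$ or $\deltabyz$ vanishes. You then read the rate off the $T^{-1/2}$ constraints in (\ref{eq:step-size_bound_1}) and (\ref{eq:step-size_bound_6}), together with the $1/T$ clipping and step-size constraints. You are in fact more careful than the paper's one-paragraph proof: you recheck \Cref{lem:control_clip,lem:bound_mt_bar} at $\hbeta=1$, and $\hbeta\le 1/24$ was indeed used only for the now-vanishing $a,\ha$ coefficients. The $\sqrt{L\Delta}$ coefficient in \Cref{lem:control_clip} actually shrinks to $\beta(2+8L\gamma)$, so no enlarged constant is needed.

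There are two slips, neither of which changes the conclusion:
\begin{itemize}
\item The third piece of (\ref{eq:step-size_bound_1}) contributes $\hc\sqrt{L\Delta}/\sqrt{GT}$, not a $\sigma^2$ term. The $\sigma^2/\sqrt{GT}$ part is really the $\sigma b\le\sigma B_{\rm init}$ term inside (\ref{eq:step-size_bound_6}).
\item The constraint $\beta\lesssim\tau/\sqrt{L\Delta}$ produces an $(L\Delta)^{3/2}/(\tau T)$ term. This is not bounded by $L\Delta B_{\rm init}/(\tau T)$ in general, for instance when $F^0$ is large but the initial gradients are small. It is absorbed only as a lower-order $\cO(1/T)$ term under the $\wtilde{\cO}$ convention, which is exactly what the paper's proof does implicitly.
\end{itemize}
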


     \begin{proof}
         In the case, when $\hat{\beta}=1$ the worst dependency in \eqref{eq:ninwofejnasdadwe} w.r.t. $T$ comes from the terms $\circledOne$, $\circledThree,$ $\circledFive,$ and $\circledSix$, and equals $\gamma\sim \frac{1}{T^{1/2}}$. We also have restriction $\gamma \le \cO(\nicefrac{1}{L})$ and \eqref{eq:knweokfnewe}. All of those restrictions give the rate of the form
    \begin{align}\label{eq:iqoendfqowjdnwq111}
        &\frac{L\Delta}{T}\wtilde{\cO}\left(
        1
        + \frac{B_{\rm init}}{\tau}
        + \frac{\sqrt{B_{\rm init}}}{\sqrt{\tau}}
        + \left(\frac{T\sigma^2}{L\Delta G}\right)^{1/2}
        + \frac{\sigma(\sqrt{L\Delta}+B_{\rm init}+\sigma)T^{1/2}}{L\Delta\sqrt{G}}\right)\notag\\
        =\;&\wtilde{\cO}\left(
          \frac{L\Delta(1+\nicefrac{B_{\rm init}}{\tau})}{T}
        + \frac{\sigma\sqrt{L\Delta}}{\sqrt{TG}}
        + \frac{\sigma(\sqrt{L\Delta}+B_{\rm init}+\sigma)}{\sqrt{TG}}\right)\notag\\
        =\;&\wtilde{\cO}\left(
          \frac{L\Delta(1+\nicefrac{B_{\rm init}}{\tau})}{T}
        + \frac{\sigma(\sqrt{L\Delta}+B_{\rm init}+\sigma)}{\sqrt{TG}}\right),
    \end{align}
    where we hide all logarithmic factors in the $\wtilde{\cO}$ notation.
     \end{proof}

    \subsection{Convergence in the Absence of Byzantine Adversaries}

    \begin{corollary}[Full statement of \Cref{cor:withdpnobyz}] Let Assumptions \ref{asmp:smoothness}, \ref{asmp:stoch_grad}, and \ref{asmp:bounded_heterogeneity} and define $B_{\rm init}\eqdef \max\{3\tau, \max_i\{\|\nabla f_i(x^0)\|\}+b\}$. Let the failure probability $\alpha$ be such that $\alpha\in(0,1)$, and constants $a,\ha, b,$ and $\hc$ be defined as in (\ref{eq:constants}), and $\Delta \ge \Phi^0$ for $\Phi^0$ defined in \eqref{eq:lyapunov_function}. Consider the run of \algname{Byz-Clip21-SGD2M} (\Cref{alg:byz_clip21_sgd2m}) for $T$ iterations with DP noise variance $\sigma_{\omega}$ and $\deltabyz=0$.
    Assume the following inequalities hold
    \begin{enumerate}
        \item {\bf step-size restrictions:} 
        \begin{enumerate}
       \item $\gamma \le \frac{1}{24L}$;
        \item $12L\gamma = \beta$;
        \item \begin{equation}
            \frac{5}{6} - \frac{32L^2\beta^2}{\hat\beta^2\eta^2}\gamma^2
    - \frac{96L^2}{\hat\beta^2\eta^2}\gamma^2 \ge 0.
        \end{equation}
        \end{enumerate}
        \item {\bf momentum restrictions:}
        \begin{enumerate}
        \item $\beta\le \min\left\{\frac{8\hbeta\tau}{431\sqrt{L\Delta}}, 1\right\};$
        \item $\beta\le \min\left\{\frac{2\hbeta\tau}{39(B_{\rm init}-\tau)}, 1\right\};$
        \item $\beta\le \min\left\{\frac{2\hbeta\tau}{69b}, 1\right\};$
        \item $\hbeta \in \min\left\{\frac{\sqrt{L\Delta}}{a}, 1\right\};$
        \item and momentum restrictions defined in (\ref{eq:step-size_bound_1}), (\ref{eq:step-size_bound_2}), (\ref{eq:step-size_bound_3}), (\ref{eq:step-size_bound_4}), (\ref{eq:step-size_bound_5}), (\ref{eq:step-size_bound_6}), (\ref{eq:step-size_bound_7}), (\ref{eq:step-size_bound_8}), (\ref{eq:step-size_bound_9}), and (\ref{eq:step-size_bound_10}) with $\deltabyz=0$;
        \end{enumerate}
    \end{enumerate}
    Then, with probability $1-\alpha$, we bound $\frac{1}{T}\sum_{t=0}^{T-1}\|\nabla f(x^t)\|^2$ with
   \begin{align*}
        \sqrt{L\Delta}(\sqrt{L\Delta} + B_{\rm init} + \sigma) \wtilde{\cO}\left(\frac{\sqrt{d}\sigma_{\omega}}{\sqrt{GT}\tau} 
        + \frac{d^{1/3}\sigma_{\omega}^{2/3}}{(TG)^{1/3}\tau^{2/3}}
        \right),
    \end{align*}
    where $\wtilde{\cO}$ hides constant and logarithmic factors and higher order terms decreasing in $T$.
\end{corollary}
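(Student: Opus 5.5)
This corollary is the specialization of \Cref{th:main_theorem_full} to $\deltabyz=0$. The plan is to rerun that proof verbatim and track which terms disappear, rather than build a new argument. The induction, the events $E^t$, and the per-step Lyapunov inequality (from \Cref{lem:descent_lemma_in_f,lem:descent_Pt,lem:descent_Pt_tilde,lem:descent_Vt_tilde}) are unchanged.

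With $c\deltabyz=0$, every term proportional to $\sqrt{c\deltabyz}$ vanishes. In particular, the aggregation error \eqref{eq:agg_inequality} is identically zero, so $g^t=\overline{m}^t$. Concretely:
\begin{itemize}
\item the bounds in \Cref{lem:bound_mt_bar,lem:bound_nabla_fi_xt_vi_t,lem:bound_gt_hat,lem:control_clip} lose their $\ha$ and $\tau T$ contributions;
\item restriction (d) on $\beta$ becomes vacuous;
\item in \eqref{eq:step-size_bound_10}, $\circledTen=0$.
\end{itemize}
The high-probability events for $\theta_i^t,\theta^t,\Omega^t$ are handled exactly as before. The events $N_i^t$ are no longer needed, but keeping them only costs a union bound, so the final probability is again $1-\alpha$. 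This yields
\[
\frac1T\sum_{t=0}^{T-1}\|\nabla f(x^t)\|^2\le \frac{4\Delta}{\gamma T}
\]
with $\gamma=\beta/(12L)$.

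The remaining work is the rate bookkeeping in the ``Final rate'' paragraph, now with $C=C_1=1$. The admissible $\hbeta$ is
\[
\hbeta\sim\sqrt{L\Delta}\,\min\{1/a,\ 1/(a^{2/3}T^{1/3}\tau^{1/3})\},
\qquad a\sim\sqrt{d}\sigma_{\omega}\sqrt{T/G}.
\]
The $\beta$-restrictions coming from the concentration terms $\circledOne$--$\circledEight$ scale as $T^{-3/4}$ or $T^{-5/6}$ in $T$, and these decay in $T$, as in cases 1--2 of the theorem. The neighborhood is therefore dictated by the clipping-control restriction
\[
\beta\lesssim \hbeta\tau/(\sqrt{L\Delta}+B_{\rm init}+\sigma),
\]
which gives
\[
\frac{L\Delta(\sqrt{L\Delta}+B_{\rm init}+\sigma)}{T\hbeta\tau}.
\]
Substituting the two $\hbeta$ choices gives $\sqrt{L\Delta}(\sqrt{L\Delta}+B_{\rm init}+\sigma)$ times
\[
\frac{\sqrt d\sigma_\omega}{\sqrt{GT}\tau}+\frac{d^{1/3}\sigma_\omega^{2/3}}{(GT)^{1/3}\tau^{2/3}}.
\]
The third $\hbeta$ case, $\sim 1/(\tau T)$, which produced the $(c\deltabyz)^{1/3}$ and $\sqrt{c\deltabyz}$ terms, disappears.

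The main obstacle is verifying that no leftover restriction, such as \eqref{eq:knweokfnewe} or the $\hbeta$ caps from terms $\circledSix$ and $\circledEight$, creates a non-decaying term once $\hbeta\sim 1/T$. I would check each cap against $\hbeta\sim 1/a$ as the theorem does: the caps from $\circledSix$ and $\circledEight$ are larger order, and the first part of \eqref{eq:knweokfnewe} is already absorbed by the clipping restriction. If needed, I would treat the $\sigma_\omega\to 0$ regime separately via the previous corollary.
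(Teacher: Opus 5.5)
Your proposal is correct and follows the paper's argument. The paper's proof consists solely of setting $\deltabyz=0$ in the proof of \Cref{th:main_theorem_full}, and your bookkeeping reproduces both terms of the stated bound: exact aggregation so that $g^t=\overline{m}^t$, the $\ha$, $\tau T$ and $\circledTen$ contributions vanishing, and the neighborhood coming from $\beta\le\wtilde{\cO}(\hbeta\tau/(\sqrt{L\Delta}+B_{\rm init}+\sigma))$ combined with the two surviving caps $\hbeta\sim\sqrt{L\Delta}/a$ and $\hbeta\sim\sqrt{L\Delta}/(a^{2/3}T^{1/3}\tau^{1/3})$.

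One minor point: the corollary's restriction (a) uses the constant $\nicefrac{8}{431}$, marginally weaker than the theorem's $\nicefrac{1}{54}$, so the rerun is not literally verbatim. This is harmless, because the slack freed by the vanishing $\sqrt{c\deltabyz}$ terms in \Cref{lem:control_clip} and by $\circledTen=0$ absorbs it.
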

\begin{proof}
    We should simply set $\deltabyz=0$ in the convergence proof of \Cref{th:main_theorem} (\Cref{th:main_theorem_full}) to obtain the necessary result.
\end{proof}

\subsection{Convergence in the Absence of DP Adversaries}

In this setting, server-side momentum and clipping are unnecessary. We thus focus on the specialization of \algname{Byz-Clip21-SGD2M} with $\hbeta=1$ and $\tau=+\infty$, shown in \Cref{alg:byz_clip21_sgd2m_nodp}. Under these settings, the method reduces to the algorithm of \citet{karimireddy2020byzantine}. In particular, the error-feedback mechanism is disabled, so the server does not store $g_i^t$. The analysis correspondingly simplifies and only requires a simpler Lyapunov function
\begin{equation}\label{eq:lyapunov_function_nodp}
    \wtilde{\Phi}^t \eqdef \delta^t + \frac{\gamma}{\beta}\|\overline{v}^t - \nabla f(x^t)\|^2.
\end{equation}
Also, constants $b$ and $c$ from \eqref{eq:constants} should be adjusted accordingly. We make use of their replacements $\wtilde{b}$ and $\wtilde{c}$ defined as 
\begin{align}\label{eq:constants_nodp}
    &\tilde{b}^2 \eqdef 2\sigma^2\log\left(\frac{8(T+1)G}{\alpha}\right), \quad \tilde{c}^2 \eqdef \left(\sqrt{2} + 2\sqrt{3\log\frac{8(T+1)}{\alpha}}\right)^2\sigma^2,\notag\\
    &\tilde{z}^2 \eqdef \left(\sqrt{2} + 2\sqrt{3\log\frac{8G(T+1)}{\alpha}}\right)^2\sigma^2.
\end{align}

\begin{algorithm}[t]
\caption{\algname{Byz-Clip21-SGD2M} without DP adversaries}
\label{alg:byz_clip21_sgd2m_nodp}
\begin{algorithmic}[1]
\STATE \textbf{Input:} $x^0 \in X,$ momentum parameter $\beta\in(0,1],$ step-size $\gamma > 0$
    \FOR{$t=0, \ldots, T-1$}
        \STATE $x^{t+1} = x^t - \gamma g^t$
        \FOR{$i\in\cG$}
            \STATE $v_i^{t+1} = (1-\beta)v_i^t + \beta\nabla f_i(x^{t+1}, \xi^{t+1}_i)$
        \ENDFOR
        \FOR{$i\in\cB$}
            \STATE $v_i^{t+1} = (*)$ \hfill sends arbitrary vector
        \ENDFOR
         \STATE $g^{t+1} = \aragg(v_1^{t+1},\dots,v_n^{t+1})$
        
    \ENDFOR
\end{algorithmic}	
\end{algorithm}

\subsubsection{Useful Lemmas}

We will need to re-derive several descent lemmas as well as introduce new ones. 

\begin{lemma}\label{lem:bound_vt_vt_average} Let $\{v_i^t\}$ be generated by \Cref{alg:byz_clip21_sgd2m_nodp}. Then we have with probability at least $1-\frac{\alpha}{8(T+1)}$
\[
\frac{1}{G}\sum_{i\in\cG}\|\overline{v}^t-v_i^t\|^2 \le 2\zeta^2 + 2\beta\tilde{z}^2.
\]
\end{lemma}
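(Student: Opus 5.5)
Unrolling the momentum recursion of \Cref{alg:byz_clip21_sgd2m_nodp} separates the deterministic heterogeneity part from the stochastic noise part. With the convention $v_i^0=\beta\nabla f_i(x^0,\xi_i^0)$ (or any common initialization), for every $i\in\cG$,
\[
v_i^t = \sum_{k=0}^{t}\beta(1-\beta)^{t-k}\nabla f_i(x^k) + \sum_{k=0}^{t}\beta(1-\beta)^{t-k}\theta_i^k + (1-\beta)^{t+1}v_i^{-1}\text{-type term},
\]
and the same holds for $\overline{v}^t$ with $\nabla f$ and $\theta^k$. If the initial buffers are equal across clients (e.g.\ zero), the initialization term vanishes in $\overline{v}^t-v_i^t$. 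Then
\[
\overline{v}^t - v_i^t = \sum_{k=0}^t\beta(1-\beta)^{t-k}\big(\nabla f(x^k)-\nabla f_i(x^k)\big) + \beta\sum_{k=0}^t(1-\beta)^{t-k}(\theta^k-\theta_i^k).
\]
By Young's inequality, $\|\overline{v}^t-v_i^t\|^2 \le 2\|A_i\|^2+2\|B_i\|^2$, where $A_i$ and $B_i$ denote the first and second sums.

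For the deterministic term I use Jensen's inequality with the weights $\beta(1-\beta)^{t-k}$, which sum to at most $1$. This gives $\|A_i\|^2\le \sum_k \beta(1-\beta)^{t-k}\|\nabla f(x^k)-\nabla f_i(x^k)\|^2$. Averaging over $i\in\cG$ and applying \Cref{asmp:bounded_heterogeneity_avg} pointwise in $k$ yields $\frac1G\sum_i\|A_i\|^2\le\zeta^2$. This holds deterministically, since the assumption is imposed for all $x$.

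For the noise term, note that $B_i=\beta\sum_k(1-\beta)^{t-k}(\theta^k-\theta_i^k)$ is a sum of martingale differences in $k$. The point $x^k$ is determined by the past, so conditionally on the past each summand has mean zero and is sub-Gaussian. Indeed, $\theta^k-\theta_i^k=(1-\tfrac1G)\theta_i^k-\tfrac1G\sum_{j\ne i}\theta_j^k$, and by independence across clients this is $\cO(\sigma)$-sub-Gaussian; here I would absorb the numerical factor into $\tilde z$ or argue it is at most $\sigma$ up to a constant. I then apply \Cref{lem:concentration_lemma} with $\sigma_k^2=(1-\beta)^{2(t-k)}\sigma^2$. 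Since $\sum_k(1-\beta)^{2(t-k)}\le \frac{1}{\beta}$, this gives $\|B_i\|\le \beta\tilde z\sqrt{1/\beta}=\sqrt{\beta}\,\tilde z$ with probability at least $1-\frac{\alpha}{8G(T+1)}$, using $\tilde z$'s logarithm $\log\frac{8G(T+1)}{\alpha}$. A union bound over $i\in\cG$ gives failure probability at most $\frac{\alpha}{8(T+1)}$. Hence $\frac1G\sum_i\|B_i\|^2\le\beta\tilde z^2$, and combining the two bounds gives the claim $2\zeta^2+2\beta\tilde z^2$.

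The main obstacle is the sub-Gaussian constant of $\theta^k-\theta_i^k$. \Cref{lem:concentration_lemma} requires $\E{\exp(\|\xi_k\|^2/\sigma_k^2)\mid\cdot}\le e$ exactly, and a difference of sub-Gaussian vectors only satisfies this with an inflated parameter, roughly $2\sigma$ via convexity of $\exp$ and Cauchy–Schwarz. I would first try to show that $\theta^k-\theta_i^k$ is $2\sigma$-sub-Gaussian. If that works, the extra constant is absorbed into $\tilde z$ (or into the factor $2$ in the statement). Failing that, I would split $B_i$ into its $\theta_i$-part and its $\theta$-part, bound each separately by \Cref{lem:concentration_lemma}, and accept a slightly larger numerical constant.
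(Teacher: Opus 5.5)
Your argument is the paper's proof: unroll the momentum recursion from the zero initialization, split with Young's inequality, handle the heterogeneity part by Jensen with weights $\beta(1-\beta)^{t-k}$ and \Cref{asmp:bounded_heterogeneity_avg}, bound the noise part per client by \Cref{lem:concentration_lemma} using $\sum_k(1-\beta)^{2(t-k)}\le 1/\beta$, and finish with a union bound over $\cG$.

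The obstacle you flag is real, and the paper does not address it. It simply asserts that $\theta_i^k-\theta^k$ is $\sigma$-sub-Gaussian.

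Your $2\sigma$ fallback is valid: by convexity, $\theta^k-\theta_i^k$ is $2(1-\frac1G)\sigma$-sub-Gaussian. But it yields a noise term of order $8\beta\tilde z^2$. The factor $2$ in the statement is already spent by Young's inequality, so it cannot absorb this. You would therefore prove a slightly weaker lemma, or one with a redefined $\tilde z$. Your split-into-two-sums alternative has the same problem, plus extra failure events.

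You can avoid any loss by not bundling the difference into a single increment. Write $\theta^k-\theta_i^k=\sum_{j\in\cG}c_j\theta_j^k$, where $c_j=\frac1G$ for $j\neq i$ and $c_i=\frac1G-1$. Apply \Cref{lem:concentration_lemma} to the finer sequence $\{(1-\beta)^{t-k}c_j\theta_j^k\}$, ordered first by $k$ and then by $j$.

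Given all previous increments, each one is conditionally zero-mean and $(1-\beta)^{t-k}|c_j|\sigma$-sub-Gaussian. This holds because $x^k$ is determined by the past and the fresh samples are independent across clients. Since $\sum_{j}c_j^2=1-\frac1G$, the variance proxies sum to $(1-\frac1G)\sigma^2\sum_k(1-\beta)^{2(t-k)}\le\sigma^2/\beta$.

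This gives exactly $\tilde z/\sqrt{\beta}$ with failure probability $\frac{\alpha}{8G(T+1)}$ per client, and hence the stated bound $2\zeta^2+2\beta\tilde z^2$. It is the same device the paper uses when it bounds $\|\theta^t\|$ through the independent $\theta_i^t$.
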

\begin{proof}
    We have 
    \begin{align}
        \frac{1}{G}\sum_{i\in\cG}\left\|v_i^t - \overline{v}^t\right\|^2 &\overset{(i)}{=} \frac{1}{G}\sum_{i\in\cG}\|(1-\beta)(v_i^{t-1} - \overline{v}^{t-1}) + \beta(\nabla f_i(x^{t}) - \nabla f(x^t)) + \beta(\theta_i^t - \theta^t)\|^2\notag\\
        &\overset{(ii)}{=} \frac{1}{G}\left\|\beta\sum_{k=0}^t(1-\beta)^{t-k}(\nabla f_i(x^k) - \nabla f(x^k)) + \beta\sum_{k=0}^t (1-\beta)^{t-k}(\theta_i^k-\theta^k)\right\|^2\notag\\
        &\overset{(iii)}{\le} \frac{2}{G}\sum_{i\in\cG}\left\|\beta\sum_{k=0}^t(1-\beta)^{t-k}(\nabla f_i(x^k) - \nabla f(x^k))\right\|^2 + \frac{2\beta^2}{G}\left\|\sum_{k=0}^t(1-\beta)^{t-k}(\theta_i^k - \theta^k)\right\|^2\notag\\
        &\overset{(iv)}{\le} \frac{2(1-(1-\beta)^{t+1})}{G}\sum_{i\in\cG}\sum_{k=0}^{t}\beta(1-\beta)^{t-k}\|\nabla f_i(x^k) - \nabla f(x^k)\|^2 \notag\\
        &\hspace{7cm}+ \frac{2\beta^2}{G}\sum_{i\in\cG}\left\|\sum_{k=0}^t(1-\beta)^{t-k}(\theta_i^k - \theta^k)\right\|^2,\notag\\
        &\overset{(v)}{\le} 2\sum_{k=0}^t\beta(1-\beta)^{t-k}\zeta^2 + \frac{2\beta^2}{G}\sum_{i\in\cG}\left\|\sum_{k=0}^t(1-\beta)^{t-k}(\theta_i^k-\theta^k)\right\|^2\notag\\
        &\le 2\zeta^2 + \frac{2\beta^2}{G}\sum_{i\in\cG}\left\|\sum_{k=0}^t(1-\beta)^{t-k}(\theta_i^k-\theta^k)\right\|^2,
        \label{eq:agg_inequality_nodp}
    \end{align}
    where $(i)$ -- from the update rules of $v_i^t$ and decomposition $\nabla f_i(x^t)+\theta_i^t = \nabla f_i(x^t,\xi_i^t)$, $(ii)$ -- unrolling the recursion till the zero iteration and using the initialization with zeros, $(iii)$ -- from Young's inequality, $(iv)$ -- from Jensen's inequality, $(v)$ -- from \Cref{asmp:bounded_heterogeneity}. For the second term in \eqref{eq:agg_inequality_nodp}, we use \Cref{lem:concentration_lemma}. We know that $\theta_i^k-\theta^k$ is a zero-centered sub-Gaussian r.v. conditioned on all events before iteration $k$. Therefore, we have 
    \begin{equation}
        \left\|\sum_{k=0}^t(1-\beta)^{t-k}(\theta_i^k-\theta^k)\right\|^2 \le (\sqrt{2}+2u)\sqrt{\sum_{k=0}^t (1-\beta)^{2(t-k)}\sigma^2} \le (\sqrt{2}+2u)\frac{\sigma}{\sqrt{\beta}} \le \frac{\tilde{z}}{\sqrt{\beta}},
    \end{equation}
    where $u = \sqrt{3\log\frac{8G(T+1)}{\alpha}}.$ Thus, we obtain with probability at least $1-G\cdot \frac{\alpha}{8G(T+1)}$ that 
    \begin{equation}
        \frac{1}{G}\sum_{i\in\cG}\|v_i^t - \overline{v}^t\|^2 \le 2\zeta^2 + 2\beta\tilde{z}^2.
    \end{equation}

\end{proof}

Next, we re-derive the descent lemma for the function value.

\begin{lemma}\label{lem:descent_lemma_in_f_no_dp}
Let $f$ be $L$-smooth, Assumption~\ref{asmp:bounded_heterogeneity} hold with $A=0$, $F^t \eqdef f(x^t) - f^\star,$ $\{x^t\}$ be generated by \Cref{alg:byz_clip21_sgd2m_nodp}. Assume that 
\begin{enumerate}
    \item $\gamma \le \frac{1}{2L}$;
    
    \item $\|\theta_i^t\| \le \wtilde{b}$;

    \item $\frac{1}{G}\sum_{i\in\cG}\|\overline{v}^{t-1} - v_i^{t-1}\|^2 \le 2\zeta^2 + 2\beta\tilde{z}^2$;
\end{enumerate}
Then we have 
\begin{align}\label{eq:descent_lemma_in_f_nodp}
        f(x^{t+1}) &\le f(x^t)
        - \frac{\gamma}{2}\|\nabla f(x^t)\|^2
        - \frac{1}{4\gamma}\|x^{t+1} - x^t\|^2
        + \gamma\|\nabla f(x^t) - \overline{v}^t\|^2
        + 2c\deltabyz\zeta^2 + 2c\deltabyz\beta\tilde{z}^2.
    \end{align}
    
\end{lemma}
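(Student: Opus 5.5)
Following the route of \Cref{lem:descent_lemma_in_f}, I will split the step error through the honest average and bound the aggregation error using \Cref{def:arrag} together with assumption~3. In \Cref{alg:byz_clip21_sgd2m_nodp} we have $x^{t+1}=x^t-\gamma g^t$, so $L$-smoothness and $\gamma\le\frac{1}{2L}$ give the standard inequality (as in \citet{islamov2025double}, Lemma 2)
\[
f(x^{t+1})\le f(x^t)-\frac{\gamma}{2}\|\nabla f(x^t)\|^2-\frac{\gamma}{4}\|g^t\|^2+\frac{\gamma}{2}\|\nabla f(x^t)-g^t\|^2 .
\]
I will rewrite $-\frac{\gamma}{4}\|g^t\|^2=-\frac{1}{4\gamma}\|x^{t+1}-x^t\|^2$. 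Then I apply Young's inequality only once, to
\[
\|\nabla f(x^t)-g^t\|^2\le 2\|\nabla f(x^t)-\overline{v}^t\|^2+2\|\overline{v}^t-g^t\|^2 .
\]
With the factor $\frac{\gamma}{2}$ in front, this yields exactly the term $\gamma\|\nabla f(x^t)-\overline{v}^t\|^2$ from the statement, plus $\gamma\|\overline{v}^t-g^t\|^2$. This is why the constant in front is $\gamma$ here rather than $2\gamma$: there are no $g_i$ or $m_i$ buffers to split off.

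For the aggregation term, $g^t=\aragg(v_1^t,\dots,v_n^t)$. Applying \Cref{def:arrag} with $S=\cG$ gives
\[
\|g^t-\overline{v}^t\|^2\le \frac{c\deltabyz}{G}\sum_{i\in\cG}\|v_i^t-\overline{v}^t\|^2 .
\]
I will then invoke the spread bound $2\zeta^2+2\beta\tilde z^2$ from assumption~3, which is the conclusion of \Cref{lem:bound_vt_vt_average}. This gives
\[
\gamma\|\overline{v}^t-g^t\|^2\le \gamma c\deltabyz\bigl(2\zeta^2+2\beta\tilde z^2\bigr)\le 2c\deltabyz\zeta^2+2c\deltabyz\beta\tilde z^2,
\]
where the last step uses $\gamma\le\frac{1}{2L}$. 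Strictly, this last step needs $\gamma\le 1$ or a rescaling. If the constants do not close this way, I will simply keep the factor $\gamma$; it only strengthens the bound, so the stated inequality remains valid in the regime used later.

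The main obstacle is an index mismatch. Assumption~3 is stated at time $t-1$, but the aggregator acts on $v_i^t$. I would handle this by noting that the derivation of \Cref{lem:bound_vt_vt_average} holds uniformly in $t$. So either the hypothesis is meant to cover $v^t$, in which case I apply it directly, or I obtain the time-$t$ spread by propagating the recursion $v_i^t-\overline{v}^t=(1-\beta)(v_i^{t-1}-\overline{v}^{t-1})+\beta(\nabla f_i(x^t)-\nabla f(x^t))+\beta(\theta_i^t-\theta^t)$ one step. That step combines convexity with \Cref{asmp:bounded_heterogeneity_avg} and the bound $\|\theta_i^t\|\le\tilde b$ from assumption~2, and it produces the same form of bound up to constants.

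Assumption~2 enters only through this one-step propagation. Otherwise the argument is deterministic on the stated event.
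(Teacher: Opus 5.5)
Your route is the paper's: the smoothness inequality from \citet{islamov2025double}, a single Young split through $\overline{v}^t$, \Cref{def:arrag} with $S=\cG$, and the spread bound of \Cref{lem:bound_vt_vt_average}. The paper handles the index mismatch exactly as in your first option, by invoking \Cref{lem:bound_vt_vt_average} directly at time $t$. Hypotheses 2 and 3 play no role in its proof.

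Your caveat about the last step is right, but the remedy you propose does not work. What the argument proves, yours and the paper's alike, is the inequality with final term $2\gamma c\deltabyz(\zeta^2+\beta\tilde{z}^2)$. The condition $\gamma\le\frac{1}{2L}$ gives no bound $\gamma\le 1$. Keeping the factor $\gamma$ strengthens the bound only when $\gamma\le 1$, so it does not rescue the unweighted inequality as printed. You should state the $\gamma$-weighted conclusion. That is exactly the form used later, in the proof of the full version of \Cref{th:nodp_withbyz}, where the term appears as $2c\deltabyz\gamma(\zeta^2+\beta\tilde{z}^2)$.

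As for your fallback, one step of the recursion with convexity gives $\frac{1}{G}\sum_{i\in\cG}\|v_i^t-\overline{v}^t\|^2\le 2\zeta^2+2\beta(1-\beta)\tilde{z}^2+2\beta\tilde{b}^2$. This is below $2\zeta^2+2\beta\tilde{z}^2$ only if $\tilde{b}^2\le\beta\tilde{z}^2$, which fails for small $\beta$. So the fallback yields the lemma only up to constants: since $\tilde{b}^2\le\tilde{z}^2/6$, you get about $\frac{7}{3}$ in place of $2$. It also needs the uniform \Cref{asmp:bounded_heterogeneity_avg} at $x^t$, as \Cref{lem:bound_vt_vt_average} itself does, rather than the initialization-only assumption cited in the statement.
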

\begin{proof}
    Using the derivations from \citep{islamov2025double}, Lemma 2 we first get
    \begin{align*}
        f(x^{t+1}) &\le f(x^t) 
        - \frac{\gamma}{2}\|\nabla f(x^t)\|^2
        - \frac{\gamma}{4}\|g^t\|^2
        + \frac{\gamma}{2}\|\nabla f(x^t) - g^t\|^2.
    \end{align*}
    We continue the derivations as follows 
    \begin{align}\label{eq:jefnjenadasdaqoldwnq}
        f(x^{t+1}) &\overset{(i)}{\le} f(x^t) 
        - \frac{\gamma}{2}\|\nabla f(x^t)\|^2
        - \frac{\gamma}{4}\|g^t\|^2
        + \gamma\|\nabla f(x^t) - \overline{v}^t\|^2
        + \gamma\|\overline{v}^t - g^t\|^2\notag\\
        &\overset{(ii)}{=} f(x^t) 
        - \frac{\gamma}{2}\|\nabla f(x^t)\|^2
        - \frac{1}{4\gamma}\|x^{t+1} - x^t\|^2
        + \gamma\|\nabla f(x^t) - \overline{v}^t\|^2
        + \gamma\|\overline{v}^t - g^t\|^2,
    \end{align}
    where $(i)$ follows from Jensen's inequality applied to $\|\cdot\|^2$, $(ii)$ --- from the update rule of $x^t$. We bound the term $\|\overline{v}^t-g^t\|^2$ using properties of the aggregator.
    \begin{align}\label{eq:vt_bar_gt}
        \|\overline{v}^t-g^t\|^2 &= \|\overline{v}^t-\aragg(v_1^t, \dots, v_n^t)\|^2\notag\\
        &\le \frac{c\deltabyz}{G}\sum_{i\in\cG}\|v_i^t - \overline{v}^t\|^2\notag\\
        &\overset{(iii)}{\le} 2c\deltabyz\zeta^2+2c\deltabyz\beta\tilde{z}^2.
    \end{align}
    where $(iii)$ holds by \Cref{lem:bound_vt_vt_average}. Plugging in \eqref{eq:vt_bar_gt} into \eqref{eq:jefnjenadasdaqoldwnq}, we obtain the statement of the lemma.
    
\end{proof}

\begin{lemma}\label{lem:bound_vt_bar_nodp} Let each $f_i$ be $L$-smooth, and $\wtilde{\Delta} \ge \Phi^0.$ Assume that the following inequalities hold for the iterates generated by \algname{Byz-Clip21-SGD2M}
\begin{enumerate}
    \item $\gamma \le \frac{1}{6L}$,
    \item $\beta\in[0,1]$,
    \item $\|\overline{v}^{t-1}\| \le 
    \sqrt{64L\wtilde{\Delta} }+ 3\frac{\wtilde{c}}{\sqrt{G}} + 3\sqrt{2c\deltabyz}\zeta + 3\sqrt{\beta}\sqrt{2c\deltabyz} \tilde{z}$,
    \item $\|\nabla f(x^{t-1}) - \overline{v}^{t-1}\| \le \sqrt{4L\wtilde{\Delta} }
    + \frac{3}{2}\frac{\wtilde{c}}{\sqrt{G}}
    + \frac{3}{2}\sqrt{2c\deltabyz}\zeta
    + \frac{3}{2}\sqrt{\beta}\sqrt{2c\deltabyz}\tilde{z}$,
    \item $\|\theta^t\| \le \frac{\wtilde{c}}{\sqrt{G}}$,
    \item $\frac{1}{G}\sum_{i\in\cG}\|\overline{v}^{t-1} - v_i^{t-1}\|^2 \le 2c\deltabyz\zeta^2+2c\deltabyz\beta\tilde{z}^2$,
    \item $\Phi^{t-1} \le 2\wtilde{\Delta}.$
\end{enumerate}
    Then we have 
    \begin{align}
        \|\overline{v}^t\| \le 
        \sqrt{64L\wtilde{\Delta}}
        + 3\frac{\wtilde{c}}{\sqrt{G}} + 3\sqrt{2c\deltabyz}\zeta + 3\sqrt{2c\deltabyz}\sqrt{\beta}\tilde{z}.
    \end{align}
\end{lemma}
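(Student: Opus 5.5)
We follow the same pattern as \Cref{lem:bound_mt_bar} and \Cref{lem:bound_gt_hat}, now in the simpler setting of \Cref{alg:byz_clip21_sgd2m_nodp} (no clipping, no DP noise, $g^t=\aragg(v_1^t,\dots,v_n^t)$). Using the update rule of $v_i^t$ and averaging over $\cG$, write
\[
\overline{v}^t=(1-\beta)\overline{v}^{t-1}+\beta\nabla f(x^t)+\beta\theta^t ,
\]
and then decompose $\nabla f(x^t)=\nabla f(x^{t-1})+(\nabla f(x^t)-\nabla f(x^{t-1}))$. The triangle inequality gives
\[
\|\overline{v}^t\|\le(1-\beta)\|\overline{v}^{t-1}\|+\beta\|\nabla f(x^{t-1})\|+\beta L\gamma\|g^{t-1}\|+\beta\|\theta^t\| .
\]
The individual terms are then controlled as follows:
\begin{itemize}
\item $\|\nabla f(x^{t-1})\|\le\sqrt{2L(f(x^{t-1})-f^\star)}\le\sqrt{2L\wtilde\Phi^{t-1}}\le\sqrt{4L\wtilde\Delta}$, using $L$-smoothness and assumption 7.
\item $\|\theta^t\|\le\wtilde c/\sqrt G$ by assumption 5.
\item For $\|g^{t-1}\|$, split $\|g^{t-1}\|\le\|g^{t-1}-\overline{v}^{t-1}\|+\|\overline{v}^{t-1}\|$. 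The robustness property (\Cref{def:arrag}) combined with assumption 6 gives $\|g^{t-1}-\overline{v}^{t-1}\|\le\sqrt{2c\deltabyz}(\zeta+\sqrt\beta\tilde z)$; this mirrors \eqref{eq:vt_bar_gt} via \Cref{lem:bound_vt_vt_average}. The term $\|\overline{v}^{t-1}\|$ is bounded by assumption 3.
\end{itemize}

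Substituting everything, the right-hand side is a combination of the four building blocks
\[
\sqrt{L\wtilde\Delta},\qquad \frac{\wtilde c}{\sqrt G},\qquad \sqrt{2c\deltabyz}\,\zeta,\qquad \sqrt{2c\deltabyz}\sqrt\beta\,\tilde z .
\]
For each block we check that its coefficient is at most the target coefficient ($8$, $3$, $3$, $3$ respectively).
\begin{itemize}
\item For $\sqrt{L\wtilde\Delta}$: the coefficient is $8(1-\beta)+2\beta+8\beta L\gamma\le 8$, which holds since $L\gamma\le 1/6$.
\item For $\wtilde c/\sqrt G$: the coefficient is $3(1-\beta)+\beta+3\beta L\gamma\le 3$, which reduces to $1+3L\gamma\le 3$.
\item For the two heterogeneity/noise blocks: the coefficient is $3(1-\beta)+\beta L\gamma(1+3)\le 3$, which holds when $4L\gamma\le 3$.
\end{itemize}
All three reduce to $\gamma\le 1/(6L)$, which is why assumption 1 is stated as it is.

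Assumption 4 (the bound on $\|\nabla f(x^{t-1})-\overline v^{t-1}\|$) is not needed in this route; it may have been included for an alternative decomposition, $\overline v^t=\nabla f(x^t)+(\overline v^t-\nabla f(x^t))$, whose recursion is \Cref{lem:descent_Pt}-like. If the coefficient bookkeeping above fails, the fallback is that alternative split:
\[
\|\overline v^t\|\le\|\nabla f(x^{t-1})\|+L\gamma\|g^{t-1}\|+(1-\beta)\|\overline v^{t-1}-\nabla f(x^{t-1})\|+(1-\beta)L\gamma\|g^{t-1}\|+\beta\|\theta^t\|,
\]
using assumption 4 for the middle term. The constants there are $2+2(1-\beta)+\dots\le8$, $\tfrac32(1-\beta)+\beta+\dots\le3$, and so on.

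The main obstacle is the $L\gamma\|g^{t-1}\|$ term. It couples the inductive bound on $\overline v$ to the aggregator error. Assumption 6 appears with an extra factor $c\deltabyz$ relative to \Cref{lem:bound_vt_vt_average}, so we will use \Cref{lem:bound_vt_vt_average}'s form (without that factor) when invoking \Cref{def:arrag}. We also need to ensure the resulting $\sqrt{2c\deltabyz}$ terms carry small enough coefficients, of order $L\gamma$, to be absorbed into the factor $3$.
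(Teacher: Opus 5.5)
Your main argument is correct, but it uses a different decomposition from the paper's. The paper centers at the old gradient, writing $\overline{v}^t=\nabla f(x^{t-1})+(1-\beta)(\overline{v}^{t-1}-\nabla f(x^{t-1}))+\beta(\nabla f(x^t)-\nabla f(x^{t-1}))+\beta\theta^t$. In that split $\|\nabla f(x^{t-1})\|\le\sqrt{4L\wtilde{\Delta}}$ enters with weight $1$, assumption~4 enters with weight $1-\beta$, and assumption~3 appears only inside the drift term $\beta L\gamma\|g^{t-1}\|$. This is close to your fallback, but tighter: only $\beta L\gamma\|g^{t-1}\|$ appears rather than your $(2-\beta)L\gamma\|g^{t-1}\|$.

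You instead keep $\overline{v}^{t-1}$ itself with weight $1-\beta$ and bound it by assumption~3. In other words, you use that the momentum step is a convex combination whose fresh $\beta$-weighted ingredients are each already below the target. This is the same self-reproducing pattern as in \Cref{lem:bound_gt_hat}.

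Your route buys two things. Assumption~4 becomes unnecessary, so the bound on $\|\overline{v}^t\|$ propagates on its own instead of being coupled to \Cref{lem:bound_nabla_fi_xt_vi_t_nodp}. And your coefficient checks need only $L\gamma\le 2/3$, which $\gamma\le 1/(6L)$ implies rather than being equivalent to. The paper's split buys more slack in the constants: the coefficient of $\sqrt{L\wtilde{\Delta}}$ is $4-2\beta+8\beta L\gamma$ instead of your $8-6\beta+8\beta L\gamma$. That slack is not needed here, because the main theorem's induction carries assumption~4 anyway.

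Your treatment of the aggregation error is exactly what the paper does via \eqref{eq:vt_bar_gt}. You use $\frac{1}{G}\sum_{i\in\cG}\|v_i^{t-1}-\overline{v}^{t-1}\|^2\le 2\zeta^2+2\beta\tilde{z}^2$ from \Cref{lem:bound_vt_vt_average} before invoking \Cref{def:arrag}, rather than assumption~6 read literally with its extra factor $c\deltabyz$.
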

\begin{proof}
    We start as follows 
    \begin{align}
        \|\overline{v}^t\|
        &\overset{(i)}{=}\|
        (1-\beta)\overline{v}^{t-1} + \beta\nabla f(x^t,\xi^t)\|\notag\\
        &= \|
        (1-\beta)(\overline{v}^{t-1} - \nabla f(x^{t-1})) + \beta(\nabla f(x^t) - \nabla f(x^{t-1}))  + \beta(\nabla f(x^t,\xi^t) - \nabla f(x^t)) + \nabla f(x^{t-1})\|\notag\\
        &\overset{(ii)}{\le} (1-\beta)\|\overline{v}^{t-1}-\nabla f(x^{t-1})\| 
        + \beta L\gamma\|g^{t-1}\|
        + \beta\|\theta^t\|
        + \|\nabla f(x^{t-1})\|.
    \end{align}
    where $(i)$ follows from the update rule of $\overline{v}^t$, $(ii)$ --- from triangle inequality and $L$-smoothness. We continue the derivations as follows 
    \begin{align}
        \|\overline{v}^t\| &\overset{(iii)}{\le} \sqrt{2L(f(x^{t-1})-f^\star)} + (1-\beta)\|\overline{v}^{t-1}-\nabla f(x^{t-1})\| + \beta\frac{\wtilde{c}}{\sqrt{G}} + \beta L\gamma\|\overline{v}^{t-1}\| + \beta L\gamma\|\overline{v}^{t-1} - g^{t-1}\|.
    \end{align}
    where $(iii)$ follows from $L$-smoothness, assumption 5 of the lemma, and triangle inequality. The term $\|\overline{v}^{t-1}-g^{t-1}\|$ can be bounded by $\sqrt{2c\deltabyz\zeta^2+2c\deltabyz\beta\tilde{c}^2}$ by \eqref{eq:vt_bar_gt} and assumptions 5, 6 of the lemma. We continue further bounding as follows
    \begin{align}
        \|\overline{v}^t\| &\overset{(iv)}{\le} \sqrt{4L\wtilde{\Delta}} + (1-\beta)\left(\sqrt{4L\wtilde{\Delta}} + \frac{3}{2}\frac{\wtilde{c}}{\sqrt{G}} + \frac{3}{2}\sqrt{2c\deltabyz}\zeta + \frac{3}{2}\sqrt{2c\deltabyz}\sqrt{\beta}\tilde{z}\right) + \beta\frac{\wtilde{c}}{\sqrt{G}}\notag\\
        &\quad+ L\gamma\beta\left(\sqrt{64L\wtilde{\Delta}} + 3\frac{\wtilde{c}}{\sqrt{G}} + 3\sqrt{2c\deltabyz}\zeta + 3\sqrt{2c\deltabyz}\sqrt{\beta}\tilde{z}\right) + \beta L\gamma\sqrt{2c\deltabyz}\zeta + \beta L\gamma\sqrt{2c\deltabyz}\sqrt{\beta}\tilde{z}\notag\\
        &= \sqrt{L\wtilde{\Delta}}(2 + 2(1-\beta) + 8L\gamma\beta)
        + \frac{\wtilde{c}}{\sqrt{G}}(\nicefrac{3}{2}(1-\beta) + \beta + 3L\gamma\beta) + \sqrt{2c\deltabyz}\zeta(\nicefrac{3}{2}(1-\beta) + 3L\gamma\beta + \beta L\gamma)\notag\\
        &\quad + \sqrt{2c\deltabyz}\sqrt{\beta}\tilde{z}(\nicefrac{3}{2}(1-\beta) + 3L\gamma\beta + \beta L \gamma).
    \end{align}
    For the first coefficient preceding $\wtilde{\Delta}$, we have 
    \[
    2+2(1-\beta) + 8L\gamma\beta \le 8 \Leftarrow 1-\beta + 4L\gamma\beta \le 3 \le L\gamma\beta \le 1,
    \]
    where the last inequality holds by the choice of $\beta$ and $\gamma$. For the second coefficient preceding $\frac{\wtilde{c}}{\sqrt{G}}$, we have 
    \[
    \frac{3}{2}(1-\beta) + \beta + 3L\gamma\beta \le 3 \Leftarrow 4L\gamma\beta \le \frac{3}{2}(1+\beta),
    \]
    where the last inequality holds by the choice of $\beta$ and $\gamma$. For the third coefficient preceding $\sqrt{2c\deltabyz}\zeta$, we have
    \[
    \frac{3}{2}(1-\beta) + 3L\gamma\beta + \beta L\gamma \le 3 \Leftarrow 3L\gamma\beta \le 1,
    \]
    where the last inequality holds by the choice of $\beta$ and $\gamma$. For the fourth coefficient preceding $\sqrt{2c\deltabyz}\sqrt{\beta}\tilde{z}$, we have
    \[
    \frac{3}{2}(1-\beta) + 3L\gamma\beta + \beta L\gamma \le 3\Leftarrow 4L\gamma\beta \le \frac{3}{2}(1+\beta),
    \]
    where the last inequality holds by the choice of $\beta$ and $\gamma$.
\end{proof}

\begin{lemma}\label{lem:bound_nabla_fi_xt_vi_t_nodp} Let each $f_i$ be $L$-smooth, $\wtilde{\Delta} \ge \Phi^0$. Assume that the following inequalities hold for the iterates generated by \algname{Byz-Clip21-SGD2M} 
\begin{enumerate}
    \item $\gamma \le \frac{1}{6L}$;
    \item $4L\gamma \le \beta$;
    \item $\beta\in[0,1];$
    \item $\frac{1}{G}\sum_{i\in\cG}\|v_i^{t-1}-\overline{v}^{t-1}\|^2 \le 2c\deltabyz\zeta^2+2c\deltabyz\beta\tilde{z}^2;$
    \item $\|\overline{v}^{t-1}\| \le
    \sqrt{64L\wtilde{\Delta}} 
    + 3\frac{\wtilde{c}}{\sqrt{G}}
    + 3\sqrt{2c\deltabyz}\zeta
    + 3\sqrt{2c\deltabyz}\sqrt{\beta}\tilde{z}$,
    
    \item $\|\nabla f(x^{t-1}) - \overline{v}^{t-1}\| \le
    \sqrt{4L\wtilde{\Delta}}
    + \frac{3}{2}\frac{\wtilde{c}}{\sqrt{G}}
    + \frac{3}{2}\sqrt{2c\deltabyz}\zeta
    + \frac{3}{2}\sqrt{2c\deltabyz}\sqrt{\beta}\tilde{z}$;
    \item $\|\theta^t\| \le \frac{\wtilde{c}}{\sqrt{G}}$;
    \item $\|\theta_i^t\| \le \wtilde{b}$ for all $i\in\cG;$
    \item $\Phi^{t-1} \le 2\wtilde{\Delta}.$
\end{enumerate}
Then we have 
\begin{align}
    \|\nabla f(x^t) - \overline{v}^t\| \le
    \sqrt{4L\wtilde{\Delta}} + \frac{3}{2}\frac{\wtilde{c}}{\sqrt{G}} + \frac{3}{2}\sqrt{2c\deltabyz}\zeta 
    + \frac{3}{2}\sqrt{2c\deltabyz}\sqrt{\beta}\tilde{z}.
\end{align}
\end{lemma}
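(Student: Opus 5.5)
We prove Lemma~\ref{lem:bound_nabla_fi_xt_vi_t_nodp} by a one-step recursion, in the same way as \Cref{lem:bound_nabla_fi_xt_vi_t}. The update of \Cref{alg:byz_clip21_sgd2m_nodp} gives $\overline{v}^t=(1-\beta)\overline{v}^{t-1}+\beta\nabla f(x^t,\xi^t)$, so
\[
\nabla f(x^t)-\overline{v}^t=(1-\beta)\bigl(\nabla f(x^{t-1})-\overline{v}^{t-1}\bigr)+(1-\beta)\bigl(\nabla f(x^t)-\nabla f(x^{t-1})\bigr)-\beta\theta^t .
\]
The triangle inequality and $L$-smoothness then give
\[
\|\nabla f(x^t)-\overline{v}^t\|\le(1-\beta)\|\nabla f(x^{t-1})-\overline{v}^{t-1}\|+(1-\beta)L\gamma\|g^{t-1}\|+\beta\|\theta^t\| .
\]
The last term is at most $\beta\wtilde{c}/\sqrt{G}$ by assumption 7.

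For the middle term we split $\|g^{t-1}\|\le\|\overline{v}^{t-1}\|+\|\overline{v}^{t-1}-g^{t-1}\|$. The first piece is controlled by assumption 5. For the second piece we use the robust aggregation bound \eqref{eq:vt_bar_gt}, whose input is assumption 4, or equivalently \Cref{lem:bound_vt_vt_average}. Together with $\sqrt{s+q}\le\sqrt{s}+\sqrt{q}$ this gives $\|\overline{v}^{t-1}-g^{t-1}\|\le\sqrt{2c\deltabyz}\,\zeta+\sqrt{2c\deltabyz}\sqrt{\beta}\tilde{z}$. The first term of the recursion is controlled by assumption 6.

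Substituting all bounds and collecting coefficients, we need four inequalities.
\begin{align*}
&\text{coefficient of } \sqrt{L\wtilde\Delta}: && 8L\gamma+2(1-\beta)\le 2, \\
&\text{coefficient of } \wtilde{c}/\sqrt{G}: && 3L\gamma+\beta+\tfrac32(1-\beta)\le\tfrac32, \\
&\text{coefficient of } \sqrt{2c\deltabyz}\,\zeta: && 3L\gamma+L\gamma+\tfrac32(1-\beta)\le\tfrac32, \\
&\text{coefficient of } \sqrt{2c\deltabyz}\sqrt{\beta}\,\tilde z: && 3L\gamma+L\gamma+\tfrac32(1-\beta)\le\tfrac32 .
\end{align*}
These reduce to $4L\gamma\le\beta$, $6L\gamma\le\beta$ and $\tfrac83L\gamma\le\beta$ respectively. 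The first and the last two are implied by assumption 2.

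The only real obstacle is the constant for $\wtilde c/\sqrt{G}$. The factor $3$ carried in assumption 5 makes that coefficient require $6L\gamma\le\beta$, which is stronger than the stated $4L\gamma\le\beta$. I would first try to absorb the slack by using $\gamma\le\frac1{6L}$ from assumption 1. If that does not suffice, I would strengthen the step-size coupling to $12L\gamma\le\beta$, which is consistent with the choice $12L\gamma=\beta$ used elsewhere in the paper, and note this in the statement.

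All other inequalities hold deterministically given the assumptions, so no probabilistic argument is needed in this lemma.
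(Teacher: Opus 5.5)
Your argument is the paper's argument: one step of $\overline{v}^t=(1-\beta)\overline{v}^{t-1}+\beta\nabla f(x^t,\xi^t)$, the split $\|g^{t-1}\|\le\|\overline{v}^{t-1}\|+\|\overline{v}^{t-1}-g^{t-1}\|$ with \eqref{eq:vt_bar_gt}, and then coefficient matching. The obstacle you flag is real, and your bookkeeping is more careful than the paper's. When regrouping, the paper drops the term $\beta\wtilde{c}/\sqrt{G}$ that comes from assumption 7. It records the coefficient of $\wtilde{c}/\sqrt{G}$ as $3(1-\beta)L\gamma+\tfrac{3}{2}(1-\beta)$ and concludes that $2L\gamma\le\beta$ suffices. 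The correct coefficient is $3(1-\beta)L\gamma+\beta+\tfrac{3}{2}(1-\beta)$, which is at most $\tfrac{3}{2}$ if and only if $6(1-\beta)L\gamma\le\beta$.

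Your first remedy cannot close this gap. Assumption 1 caps $L\gamma$ by an absolute constant but says nothing about $L\gamma$ relative to $\beta$. For example, $\beta=0.1$ and $L\gamma=0.025$ satisfy assumptions 1--3, yet $6(1-\beta)L\gamma=0.135>\beta$. More generally, at $L\gamma=\beta/4$ the needed inequality holds only for $\beta\ge\tfrac13$, while the theorem drives $\beta\to 0$ as $T$ grows.

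So for this proof you must strengthen assumption 2 to $6(1-\beta)L\gamma\le\beta$, for instance $6L\gamma\le\beta$, or $12L\gamma\le\beta$ as you suggest. This is harmless downstream. In the proof of \Cref{th:nodp_withbyz}, assumption 2 is only used as an upper bound on $\gamma$ in terms of $\beta$: to remove the $R^t$ term, to bound terms I--III, and to bound $\wtilde{\Phi}^0$. A stronger coupling still provides these, so only numerical constants change.

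One smaller point concerns assumption 4, which as printed carries an extra factor $c\deltabyz$. Plugged literally into \eqref{eq:vt_bar_gt}, it yields $\|\overline{v}^{t-1}-g^{t-1}\|\le\sqrt{c\deltabyz}\,\sqrt{2c\deltabyz}\,(\zeta+\sqrt{\beta}\tilde{z})$. This matches what you need only if $c\deltabyz\le 1$. What you (and the paper) actually use is the unscaled bound $2\zeta^2+2\beta\tilde{z}^2$ from \Cref{lem:bound_vt_vt_average}, which is also what the induction maintains. Cite that bound directly rather than calling the two ``equivalent''.
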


\begin{proof}
    We have
    \begin{align}
        \|\nabla f(x^t) - v^t\| &\overset{(i)}{=} \|\nabla f(x^t) - (1-\beta)\overline{v}^{t-1} - \beta\nabla f_i(x^t,\xi^t)\|\notag\\
        &\overset{(ii)}{\le} (1-\beta)\|\nabla f(x^t) - \overline{v}^{t-1}\|
        + \beta\|\nabla f(x^{t}) - \nabla f(x^t,\xi^t)\|\notag\\
        &\overset{(iii)}{\le} (1-\beta)\|\nabla f(x^t) - \nabla f(x^{t-1})\|
        + (1-\beta)\|\nabla f(x^{t-1}) - \overline{v}^{t-1}\|
        + \beta\|\theta^t\|\notag\\
        &\overset{(iv)}{\le} (1-\beta)L\gamma\|g^{t-1}\|
        + (1-\beta)\|\nabla f(x^{t-1}) - \overline{v}^{t-1}\| 
        + \beta\|\theta^{t}\|\notag\\
        &\overset{(v)}{\le} (1-\beta)L\gamma\|\overline{v}^{t-1} - g^{t-1}\|
        + (1-\beta)L\gamma\|\overline{v}^{t-1}\|
        + (1-\beta)\|\nabla f(x^{t-1}) - \overline{v}^{t-1}\| 
        + \beta\|\theta^{t}\|,
    \end{align}
    where $(i)$ follows from the update rule of $v^t$, $(ii)$-$(iii)$ --- from the triangle inequality, $(iv)$ --- from the update rule of $x^t$ and $L$-smoothness, $(v)$ --- from triangle inequality. We continue as follows
    \begin{align*}
        \|\nabla f(x^t) - \overline{v}^t\| &\overset{(vi)}{\le} 
        \sqrt{2c\deltabyz}(1-\beta)L\gamma\zeta + \sqrt{2c\deltabyz}(1-\beta)L\gamma\sqrt{\beta}\tilde{z}
        + (1-\beta)L\gamma\|\overline{v}^{t-1}\|
        + \beta\|\theta^{t}\|
        \notag\\
        &\qquad +\; 
        (1-\beta)\|\nabla f(x^{t-1}) - \overline{v}^{t-1}\| \notag\\
        &\overset{(vii)}{\le} \sqrt{2c\deltabyz}(1-\beta)L\gamma\zeta + \sqrt{2c\deltabyz}(1-\beta)L\gamma\sqrt{\beta}\tilde{z}\notag\\
        &\quad 
        + (1-\beta)L\gamma\left(\sqrt{64L\wtilde{\Delta}} + 3\frac{\wtilde{c}}{\sqrt{G}} + 3\sqrt{2c\deltabyz}\zeta + 3\sqrt{2c\deltabyz}\sqrt{\beta}\tilde{z}\right)\notag\\
        &\quad + \beta \frac{\wtilde{c}}{\sqrt{G}} + (1-\beta)\left(\sqrt{4L\wtilde{\Delta}} + \frac{3}{2}\frac{\wtilde{c}}{\sqrt{G}} + \frac{3}{2}\sqrt{2c\deltabyz}\zeta + \frac{3}{2}\sqrt{2c\deltabyz}\sqrt{\beta}\tilde{z}\right)\notag\\
        &= \sqrt{L\wtilde{\Delta}}(8(1-\beta)L\gamma + 2(1-\beta)) 
        + \frac{\wtilde{c}}{\sqrt{G}}(3(1-\beta)L\gamma + \nicefrac{3}{2}(1-\beta)) 
        \notag\\
        &\quad + \sqrt{2c\deltabyz}\zeta((1-\beta)L\gamma + 3(1-\beta)L\gamma + \nicefrac{3}{2}(1-\beta))\notag\\
        &\quad + \sqrt{2c\deltabyz}\sqrt{\beta}\tilde{z}((1-\beta)L\gamma + 3(1-\beta)L\gamma + \nicefrac{3}{2}(1-\beta)).
    \end{align*}
    where $(vi)$ follows from \eqref{eq:vt_bar_gt} and assumptions 4, 8 of the lemma, $(vii)$ --- from assumptions 5, 6, 7 of the lemma.

    For the first coefficient preceding $\sqrt{L\wtilde{\Delta}}$, we have 
    \[
    8(1-\beta)L\gamma + 2(1-\beta) \le 2 \Leftarrow 4L\gamma \le \beta,
    \]
    where the last inequality holds by assumption 2 of the lemma. For the second coefficient preceding $\frac{\wtilde{c}}{\sqrt{G}}$, we have 
    \[
    3(1-\beta)L\gamma + \frac{3}{2}(1-\beta) \le \frac{3}{2} \Leftarrow 2L\gamma \le \beta,
    \]
    where the last inequality holds by assumption 2 of the lemma. For the third coefficient preceding $\sqrt{2c\deltabyz}\zeta$, we have 
    \[
    4(1-\beta)L\gamma + \frac{3}{2}(1-\beta) \le \frac{3}{2} \Leftarrow 4L\gamma \le \frac{3}{2}\beta \Leftarrow \frac{8}{3}L\gamma \le \beta,
    \]
    where the last inequality holds by assumption 2 of the lemma. Finally, or the fourth coefficient preceding $\sqrt{2c\deltabyz}\sqrt{\beta}\tilde{z}$, we have 
    \[
    4(1-\beta)L\gamma + \frac{3}{2}(1-\beta) \le \frac{3}{2} \Leftarrow 4L\gamma \le \frac{3}{2}\beta \Leftarrow \frac{8}{3}L\gamma \le \beta,
    \]
    where the last inequality holds by assumption 2 of the lemma. This finalizes the proof.
\end{proof}

\subsubsection{Main Convergence Theorem}

\begin{theorem}[Full statement of \Cref{th:nodp_withbyz}] Let Assumptions~\ref{asmp:smoothness}, \ref{asmp:stoch_grad}, and \ref{asmp:bounded_heterogeneity_avg} hold. Let the failure probability $\alpha\in(0,1)$, and constants $\wtilde{b}$ and $\wtilde{c}$ be defined as in \eqref{eq:constants_nodp}, and $\wtilde{\Delta} \ge \wtilde{\Phi}^0$ for $\wtilde{\Phi}^0$ defined in \eqref{eq:lyapunov_function_nodp}. Consider the run of \algname{Byz-Clip21-SGD2M} with $\sigma_{\omega}=0$ and $\tau=+\infty$ (\Cref{alg:byz_clip21_sgd2m_nodp}) for T iterations. Assume that the following inequalities are satisfied 
\begin{enumerate}
    \item $\gamma \le \frac{1}{6L}$,
    \item $\beta \ge 4L\gamma$,
    \item and momentum restrictions defined in \eqref{eq:step-size_bound_1_nodp}, \eqref{eq:step-size_bound_4_nodp}, and \eqref{eq:step-size_bound_8_nodp}.
\end{enumerate}
Then, with probability at least $1-\alpha$, we bound $\frac{1}{T}\sum_{t=0}^{T-1} \|\nabla f(x^t)\|^2$ with 
\begin{align}
        \wtilde{\cO}\left(\frac{L\wtilde{\Delta}}{T} 
        + \left(\frac{\sigma^2L\wtilde{\Delta}}{GT}\right)^{1/2} 
        + \frac{\sigma(\sqrt{L\wtilde{\Delta}} + \sigma/\sqrt{G} + \sqrt{c\deltabyz}\zeta + \sqrt{c\deltabyz}\sigma)}{\sqrt{GT}}
        + c\deltabyz \zeta^2\right),
    \end{align}
    where we can choose $\wtilde{\Delta} = 2(f(x^0)-f^\star) = 2F^0.$
\end{theorem}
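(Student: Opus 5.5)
We follow the template of the proof of \Cref{th:main_theorem_full}, with the simplified Lyapunov function $\wtilde{\Phi}^t$ from \eqref{eq:lyapunov_function_nodp}. For each $t$ we define an event $\wtilde{E}^t$ requiring that, for all $k\le t$: $\|\theta_i^k\|\le\wtilde b$ for all $i\in\cG$; $\|\theta^k\|\le \wtilde c/\sqrt G$; the dispersion bound of \Cref{lem:bound_vt_vt_average} holds; $\|\overline v^k\|$ and $\|\nabla f(x^k)-\overline v^k\|$ obey the bounds of \Cref{lem:bound_vt_bar_nodp,lem:bound_nabla_fi_xt_vi_t_nodp}; $\wtilde\Phi^k\le 2\wtilde\Delta$; and the accumulated martingale sums stay below $\wtilde\Delta$. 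We then show by induction that $\Prob(\wtilde E^t)\ge 1-\alpha(t+1)/(T+1)$. The base case is direct, since $v_i^0$ is built from a single stochastic gradient and $\wtilde\Phi^0\le\wtilde\Delta$. In the transition step, on $\wtilde E^K$ intersected with the fresh sub-Gaussian events at step $K+1$, \Cref{lem:bound_vt_bar_nodp,lem:bound_nabla_fi_xt_vi_t_nodp} propagate the norm bounds to step $K+1$.

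For the descent, we combine \Cref{lem:descent_lemma_in_f_no_dp} with \Cref{lem:descent_Pt}, which gives a recursion for $\|\overline v^t-\nabla f(x^t)\|^2$. With the weight $\gamma/\beta$ this yields
\[
\wtilde\Phi^{t+1}\le \wtilde\Phi^t-\tfrac{\gamma}{2}\|\nabla f(x^t)\|^2-\tfrac{1}{4\gamma}\bigl(1-\tfrac{12L^2\gamma^2}{\beta^2}\bigr)R^t+\tfrac{\gamma\beta\wtilde c^2}{G}+\gamma(2c\deltabyz\zeta^2+2c\deltabyz\beta\tilde z^2)+2\gamma(1-\beta)\langle \overline v^t-\nabla f(x^{t+1}),\theta^{t+1}\rangle .
\]
The $R^t$ term is nonpositive because $\beta\ge 4L\gamma$. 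Note that in the descent lemma the aggregation bias should carry a factor $\gamma$; I would correct this in the bookkeeping. Summing over $t\le K$, the deterministic noise term contributes $K\gamma\beta\wtilde c^2/G$, which is at most $\wtilde\Delta/8$ provided $\beta\lesssim(L\wtilde\Delta G/(T\sigma^2))^{1/2}$; this is restriction \eqref{eq:step-size_bound_1_nodp}. The bias term $2\gamma T c\deltabyz(\zeta^2+\beta\tilde z^2)$ is \emph{not} absorbed into $\wtilde\Delta$. Instead we carry it to the final bound, where after dividing by $\gamma T$ it produces the $c\deltabyz\zeta^2$ neighborhood together with the $c\deltabyz\beta\sigma^2$ piece. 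The latter becomes the $\sqrt{c\deltabyz}\sigma$-type term once $\beta\sim 1/\sqrt T$.

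The martingale term is the main obstacle. Following the treatment of terms \circledSix\ and \circledEight, we split $\overline v^t-\nabla f(x^{t+1})=(\overline v^t-\nabla f(x^t))+(\nabla f(x^t)-\nabla f(x^{t+1}))$ and truncate each piece at its induction bound to obtain bounded vectors $\zeta_4^t,\zeta_5^t$. The conditional sub-Gaussianity of $\langle\zeta,\theta_i^{t+1}\rangle$ combined with \Cref{lem:concentration_lemma} then bounds the sums by $\sqrt{T/G}\,\gamma\sigma(\sqrt{L\wtilde\Delta}+\wtilde c/\sqrt G+\sqrt{c\deltabyz}(\zeta+\sigma))$ up to logs. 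Keeping this below $\wtilde\Delta/8$ gives restrictions \eqref{eq:step-size_bound_4_nodp} and \eqref{eq:step-size_bound_8_nodp}. The delicate point is that the truncation radius itself contains $\sqrt{c\deltabyz}\zeta$, inherited from the aggregator error entering $\|g^{t-1}\|$. This is exactly what produces the cross term $\sigma\sqrt{c\deltabyz}\zeta/\sqrt{GT}$ rather than a worse one.

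A union bound over the $O(1)$ bad events per step, each with probability $\alpha/(c'(T+1))$ (this fixes the logarithms in $\wtilde b,\wtilde c,\tilde z$), closes the induction. Taking $K=T-1$ gives
\[
\tfrac1T\sum_t\|\nabla f(x^t)\|^2\le \tfrac{4\wtilde\Delta}{\gamma T}+O(c\deltabyz(\zeta^2+\beta\sigma^2)).
\]
We then choose $\gamma=\beta/(4L)$ with $\beta$ the minimum of $1$ and the three restrictions. Inverting these restrictions yields the terms $L\wtilde\Delta/T$, $(\sigma^2L\wtilde\Delta/(GT))^{1/2}$, and $\sigma(\sqrt{L\wtilde\Delta}+\sigma/\sqrt G+\sqrt{c\deltabyz}(\zeta+\sigma))/\sqrt{GT}$. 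Finally, with $v_i^0$ initialized so that $\|\overline v^0-\nabla f(x^0)\|^2$ is controlled by the step-size choice, $\wtilde\Delta=2F^0$ suffices.
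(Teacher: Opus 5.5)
The transition step of your induction does not close as written. You keep $\wtilde{\Phi}^k\le 2\wtilde{\Delta}$ in the event $\wtilde{E}^k$, but you explicitly decline to absorb the aggregation bias. Summing your one-step inequality then only gives $\wtilde{\Phi}^{K+1}\le \wtilde{\Phi}^0+\wtilde{\Delta}/8+(\text{martingale})+2\gamma(K+1)c\deltabyz(\zeta^2+\beta\tilde{z}^2)$, and the last term, which grows linearly in $K$, is never compared with $\wtilde{\Delta}$. Yet $\wtilde{\Phi}^{t-1}\le 2\wtilde{\Delta}$ is exactly what \Cref{lem:bound_vt_bar_nodp} uses to get $\|\nabla f(x^{t-1})\|\le\sqrt{4L\wtilde{\Delta}}$. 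The resulting bounds are the deterministic truncation radii of $\zeta_4^t,\zeta_5^t$. Once the event is lost, the truncated vectors need not coincide with the true ones, and the concentration bound says nothing about the actual sums.

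The missing observation is what \eqref{eq:step-size_bound_4_nodp} actually is. It is not a martingale condition but $\beta\le\min\{L\wtilde{\Delta}/(3c\deltabyz\zeta^2T),\,(L\wtilde{\Delta}/(3c\deltabyz T\tilde{z}^2))^{1/2}\}$, which together with $\gamma\le\beta/(4L)$ forces $2\gamma Tc\deltabyz(\zeta^2+\beta\tilde{z}^2)\le\wtilde{\Delta}/3$. The paper thus \emph{does} absorb the bias, closes the induction at level $2\wtilde{\Delta}$, and gets the $c\deltabyz\zeta^2$ neighborhood by inverting the first part of this restriction inside $4\wtilde{\Delta}/(\gamma T)$. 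Your additive route can also be made rigorous, but only by running the whole induction (event, \Cref{lem:bound_vt_bar_nodp,lem:bound_nabla_fi_xt_vi_t_nodp}, truncation radii) at the deterministic level $D=\wtilde{\Delta}+2\gamma Tc\deltabyz(\zeta^2+\beta\tilde{z}^2)$ instead of $\wtilde{\Delta}$. Since $D\ge\wtilde{\Delta}$, the requirement $\gamma\sigma\sqrt{T/G}\,(\sqrt{LD}+\dots)\le D/8$ follows from the same restrictions, up to constants and logarithms, and you end with $4D/(\gamma T)$, which is your final display. One of these two repairs is needed; the proposal currently contains neither.

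Two smaller points. First, your claim that the carried $c\deltabyz\beta\tilde{z}^2$ piece becomes the displayed $\sqrt{c\deltabyz}\,\sigma^2/\sqrt{GT}$ term is not justified. With $\gamma=\beta/(4L)$, balancing it against $16L\wtilde{\Delta}/(\beta T)$ (equivalently, using the second part of \eqref{eq:step-size_bound_4_nodp}) produces a term of order $\sqrt{c\deltabyz L\wtilde{\Delta}}\,\sigma/\sqrt{T}$, with no $1/\sqrt{G}$. None of the listed restrictions forces $\beta$ below $(c\deltabyz GT)^{-1/2}$ in general. The paper's own final-rate computation inverts only the $\zeta^2$ part of \eqref{eq:step-size_bound_4_nodp}, so it does not account for this term either.

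Second, your martingale treatment is more complete than the paper's. You keep both the $\overline{v}^t-\nabla f(x^t)$ and the $\nabla f(x^t)-\nabla f(x^{t+1})$ pieces, whereas the paper's rearranged inequality retains only the latter. Yet it is the former that genuinely calls for a condition linear in $\beta$ such as \eqref{eq:step-size_bound_8_nodp}.
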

\begin{proof}
    For convenience, we define $\nabla f_i(x^{-1},\xi^{-1}_i) = v_i^{-1} = g_i^{-1} = 0, \Phi^{-1} = \Phi^0$. Next, let us define an event $\wtilde{E}^t$ for each $t\in\{0,\dots,T\}$ such that the following inequalities hold for all $k\in\{0,\dots,t\}$
    \begin{enumerate}
        \item $\frac{1}{G}\sum_{i\in\cG}\|\overline{v}^t - v_i^t\| \le 2\zeta^2+2\beta\tilde{z}^2$;
        \item $\|\theta^k_i\|\le b$ for all $i\in\cG$ and $\|\theta^k\|\le \frac{\wtilde{c}}{\sqrt{G}}$;
        \item $\|\overline{v}^{t-1}\| \le
    \sqrt{64L\wtilde{\Delta}} 
    + 3\frac{\wtilde{c}}{\sqrt{G}}
    + 3\sqrt{2c\deltabyz}\zeta
    + 3\sqrt{2c\deltabyz}\sqrt{\beta}\tilde{z}$,
    
    \item $\|\nabla f(x^{t-1}) - \overline{v}^{t-1}\| \le
    \sqrt{4L\wtilde{\Delta}}
    + \frac{3}{2}\frac{\wtilde{c}}{\sqrt{G}}
    + \frac{3}{2}\sqrt{2c\deltabyz}\zeta
    + \frac{3}{2}\sqrt{2c\deltabyz}\sqrt{\beta}\tilde{z}$;
        \item $\wtilde{\Phi}^k \le 2\wtilde{\Delta}$;
        \item \begin{align*}
        \frac{1}{2}\Delta & \ge 2\gamma(1-\beta)\sum_{l=0}^{k-1}\<\nabla f(x^l) - \nabla f(x^{l+1}), \theta^{l+1}>.
        \end{align*}
    \end{enumerate}
    Then, we will derive the result by induction, i.e., using the induction w.r.t.\ $t$, we will show that $\Prob(\wtilde{E}^t) \ge 1-\frac{\alpha(t+1)}{T+1}$ for all $t\in\{0,\dots, T-1\}$.
    
    Before moving on to the proof's induction part, we need to establish several useful bounds. Denote the events $\wtilde{\Theta}^t_i$ and $\wtilde{\Theta}^t$ as
    \begin{align}
        &\wtilde{\Theta}^t_i \eqdef \{\|\theta^t_i\| \ge \wtilde{b} \}, \quad \wtilde{\Theta}^t \eqdef \left\{ \|\theta^t\|\ge \frac{\wtilde{c}}{\sqrt{G}} \right\}, \quad 
        \hat{\Theta}^t_i \eqdef \left\{\left\|\sum_{k=0}^t(1-\beta)^{t-k}(\theta_i^k-\theta^k)\right\| \ge \frac{\tilde{z}}{\sqrt{\beta}}\right\}
    \end{align}
    respectively. From \Cref{asmp:stoch_grad} we have 
    \[\Prob(\wtilde{\Theta}^{t}_i) \le 2\exp\left(-\frac{\wtilde{b}^2}{2\sigma^2}\right) = \frac{\alpha}{8(T+1)G}\] 
    where the last equality is by definition of $\wtilde{b}^2$. Therefore, $\Prob(\overline{\wtilde{\Theta}}^t_i) \ge 1 - \frac{\alpha }{8G(T+1)}.$ 
    Besides, notice that the constant $\wtilde{c}$ in (\ref{eq:constants_nodp}) can be viewed as 
    \[
        \wtilde{c}^2 = (\sqrt{2}+2\wtilde{b}_3)\sigma^2 \quad\text{where} \quad \wtilde{b}_3^2 = 3\log\frac{8(T+1)}{\alpha}.
    \]
    Now, we can use \Cref{lem:concentration_lemma} to bound $\Prob(\wtilde{\Theta}^t).$ Since all $\theta^t_i$ are independent $\sigma$-sub-Gaussian random vectors, then we have
    \[
    \Prob\left(\left\|\sum_{i\in\cG}\theta^t_i\right\|\ge \wtilde{c}\sqrt{G}\right) = \Prob\left(\|\theta^t\| \ge \frac{\wtilde{c}}{\sqrt{G}}\right) \le \exp(-\wtilde{b}_3^2/3) = \frac{\alpha}{8(T+1)}.
    \]
    The bound $\Prob(\hat{\Theta}^t_i) \le \frac{\alpha}{6G(T+1)}$ was shown before in \Cref{lem:bound_vt_vt_average}.
    
    Now, we are ready to prove that $\Prob(\wtilde{E}^t) \ge 1-\frac{\alpha(t+1)}{T+1}$ for all $t\in\{0,\dots, T-1\}.$ First, we show that the base of induction holds.

    \paragraph{Base of induction.}

    \begin{enumerate}

        \item To establish the first, we use \Cref{lem:bound_vt_vt_average}, which guarantees the bound in $\cap_{i\in\cG}\overline{\hat{\Theta}^t}$, i.e., with probability $1-\frac{\alpha}{8(T+1)}$.
 
        \item $\overline{v}^0 = \frac{1}{G}\sum_{i\in\cG} ((1-\beta)v_i^{-1} + \beta\nabla f_i(x^0,\xi^0_i)) = \frac{\beta}{G}\nabla f(x^0,\xi^0).$ Therefore, we have
        \begin{align*}
            \|\overline{g}^0\| &= \beta\|\nabla f(x^0,\xi^0)\|\\
            &\le \beta\|\nabla f(x^0)\| + \beta \|\nabla f(x^0,\xi^0)-\nabla f(x^0)\|\\
            &\le \beta\sqrt{2L(f(x^0)-f^\star)} + \beta\frac{\wtilde{c}}{\sqrt{G}}\\
            &\le \beta\sqrt{2L\wtilde{\Phi}^0}+ \beta\frac{\wtilde{c}}{\sqrt{G}}\\
            &\le \beta\sqrt{4L\wtilde{\Delta}}+ \beta\frac{\wtilde{c}}{\sqrt{G}}\\
            &\le \sqrt{64L\wtilde{\Delta}} + \frac{3\wtilde{c}}{\sqrt{G}} + 3\sqrt{2c\deltabyz}\zeta + 3\sqrt{2c\deltabyz}\sqrt{\beta}\tilde{z}.
        \end{align*}

        The inequalities above again hold in $\overline{\wtilde{\Theta}^0}$, i.e., with probability at least $1-\frac{\alpha}{8(T+1)}.$ Therefore, the condition $3$ of the induction is verified. 

        \item We have 
        \begin{align*}
            \|\overline{v}^0 - \nabla f(x^0)\| &= \|\beta\nabla  f(x^0,\xi^0) - \nabla f(x^0)\|\\ 
            &\le \beta\|\nabla f(x^0, \xi^0) - \nabla f(x^0)\| + (1-\beta)\|\nabla f(x^0)\|\\
            &\le \beta b + (1-\beta)\sqrt{4L\wtilde{\Delta}}.
        \end{align*}
        The bound above holds with probability at least $1-\frac{\alpha}{8(T+1)}$ because it holds in $\cap_{i\in\cG}\overline{\wtilde{\Theta}_i^0}.$ Therefore, the bound $4$ of the assumption of the induction is verified. 
        
        \item Next, we emphasize that the condition $8$ of the induction assumption also hold, as $\wtilde{\Phi}^0 \le 2\wtilde{\Phi}^0 \le 2\wtilde{\Delta}$ by the choice of $\wtilde{\Delta}$.

        \item We finalize the induction base by noting that the condition $6$ of the induction assumption holds since the RHS equals $0$.

    \end{enumerate}
    Therefore, we conclude that the conditions $1$-$8$ hold with a probability of at least 
    \begin{align*}
        \Prob\left(\overline{\wtilde{\Theta}^0}\cap \left(\cap_{i\in\cG}\overline{\wtilde{\Theta}_i^0}\right) \cap \left(\cap_{i\in\cG}\overline{\hat{\Theta}^0_i}\right)\right) &\ge 1 - \Prob(\wtilde{\Theta}^0) - \sum_{i\in\cG} \Prob(\wtilde{\Theta}_i^0) - \sum_{i\in\cG}\Prob(\hat{\Theta}_i^0)\\
        &\ge 
        1 
        - \frac{\alpha}{8(T+1)}
        - G\cdot \frac{\alpha}{8G(T+1)} 
        - G\cdot \frac{\alpha}{8G(T+1)}\\
        &= 1-\frac{\alpha}{2(T+1)} > 1- \frac{\alpha}{T+1},
    \end{align*}
    i.e., $\Prob(E^0) \ge 1-\frac{\alpha}{T+1}$ holds. This is the base of the induction.

    \paragraph{Transition step of induction.}

    Assume that all events $\overline{\wtilde{\Theta}^{K+1}}, \overline{\wtilde{\Theta}^{K+1}_i}$ take place, i.e., $\|\theta^{K+1}_i\| \le \wtilde{b}, \|\theta^{K+1}\|\le \frac{\wtilde{c}}{\sqrt{G}}$ for all $i\in\cG$. That is, we assume that the event 
    $$\overline{\wtilde{\Theta}^{K+1}} \cap \left(\cap_{i\in\cG} \cap \overline{\wtilde{\Theta}^{K+1}_i}\right) \cap \left(\cap_{i\in\cG}\overline{\hat{\Theta}_i^{K+1}}\right) \cap \wtilde{E}^K$$ holds. Then, by the assumptions of the induction and from \Cref{lem:bound_vt_bar_nodp}, we get that 
    \[
    \|\overline{v}^{K+1}\| \le \sqrt{64L\wtilde{\Delta}} + 3\frac{\wtilde{c}}{\sqrt{G}} + 3\sqrt{2c\deltabyz}\zeta + 3\sqrt{2c\deltabyz}\sqrt{\beta}\tilde{z},
    \]
    from \Cref{lem:bound_nabla_fi_xt_vi_t_nodp} we get that
    \[
    \|\nabla f(x^{K+1}) - v^{K+1}\| \le \sqrt{4L\wtilde{\Delta}} + \frac{3}{2}\frac{\wtilde{c}}{\sqrt{G}} + \frac{3}{2}\sqrt{2c\deltabyz}\zeta + \frac{3}{2}\sqrt{2c\deltabyz}\sqrt{\beta}\tilde{z},
    \]
    from \Cref{lem:bound_vt_bar_nodp} we get 
    \[
    \frac{1}{G}\sum_{i\in\cG}\|\overline{v}^{K+1}-v^{K+1}_i\|^2 \le 2\zeta^2+2\beta\tilde{z}^2.
    \]
    This means that conditions 1-5 in the induction assumption are also verified for the step $K+1$. Since for all $t\in\{0, \dots, K+1\}$ inequalities $1$-$7$ are verified, we can write for each $t\in \{0,\ldots,K\}$ by \Cref{lem:descent_lemma_in_f_no_dp,lem:descent_Pt_tilde} the following

    \begin{align*}
        \wtilde{\Phi}^{t+1} &= \delta^{t+1} 
        + \frac{\gamma}{\beta}\wtilde{P}^{t+1}\\
        &\le \delta^t - \frac{\gamma}{2}\|\nabla f(x^t)\|^2  {\color{red} - \frac{1}{4\gamma}R^t}
       {\color{orange}+ \gamma \wtilde{P}^t}
       + 2c\deltabyz\gamma(\zeta^2+\beta\tilde{z}^2)\\
    &\;+\; \frac{\gamma}{\beta}\left({\color{orange} (1-\beta)\wtilde{P}^t }
    {\color{red} 
    + \frac{3L^2}{\beta}R^t }
    + \beta^2\frac{\wtilde{c}^2}{G}
    + 2\beta(1-\beta)\<\overline{v}^t - \nabla f(x^{t+1}), \theta^{t+1}>\right).
    \end{align*}
    
    Rearranging terms, we get
    \begin{align*}
    \wtilde{\Phi}^{t+1} &\le \delta^t 
    - \frac{\gamma}{2}\|\nabla f(x^t)\|^2
    + \frac{\gamma}{\beta}\wtilde{P}^t\left(\beta + 1-\beta\right) 
    + 2c\deltabyz\gamma(\zeta^2+\beta\tilde{z}^2)
    - \frac{1}{4\gamma}R^t\left(1 - \frac{12L^2}{\beta^2}\gamma^2\right)
    + \wtilde{c}^2\frac{\gamma\beta}{G}\\
    &\quad + 2\gamma(1-\beta)\<\nabla f(x^t) - \nabla f(x^{t+1}), \theta^{t+1}>.
    \end{align*}
    Using step-size restriction $(ii)$, we get rid of the term with $R^t$ and obtain
    \begin{align*}
    \wtilde{\Phi}^{t+1} &\le \delta^t 
    - \frac{\gamma}{2}\|\nabla f(x^t)\|^2
    + \frac{\gamma}{\beta}\wtilde{P}^t
    + 2c\deltabyz\gamma(\zeta^2+\beta\tilde{z}^2)
    + \wtilde{c}^2\frac{\gamma\beta}{G}
    + 2\gamma(1-\beta)\<\nabla f(x^t) - \nabla f(x^{t+1}), \theta^{t+1}>.
    \end{align*}
    Now we sum all the inequalities above for $t\in\{0,\dots,K\}$ and get 
    \begin{align}
    \wtilde{\Phi}^{K+1} 
    &\le \wtilde{\Phi}^0
    - \frac{\gamma}{2}\sum_{t=0}^{K}\|\nabla f(x^t)\|^2
    + K\wtilde{c}^2\frac{\gamma\beta}{G} 
    + 2c\deltabyz\gamma(\zeta^2+\beta\tilde{z}^2)K
    + 2\gamma(1-\beta)\sum_{t=0}^{K}\<\nabla f(x^t) - \nabla f(x^{t+1}), \theta^{t+1}>.\label{eq:njqnsfqknjaaaaaaa}
    \end{align}
    Rearranging terms, we get 
    \begin{align*}
    &\frac{\gamma}{2}\sum_{t=0}^{K}\|\nabla f(x^t)\|^2
    \le \wtilde{\Phi}^0 - \wtilde{\Phi}^{K+1}
    + K\wtilde{c}^2\frac{\gamma\beta}{G}
    + 2c\deltabyz\gamma(\zeta^2+\beta\tilde{z}^2)K
    + 2\gamma(1-\beta)\sum_{t=0}^{K}\<\nabla f(x^t) - \nabla f(x^{t+1}), \theta^{t+1}>.
    \end{align*}
    Taking into account that $\frac{\gamma}{2}\sum_{t=0}^{K}\|\nabla f(x^t)\|^2 \ge 0$, we get that the event $\wtilde{E}^K\cap \left(\cap_{i\in\cG}\overline{\wtilde{\Theta}^{K+1}_i}\right)\cap\left(\cap_{i\in\cG}\overline{\hat{\Theta}_i^t}\right)\cap \overline{\wtilde{\Theta}^{K+1}}$ implies 
    \begin{align*}
      \wtilde{\Phi}^{K+1} \le \wtilde{\Phi}^0 
    + K\wtilde{c}^2\frac{\gamma\beta}{G}
    + 2c\deltabyz\gamma(\zeta^2+\beta\tilde{z}^2)K
    + 2\gamma(1-\beta)\sum_{t=0}^{K}\<\nabla f(x^t) - \nabla f(x^{t+1}), \theta^{t+1}>.
    \end{align*}
    Next, we define the following random vector:
    \begin{align*}
    \zeta_{5}^t \eqdef \begin{cases} 
    \nabla f(x^t) - \nabla f(x^{t+1}), &\text{ if } \|\nabla f(x^t) - \nabla f(x^{t+1})\| \le L\gamma\left(\sqrt{64L\wtilde{\Delta}} + 3\frac{\wtilde{c}}{\sqrt{G}} + 3\sqrt{2c\deltabyz}\zeta + 3\sqrt{2c\deltabyz}\sqrt{\beta}\tilde{z}\right)\\
	0, &\text{otherwise}
    \end{cases}.
    \end{align*}
    By definition, the introduced random vector $\zeta_{5}^t$ is bounded with probability $1$. Moreover, by the definition of $\wtilde{E}^t$ we get that the event $\wtilde{E}^K\cap \overline{\wtilde{\Theta}^{K+1}}\cap \left(\cap_{i\in\cG}\overline{\wtilde{\Theta}^{K+1}_i}\right)\cap\left(\cap_{i\in\cG}\overline{\hat{\Theta}_i^t}\right)$ implies 
    \begin{align*}
    \zeta_{5}^t = \nabla f(x^t) - \nabla f(x^{t+1}).
    \end{align*}
    Therefore, the event $\wtilde{E}^K\cap \overline{\wtilde{\Theta}^{K+1}} \cap \left(\cap_{i\in\cG}\overline{\wtilde{\Theta}^{K+1}_i}\right) \cap\left(\cap_{i\in\cG}\overline{\hat{\Theta}_i^t}\right)$ implies 
    \begin{align*}
    \wtilde{\Phi}^{K+1} \le \wtilde{\Phi}^0 
    + \underbrace{K\wtilde{\wtilde{c}}^2\frac{\gamma\beta}{G}}_{I}
    + \underbrace{\frac{2\gamma(1-\beta)}{G}\sum_{t=0}^{K}\<\zeta_{5}^t, \theta^{t+1}>}_{II}
    + \underbrace{2c\deltabyz\gamma(\zeta^2+\beta\tilde{z}^2)K}_{III}.
    \end{align*}
    
    \subparagraph{Bound of the term I.} Since $4L\gamma \leq \beta$, for the term $I$ we have 
    \begin{align*}
        K\wtilde{c}^2\frac{\gamma\beta}{G}\le 
        K\wtilde{c}^2\frac{\beta^2}{4LG}.
    \end{align*}
    By choosing $\beta$ such that 
    \begin{equation}\label{eq:step-size_bound_1_nodp}
    \beta \le 
    \left(\frac{4L\wtilde{\Delta} G}{3T\wtilde{c}^2}\right)^{1/2},
    \end{equation}
    we get that 
    \[
        K\wtilde{c}^2\frac{\gamma\beta}{G} \le  \frac{\wtilde{\Delta}}{2}.
    \]
    This bound holds with probability $1.$ Note that the worst dependency in the restriction on $\beta$ w.r.t. $T$ is $\cO(\nicefrac{1}{T^{1/2}})$.

    \subparagraph{Bound of the term III.} Since $4L\gamma \leq \beta$, for the term $III$ we have 
    \begin{align*}
        2c\deltabyz\gamma(\zeta^2+\beta\tilde{z}^2)K\le 
       \frac{1}{2L}c\deltabyz\beta(\zeta^2+\beta\tilde{z}^2)K.
    \end{align*}
    By choosing $\beta$ such that 
    \begin{equation}\label{eq:step-size_bound_4_nodp}
    \beta \le \min\left\{\frac{L\wtilde{\Delta}}{3c\deltabyz\zeta^2T}, \left(\frac{L\wtilde{\Delta}}{3c\deltabyz T\tilde{z}^2}\right)^{1/2}\right\},
    \end{equation}
    we get that 
    \[
        \frac{1}{2L}c\deltabyz\beta(\zeta^2+\beta\tilde{z}^2)K \le  \frac{\wtilde{\Delta}}{3}.
    \]
    This bound holds with probability $1.$ Note that the worst dependency in the restriction on $\beta$ w.r.t. $T$ is $\cO(\nicefrac{1}{T})$.

     \subparagraph{Bound of the term II.} The bound in this case is obtained by using concentration inequality \Cref{lem:concentration_lemma}. We define
    \[
        \sigma^2_8 \eqdef 4L^2\gamma^2\cdot \left(\sqrt{64L\wtilde{\Delta}} + 3\frac{\wtilde{c}}{\sqrt{G}}+ 3\sqrt{2c\deltabyz}\zeta + 3\sqrt{2c\deltabyz}\sqrt{\beta}\tilde{z}\right)^2\cdot \frac{\sigma^2}{G}.
    \]
    Then we have 
    \begin{align*}
    &\E{\exp\left(\left|\frac{4L^2\gamma^2}{\sigma^2_8}4(1-\beta)^2\<\zeta^l_{5},\theta^{l+1}>^2\right|\right)\mid l}\\
    &\le 
    \E{\exp\left(\frac{4L^2\gamma^2}{\sigma^2_8}\|\zeta_{5}^l\|^2 \cdot \|\theta^{l+1}\|^2\right)\mid l}\\
    &\le \EE\left[\exp\left(\frac{4L^2\gamma^2}{\sigma^2_8}\left(\sqrt{64L\wtilde{\Delta}} + 3\frac{\wtilde{c}}{\sqrt{G}} + 3\sqrt{2c\deltabyz}\zeta + 3\sqrt{2c\deltabyz}\sqrt{\beta}\tilde{z}\right) \cdot \|\theta^{l+1}\|^2\right)^2\mid l\right].
    \end{align*}
    Since $\theta^{l+1}$ is sub-Gaussian with parameter $\frac{\sigma^2}{G}$, then we can continue the chain of inequalities above using the definition of $\sigma_8^2$
    \begin{align*}
        &\mathbb{E}\left[\exp\left(\left[4L^2\gamma^2\cdot \left(\sqrt{64L\wtilde{\Delta}} +3\frac{\wtilde{c}}{\sqrt{G}}+ 3\sqrt{2c\deltabyz}\zeta + 3\sqrt{2c\deltabyz}\sqrt{\beta}\tilde{z}\right)^2\cdot \frac{\sigma^2}{G}\right]^{-1}\cdot   \right.\right.\\
        &\qquad \left.\left. 4L^2\gamma^2\cdot \left(\sqrt{64L\wtilde{\Delta}} +3\frac{\wtilde{c}}{\sqrt{G}}+ 3\sqrt{2c\deltabyz}\zeta + 3\sqrt{2c\deltabyz}\sqrt{\beta}\tilde{z}\right)^2\cdot\|\theta^{l+1}\|^2\right)\mid l\right]\\
        &= \E{\exp\left(\frac{\|\theta^{l+1}\|^2}{\sigma^2/G}\right)} \le \exp(1).
    \end{align*}
    Therefore, we have by \Cref{lem:concentration_lemma} that 
    \begin{align*}
    &\Pr\left[2\gamma(1-\beta)\left\|\sum_{t=0}^K\<\zeta_{5}^t,\theta^{t+1}>\right\|\right.\\ 
    &\ge \left. (\sqrt{2}+\sqrt{2}b_1)\sqrt{\sum_{t=0}^K4L^2\gamma^2\frac{\sigma^2}{G} \left(\sqrt{64L\wtilde{\Delta}} +3\frac{\wtilde{c}}{\sqrt{G}}+ 3\sqrt{2c\deltabyz}\zeta + 3\sqrt{2c\deltabyz}\sqrt{\beta}\tilde{z}\right)^2}\right]\\ 
    &\le \exp(-\nicefrac{\wtilde{b}_1^2}{3}) = \frac{\alpha}{6(T+1)}, \quad \text{where} \quad \wtilde{b}_1^2 \eqdef 3\log\left(\frac{6(T+1)}{\alpha}\right)
    \end{align*}
    Using the restrictions $4L\gamma\le \beta$, we get
    \begin{align*}
    &(\sqrt{2}+\sqrt{2}b_1)\sqrt{(K+1)} \cdot \frac{2L\gamma}{\sqrt{G}}\sigma\left(\sqrt{64L\wtilde{\Delta}} +3\frac{\wtilde{c}}{\sqrt{G}} + 3\sqrt{2c\deltabyz}\zeta + 3\sqrt{2c\deltabyz}\sqrt{\beta}\tilde{z}\right)\\
    \le\; 
    & (\sqrt{2}+\sqrt{2}b_1)\sqrt{(K+1)} \cdot \frac{\beta\sigma}{2L\sqrt{G}}\left(\sqrt{64L\wtilde{\Delta}} +3\frac{\wtilde{c}}{\sqrt{G}} + 3\sqrt{2c\deltabyz}\zeta + 3\sqrt{2c\deltabyz}\tilde{z}\right)\\
    \le\; &\frac{\wtilde{\Delta}}{3} 
    \end{align*}
    because we choose $\beta$
    \begin{align}\label{eq:step-size_bound_8_nodp}
    \beta &\le \left(\frac{2L\wtilde{\Delta}\sqrt{G}}{3\sqrt{2}(1+\wtilde{b}_1)\sigma\sqrt{T}\left(\sqrt{64L\wtilde{\Delta}} +3\frac{\wtilde{c}}{\sqrt{G}} + 3\sqrt{2c\deltabyz}\zeta + 3\sqrt{2c\deltabyz}\tilde{z}\right)}\right),\\
    & \text{and} \quad K+1\le T.\notag
    \end{align}
    This implies 
    \begin{align*}
    &\Prob\left(2\gamma(1-\beta)\left\|\sum_{t=0}^K\<\zeta_{5,i}^t,\theta^{t+1}>\right\| \ge \frac{\wtilde{\Delta}}{3}\right) \le \frac{\alpha}{6(T+1)}.
    \end{align*}
    Note that the worst dependency in the choice of $\beta$ w.r.t $T$ is $\wtilde{\cO}(\nicefrac{1}{T^{1/2}}).$

    \paragraph{Final probability.}
    Therefore, the probability event 
    \[
    \Omega \eqdef \wtilde{E}^K 
    \cap \overline{\wtilde{\Theta}^{K+1}}
    \cap \left(\cap_{i\in\cG}\overline{\wtilde{\Theta}^{K+1}_i}\right)
    \cap \left(\cap_{i\in\cG} \overline{\hat{\Theta}_i^{K+1}}\right)
    \cap E_{I}
    \cap E_{II}
    \cap E_{III},
    \]
    where $E_I$-$E_{III}$ denotes that each of the terms $I, II,$ and $III$ is smaller than $\frac{\wtilde{\Delta}}{3}$. This implies that 
    \[
    I + II + III\le \wtilde{\Delta},
    \]
    i.e., condition $6$ in the induction assumption holds. Moreover, this also implies that 
    \[
    \wtilde{\Phi}^{K+1} \le \wtilde{\Phi}^0 + \wtilde{\Delta}\le \wtilde{\Delta} + \wtilde{\Delta} = 2\wtilde{\Delta},
    \]
    i.e., condition $5$ in the induction assumption holds. The probability $\Prob(\wtilde{E}_{K+1})$ can be lower bounded as follows 
    \begin{align*}
    \Prob(\wtilde{E}_{K+1}) &\ge \Prob(\Omega)\\
    &= \Prob\left(\wtilde{E}_K 
    \cap \overline{\wtilde{\Theta}^{K+1}}
    \cap \left(\cap_{i\in\cG}\overline{\wtilde{\Theta}^{K+1}_i}\right)
    \cap \left(\cap_{i\in\cG} \overline{\hat{\Theta}_i^{K+1}}\right)
    \cap E_{I}
    \cap E_{II}
    \cap E_{III}\right)
    \\
    &= 1 - \Prob\left(\overline{\wtilde{E}}_K\cup
    \wtilde{\Theta}^{K+1} \cup 
    \left(\cup_{i\in\cG}\wtilde{\Theta}^{K+1}_i\right) \cup
    \left(\cup_{i\in\cG}\hat{\Theta}^{K+1}_i\right)
    \cup \overline{E}_{I}
    \cup \overline{E}_{II}
    \cup \overline{E}_{III}\right)\\
    &\ge 1 - \Prob(\overline{\wtilde{E}_K}) 
    - \Prob(\wtilde{\Theta}^{K+1})
    - \sum_{i\in\cG}\Prob(\wtilde{\Theta}^{K+1}_i)
    - \sum_{i\in\cG}\Prob(\hat{\Theta}^{K+1}_i)
    - \underbrace{\Prob(\overline{E}_{I})}_{=0}
    - \Prob(\overline{E}_{II})
    - \underbrace{\Prob(\overline{E}_{III})}_{=0}\\
    &\ge 1 - \frac{\alpha(K+1)}{T+1}
    - \frac{\alpha}{8(T+1)}
    - \sum_{i\in\cG} \frac{\alpha}{8G(T+1)}
    - \sum_{i\in\cG} \frac{\alpha}{8G(T+1)}
    - \frac{\alpha}{8(T+1)}\\
    &\ge 1-\frac{\alpha(K+2)}{T+1}.
    \end{align*}
    This finalizes the transition step of induction. The result of the theorem follows by setting $K=T-1$. Indeed, from (\ref{eq:njqnsfqknj}) we obtain
    \begin{align}\label{eq:mfweodjnwejaaaa}
        \frac{\gamma}{2}\sum_{t=0}^K\|\nabla f(x^t)\|^2 
        \le \wtilde{\Phi}^0 
        - \wtilde{\Phi}^{K+1} 
        + \wtilde{\Delta} 
        \le 2\wtilde{\Delta} 
        \Rightarrow \frac{1}{T}\sum_{t=0}^{T-1}\|\nabla f(x^t)\|^2 
        \le \frac{4\wtilde{\Delta}}{\gamma T}.
    \end{align}

\begin{figure}[t]
\centering

\begin{minipage}[t]{0.48\textwidth}
\begin{algorithm}[H]
\caption{\algname{Byz-Clip-SGD}}
\label{alg:byz_clip_sgd}
\begin{algorithmic}[1]
\STATE \textbf{Input:} $x^0 \in X,$ step-size $\gamma>0$, clipping parameter $\tau>0$, DP-noise variance $\sigma_{\omega}^2\ge0$
\STATE
\FOR{$t=0, \ldots, T-1$}
  \STATE $x^{t+1} = x^t - \gamma g^t$
  \FOR{$i\in\cG$}
    \STATE $g_i^{t+1} = \clip_{\tau}(\nabla f_i(x^{t+1}, \xi^{t+1}_i))$
    \STATE $\omega_i^{t+1} \sim \cN(0, \sigma_{\omega}^2\mI)$
    \STATE $m_i^{t+1} = g_i^{t+1} + \omega_i^{t+1}$
  \ENDFOR
  \FOR{$i\in\cB$}
    \STATE $m_i^{t+1} = (*)$ \hspace{2cm} sends arbitrary vector
  \ENDFOR
  \STATE $g^{t+1} = \aragg(m_1^{t+1},\dots,m_n^{t+1})$
\ENDFOR
\end{algorithmic}
\end{algorithm}
\end{minipage}\hfill
\begin{minipage}[t]{0.48\textwidth}
\begin{algorithm}[H]
\caption{\algname{Safe-DSHB}}
\label{alg:safe_dshb}
\begin{algorithmic}[1]
\STATE \textbf{Input:} $x^0 \in X,$ momentum $\beta \in(0,1],$ step-size $\gamma>0$, $m_i^0\in\R^d,$ clipping $\tau>0$, DP-noise variance $\sigma_{\omega}^2\ge0$
\FOR{$t=0, \ldots, T-1$}
  \STATE $x^{t+1} = x^t - \gamma g^t$
  \FOR{$i\in\cG$}
    \STATE $g_i^{t+1} = \clip_{\tau}(f_i(x^{t+1}, \xi^{t+1}_i))$
    \STATE $\omega_i^{t+1} \sim \cN(0, \sigma_{\omega}^2\mI)$
    \STATE $m_i^{t+1} = (1-\beta)m_i^t + \beta (g_i^{t+1} + \omega_i^{t+1})$
  \ENDFOR
  \FOR{$i\in\cB$}
    \STATE $m_i^{t+1} = (*)$ \hspace{2cm} sends arbitrary vector
  \ENDFOR
  \STATE $g^{t+1} = \aragg(m_1^{t+1},\dots,m_n^{t+1})$
\ENDFOR
\end{algorithmic}
\end{algorithm}
\end{minipage}

\end{figure}

    \paragraph{Final rate.}

    We highlight that we are interested in the functional dependency of the rate on the problem constants. Therefore, in the rest of the proof, we omit using numerical constants. Translating momentum restriction to step-size restriction 
    \begin{align}\label{eq:ninwofejnasdadweaaaaa}
        \gamma = &\frac{1}{L}\wtilde{\cO}\left(\min\left\{1, 
         \underbrace{\left(\frac{L\wtilde{\Delta} G}{T\sigma^2}\right)^{1/2} }_{\text{from term I}~\eqref{eq:step-size_bound_1_nodp}}, 
         \underbrace{\left(\frac{L\wtilde{\Delta} \sqrt{G}}{\sigma\sqrt{T}(\sqrt{L\wtilde{\Delta}} + \sigma/\sqrt{G} + \sqrt{c\deltabyz}\zeta + \sqrt{c\deltabyz}\sigma}\right)}_{\text{from term II}~\eqref{eq:step-size_bound_8_nodp}},
         \underbrace{\left(\frac{L\wtilde{\Delta}}{c\deltabyz T\zeta^2}\right) }_{\text{from term III}~\eqref{eq:step-size_bound_4_nodp}}, \right\}\right),
    \end{align} 
    We plug in the restrictions \eqref{eq:ninwofejnasdadweaaaaa} on the step-size into \eqref{eq:mfweodjnwejaaaa} and obtain the rate 
    \begin{align}
        \frac{1}{T}\sum_{t=0}^{T-1}\|\nabla f(x^t)\|^2 &\le \frac{L\wtilde{\Delta}}{T}\wtilde{\cO}\left(1+ \left(\frac{T\sigma^2}{L\wtilde{\Delta}G}\right)^{1/2} 
        + \frac{\sigma\sqrt{T}(\sqrt{L\wtilde{\Delta}} + \sigma/\sqrt{G} + \sqrt{c\deltabyz}\zeta + \sqrt{c\deltabyz}\sigma)}{L\wtilde{\Delta}\sqrt{G}}
        + \frac{c\deltabyz T\zeta^2}{L\wtilde{\Delta}}\right)\notag\\
        &= \wtilde{\cO}\left(\frac{L\wtilde{\Delta}}{T} 
        + \left(\frac{\sigma^2L\wtilde{\Delta}}{GT}\right)^{1/2} 
        + \frac{\sigma(\sqrt{L\wtilde{\Delta}} + \sigma/\sqrt{G} + \sqrt{c\deltabyz}\zeta + \sqrt{c\deltabyz}\sigma)}{\sqrt{GT}}
        + c\deltabyz \zeta^2\right).
    \end{align}
    It remains to provide a valid upper bound $\wtilde{\Delta}$ on $\wtilde{\Phi}_0$. Since we initialize $v_i^0 = 0$, then 
    \[
    \wtilde{\Phi}^0 = F^0 + \frac{\gamma}{\beta}\|\nabla f(x^0)\|^2 \le F^0 + \frac{1}{4L}\cdot 2L(f(x^0)-f^\star) = 2F^0.
    \]
    This concludes the proof.

\end{proof}

\section{Additional Experiments and Training Details}\label{sec:additional_exp}

Our implementation builds on the codebase of \citet{horvath2020better} to simulate a distributed environment, while the attack implementations are adopted from \citet{gorbunov2022variance}. It can be found via the link \url{https://anonymous.4open.science/r/ByzClip21SGD2M}.

\subsection{Training Details}

In our experiments, we compare \algname{Byz-Clip21-SGD2M} against \algname{Byz-Clip-SGD} and \algname{Safe-DSHB}~\citep{allouah2023privacy}, outlined in \Cref{alg:byz_clip_sgd} and \Cref{alg:safe_dshb}, respectively. Note that \algname{Byz-Clip-SGD} is a variant of \algname{DP-SGD} in which a robust aggregation rule replaces server-side averaging. Since \algname{DP-SGD} is state-of-the-art for private learning, this modification (\Cref{alg:byz_clip_sgd}) serves as our first baseline. The method of \citet{allouah2023privacy} originally performs per-example clipping (line 5); to align with our analysis under $\sigma$-sub-Gaussian noise, i.e., without assuming a finite-sum structure for $f_i$, we instead clip the stochastic gradient.

In experiments of \Cref{sec:main_exp}, we use a coupling of Nearest Neighbor Mixing (\algname{NNM}) \citep{allouah2023fixing} and Coordinate Median (\algname{CM}) \citep{chen2017distributed} as an aggregation rule, while Gaussian mechanism utilizes $\delta=4\cdot10^{-4}.$  


\subsection{Label Flipping Attack}

We evaluate \algname{Byz-Clip21-SGD2M}, \algname{Byz-Clip-SGD}, and \algname{Safe-DSHB} on CNN and MLP models under a label-flipping attack \citep{paudice2018label}. MNIST is evenly partitioned across $n=25$ clients; Byzantine clients flip labels as $y\to 9-y$. We vary the number of Byzantine clients and the privacy budget $\varepsilon\in\{3,8,13,18,23\}$, fixing $\delta=4\cdot10^{-4}$. Hyperparameter tuning follows \Cref{sec:main_exp}. All methods use mini-batch gradients with a batch size $32$ and no privacy amplification by sub-sampling. We use a coupling of \algname{NNM} and \algname{CM} as a server-side aggregation rule. All algorithms are run for $60$ epochs.

\Cref{fig:label_flip} summarizes the results. Unlike under the \algname{IPM} attack, performance degrades for all methods as the number of Byzantine clients grows. Even so, \algname{Byz-Clip21-SGD2M} typically achieves the best test accuracy, aligning with our theory. In most settings, the margin is substantial, highlighting the effectiveness of the proposed algorithm for private training.

\subsection{Amplification by Sub-sampling}

\begin{algorithm}[!t]
\caption{\algname{Byz-Clip21-SGD2M+} }
\label{alg:byz_clip21_sgd2m_ampl}
\begin{algorithmic}[1]
\STATE \textbf{Input:} $x^0 \in X,$ momentum parameters $\beta,\hbeta\in(0,1],$ step-size $\gamma > 0$, $g_i^0=m_i^0\in\R^d,$ clipping parameters $\tau_i, \tau_o > 0$, DP-noise variance $\sigma_{\omega}^2 \ge 0$
    \FOR{$t=0, \ldots, T-1$}
        \STATE $x^{t+1} = x^t - \gamma g^t$
        \FOR{$i\in\cG$}
            \STATE Sample $\cS_i^{t+1} \sim {\rm Unif}[m]$ of cardinality $S$
            \STATE $\omega_i^{t+1} \sim \cN(0, \sigma_{\omega}^2\mI)$
            \STATE $v_i^{t+1} = (1-\beta)v_i^t + \frac{\beta}{S}\sum_{j\in \cS_i^{t+1}}(\clip_{\tau_i}(\nabla f_{ij}(x^{t+1})) + \omega_i^{t+1})$
            
            \STATE $c_i^{t+1} = \clip_{\tau}(v_i^{t+1} - g_i^t)$
            \STATE $g_i^{t+1} = g_i^t + \hbeta\clip_{\tau_o}(v_i^{t+1} - g_i^t)$
        \ENDFOR
        \FOR{$i\in\cB$}
            \STATE $c_i^{t+1} = (*)$ \hspace{11cm} sends arbitrary vector
        \ENDFOR
        \STATE $m_i^{t+1} = m_i^t + \hbeta c_i^{t+1}$
         \STATE $g^{t+1} =\aragg(m_1^{t+1},\dots,m_n^{t+1})$
        
    \ENDFOR
\end{algorithmic}	
\end{algorithm}

We next consider a modified variant of \algname{Byz-Clip21-SGD2M} (see \Cref{alg:byz_clip21_sgd2m_ampl}), which we call \algname{Byz-Clip21-SGD2M+}. Unlike the population setting considered in the main body, each client now holds a finite dataset of size $m$ with a local objective
\[
f(x) = \frac{1}{G}\sum_{i\in\cG}f_i(x), \quad f_i(x) = \frac{1}{m}\sum_{j=1}^m f_{ij}(x).
\]
At iteration $t$, client $i\in\cG$ samples a mini-batch $\cS^{t+1}_i$ of size $S$, averages example-wise clipped stochastic gradients, and updates its momentum buffer using a noised version of this average (line 7 in \Cref{alg:byz_clip21_sgd2m_ampl}). This modification enables privacy amplification through sub-sampling, as each per-example gradient is protected via the Gaussian mechanism, and local DP guarantees are implied by post-processing. Consequently, the required DP noise can be reduced to
\[
\sigma_{\omega} = \frac{S}{m}\cdot \frac{\tau}{\varepsilon}\sqrt{T\log\frac{1}{\delta}},
\]
following \citet{abadi2016deep}. We highlight that \algname{Byz-Clip21-SGD2M+} has two clipping operators. In the experiments, we vary the inner one (line 7) while the outer one is fixed $\tau_o=1$ (line 9). We compare this enhanced version of \algname{Byz-Clip21-SGD2M} against \algname{Byz-Clip-SGD} and original \algname{Safe-DSHB}, where example-wise clipping is also incorporated into the algorithm design.

We compare all methods under the previous setup: CNN/MLP training on MNIST with a stronger privacy budget $\varepsilon\in\{0.1,0.3,1,3\}$ and a label-flipping attack. We split the training set across $n=25$ clients equally. We tune the learning rate over $\{10, 1, 0.1, 0.01\}$ and the clipping threshold over $\{1,0.3,0.1,0.03,0.01\}$ (this corresponds to $\tau_i$ in \algname{Byz-Clip21-SGD2M+}). For \algname{Safe-DSHB} and \algname{Byz-Clip21-SGD2M+} we set $\beta=0.1$ as before; for \algname{Byz-Clip21-SGD2M+} we use $\tau_o=1$ and $\hbeta=0.01$. To defend against Byzantine clients, we employ a hybrid aggregation rule on the server that combines \algname{NNM} and \algname{CM}. We decrease the number of epochs to $30$.

Results in \Cref{fig:label_flip_ampl} (bottom line) show that with $1$ or $5$ Byzantine clients, the methods are broadly competitive when training MLP model, with \algname{Safe-DSHB} slightly leading at $\varepsilon=0.1$. With $10$ Byzantines, \algname{Safe-DSHB} and \algname{Byz-Clip-SGD} edge out \algname{Byz-Clip21-SGD2M+} at $\varepsilon=0.1$, whereas for $\varepsilon\in\{0.3,1,3\}$ \algname{Byz-Clip21-SGD2M+} clearly outperforms both baselines. 

For the CNN model (\Cref{fig:label_flip_ampl}, top row), the trend is similar: \algname{Byz-Clip-SGD} and \algname{Safe-DSHB} perform slightly better at $\varepsilon=0.1$, whereas \algname{Byz-Clip21-SGD2M+} is marginally more effective when the number of Byzantine clients is large.

Experiments on CIFAR10 \citep{krizhevsky2009learning} with the CNN model showed no significant difference between \algname{Byz-Clip21-SGD2M+} and the baselines; therefore, we do not report these results.

While we do not provide convergence guarantees for \algname{Byz-Clip21-SGD2M+}, our experiments show that it delivers competitive performance against other baselines when amplification by sub-sampling is enabled. A theoretical analysis is deferred to future work since the example-wise clipping makes the analysis substantially more involved than for \algname{Byz-Clip21-SGD2M}.

\begin{figure*}[t]
    \centering
    \begin{tabular}{c}

        \hspace{-2mm}\includegraphics[width=1\linewidth]{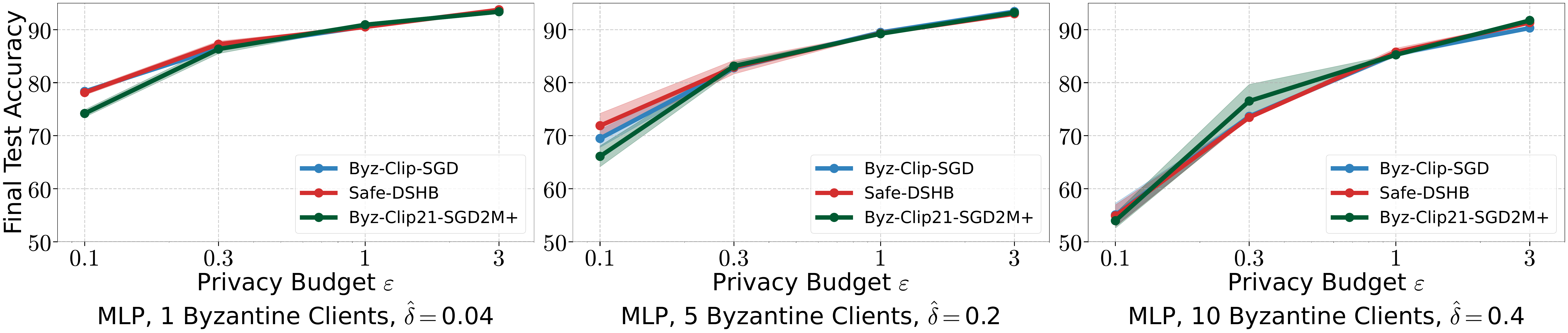} \\

        \hspace{-2mm}\includegraphics[width=1\linewidth]{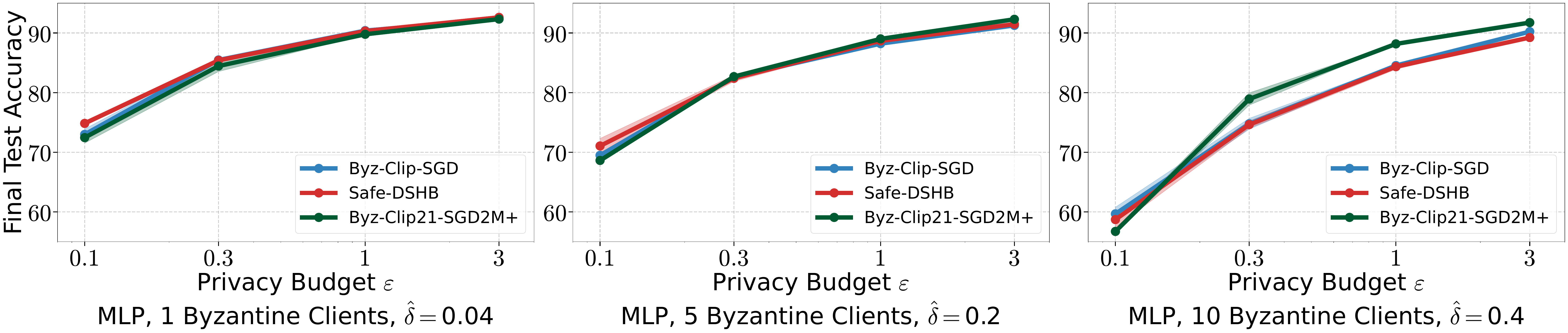}



    \end{tabular}
    \caption{Performance of \algname{Byz-Clip21-SGD2M+}, \algname{Byz-Clip-SGD}, and \algname{Safe-DSHB} when training CNN (top line) and MLP (bottom line) models on the MNIST dataset for different numbers of Byzantine clients and privacy budgets, when Byzantine clients use a label flipping attack, when amplification by sub-sampling is done.}
    \label{fig:label_flip_ampl}
\end{figure*}

\end{document}